\documentclass{article}
\usepackage[accepted]{icml2018x}

\usepackage{multicol}
\usepackage{subcaption}
\usepackage{amssymb, amsmath, amsthm}
\usepackage{microtype}
\usepackage{graphicx}
\usepackage{booktabs} %

\usepackage[hyphens]{url}

\usepackage[inline]{enumitem}
\usepackage{breakurl}
\usepackage{xcolor}
\usepackage[colorlinks,bookmarks=true,breaklinks=true,citecolor=olive,linkcolor=violet]{hyperref}
\usepackage[capitalize,nameinlink]{cleveref}
\usepackage{mymacros}
\usepackage{bbm}
\crefname{thm}{\text{Thm}}{\text{Thms}}
\crefname{assm}{\text{Assm}}{\text{Assms}}
\crefname{defn}{\text{Defn}}{\text{Defns}}
\crefname{cor}{\text{Cor}}{\text{Cors}}
\crefname{lemma}{\text{Lem}}{\text{Lems}}

\newcommand{\Gnodes}{\mathcal{G}}
\newcommand{\Hnodes}{\mathcal{H}}
\newcommand{\Gnodesin}{\Gnodes^{\mathrm{in}}}

\usepackage{thmtools}

\newcommand\gy[1]{}

\usepackage{bbding}

\newcommand{\distto}{\xrar{\mathrm{d}}}

\newcommand{\limas}{\lim^{\mathrm{a.s.}}}
\newcommand{\Vt}{\mathrm{V}}

\newcommand{\onev}{\mathbbm{1}} %
\newcommand{\onem}{\onev \onev^T} %

\newcommand{\trsp}{\top}
\newcommand{\gvar}{\mathtt{g}}
\newcommand{\avar}{\mathtt{a}}
\newcommand{\hvar}{\mathtt{h}}
\newcommand{\Avar}{\mathtt{A}}

\newcommand{\fvar}{\mathtt{f}}
\newcommand{\nvar}{\mathtt{n}}

\newcommand{\batchnorm}{\mathfrak{B}}

\newcommand{\disteq}{\overset{\mathrm{d}}{=}}

\newcommand{\cdc}{\mathfrak{c}}
\newcommand{\Cdc}{\mathfrak{C}}
\newcommand{\lno}{\mathsf{line}}

\newcommand{\varphih}{\varphi_{\hvar}}
\newcommand{\varphig}{\varphi_{\gvar}}
\newcommand{\defeq}{\mathbin{\overset{\mathrm{def}}{=}}}

\newcommand{\Jac}[2]{\f{\dd #1}{\dd #2}}

\newcommand{\asto}{\xrar{\mathrm{a.s.}}}
\newcommand{\probto}{\xrar{\mathrm{p}}}

\newcommand{\PSD}{\mathrm{PSD}}

\newcommand{\rmin}{{\mathrm{in}}}
\newcommand{\cdcin}{{\cdc_\rmin}}

\makeatletter
\let\orgdescriptionlabel\descriptionlabel
\newcommand*{\@restrictlabeltext}[1]{#1\protected@edef\@currentlabel{#1}}
\newcommand*{\nolabel}[1]{#1}%
\renewcommand*{\descriptionlabel}[1]{%
  \let\orglabel\label
  \let\label\@gobble
  \let\orig@hfil\hfil
  \def\hfil{}%
  \let\nolabel\@gobble
  \let\restrictlabeltext\@firstofone
  \phantomsection
  \protected@edef\@currentlabel{#1}%
  \let\hfil\orig@hfil
  \let\label\orglabel
  \let\restrictlabeltext\@restrictlabeltext
  \orgdescriptionlabel{#1}%
}
\makeatother

\icmltitlerunning{Scaling Limits of Wide Neural Networks with Weight Sharing}

\begin{document}

\onecolumn
\icmltitle{Scaling Limits of Wide Neural Networks with Weight Sharing:\\
Gaussian Process Behavior, Gradient Independence, and Neural Tangent Kernel Derivation}

\begin{icmlauthorlist}
\icmlauthor{Greg Yang}{MSR}
\end{icmlauthorlist}

\icmlaffiliation{MSR}{Microsoft Research AI}

\icmlcorrespondingauthor{$\la$gregyang@microsoft.com$\ra$}{}

\vskip 0.3in

\setlength{\footskip}{20pt}
\thispagestyle{plain}
\pagestyle{plain}

\printAffiliationsAndNotice{}  %

\begin{abstract}
Several recent trends in machine learning theory and practice, from the design of state-of-the-art Gaussian Process to the convergence analysis of deep neural nets (DNNs) under stochastic gradient descent (SGD), have found it fruitful to study wide random neural networks.
Central to these approaches are certain scaling limits of such networks.
We unify these results by introducing a notion of a straightline {\it tensor program} that can express most neural network computations, and we characterize its scaling limit when its tensors are large and randomized.
From our framework follows
\begin{enumerate*}
    \item 
        the convergence of random neural networks to Gaussian processes for architectures such as recurrent neural networks, convolutional neural networks, residual networks, attention, and any combination thereof, with or without batch normalization;
    \item
        conditions under which the \emph{gradient independence assumption} -- that weights in backpropagation can be assumed to be independent from weights in the forward pass -- leads to correct computation of gradient dynamics, and corrections when it does not;
    \item
        the convergence of the Neural Tangent Kernel, a recently proposed kernel used to predict training dynamics of neural networks under gradient descent, at initialization for all architectures in (1) without batch normalization.
\end{enumerate*}
Mathematically, our framework is general enough to rederive classical random matrix results such as the semicircle and the Marchenko-Pastur laws, as well as recent results in neural network Jacobian singular values.
We hope our work opens a way toward design of even stronger Gaussian Processes, initialization schemes to avoid gradient explosion/vanishing, and deeper understanding of SGD dynamics in modern architectures.
\end{abstract}

\section{Introduction}

Several recent trends in machine learning theory and practice have found it fruitful to study wide random neural networks, such as neural network inspired Gaussian Processes, signal propagation in DNNs, small learning rate SGD dynamics, and even, in some sense, the celebrated Approximate Message Passing algorithm for compressed sensing.
We review these subjects and more in \cref{sec:related}.
All of these works involve some theory that derives, rigorously or semirigorously, some scaling limit of a neural network as its width goes to infinity.
In this paper, we give a unifying treatment to such scaling limits:
\begin{itemize*}
    \item
        We define a notion of {\it tensor programs} which can express most neural network computations, and a natural notion of tensor program scaling that corresponds to increasing width with Glorot initialization \cite{glorot_understanding_2010}.
        Our main theorems characterize the scaling limits in the two most common scenarios that roughly correspond to DNN inference and backpropagation, as well as in the general tensor program case.
        They are proved via a Gaussian conditioning technique first used in \citet{bolthausen_iterative_2012} for analyzing the TAP equations in spin glass theory.
    \item
        We obtain corollaries that fully justify semirigorous derivations in prior works and strengthen previous results in the different strands of research mentioned above.
        In the next section we highlight the most important corollaries and discuss other briefly, leaving their details to the appendix.
\end{itemize*}

By \emph{standard architecture} we mean any DNN architecture that is some composition of multilayer perceptrons (MLP)s, recurrent neural networks (RNNs) (e.g., Long-Short Term Memory (LSTM) \cite{hochreiter_long_1997} or Gated Recurrent Unit (GRU) \cite{cho_learning_2014}), skip connections \cite{he_deep_2016,huang_densely_2016}, (self-)attention \cite{bahdanau_neural_2014,vaswani_attention_2017}, convolution
\cite{lecun_gradient-based_1998,lecun_object_1999}, and/or batch normalization (batchnorm) \cite{ioffe_batch_2015}.
We use \emph{readout layer} to mean any linear layer converting some hidden states to an output vector.
While most of our corollaries are stated for standard architectures, they are typically more general, but we just highlight the most relevant cases for a deep learning audience.

\section{Related Works and Our Corollaries}
\label{sec:related}
We formulate informal versions of our main corollaries and comment on other results inline, marked by a star $\bigstar$.
\subsection{Gaussian Behavior of Wide Neural Networks}
    \label{subsec:GPNN}
    In 1995, \citeauthor{neal_bayesian_1995} first discovered the Gaussian Process behavior of wide neural networks.
    He showed that under certain conditions, a single-layer neural network with random parameters can converge in distribution to a Gaussian process as its width goes to infinity.
    Later works extended the conditions under which this scaling limit takes place \citep{williams_computing_1997,le_roux_continuous_2007,hazan_steps_2015}.
    Recently, \citet{lee_deep_2018,matthews_gaussian_2018} empirically and/or theoretically investigated analogous correspondences for infinite width, finite depth \emph{deep} MLPs, and likewise \citet{novak_bayesian_2018}, for deep \emph{convolution} networks.
    \citet{daniely_toward_2016} also proved similar results in the framework of kernel methods, where they introduced a notion of ``computational skeleton,'' similar to \emph{tensor programs} introduced here, that covers feedforward computation with no weight-sharing (so that, for example, it can express locally connected networks but not convolutional networks)\footnote{even though they claim that dealing with weight-tying is straightforward.
    It's unclear what they had in mind, however, as there is a significant difference in the scaling behavior of sharing matrix transposes vs sharing no matrix transposes (see \cref{thm:gradIndep} and \cref{thm:generalTensorP})}.
    
    Many previous works have exploited this DNN-GP correspondence implicitly or explicitly to build new models \citep{cho_kernel_2009,lawrence_hierarchical_2007,damianou_deep_2013,wilson_stochastic_2016,wilson_deep_2016,bradshaw_adversarial_2017,van_der_wilk_convolutional_2017,kumar_deep_2018,blomqvist_deep_2018,borovykh_gaussian_2018}.
    In particular,  \citet{lee_deep_2018,garriga-alonso_deep_2018,novak_bayesian_2018} directly converted DNN to GP using this correspondence.
    \citet{lee_deep_2018} constructed the state-of-the-art (SOTA) permutation-invariant GP on MNIST, and \citet{novak_bayesian_2018} achieved SOTA on CIFAR10 for any GP with untrainable kernel.
    
    In this paper, we generalize the DNN-GP correspondence to \emph{standard architectures} and very general nonlinearities.
    \begin{cor}[DNN-GP correspondence, informal]\label{cor:DNNGP}
    Let $f$ be a network of fixed standard architecture, with linear readout layer, and with nonlinearities bounded uniformly by $\exp(O(x^{2-\epsilon}))$ for some $\epsilon>0$.
    Fix a finite input set $\mathcal X$ of the right signature (e.g. set of batches for batchnorm network; set of sequences for RNN).
    Sampling $f$'s parameters from iid Gaussians induces a distribution of functions on $\mathcal X$.
    If the readout layer weights are sampled independently from hidden parameters, then this distribution weakly converges to a Gaussian process as the network widths go to infinity (with fixed input and output dimensions).
    See \cref{subsec:warmupConseq,subsec:DNNGP}.
    \end{cor}

    In contrast, \citet{matthews_gaussian_2018} requires $\phi$ to be linearly bounded in norm;
    \citet{daniely_toward_2016} requires $\phi$ be twice-differentiable with $|\phi|,|\phi'|, |\phi''|$ all bounded, or that $\phi=$ ReLU;
    and a sufficient condition given in \citet{novak_bayesian_2018} is that $\phi'$ exists and is bounded by $\exp(O(x^{2-\epsilon}))$, though it is unclear how the more general set of 3 conditions given there compares with ours.

    We hope this corollary opens the door to the design of more powerful GPs, in the way of \citet{lee_deep_2018,novak_bayesian_2018} by converting state-of-the-art DNNs. \footnote{For a gentler introduction to these GP results and several extensions, we recommend the reader to look at \citet{yang_GP_2019}.}
    
\subsection{Signal Propagation in Neural Networks}
    \label{subsec:signalprop}
    \citet{glorot_understanding_2010,he_delving_2015} derived the popular \emph{Glorot} and \emph{He} initializations from consideration of hidden state norms in a DNN with random weights.
    A recent line of work generalizes their studies significantly by examining the evolution with depth of covariance between $f(x^i), f(x^j)$ and between $\nabla_x f (x^i), \nabla_x f (x^j)$ for distinct inputs $x^i$ and $x^j$, when the DNN $f$ is wide and parameters of $f$ are randomized.
    This evolution is referred to as \emph{(forward and backward) signal propagation} in the literature \citep{poole_exponential_2016,schoenholz_deep_2017,yang_mean_2017,yang_deep_2018,hanin_how_2018,chen_dynamical_2018,yang_mean_2018,pennington_resurrecting_2017}.
    It has been used to optimize initialization hyperparameters to prevent gradient explosion/vanishing, even to allow training of a 10,000 layer CNN without batchnorm or skip connections \citep{xiao_dynamical_2018}.
    
    Suppose $\{x^i\}_i$ is a set of inputs.
    Let $f^l$ be an $l$-layer MLP with activation $\phi$ and uniform width $n$.
    If $\Sigma^l_{ij} = \f 1 n \EV \la f^l(x^i), f^l(x^j)\ra$ and $\Pi^l_{ij} = \f 1 n \EV \la \nabla_x f^l(x^i), \nabla_x f^l(x^j) \ra $, with expectation taken over $W^\ell_{ij} \sim \Gaus(0, \sigma_w^2/n), b^\ell_i \sim \Gaus(0, \sigma_b^2)$ for each layer $\ell$, then the signal propagation literature posits that, in the $n\to \infty$ limit, the dynamics of $\Sigma^l$ and $\Pi^l$ are summarized by
    \begin{align}
        \Sigma^l
            &=
                \sigma_w^2 \EV[\phi(z)\phi(z)^\trsp: z \sim \Gaus(0, \Sigma^{l-1})] + \sigma_b^2
                \label{eq:forwardMF}
                \\
        \Pi^{l}
            &=
                \sigma_w^2\EV[\phi'(z)\phi'(z)^\trsp: z \sim \Gaus(0, \Sigma^l)] \odot \Pi^{l-1}.
                \label{eq:backwardMF}
    \end{align}
    Note that $\Sigma^l$ essentially is the kernel of the corresponding GP, and \cref{eq:forwardMF} is the same one used in the DNN-GP correspondence.
    \citet{pennington_resurrecting_2017} more generally computed the singular value distribution of the input-output Jacobian matrix of an MLP and characterized conditions under which this distribution concentrates around 1.
    To make this computation and to derive \cref{eq:backwardMF}, they and others \citep{schoenholz_deep_2017,yang_mean_2017,yang_deep_2018,chen_dynamical_2018,xiao_dynamical_2018,pennington_resurrecting_2017} relied on 
    \begin{assm}[Gradient Independence Assumption]\label{assm:gradInd}
        In backpropagation, whenever we multiply by $W^\trsp$ for some weight matrix $W$, we multiply by an iid copy instead.
    \end{assm}
    that was first discovered by \citet{schoenholz_deep_2017} to make good predictions and later formulated explicitly by \citet{yang_mean_2017}.
    In this paper we show
    \begin{cor}[\cref{assm:gradInd} is conditionally correct, informal]\label{cor:gradInd}
        In a MLP having nonlinearities with polynomially bounded weak derivatives, \cref{assm:gradInd} leads to the correct equation \cref{eq:backwardMF} and the correct singular value distribution computation from \citet{pennington_resurrecting_2017}, as long as the readout layer is sampled independently from other parameters and has mean 0.
        In general, \cref{assm:gradInd} does not induce correct computations -- for example when the last layer is global mean pooling -- and we rigorously give the correct equations, and more generally a way to compute the singular value distribution of the neural network Jacobian, both generalized to all standard architectures without batchnorm.
        See \cref{{subsec:warmupConseq},subsec:gradIndAssm}.
    \end{cor}
    
    As an example, we computed the scaling limit for the gradient norms of an LSTM and compared it against empirical simulation (\cref{fig:LSTM}).
    The theoretical prediction is very precise already for 1000 neurons, which is typical for applications of LSTM.
    
    Note that this literature also studies the limit of iterating \cref{eq:forwardMF,eq:backwardMF} (large depth limit), but our results only apply to a fixed number of iterations, and so do not rigorously justify such limits.
    
    \citet{chen_dynamical_2018} estimates the signal propagation in tied-weights RNNs with the equations for that in untied-weights RNNs.
    They find this a fantastic approximation for simple RNNs but not quite so for gated RNNs.
    $\bigstar$ As a corollary of \cref{thm:notransposeLimit} we show that, indeed, the tied- and untied-weights theories agree for simple RNNs, but not for general (say, gated) RNNs.
    We give the simplest counterexample of weight-tied residual network.
    See \cref{subsec:signalProp}.
    
    Recently, \citet{li_random_2018} investigated (forward only) signal propagation in weight-tied autoencoders.
    $\bigstar$ A version of their main theorem allowing for arbitrary polynomially bounded activations, without restriction on smoothness, also follows as corollary of \cref{thm:generalTensorP}.
    See \cref{subsec:signalProp}.
    
    We hope \cref{cor:gradInd} will allow future works to optimize initialization hyperparameters and prevent gradient explosion/vanishing problems for modern architectures in the way of \citet{schoenholz_deep_2017,yang_mean_2017,yang_deep_2018,chen_dynamical_2018,xiao_dynamical_2018,yang_mean_2018}.

\begin{figure}
    \centering
    \includegraphics[width=0.5\textwidth]{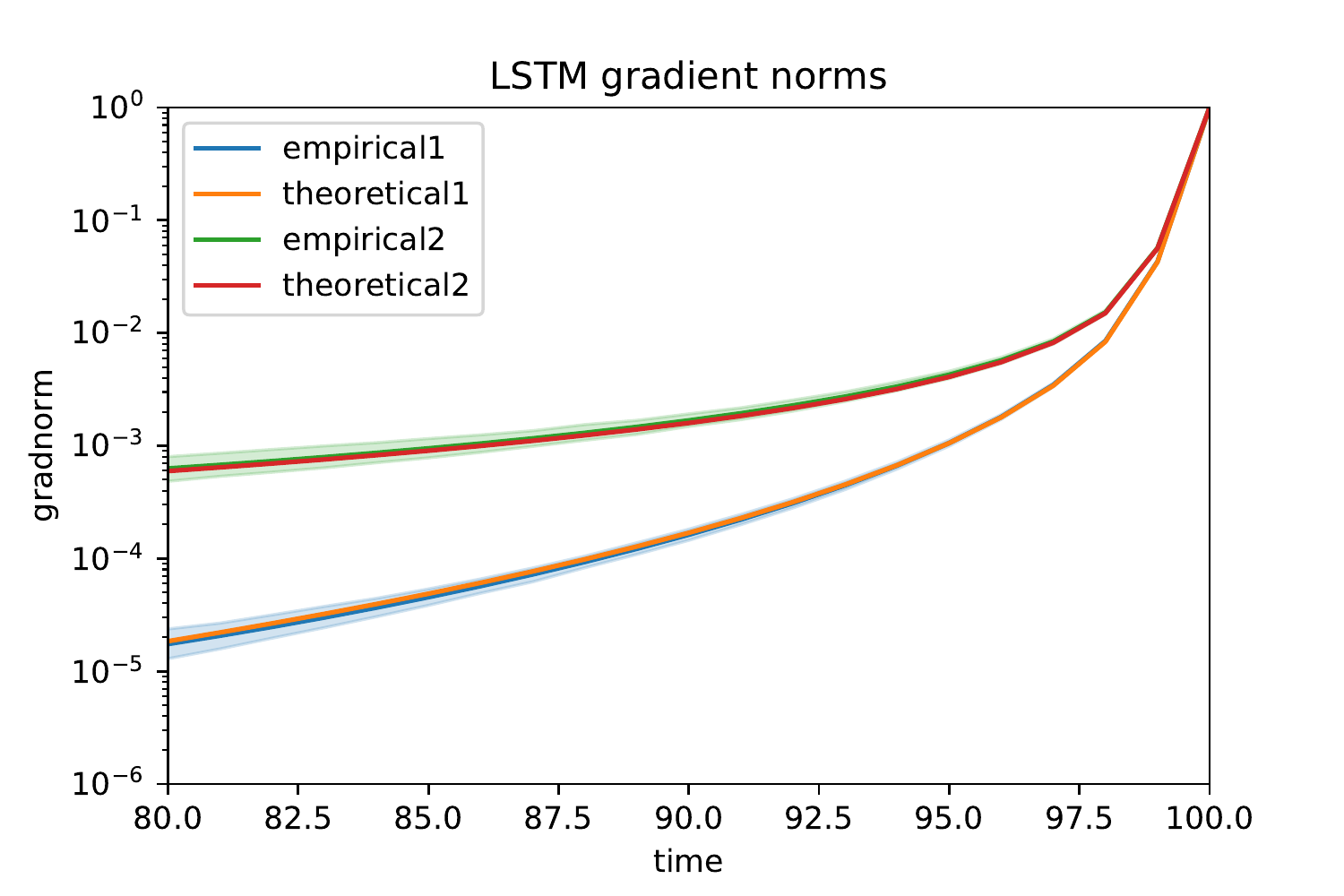}
    \caption{For two different random initializations of the LSTM with 1000 neurons along with a readout layer (at each time step), we run 100 steps forward on inputs of zero vectors and 20 steps of backpropagation-through-time.
    We collect the gradient norms $\|\pd y^{100}/\pd h^{t}\|, t = 80, \ldots, 100,$ where $y$ is the readout and $h$ is the hidden state, and plot its mean and standard deviations (\emph{empirical1} and \emph{empirical2}).
    Then we compute the large width limit of these gradient norms according to \cref{thm:gradIndep} and overlaid them on top (\emph{theoretical1} and \emph{theoretical2}).
    The agreement is so precise that the theoretical curves completely obscure the means of the empirical data.}
    \label{fig:LSTM}
\end{figure}
\subsection{Neural Tangent Kernel}
    \label{subsec:NTK}
    \newcommand{\trainset}{{\mathcal X}}
    For any parametrized function $f(x; \theta)$, the Neural Tangent Kernel can be in general defined as $K_\theta(x, x') = \la \nabla_\theta f(x; \theta), \nabla_\theta f(x'; \theta) \ra$ \citep{jacot_neural_2018}.
    In the case when $f(x; \theta)$ is a feedforward neural network, with parameters appropriately scaled (see \cref{subsec:warmupConseq}), there is a scaling limit of $K_\theta \to K_\infty$ when $\theta$ is randomized and $f$'s widths grow to infinity \cite{jacot_neural_2018}.
    This convergence allows one to predict the evolution of $f(x; \theta)$ due to gradient descent on $\theta$.
    For example, if we apply gradient flow on a training set $\trainset$ and loss function $\f 1 {|\trainset|} \sum_{(x,y) \in \trainset} \f 1 2 (f(x) - y)^2$, for $\mathrm{codomain}(f) = \R$, \citet{jacot_neural_2018} derived
    \begin{align*}
        \pdf {f_t} t
            &=
                - 
                \f 1 {|\trainset|}
                K_{\theta_t}(\trainset, \trainset)(f_t - f^*)
    \end{align*}
    where $f^*$ is the ``ground truth`` function that sends $x \mapsto y$ for every $(x, y) \in \trainset$, and $f$ and $f^*$ are thought of dimension $|\trainset|$ vectors.
    \citet{jacot_neural_2018} proved that under suitable conditions, with training time $T$ fixed and width $\to \infty$, $K_{\theta_t}(\trainset, \trainset) \to K_\infty(\trainset, \trainset)$ for all $0 \le t \le T$.
    This means that, in the large width regime, $f$ (in the function space) evolves approximately according to a linear differential equation under gradient flow.
    In this paper we show
    \begin{cor}[NTK convergence, informal]\label{cor:NTKconverge}
    Fix a finite input set $\mathcal X$.
    Let $f$ be a network of fixed standard architecture, with linear readout layer, and having nonlinearities with polynomially bounded weak derivatives (so in particular cannot have batchnorm).
    Then over $\mathcal X$, $K_\theta \to K_\infty$ almost surely as the widths of $f$ go to infinity and $\theta$ suitably randomized, for some $K_\infty$.
    See \cref{subsec:warmupConseq,subsec:NTKAppendix}.
    \end{cor}

    While \citet{jacot_neural_2018} is groundbreaking in producing an equation to predict the behavior of gradient descent in the small learning rate, large width regime, its proof of the convergence $K_\theta \to K_\infty$ relies on taking the widths to infinity one by one starting from the first layer\footnote{An earlier version of this paper claimed that it assumes gradient independence; this is incorrect, as the sequential limit obviates the need for it}. This is an unrealistic limit, as in practice, the widths are typically of the same order of magnitude\footnote{Indeed, when the last layer is global mean pooling rather than random Gaussians, the sequential limit obtains a different answer than simultaneous limit}.
    In comparison, \cref{cor:NTKconverge} proves that the limit exists as the widths tend to infinity together, and it generalizes to arbitrary standard architectures.
    $\bigstar$ We give an example computation of the NTK for a CNN in \cref{subsec:NTKAppendix}; this is a new result that has not appeared in prior literature.

    \citet{amari_fisher_2018,karakida_universal_2018} recently used \cref{assm:gradInd} to study the empirical Fisher information matrix (FIM), over finitely many datapoints drawn from isotropic Gaussian, of random neural networks, specifically its spectral properties.
    If we let $Jf$ be the $|\mathcal X| \times |\theta|$ matrix whose rows are $\{\nabla_x f(x; \theta)\}_{x \in \mathcal X}$, then (empirical) FIM $\propto Jf\ Jf^\trsp$ while NTK is $Jf^\trsp Jf$ \footnote{\citet{karakida_universal_2018} called NTK the \emph{dual matrix}}.
    Thus the spectral properties of empirical FIM and NTK are identical up to scaling.
    $\bigstar$ By \cref{cor:NTKconverge}, we can then justify the computations of \citet{amari_fisher_2018,karakida_universal_2018} rigorously.

\subsection{Other Works}

Very recently, \citet{du_gradient_2018,du_gradient_2018-1,allen-zhu_convergence_2018,allen-zhu_convergence_2018-1,zou_stochastic_2018} formally proved that GD or SGD can reduce an overparametrized DNN's training error to 0 by showing that random initialization imbues the network with certain good properties\footnote{using tools similar to ones in the signal propagation literature} and, with small learning rate, the network never moves too far from its initialization\footnote{using reasoning similar to \citet{jacot_neural_2018}}.
\citet{allen-zhu_learning_2018,cao_generalization_2019} also show generalization bounds under various data assumptions using a similar reasoning.

There is a long line of work investigating random classic spiking or hopfield networks, for example \citet{landau_coherent_2018,crisanti_path_2018,kadmon_transition_2015,stern_dynamics_2014,rajan_stimulus-dependent_2010,sompolinsky_chaos_1988,amit_spin-glass_1985}.
In the reinforcement learning literature, \citet{osband_randomized_2018,burda_large-scale_2018,burda_exploration_2018} used random DNNs for exploration.
Other than the works discussed above, \citet{li_exploring_2018,giryes_deep_2016,gabrie_entropy_2018,reeves_additivity_2017,fletcher_inference_2017} also considered neural networks with random weights.

$\bigstar$ Our technique is general enough to rederive the semicircle law for the Gaussian Orthogonal Ensemble and the Marchenko-Pastur Law for Wishart matrices \cite{tao_topics_2012}.
See
\cref{warmup:semicirclelaw,warmup:marchenkopasturlaw}.

Approximate Message Passing is an algorithm for recovering a ground truth vector from noisy measurements (\emph{Compressed Sensing}) \citep{donoho_message_2009}.
In one view, the algorithm repeatedly applies a certain neural network to the noisy measurement, and it succeeds if the result eventually converges to the ground truth vector.
Previous works have shown that when the measurement matrix is randomized and the dimension goes to infinity, this algorithm satisfies a set of equations called State Evolution that can be used to reason about its behavior \citep{bayati_dynamics_2011,berthier_state_2017}.
Their proofs are based on the same Gaussian conditioning technique used here.
$\bigstar$ In \cref{subsec:AMP}, we detail the algorithm and State Evolution, and prove the validity of State Evolution equations for arbitrary polynomially bounded nonlinearities and test functions, removing the smoothness assumption of \citet{bayati_dynamics_2011} (in exchange for a stronger moment condition on the measurements).

This concludes the discussion of related works and our corollaries.
We now present the tensor program framework and our main theorems.

\section{Tensor Programs}
\label{sec:tensorPrograms}
Consider programs of the following form, which we call \textit{tensor programs}.
Each line contains an assignment and a dimension annotation and can have the following types.
\label{defn:linetypes}
\begin{description}
    \item[VecIn\label{linetype:G}] (G) a vector input $x$
    $$l: \gvar^l := x \in \R^{\nvar^l}$$
    \item[MatIn\label{linetype:A}] (A) a matrix input $A$
    $$l: \Avar^l := A \in \R^{\nvar^l_1 \times \nvar^l_2}$$
    \item[T\label{linetype:trsp}] (A) transpose of an A-var
    $$l: \Avar^l := (\Avar^j)^\trsp \in \R^{\nvar^l_1 \times \nvar^l_2} = \R^{\nvar^j_2 \times \nvar^j_1}$$
    \item[MatMul\label{linetype:GLinear}] (G) if $\Avar^k$ and $\gvar^j$ have $\nvar^k_2 = \nvar^j$, then an assignment via a linear mapping
    $$l: \gvar^l := \Avar^k \gvar^j \in \R^{\nvar^l} = \R^{\nvar^k_1}$$
    or similarly for H-vars
    $$l: \gvar^l := \Avar^k \hvar^j \in \R^{\nvar^l} = \R^{\nvar^k_1}$$
    where $j,k < l$
    \item[LinComb\label{linetype:lincomb}] (G) if $\nvar^{j_1} = \cdots = \nvar^{j_k}$, then an assignment via linear combination of G-vars that appeared in previous lines: with $\avar^l_{j_i} \in \R$,
    $$l: \gvar^l := \avar^l_{j_1}\gvar^{j_1} + \cdots + \avar^l_{j_k}\gvar^{j_k} \in \R^{\nvar^l} = \R^{\nvar^{j_1}}.$$
    \item[Nonlin\label{linetype:nonlin}] (H) if $\nvar^{j_1} = \cdots = \nvar^{j_k}$, then an assignment via some general (possibly nonlinear) function $\fvar^l: \R^k \to \R$, acting coordinatewise,
    $$l: \hvar^l := \fvar^l(\gvar^{j_1}, \ldots, \gvar^{j_k}) \in \R^{\nvar^l} = \R^{\nvar^{j_1}}.$$
\end{description}
Here (G) marks those variables that we call \textbf{G-vars}, and similarly we have \textbf{A-vars} and \textbf{H-vars}.
Vars introduced by \ref{linetype:G} and \ref{linetype:A} are also called {\it (vector and matrix) input vars}.
The initial ``$l:$'' marks the line number, and each new variable formed from this line is labeled with a superscript $l$.
A partial program with $\nvar^l$ and input G- and A-vars unspecified is called a {\it (program) skeleton}, typically denoted by Greek letters like $\pi$.
This skeleton can be thought of as a generalization of the skeleton in \citet{daniely_toward_2016} in the language of a straightline program that allows weight sharing (transposed or not) and simple type checking.

\subsection{Examples}
Such tensor programs can express the computation in most neural network scenarios.
In \cref{sec:examplePrograms}, we give example programs for computations in
\begin{enumerate*}[label=\emph{(\arabic*)}]
    \item MLP, forward and backward passes (\ref{subsec:exampleFCFF});
    \item batch of input (\ref{subsec:exampleBatch});
    \item residual networks (\ref{subsec:exampleResnet});
    \item simple RNN (\ref{subsec:exampleRNN});
    \item batchnorm (\ref{subsec:exampleBatchnorm});
    \item CNNs (\ref{subsec:exampleConv}).
\end{enumerate*}
It's also clear from these examples that any combination of such computation can be expressed faithfully in a tensor program.
On the other hand, tensor programs don't capture all neural network computations, and one example is layer normalization \cite{ba_layer_2016}, but see \cref{sec:discussion} for how to still compute its scaling limit in this framework.

\subsection{Setup}
\label{subsec:setup}

Lines of type \ref{linetype:trsp}, \ref{linetype:GLinear}, \ref{linetype:lincomb}, and~\ref{linetype:nonlin} induce equality constraints on the dimensions $\nvar^l$.
Given a skeleton $\pi$ and a possible set of additional dimensional constraints $\Lambda \sbe \{`` \nvar^l = \nvar^m"\}_{\gvar^l,\gvar^m}$, consider the smallest equivalence relation $\sim$ on G-vars such that $\gvar^l \sim \gvar^m$ if $`` \nvar^l = \nvar^m" \in \Lambda$ or if they are constrained to have equal dimension by some line of type \ref{linetype:trsp}, \ref{linetype:GLinear}, \ref{linetype:lincomb}, or \ref{linetype:nonlin}.
We call each class a \textit{common dimension class} (CDC) of $(\pi, \Lambda)$ and write $\cdc(\gvar^l)$ for the class of a G-var $\gvar^l$.
The collection of all common dimension classes is written as $\Cdc_{(\pi, \Lambda)},$ or just $\Cdc$ when $\pi$ and $\Lambda$ are understood from context.
An algorithm to compute CDCs is presented in \cref{sec:CDC}.

In this work, for every skeleton $\pi$ (equipped with $\Lambda$), we study the behavior of vars in its realizations when the input vars are appropriately randomized and as the dimensions $\nvar^l \to \infty$.
More precisely, we consider a sequence (in $t \in \N$) of dimensions $\{\nvar^{l t}\}_{\gvar^l or \hvar^l} \cup \{\nvar^{lt}_1, \nvar^{lt}_2\}_{\Avar^l}$ respecting $\sim$, along with input G- and A-vars $\gvar^{lt}$, $\Avar^{lt}$ of appropriate dimensions.
For each $\cdc \in \Cdc$, let $\nvar^{\cdc t} = \nvar^{lt}$ for $\gvar^l \in \cdc$.
We extend the notations $\gvar^{lt}$ and $\hvar^{lt}$ to the non-input G- and H-vars computed from these inputs.

At time $t$, we sample independently $\Avar^{lt}_{ij} \sim \Gaus(0, (\sigma^{lt})^2/\nvar^{lt}_2)$ for a set $\{\sigma^{lt}\}_{\Avar^l}$
\footnote{
We could as well assume that there is an infinite 2D array of independent Gaussian variables $\{\mathring A^l_{ij} \sim \Gaus(0, 1)\}_{i,j=1}^\infty$, and at time $t$, set $\Avar^{l t}_{ij} = \sigma^{lt} \mathring A^l_{ij} / \sqrt{\nvar^{lt}_2}$.
In that case, we do not need $\nvar^{\cdc t}$ to increase stricty with $t$}.
For each $\cdc \in \Cdc$, we also sample independently $\gvar^{\cdcin t}_i \sim \Gaus(\mu^{\cdcin t}, K^{\cdcin t})$ for each $i, j$.
Here $\cdcin$ is the set of input G-vars in $\cdc$, $\gvar^{\cdcin t}_i = (\gvar^{lt}_i)_{\gvar^l \in\cdcin}$, and $\mu^{\cdcin t}: \cdcin \to \R, K^{\cdcin t}: \cdcin \times \cdcin \to \R$ are specified mean and covariance at time $t$.

Thus given $(\pi, \Lambda)$, the data $\{\nvar^{\cdc t}\}_{\cdc \in \Cdc},$ $\{\sigma^{lt}\}_{\Avar^l}$, $\{\mu^{\cdcin t}\}_{\cdc \in \Cdc}$, and $\{K^{\cdcin t}\}_{\cdc \in \Cdc}$ realize a random program $\pi(\{\nvar^{\cdc t}\}_{\cdc \in \Cdc}, \{\sigma^{lt}\}_{\Avar^l}, \{\mu^{\cdcin t}\}_{\cdc \in \Cdc}, \{K^{\cdcin t}\}_{\cdc \in \Cdc})$.
Its vars are random variables and our theorems will concern certain ``moments'' of them.

We assume that, as $t \to \infty$, for all $\cdc, \cdc' \in \Cdc$:
\begin{enumerate*}[label={(\arabic*)}]
    \item $\nvar^{\cdc t}$ is increasing with $t$ and $\nvar^{\cdc t} \to \infty$.
    \item $\lim_{t \to \infty} \nvar^{\cdc t}/\nvar^{\cdc' t} = \alpha_{\cdc, \cdc'} \in (0, \infty)$ for some constant $\alpha_{\cdc, \cdc'}$ depending only on $\cdc, \cdc'$.
    \item  $\sigma^{lt} \to \sigma^{l\infty}$ for some finite $\sigma^{l\infty} >0$ for each input A-var $\Avar^l$.%
    \item $\mu^{\cdcin t} \to \mu^{\cdcin \infty}$ and $K^{\cdcin t} \to K^{\cdcin \infty}$ for some finite $\mu^{\cdcin \infty}, K^{\cdcin \infty}$, and $\rank K^{\cdcin t} = \rank K^{\cdcin \infty}$ for all large $t$.
\end{enumerate*}

\paragraph{Discussion}
Tensor programs are meant to represent the ``body'' of a neural network where all dimensions are large compared to input and output dimensions.
The CDCs are used to capture the varying widths of practical neural networks; for example, while widths typically increase and decrease in classical networks, they are held constant in blocks in residual networks (see \cref{subsec:exampleResnet} for an example).
For the first read-through, we recommend the reader to assume all dimensions are the same so that there is a single CDC consisting of all G-vars.

The sampling of A-vars reflects variants of Glorot initialization \cite{glorot_understanding_2010} used in practice.
The sampling of input G-vars models the distribution of the first hidden layer across multiple inputs, sampling of the first layer parameters (see \cref{subsec:exampleFCFF} for an example), and/or sampling of bias vectors.
Most often, the vector vars should be thought of as hidden layer quantities whose dimensions go to infinity; neural network inputs (of fixed dimension) are indirectly expressed as above, and outputs (of fixed dimension) are obtained as some coordinates of a vector var.

\cref{thm:notransposeLimit,thm:gradIndep} below say that, under certain conditions, G-vars converge to Gaussians of specific mean and covariances (hence the name ``G-var'').
But \cref{thm:generalTensorP} shows that in general this may not be true.

\paragraph{Notation}
We will often identify functions $Z: A \to B$ with vectors in $B^A$ (which should be thought of as dictionaries with keys in $A$).
Given a subset $U \sbe A$, $Z^U$ is the subvector of $Z$ supported on $U$, or as a function is the restriction of $Z$ on $U$.
For $\psi: B^U \to C$, $\psi(Z) \defeq \psi(Z^U)$, i.e. we automatically ignore the values of $Z$ outside $U$.
We use $\asto$ for convergence almost surely.

\section{Programs with No Transposes}
\label{sec:notrsp}
For any $\cdc \in \Cdc$, we recursively define
\begin{align}
    \mu^\cdc(\gvar^l)
        &=
            \begin{cases}
                \mu^{\cdcin\infty}(\gvar^l)   &   \text{if $\gvar^l \in \cdcin$}\\
                \sum_i \avar^l_{ji} \mu^\cdc(\gvar^{j_i})
                    &   \text{if $\gvar^l := \sum_i \avar^l_{ji} \gvar^{j_i}$}\\
                0   &   \text{if $\gvar^l := \Avar^k \gvar^j$ or $\gvar^l := \Avar^k \hvar^j$}
            \end{cases}
            \label{eqn:muRecur}
\end{align}
and recursively define 
\begin{align}
    K^\cdc(\gvar^l, \gvar^m)
        &=
            \begin{cases}
            K^{\cdcin \infty}(\gvar^l, \gvar^m)   &   \text{if $\gvar^l, \gvar^m \in \cdcin$}\\
            \sum_i \avar^m_{j_i} K^\cdc(\gvar^l, \gvar^{j_i})
                &   \text{if $\gvar^m := \sum_i \avar^m_{j_i} \gvar^{j_i}$}\\
            \sum_i \avar^m_{j_i} K^\cdc(\gvar^{j_i}, \gvar^m)
                &   \text{if $\gvar^l := \sum_i \avar^l_{j_i} \gvar^{j_i}$}\\
            (\sigma^{k\infty})^2\EV_{z}[\fvar^a(z) \fvar^b(z)]
                &   \text{if $\gvar^l := \Avar^k \hvar^a, \gvar^m := \Avar^k \hvar^b$}\\
            0   &   \text{else}
            \end{cases}
            \label{eqn:KRecur}
\end{align}
where $\hvar^a := \fvar^a(\gvar^{j_1}, \ldots, \gvar^{j_{k}}), \hvar^b := \fvar^b(\gvar^{j_1'}, \ldots, \gvar^{j'_{k'}})$, and
$z \sim \Gaus(\mu^\cdc, K^\cdc)$.
We also make branch 4 cover the case when $\gvar^l := \Avar^k \gvar^a$ or $\gvar^m:=\Avar^k \gvar^b$ by ``typecasting'' $\gvar^a$ to an H-var and setting $\fvar^a = \id$ (similarly for $\gvar^b$).
Note that, as discussed in Notations above, $\fvar^a$ will ignore irrelevant components of $a$, and the expectations only depend on the entries of $\mu^\cdc$ and $K^\cdc$ that correspond to already-defined values, so this describes a valid recursion.

\newcommand{\Ss}{\mathcal{S}}
We introduce the following technical assumption.

\begin{assm}[Almost Sure Rank Convergence]
For any $\Avar^k$ and any collection $\Ss \sbe \{\text{G- or H-var $h$}: \gvar^l := \Avar^k h \text{ for some $l$}\}$,
let $H^t \in \R^{\nvar^{\cdc t} \times |\Ss|}$ be the matrix whose columns are $\hvar^{mt}$ or $\gvar^{mt}$ for each $\hvar^m$ or $\gvar^m$ in $\Ss$.
If $\f 1 {\nvar^{\cdc t}} H^t{}^\trsp H^t \in \R^{|\Ss| \times |\Ss|}$ converges almost surely to some $C^*$ with $t \to \infty$, then almost surely $\rank H^t = \rank C^*$ for all large $t$.
\end{assm}

If we don't have lines of type \ref{linetype:lincomb}, and no $\fvar^l$ is a polynomial, then the $C^*$s are all full rank, implying rank convergence by the upper semicontinuity of rank.
\ref{linetype:lincomb} lines may add linear dependencies, but they are constant with $t$, so that $\rank H^t = \rank C^*$ in the limit and we still have rank convergence.

\begin{defn}\label{defn:alphaControlled}
For $\alpha > 0$, a function $\phi: \R^k \to \R$ is said to be \textit{$\alpha$-controlled} if for some $C, c > 0$, $|\phi(x)| \le e^{C \sum_{i=1}^k |x_i|^{\alpha} + c}$ for all $x \in \R^k$.
\end{defn}

\begin{restatable}{thm}{notransposeLimit}\label{thm:notransposeLimit}
Consider dimension constraints $\Lambda$ and a skeleton $\pi$ without \ref{linetype:trsp} lines, i.e. no transpose allowed.
Suppose all $\fvar^l$ are $\alpha$-controlled for some $\alpha < 2$.
Sample all input vars as in \cref{subsec:setup} and assume almost sure rank convergence.
Then for any $\cdc \in \Cdc$ and any $\alpha$-controlled function $\phi: \R^{\cdc} \to \R$, $\alpha < 2$,
\begin{align*}
    \f 1 {\nvar^{\cdc t}} \sum_{i=1}^{\nvar^{\cdc t}}\phi(\gvar^{\cdc t}_i) \asto \EV \phi(Z)
\end{align*}
where $\gvar^{\cdc t}_i = (\gvar^{lt}_i)_{\gvar^l \in \cdc}$ and $\R^\cdc \ni Z = (Z^g)_{g \in \cdc} \sim \Gaus(\mu^\cdc, K^\cdc)$.

\end{restatable}

\paragraph{Discussion}
Roughly speaking, G-vars created from the same matrix $\Avar^k$ have nonzero correlations, but otherwise are asymptotically independent modulo \ref{linetype:lincomb}.
Intuitively, $\gvar^{\cdc t}_i ``\disteq" \Gaus(\mu^\cdc, K^\cdc)$ for large $t$, iid for each $i$.

There is an apparent contradiction in \cref{thm:notransposeLimit} if we consider deep linear networks with tied ($y = W^L x \in \R^n, W \in \R^{n \times n}$) and untied weights ($y = \prod_{l=1}^L \p W l x, \forall l [\p W l \in \R^{n \times n}]$).
Via simple computations of $\mu^\cdc$ and $K^\cdc$, one sees that, by \cref{thm:notransposeLimit}, as the width $n \to \infty$, $y$ is distributed ``similarly'' in either case (in that $\alpha$-controlled moments match asymptotically).
This seems to contradict our intuition that $W^L x$ should blow up or decay exponentially, with $L$, along the direction of the eigenvector of $W$ corresponding to the largest eigenvalue of $W$; whereas in the untied case it's easy to see that each $y_i$ converges in distribution to an i.i.d. Gaussian.

This apparent paradox is resolved by noting that \cref{thm:notransposeLimit} only applies for fixed skeletons (so fixed $L$ in this example), as widths $\to \infty$.
By \citet{rider_limit_2003}, the maximum eigenvalue of $W$ scales like $1 + O(n^{-1/2})$ if $W_{ij} \sim \Gaus(0, 1/n)$, and so does that of $W^L$ for fixed $L$.
Furthermore, as $n$ increases, the components of $x$ corresponding to large eigenvalues ($\ge 1$) of $W$ decrease in magnitude to 0 in probability, by the circular law \cite{tao_topics_2012}.
So the $L$ at which the exponentiating effect of $W^L$ kicks in increases with $n$.

\section{Backprop with Zero Mean Gradients}

\newcommand{\Lnabla}{{L_\nabla}}
\label{sec:zeroMeanGrad}

\begin{figure}[t]
\centering
\begin{subfigure}[b]{0.45\textwidth}
\begin{align*}
    1&: \Avar^1 := A^1\\
        &\vdots\\
    L_A&: \Avar^{L_A} := A^{L_A}\\
    L_A + 1&: \gvar^{L_A+1} := x^1\\
        &\vdots\\
    L_A + L_g&: \gvar^{L_A + L_g} := x^{L_g}\\
    L_A + L_g + 1&: \ldots:=\ldots
        &   \text{noninput line types}\\
        &\vdots\\
    L&: \ldots := \ldots
        &\text{last line}
\end{align*}
\caption{Program $\pi$}
\label{fig:programPi}
\end{subfigure}
~
\begin{subfigure}[b]{0.45\textwidth}
\begin{align*}
    L+1&: \Avar^{L+1}    :=     (\Avar^1)^\trsp\\
                        &\vdots\\
    L+L_A&:  \Avar^{L+L_A}   := (\Avar^{L_A})^\trsp\\
    L+L_A+1&:    \gvar^{L+L_A+1} := v^{1}\\
                                &\vdots\\
    L+L_A+L_\nabla&:    \gvar^{L+L_A+L_\nabla} := v^{\Lnabla}
\end{align*}
\caption{Extended program $\tilde \pi$}
\label{fig:programTildePi}
\end{subfigure}
\caption{}
\end{figure}

Let $\pi$ be a skeleton with $L$ lines but no \ref{linetype:trsp}.
WLOG, suppose all input vars appear at the start, with matrix inputs first, as in \cref{fig:programPi}.
Consider an extension $\tilde \pi$ of $\pi$ in the following way:
The first few appended lines are transposes of A-vars in $\pi$, followed by a series of new vector input vars $\{v^l\}_{l=1}^{\Lnabla}$, as in \cref{fig:programTildePi}.
Lines appended below this can be arbitrary non-input lines except that
\begin{enumerate*}[label={(\arabic*)}]
    \item lines of type \ref{linetype:GLinear} must use a transposed matrix $\Avar^{L+1}$ to $\Avar^{L+L_A}$ and $\hvar^l$ or $\gvar^l$ must have been introduced after $\pi$ (i.e. $l > L$), and
    \item any $\hvar^l$ for $l > L + L_A + \Lnabla$, as a function of $v^1, \ldots, v^\Lnabla$, must be odd: for any fixed values of $\{\gvar^l\}_{l \le L},$
$\hvar^l(-v^1, \ldots, -v^\Lnabla, \{\gvar^l\}_{l \le L}) = -\hvar^l(v^1, \ldots, v^\Lnabla, \{\gvar^l\}_{l \le L})$; likewise $\gvar^l$ must be odd for $l > L + L_A$.
    This in particular means that \ref{linetype:lincomb} lines cannot involve $\gvar^l$ for $l \le L$.
\end{enumerate*}

If $\pi$ expresses the forward computation of an NN $f$ without matrix transposes, then $\tilde \pi$ has enough power to express the backpropagation of $f$ and compute the gradients with respect to hidden states.
For example, if $f(x) = v^\trsp \rho(x)$, so that $\pdf f x = v^\trsp \pdf{\rho}x$, then $\pdf f x$ is an odd function of $v$ and can be expressed as a $\tilde \pi$ as above (see \cref{subsec:exampleFCFF} for a concrete example).
In general, the multiple $\{v^i\}$ allow for multiple NN outputs.

CDCs are naturally defined for $\tilde \pi$ (see \cref{sec:CDC}) just like before.
We extend $\mu^\cdc$ and $K^\cdc$ to vars introduced in $\tilde \pi$:
For $l, m > 0,$ and $k \le L_A,$ set $\mu^\cdc(\gvar^{L+l})  = 0$ and when $l > L$ or $m > L$, set $K^\cdc(\gvar^l, \gvar^m) = $
\begin{align*}
    \begin{cases}
            K^{\cdcin \infty}(\gvar^l, \gvar^m)   &   \text{if $\gvar^l, \gvar^m$ are input vars}\\
            \sum_i \avar^m_{j_i} K^\cdc(\gvar^l, \gvar^{j_i})
                &   \text{if $\gvar^m := \sum_i \avar^m_{j_i} \gvar^{j_i}$}\\
            \sum_i \avar^m_{j_i} K^\cdc(\gvar^{j_i}, \gvar^m)
                &   \text{if $\gvar^l := \sum_i \avar^l_{j_i} \gvar^{j_i}$}\\
            (\sigma^{k\infty})^2 \alpha
                \EV_{z}\fvar^a(z) \fvar^b(z)
                &   \text{if $\gvar^l := \Avar^{L+k} \hvar^a, \gvar^m := \Avar^{L+k} \hvar^b$}\\
            0   &   \text{else}
            \end{cases}
\end{align*}
where $\alpha = \alpha_{\cdc_1(\Avar^k), \cdc_2(\Avar^k)} = \lim_{t \to \infty} \f{\nvar^{\cdc_1(\Avar^k)t}}{\nvar^{\cdc_2(\Avar^k)t}}$, and branch 4 covers the case when $\gvar^l := \Avar^{L+k} \gvar^a$ or $\gvar^m := \Avar^{L+k} \gvar^b$ by taking $\fvar^a$ or $\fvar^b$ to be identity.
Note that covariances between vars of $\pi$ and new vars in $\tilde \pi$ are 0.

\begin{restatable}{thm}{gradIndep}\label{thm:gradIndep}
Sample $\{v^{it}\}_{i=1}^{\Lnabla}$ with zero mean (i.e. $\mu^{\cdcin t}(\gvar^{L+L_A+i}) \to 0$ for all $i \in [\Lnabla]$) and independently from the input vars $\{x^{lt}\}_{l=1}^{L_g}$ (i.e. $K^{\cdcin t}(\gvar^l, \gvar^{l'}) = 0$ if $l > L \ge l'$)
\footnote{In our previous example of $f(x) = v^\trsp \rho(x)$, this corresponds to the readout layer $v$ sampled with zero mean and independently from $x$ and other parameters of $\rho$.}.
Sample all other vars in $\tilde \pi$ according to \cref{subsec:setup}.
Assume all $\fvar^l$ of $\tilde \pi$ are polynomially bounded and $\tilde \pi$ satisfies almost sure rank convergence.
Then for any dimension constraints $\Lambda$, any $\cdc \in \Cdc_{(\tilde \pi, \Lambda)}$, and any polynomially bounded function $\phi: \R^{\cdc} \to \R$,
\begin{align*}
    \f 1 {\nvar^{\cdc t}} \sum_{i=1}^{\nvar^{\cdc t}}\phi(\gvar^{\cdc t}_i) \asto \EV \phi(Z)
\end{align*}
where $\gvar^{\cdc t}_i = (\gvar^{lt}_i)_{\gvar^l \in \cdc}$ and $Z \sim \Gaus(\mu^\cdc, K^\cdc)$.
\end{restatable}

Note that our result here does not apply to batchnorm, whose Jacobian has a singularity on a 1-dimensional affine subspace (and in particular, at the origin).
This theorem allows one to justify the gradient independence assumption rigorously; see \cref{subsec:gradIndAssm}.

\section{General Tensor Programs}

\cref{thm:gradIndep} does not give the correct computation if $\{v^{it}\}_i$ do not have zero mean:
Consider a one-hidden-layer MLP with quadratic activation, $f(x) = \onev^\trsp \phi(W x), \phi(-) = \f 1 2 (-)^2, x \in \R^{n^0}, W \in \R^{n^1 \times n^0}, \onev \in \R^{n^1}.$
Then $\pdf f x = W^\trsp (\onev \odot (W x)) = W^\trsp W x$.
If $n^1 = n^0$, and $W_{ij} \sim \Gaus(0, 1/n^0)$, then $ \EV \pdf f {x_i} = x_i$.
If we have assumed \cref{thm:gradIndep} is correct, then we would have (incorrectly) computed $\EV \f 1 {n^0} \sum_{i=1}^{n^0} \pdf f {x_i} \asto \EV \f 1 {n^0} \sum_{i=1}^{n^0} (W'{}^\trsp W x)_i = 0$ where $W'$ is an iid copy of $W$.

Below, we develop a theory of the scaling limit of general tensor programs, from which follows the correct way of computing gradients when $v^{it}$ do not have 0 mean.

We first introduce ``extended syntax'' programs, which are equivalent semantically to programs of original syntax, and then show that we can ``compile'' original syntax programs to extended syntax programs with no transposes, with the same scaling limit in a suitable sense.

\begin{defn}
\textit{Extended syntax} programs are those that allow all line types of \cref{defn:linetypes} and in addition
\begin{description}
    \item[Comp\label{linetype:Hcomp}] (H) if $\nvar^{j_1} = \cdots = \nvar^{j_k}$, then an assignment via some general (possibly nonlinear) function $\fvar^l: \R^k \to \R$
    $$l: \hvar^l := \fvar^l(h^1, \ldots, h^k) \in \R^{\nvar^l} = \R^{\nvar^{j_1}}$$
    where $h^1, \ldots, h^k$ are previous G- {\it or} H-vars, and $\fvar^l$ acts coordinatewise.
\end{description}
\end{defn}
So in essence, extended syntax programs just allow \ref{linetype:nonlin} lines to take H-vars in addition to G-vars.
While in the original syntax, H-vars must feed into lines of type~\ref{linetype:GLinear}, in extended syntax they can also be used to create new H-vars via coordinatewise action.

One can define CDCs for extended syntax programs just as before (see \cref{sec:CDC}).
Each extended syntax program is equivalent to an original syntax program, by expanding the definition of each H-var to a function of previous G-vars.
For example, if $\hvar^l := \fvar^l(\hvar^k, \gvar^m)$, and $\hvar^k := \fvar^k(\gvar^r)$, then the expanded definition of $\hvar^l$ is $\fvar^l(\fvar^k(\gvar^r), \gvar^m)$.
We call this the {\it expanded definition} of $\hvar^l$, and write $\fvar^{\hvar^l}$ for this expanded function, so that
$\hvar^l = \fvar^{\hvar^l}(\gvar^{l_1}, \ldots, \gvar^{l_m})$ for some $l_1, \ldots, l_m < l$; for G-vars, we also define $\fvar^{\gvar^l} = \id$.
(In our example above, $\fvar^{\hvar^l}(x, y) = \fvar^l(\fvar^k(x), y)$).
So by replacing each line of type~\ref{linetype:Hcomp} with its expanded definition, we can convert an extended syntax program to an original syntax program with the same semantics for all vector vars.

\begin{defn}\label{defn:detransposition}
Let $\pi$ be an original syntax skeleton with associated sampling data $\{\sigma^{lt}\}_{\Avar^l}, \{\mu^{\cdcin t}, K^{\cdcin t}\}_{\cdc \in \Cdc}.$
We define an extended syntax skeleton $\check \pi$, called the {\it detransposition} of $\pi$, by induction on line number as follows.
During this process, we keep track of an injective mapping $\varphi$ taking vector (resp. matrix) vars of $\pi$ to vector (resp. matrix) vars of $\check \pi$, along with a specialized mapping $\varphig$ taking a G-var of $\pi$ produced by \ref{linetype:GLinear} to a G-var of $\check \pi$.
We use a check $\check{\phantom{x}}$ to denote objects of the detransposition.
We also simultaneously set $\{\check \sigma^{lt}\}_{\check \Avar^l}$, $\{\mu^{\check \cdcin t}\}_\cdc$ and $\{K^{\check \cdcin t}\}_\cdc$ of the detransposition.
They propagate according to the usual rules, \cref{eqn:KRecur,eqn:muRecur}, to determine $\mu^{\check \cdc}$ and $K^{\check \cdc}$.
Let $l$ be the current line number of $\pi$ we are processing, and let $\ell$ denote the 1 + length of the current $\check \pi$ (this is where we are adding new lines in $\check \pi$).
\begin{enumerate}
    \item
        If $\gvar^l := x$ is a \ref{linetype:G} line, then add a line of the same type to $\check \pi$, $\ell: \check \gvar^\ell := x$.
        Set $\varphi(\gvar^l) \gets \check \gvar^\ell$,
        $\mu^{\check \cdcin t}(\check \gvar^\ell) \gets  \mu^{\cdcin t}(\gvar^l),$
        where $\check \cdc = \cdc(\check \gvar^l)$ and $\cdc = \cdc(\gvar^l)$.
        Set $K^{\check \cdcin t}(\check \gvar^\ell, \varphi(\gvar^m)) \gets
        K^{\cdcin t}(\gvar^l, \gvar^m)$ for all input G-vars $\gvar^m$ with $m < l$.
    \item 
        If $\Avar^l:= A$ is a \ref{linetype:A} line, then add to $\check \pi$ the line $\ell: \check \Avar^\ell := A$.
        Set $\varphi(\Avar^l) \gets \check \Avar^\ell$ and $\check \sigma^{\ell t} \gets \sigma^{lt}$ for all $t \in [1, \infty]$.
    \item
        If $\Avar^l := \Avar^m{}^\trsp$ is a \ref{linetype:trsp} line, then add to $\check \pi$ an input line $\ell: \check \Avar^\ell := A'$ for a new input $A'$ sampled iid as $\Avar^m{}^\trsp$.
        Set $\varphi(\Avar^l) \gets \check \Avar^\ell$ and $\check \sigma^{\ell t} \gets \sqrt{\f{\nvar_1(\Avar^{mt})}{\nvar_2(\Avar^{mt})}} \sigma^{m t}, \forall t \in [1, \infty].$

    \item
        If $\gvar^l := \avar^l_{j_1}\gvar^{j_1} + \cdots + \avar^l_{j_k}\gvar^{j_k}$ is an \ref{linetype:lincomb} line in $\pi$, we add a line of the same type in $\check \pi$:
        $$\ell: \check\gvar^\ell := \avar^l_{j_1}\varphi(\gvar^{j_1}) + \cdots + \avar^l_{j_k}\varphi(\gvar^{j_k})$$
        and set $\varphi(\gvar^l) \gets \check \gvar^\ell$
        if each of $\varphi(\gvar^{j_i}) $ is a G-var in $\check \pi$; or we add a line of type~\ref{linetype:Hcomp}
        $$\ell: \check\hvar^\ell := \avar^l_{j_1}\varphi(\gvar^{j_1}) + \cdots + \avar^l_{j_k}\varphi(\gvar^{j_k})$$
        and set $\varphi(\gvar^l) \gets \check \hvar^\ell$
        if some $\varphi(\gvar^{j_i}) $ is an H-var in $\check \pi$.
    \item
        If $\hvar^l := \fvar^l(\gvar^{j_1}, \ldots, \gvar^{j_k})$ is a line of type~\ref{linetype:nonlin}, then we add to $\check \pi$ a line of type~\ref{linetype:Hcomp}
        $$\ell: \check\hvar^\ell := \check\fvar^\ell(\varphi(\gvar^{j_1}), \ldots, \varphi(\gvar^{j_k}))$$
        where $\check \fvar^\ell = \fvar^l$, and we set $\varphi(\hvar^l) \gets \check \hvar^\ell.$
        (If all $\varphi(\gvar^{j_i})$ are G-vars then we also typecast this line to \ref{linetype:nonlin})
    \item \label{detrx:GLinear}
        Suppose $\gvar^l := A \hvar^m$ is a line of type~\ref{linetype:GLinear} in $\pi$, where $A$ is some previous $A$-var.
        Consider the A-var $A'$ where $A' := A^\trsp$ if $A$ is an input A-var, or $A := A'{}^\trsp$ if $A$ is a transposed var.
        Let $g^i := A' h^i, i = 1, \ldots, s,$ be all previous lines of type~\ref{linetype:GLinear} involving $A'$, where $h^i$ can be G- or H-var.
        Define $C \in \R^{s \times s}, v \in \R^s$ by
        \begin{align*}
            C_{ij}
                &=
                    \EV \fvar^{\varphi(h^i)}(Z) \fvar^{\varphi(h^j)}(Z),
                    \\
            v_i
                &=
                    \EV \fvar^{\varphig(g^i)}(Z) \fvar^{\varphi(\hvar^{m})}(Z)
                    ,
        \end{align*}
        where $Z \sim \Gaus(\mu^{\check \cdc}, K^{\check \cdc})$ (the expectation will only depend on components of $Z$ corresponding to previous lines).
        Compute $\avar = \alpha C{}^+ v \in \R^s$, where $\alpha = \lim_{t} \f{\nvar_2(A^t)}{\nvar_1(A^t)}.$
        Then we add the following to $\check \pi$:
        \begin{align*}
            \ell&: \check \gvar^\ell := \varphi(A) \varphi(\hvar^m) & \text{\ref{linetype:GLinear}}\\
            \ell+1&: \check \hvar^{\ell+1} := \check \gvar^{\ell} + \sum_{j=1}^s \avar_j \varphi(h^j) & \text{\ref{linetype:Hcomp}}
        \end{align*}
        If $\varphi(h^j)$ are all G-vars, we typecast line $\ell+1$ to \ref{linetype:lincomb} and write $\check \gvar^{\ell+1}$ instead.
        Set $\varphi(\gvar^l) \gets \check \hvar^{\ell+1}$ and $\varphig(\gvar^l) \gets \check \gvar^\ell.$

\end{enumerate}
\end{defn}

See \cref{subsec:exampleDetransposition} for a concrete example of detransposition .
Below, for any $\cdc \in \Cdc_\pi$, let $\bar \cdc$ be $\cdc \cup \{h: \cdc(h) = \cdc\}$, i.e. the collection of H- or G-vars with the same dimension constraint; see \cref{sec:CDC}.
\begin{restatable}{thm}{generalTensorP}\label{thm:generalTensorP}
Let $\pi$ be an original syntax program with sampling instructions, and $\check \pi$ be the detransposition of $\pi$, with $\varphi$ the mapping from vector vars of $\pi$ to vector vars of $\check \pi$.
Assume all $\fvar^l$ of $\pi$ are polynomially bounded, and that almost sure rank convergence holds for $\check \pi$.
Sample input vars of $\pi$ according to \cref{subsec:setup}.
Then for any dimension constraints $\Lambda$, any $\cdc \in \Cdc_{(\pi, \Lambda)}$, and any polynomially bounded function $\phi: \R^{\bar\cdc} \to \R$,
\begin{align*}
    \f 1 {\nvar^{\cdc t}} \sum_{i=1}^{\nvar^{\cdc t}}
        \phi(\hvar_i^{\cdc t})
            \asto
                \EV \phi((\fvar^{\varphi(h)}(Z))_{h \in \bar\cdc})
\end{align*}
where $\hvar^{\cdc t}_i$ is the sequence of the $i$th coordinates of all vector vars in $\bar\cdc$, and
$Z \sim \Gaus(\mu^{\check \cdc}, K^{\check \cdc}).$

If all $\fvar^l$ in $\pi$ are differentiable, then we can take $\avar$ in \cref{detrx:GLinear} of \cref{defn:detransposition} to be $\alpha \sigma^2 (\EV \pd_{Z^{\varphig(g^i)}} \fvar^{\varphi(\hvar^m)}(Z))_{i \in [s]}$ \footnote{even if not all $\fvar^l$ are differentiable, as long as the covariance $\{K^{\check \cdc}(\varphig(g^i), \varphig(g^j))\}_{i,j\in[s]}$ is nonsingular, we may take the distributional derivatives and interpret the expectation as the application of this (tempered) distribution to the Gaussian density function. Then the theorem holds under this interpretation.
Even if the covariance is singular, we may consider a subset $\{\varphig(g^i)\}_{i \in \mathcal{I}}$ of maximal rank, and still apply the tempered distribution interpretation; see the proof for more details.
}, where $\sigma = \sigma^{r\infty}$ and $r$ is the line number of $\varphi(A')$, and the above almost sure convergence will still hold.
\end{restatable}

See \cref{subsec:warmupConseq,warmup:semicirclelaw,warmup:marchenkopasturlaw} for example applications of this theorem to random MLP, and rederivation of the semicircle and Marchenko-Pastur laws.
\cref{thm:generalTensorP} has the following basic intuition.
Let $g = A^\trsp F((A h^j)_{j=1}^k)$ for $A \in \R^{n \times m}, h^i \in \R^m, F: \R^{k \times n} \to \R^{n}, ((z^j \in \R^n)_{j=1}^k) \mapsto F((z^j)_{j=1}^k)$.
Here $F$ should be thought of the stretch of $\pi$ that separates $g$ from previous G-vars induced by $A$.
Then, semirigorously, as $n,m \to \infty$ and $A_{ij} \sim \Gaus(0, 1/n),$ for any $i \in [m]$,
$$
\EV y_i
    \approx
        \EV A_{:i}^\trsp \left\{ F((A^{\setminus i} h^j_{\setminus i})_{j=1}^k)
        +
        \la F', (A_{:i} h_i^j)_{j=1}^k\ra\right\}
$$
where $A^{\setminus i}$ is $A$ without $i$th column, $h^j_{\setminus i}$ is $h^j$ without $i$th coordinate, and $F'$ is the Jacobian of $F$ at $(A^{\setminus i} h^j_{\setminus i})_{j=1}^k$.
Now $\EV  A_{:i}^\trsp \la F', (A_{:i} h_i^j)_{j=1}^k\ra = \sum_{j=1}^k h^j_i \sum_{m=1}^n \pdf {F_m}{z_m^j}$ because $F'$ is independent of $A_{:i}$.
Likewise, $ A_{:i}^\trsp F((A^{\setminus i} h^j_{\setminus i})_{j=1}^k)$ is approximately a Gaussian with 0 mean.
Then $g$ is roughly a Gaussian plus a linear combination of $\{h^j\}_{j=1}^k$.
So, unlike restricted programs in \cref{thm:notransposeLimit,thm:gradIndep}, we do not expect $g$ to be ``Gaussian'' in the limit, as $h^j$ could be the image of an activation function.
We can, however, still keep track of $g$'s decomposition into a Gaussian and a linear combination part --- this is the key idea behind detransposition, and in particular, its step 6.
There, $\vec a'_j$ are the coefficients of this linear combination, while $\vec a_i$ records the correlation between $g$ and previous G-vars induced by $A^\trsp$.
Each $\vec a'_j$ can be seen to be exactly the coefficient computed heuristically here by applying Stein's lemma (\cref{lemma:stein}) when $r=0$ in step 6.

\section{Proof Techniques}

While our tensor program framework is new and \citet{bayati_dynamics_2011} is concerned with a much simpler setting of AMP algorithms, its technique of Gaussian conditioning is useful to us (\cref{lemma:condTrick}): If $A$ is a Gaussian matrix, then conditioned on $G = AH, G' = A^\trsp H'$, the distribution of $A$ is $E + \Pi \tilde A \Pi'$ for some mean matrix $E$, projection matrices $\Pi, \Pi'$, and $\tilde A$ distributed iid as $A$.
If we let $G$ and $H$ be previous G- and H-vars in \ref{linetype:GLinear} lines involving $A$, and similarly for $G', H'$ with respect to $A^\trsp$, this allows us to induct on line number by conditioning on previous lines.

Compared to \citet{bayati_dynamics_2011}, our input G-vars have all finite moments, whereas the analogous quantities in \citet{bayati_dynamics_2011} just have a bounded number of them.
This allows us to simplify the induction somewhat, and remove the smoothness assumption on (the functions playing the same role as) $\fvar^l$ that is required in \citet{bayati_dynamics_2011}.
The latter is a result of two facts:
\begin{enumerate*}
    \item Gaussian averaging is smooth: $\EV[\Phi(z): z \sim \Gaus(\mu, \Sigma)]$ is generically smooth in $\mu$ and $\Sigma$.
    \item if $\Pi \in \R^{n \times n}$  is a projection matrix of rank $n - O(1)$, then $\Pi z$ for an isotropic Gaussian vector $z$ has approximately independent coordinates, in so far as a law of large number is concerned.
    This is shown by first bounding the off-diagonal correlations of $\Pi$ using linear programming duality (\cref{lemma:projectionCorrelation}) and then bounding the moments of $\sum_i \phi(\Pi z)_i$ using the Hermite expansion of $\phi$ and the previous bound on correlations of $\Pi z$ (\cref{thm:controlHighMoments}).
\end{enumerate*}
Again, these two tools were not accessible to \citet{bayati_dynamics_2011} because of their assumptions of only a finite number of input moments.
Note that a more straightforward application of the logic of \citet{bayati_dynamics_2011} would not allow us to reason about nonsmooth functions such as the step function which appears as the gradient of ReLU.

\section{Discussion}
\label{sec:discussion}

In this paper, we have introduced a notion of a \emph{tensor program} able to express almost all neural network computations in modern deep learning, and characterized its scaling limits.
As corollaries, we generalized the DNN-GP correspondence, rigorously derived correct equations for signal propagation in neural networks, and proved the convergence of NTK, among many other results discussed in \cref{sec:related}.

While our results assume Gaussian sampling, we expect the results to hold when we sample from other ``nice'' distributions (say, with a few finite moments).
In the random matrix and statistical physics literature, this \emph{universality} is very common.
In our case, the central limit intuition for DNN-GP correspondence, for example, would indeed hint at this.

We also believe that the rank convergence assumptions are not necessary, but just side effects of our Gaussian conditioning technique.
One should be able to remove them by considering some stability property of tensor programs.

Our framework, while surprisingly general, as presented doesn't cover a few deep learning layers, but can be easily extended to do so in all but one case:
\begin{itemize*}
    \item
        Dropout.
        Our framework can already cover ``Gaussian dropout''; binary dropout can be incorporated easily as well by introducing Bernoulli random variables in the way of \citet{schoenholz_deep_2017}.
    \item
        Layernorm.
        Our framework only allows $\fvar^l$ with a fixed signature, as ``width'' grows, for example batchnorm with a fixed batch size.
        However, as we take width to infinity, the signature for layernorm also changes.
        But our theorems show that the mean and variance of the layer converges a.s. to a deterministic limit, so that in the forward pass, layernorm is asymptotically the same as a linear, coordinatewise $\fvar^l$.
        A brief computation shows that the gradient of layernorm can also be asymptotically expressed via a tensor program in a similar way.
        Nevertheless, non-``coordinatewise'' $\fvar^l$ is worth investigating in the future, perhaps to take inspiration from the work of \citet{berthier_state_2017} that studied this scenario in the setting of State Evolution for AMP.
    \item
        Batchnorm, when reasoning about gradients.
        Our framework does not allow singularities in $\fvar^l$, whereas during backprop batchnorm's derivative contains a singularity at the origin.
        \citet{yang_mean_2018} demonstrates that empirically our equations should still extend to this case.
        We leave this to future work.
\end{itemize*}

Our scaling limit results only apply to fixed tensor program skeletons.
This would be enough to derive the behavior of a DNN on a dataset which is small compared to the width.
But perhaps more reasonable is when the dataset size is commensurate or perhaps even larger than the width of the network.
This would require taking a joint limit in both the skeleton size, over the data distribution, and over the dimensions $\{\nvar^l\}_l$;
see \citet{pennington_nonlinear_2017,pennington_spectrum_2018} for analogous settings for 2 or 3 layer networks and Gaussian data.
We leave the investigation of this to future work.

The tensor program framework naturally lends to an automation of computations regarding random neural networks, given the program underlying it.
Our community might find valuable a module in PyTorch or Tensorflow that computes the corresponding $\mu^\cdc$ and $K^\cdc$ automatically given the tape (for PyTorch) or the computation graph (for Tensorflow).

We hope the tools presented in this work will be adopted by any researcher interested in studying random neural networks.

\section*{Acknowledgement}
Thanks are due to Jascha Sohl-Dickstein, Sam Schoenholz, Jeffrey Pennington, Raphael Berthier, Ilya Razenshteyn, Pengchuan Zhang, Hadi Salman, and Zeyuan Allen-Zhu for discussions and help on initial drafts

\newpage
\onecolumn
\bibliography{references}
\bibliographystyle{icml2019}

\clearpage
\appendix
\onecolumn

\section{Common Dimension Classes}

\label{sec:CDC}
We present an algorithm below to compute each CDC $\cdc$ of $(\pi, \Lambda)$.
We write $\cdc(g)$ for the CDC associated to the G-var $g$, and we as well associate CDC $\cdc(h)$ to each H-var and left- and right- CDCs $\cdc_1(A), \cdc_2(A)$ to each A-var.
Here $\gvar^l$ and $\gvar^m$ should be interpreted as elements of the set $\cdc$ and not as vectors.
We induct on lines in the skeleton $\pi$.
\newcommand{\Lambdasim}{\overset{\Lambda}{\sim}}
First let $\Lambdasim$ be the smallest equivalence relation on G-vars such that $\gvar^l \Lambdasim \gvar^m$ if $(\nvar^l, \nvar^m) \in \Lambda$.
\begin{enumerate}
    \item \ref{linetype:G} $\gvar^l$.
        Set $\cdc(\gvar^l) \gets \{\gvar^m: \gvar^m \Lambdasim \gvar^l\}$.
    \item \ref{linetype:A} $\Avar^l$.
        Set $\cdc_1(\Avar^l) \gets \{\}, \cdc_2(\Avar^l) \gets \{\}$.
    \item \ref{linetype:trsp} $\Avar^l \defeq (\Avar^k)^\trsp$.
        Set $\cdc_1(\Avar^l) \gets \cdc_2(\Avar^k), \cdc_2(\Avar^l) \gets \cdc_1(\Avar^k).$
    \item \ref{linetype:GLinear}
        \begin{enumerate}
            \item If $\gvar^l := \Avar^k \gvar^j$:
                Merge $\cdc(\gvar^j) \gets \cdc(\gvar^j) \cup \cdc_2(\Avar^k) \to \cdc_2(\Avar^k)$ and $\cdc(\gvar^l) \gets \cdc_1(\Avar^k) \cup \{\gvar^m: \gvar^m \Lambdasim \gvar^l\} \to \cdc_1(\Avar^k)$.
            \item If  $\gvar^l := \Avar^k \hvar^j$:
                Merge $\cdc(\hvar^j) \gets \cdc(\hvar^j) \cup \cdc_2(\Avar^k) \to \cdc_2(\Avar^k)$ and $\cdc(\gvar^l) \gets \cdc_1(\Avar^k) \cup \{\gvar^m: \gvar^m \Lambdasim \gvar^l\} \to \cdc_1(\Avar^k)$.
        \end{enumerate}
    \item \ref{linetype:lincomb} $\gvar^l := \avar^l_{j_1}\gvar^{j_1} + \cdots + \avar^l_{j_k}\gvar^{j_k}$.
        Merge $\cdc(\gvar^{j_i}) \gets \{\gvar^m: \gvar^m \Lambdasim \gvar^l\} \cup \bigcup_{i'} \cdc(\gvar^{j_{i'}}) \to \cdc(\gvar^l)$ for all $i$.
    \item \ref{linetype:nonlin} $\hvar^l := \fvar^l(\gvar^{j_1}, \ldots, \gvar^{j_k}) \in \R^{\nvar^l} = \R^{\nvar^{j_1}}$.
        Merge $\cdc(\gvar^{j_i}) \gets \bigcup_{i'} \cdc(\gvar^{j_{i'}}) \to \cdc(\hvar^l)$ for all $i$.
    \item \ref{linetype:Hcomp} similar to \ref{linetype:nonlin}.
\end{enumerate}

\section{Example Programs}
\label{sec:examplePrograms}
\subsection{MLP}
\label{subsec:exampleFCFF}
\begin{align*}
    1 &: \gvar^1 := W^1 x &   \text{network input multiplied by weight matrix}\\
    2 &: \gvar^2 := b^1 &   \text{layer 1 bias}\\
    3 &: \gvar^3 := \gvar^1 + \gvar^2   &   \text{layer 1 preactivation}\\
    4 &: \hvar^4 := \phi(\gvar^3)   &   \text{layer 1 activation}\\
    5 &: \Avar^5 := W^2 &   \text{layer 2 weights}\\
    6 &: \gvar^6 := b^2 &   \text{layer 2 biases}\\
    7 &: \gvar^7 := \Avar^5 \hvar^4\\
    8 &: \gvar^8 := \gvar^7 + \gvar^6   &   \text{layer 2 preactivations}\\
    9 &: \hvar^9 := \phi(\gvar^8)   &   \text{layer 2 activations}
\end{align*}

In line 1 above, we could also spend a few more lines and equivalently set $\gvar^1 := \sum_{i=1}^{n^0} x_i \gvar^{-i}$ where $\gvar^{-i} := W^1_{:i}$.
For brevity, we adopt the current approach, but later for reasoning about backprop, this way of expression $W^1 x$ is useful.
Note that we can also express $x$ as its own deterministic input G-var and $W^1$ as an input A-var, but the program given here is more consistent with our scaling limit, which takes the hidden layer width to infinity but keeps the input dimension fixed.

Backprop of fully-connected, feedforward:
\begin{align*}
    10 &: \gvar^{10} := \nabla_{\hvar^9} L    &   \text{last layer gradient}\\
    11 &: \hvar^{11} := \phi'(\gvar^8) \odot \gvar^{10} &   \text{layer 2 preactivation gradient}\\
    12 &: \Avar^{12} := (\Avar^5)^\trsp\\
    13 &: \gvar^{13} := \Avar^{12} \hvar^{11}   &   \text{layer 1 activation gradient}\\
    14 &: \hvar^{14} := \phi'(\gvar^3) \odot \gvar^{13}  &   \text{layer 1 preactivation gradient}
\end{align*}
Here $\nabla_{\hvar^9} L$ can be any vector but in the context of neural networks, it can be thought as the gradient of some loss function $L$ obtained through some readout layer.

\subsubsection{Detransposition}
\label{subsec:exampleDetransposition}
We demonstrate the detransposition $\check \pi$ of the above program $\pi$.
Line 1 to line 9 are almost copied verbatim to $\check \pi$ because the only matrix multiplications involve new A-vars.

\begin{align*}
    1 &: \check\gvar^1 := W^1 x & \varphi(\gvar^1) = \check \gvar^1\\
    2 &: \check \gvar^2 := b^1 & \varphi(\gvar^2) = \check \gvar^2\\
    3 &: \check \gvar^3 := \check \gvar^1 + \check \gvar^2 & \varphi(\gvar^3) = \check \gvar^3\\
    4 &: \check\hvar^4 := \phi(\check \gvar^3) & \varphi(\hvar^4) = \check \hvar^4\\
    5 &: \check \gvar^5 := b^2  &   \varphi(\gvar^5) = \check \gvar^5\\
    6 &: \check \Avar^6 := W^2 & \varphi(\Avar^6) = \check \Avar^6\\
    7 &: \check \gvar^7 := \check \Avar^6 \check \hvar^4 & \text{begin \ref{linetype:GLinear} conversion; $\varphig(\gvar^7) = \check \gvar^7$}\\
    8 &: \check \gvar^8 := \check \gvar^7 + 0 & \text{end \ref{linetype:GLinear} conversion; $\varphi(\gvar^7) = \check \gvar^8$}\\
    9 &: \check \gvar^{9} := \check \gvar^8 + \check \gvar^5 & \varphi(\gvar^8) = \check \gvar^{9}\\
    10 &: \check \hvar^{10} := \phi(\check \gvar^{9}) & \varphi(\hvar^9) = \check \hvar^{10}
\end{align*}

Now to convert line 10 to 14 of $\pi$
\begin{align*}
    11 &: \check \gvar^{11} := \nabla_{\hvar^9} L & \varphi(\gvar^{10}) = \check \gvar^{11}\\
    12 &: \check \hvar^{12} := \phi'(\check \gvar^{9}) \odot \check \gvar^{11}
        & \varphi(\hvar^{11}) = \check \hvar^{12}\\
    13 &: \check \Avar^{13} := \text{iid copy of }W^2{}^\trsp & \varphi(\Avar^{12}) = \check \Avar^{13}\\
    14 &: \check \gvar^{14} := \check \Avar^{13} \check \hvar^{12} & \text{begin \ref{linetype:GLinear} conversion; $\varphig(\gvar^{13}) = \check \gvar^{14}$}\\
    15 &: \check \hvar^{15} := \check \gvar^{14} + \avar' \check \hvar^4 & \text{end \ref{linetype:GLinear} conversion; $\varphi(\gvar^{13}) = \check \hvar^{15}$}\\
    16 &: \check \hvar^{16} := \phi'(\check \gvar^3) \odot \check \hvar^{15}
\end{align*}
where 
\begin{align*}
    \avar'
        &=
            \alpha (\EV \fvar^{\varphi(\hvar^4)}(Z)^2)^{-1} \EV \fvar^{\varphig(\gvar^7)}(Z) \fvar^{\varphi(\hvar^{11})}(Z)
            \\
        &=
            \alpha (\EV \fvar^{\check \hvar^4}(Z')^2)^{-1} \EV \fvar^{\check \gvar^7}(Z) \fvar^{\check \hvar^{12}}(Z)
            \\
        &=
            \alpha (\EV \phi(Z'_3)^2)^{-1}\EV Z_7 \phi'(Z_{9}) Z_{11}
\end{align*}
with $Z, Z' \sim \Gaus(\mu^{\check \cdc}, K^{\check \cdc})$ and $\alpha = \lim \nvar_1(\Avar^6)/\nvar_2(\Avar^6).$

\subsection{Batched input}
\label{subsec:exampleBatch}
For the second input $y$ in the batch
\begin{align*}
    15 &: \gvar^{15} := W^1 y^0 &   \text{2nd input multiplied by (same) weight matrix}\\
    16 &: \gvar^{16} := \gvar^{15} + \gvar^2 &  \text{using same bias as before}\\
    17 &: \hvar^{17} := \phi(\gvar^{16}) &  \text{layer 1 activation}\\
    18 &: \gvar^{18} := \Avar^5 \hvar^{17} & \text{using same weight matrix}\\
    19 &: \gvar^{19} := \gvar^{18} + \gvar^6   &   \text{using same bias}\\
    20 &: \hvar^{20} := \phi(\gvar^{19})   &   \text{layer 2 activations}
\end{align*}
Gradients:
\begin{align*}
    21 &: \gvar^{21} := \nabla_{\hvar^{20}} L    &   \text{last layer gradient for input $y$}\\
    22 &: \hvar^{22} := \phi'(\gvar^{19}) \odot \gvar^{21} &   \text{layer 2 preactivation gradient}\\
    23 &: \gvar^{23} := \Avar^{12} \hvar^{22}   &   \text{layer 1 activation gradient; using same weights}\\
    24 &: \hvar^{24} := \phi'(\gvar^{16}) \odot \gvar^{23}  &   \text{layer 1 preactivation gradient}
\end{align*}

\subsection{Residual network}
\label{subsec:exampleResnet}
Style 1: resblock merges after weights
\begin{align*}
    1 &: \gvar^1 := W^1 x &   \text{network input multiplied by weight matrix}\\
    2 &: \gvar^2 := b^1 &   \text{resblock 1 bias}\\
    3 &: \gvar^3 := \gvar^1 + \gvar^2   &   \text{resblock 1 preactivation}\\
    4 &: \hvar^4 := \phi(\gvar^3)   &   \text{resblock 1 activation}\\
    5 &: \Avar^5 := W^2 &   \text{resblock 1 merge weights}\\
    6 &: \gvar^6 := b^2 &   \text{resblock 1 merge biases}\\
    7 &: \gvar^7 := \Avar^5 \hvar^4\\
    8 &: \gvar^8 := \gvar^1 + \gvar^7 + \gvar^6   &   \text{return to main branch}
\end{align*}

Style 2: resblock merges before weights
\begin{align*}
    1 &: \gvar^1 := W^1 x &   \text{network input multiplied by weight matrix}\\
    2 &: \gvar^2 := b^1 &   \text{resblock 1 bias}\\
    3 &: \gvar^3 := \gvar^1 + \gvar^2   &   \text{resblock 1 preactivation}\\
    4 &: \hvar^4 := \phi(\gvar^3) + \gvar^1   &   \text{resblock 1 activation, merge back to main branch}\\
    5 &: \Avar^5 := W^2\\
    6 &: \gvar^6 := b^2\\
    7 &: \gvar^7 := \Avar^5 \hvar^4\\
    8 &: \gvar^8 := \gvar^7 + \gvar^6   &   \text{resblock 2 preactivation}\\
    9 &: \hvar^9 := \phi(\gvar^8) + \phi(\gvar^3) + \gvar^1 &   \text{merge of 2nd resblock; semantically same as $\hvar^9 := \phi(\gvar^8) + \hvar^4$}
\end{align*}

\subsection{Simple RNN}
\label{subsec:exampleRNN}
This is almost the same as the feedforward case except we tie the weights across time, and have inputs for each time step.

\begin{align*}
    1 &: \gvar^1 := h^0 &   \text{hidden state at $t=0$}\\
    2 &: \gvar^2 := b &   \text{RNN bias}\\
    3 &: \Avar^3 := W   &   \text{RNN weights}\\
    4 &: \gvar^4 := U x^1 + a   &   \text{affine transform of input at $t=1$}\\
    5 &: \gvar^5 := \Avar^3 \gvar^1\\
    6 &: \gvar^6 := \gvar^5 + \gvar^2 + \gvar^4\\
    7 &: \hvar^7 := \phi(\gvar^6)   &   \text{hidden state at $t=1$}\\
    8 &: \gvar^8 := U x^2 + a   &   \text{affine transform of input at $t=2$, with same $U$ and $a$}\\
    9 &: \gvar^9 := \Avar^3 \hvar^7\\
    10 &: \gvar^{10} := \gvar^9 + \gvar^2 + \gvar^8\\
    11 &: \hvar^{11} := \phi(\gvar^{10})    &   \text{hidden state at $t=2$}
\end{align*}

More advanced RNNs like GRU or LSTM can be similarly expressed.

\subsection{Batchnorm, fully-connected}
\label{subsec:exampleBatchnorm}
Let $\tilde\phi(h) = \phi((h - \bar h) / \mathrm{std}(h))$ be batchnorm followed by coordinatewise nonlinearity $\phi$, where $h \in \R^B$ should be interpreted as a single neuron across a batch, and $\bar h = \f 1 B \sum_{i=1}^B h_i, \mathrm{std}(h) = \sqrt{\f 1 B \sum_{i=1}^B (h_i - \bar h)^2}.$
For example, let $x_1, \ldots, x_4$ be the batch of inputs.
\begin{align*}
    1 &: \gvar^1 := W^1 x_1\\
    2 &: \gvar^2 := W^1 x_2\\
    3 &: \gvar^3 := W^1 x_3\\
    4 &: \gvar^4 := W^1 x_4\\
    5 &: \gvar^5 := b^1 &   \text{layer 1 bias}\\
    6 &: \gvar^6 := \gvar^1 + \gvar^5\\
    7 &: \gvar^7 := \gvar^2 + \gvar^5\\
    8 &: \gvar^8 := \gvar^3 + \gvar^5\\
    9 &: \gvar^9 := \gvar^4 + \gvar^5\\
    10 &: \hvar^{10} := \tilde\phi(\gvar_6, \gvar_7, \gvar_8, \gvar_9)_1
        &   \text{layer 1 input 1 activations}\\
    11 &: \hvar^{11} := \tilde\phi(\gvar_6, \gvar_7, \gvar_8, \gvar_9)_2
        &   \text{layer 1 input 2 activations}\\
    12 &: \hvar^{12} := \tilde\phi(\gvar_6, \gvar_7, \gvar_8, \gvar_9)_3
        &   \text{layer 1 input 3 activations}\\
    13 &: \hvar^{13} := \tilde\phi(\gvar_6, \gvar_7, \gvar_8, \gvar_9)_4
        &   \text{layer 1 input 4 activations}
\end{align*}

The transformer without layernorm (in particular, the softmax and self-attention mechanism) can be expressed in a similar way.

\subsection{Convolution}
\label{subsec:exampleConv}
For any convolution weights $\{W^l_{\beta ij}\}_{\beta \in ker, i \in c', j \in [c]}$, $W^l_\beta$ is a dense matrix.
Suppose $x$ is an image input to the network with $s$ pixels and $c$ channels, $\{x_{\alpha j}\}_{\alpha \in pos, j \in [c]}$, so that $x_\alpha$ is vector of dimension $c$.
Then the $\alpha$th pixel, across all channels, of the convolution $W^l$ applied to $x$ can be described as
\begin{align*}
    \sum_{\beta\in ker} W^l_\beta x_{\alpha+\beta}
\end{align*}

Define
\begin{align*}
    \tilde x_{\alpha' j'}
        &=
            \sum W^1_{\beta ij} x_{\alpha+\beta, j}
\end{align*}

For demonstration, assume $ker = \{0, 1\}$ and $pos = [3]$ and the convolution is circular, and for simplicity assume we don't have biases.
\begin{align*}
    1 &: \gvar^1 := W^1_{0} x_{1}\\
    2 &: \gvar^2 := W^1_{0} x_{2}\\
    3 &: \gvar^3 := W^1_{0} x_{3}\\
    4 &: \gvar^4 := W^1_{1} x_{1}\\
    5 &: \gvar^5 := W^1_{1} x_{2}\\
    6 &: \gvar^6 := W^1_{1} x_{3}\\
    7 &: \gvar^7 := \gvar^1 + \gvar^5   &   \text{layer 1 pixel 1 preactivations}\\
    8 &: \gvar^8 := \gvar^2 + \gvar^6   &   \text{layer 1 pixel 2 preactivations}\\
    9 &: \gvar^9 := \gvar^3 + \gvar^4   &   \text{layer 1 pixel 3 preactivations}\\
    10 &: \hvar^{10} := \phi(\gvar^7)   &   \text{layer 1 pixel 1 activations}\\
    11 &: \hvar^{11} := \phi(\gvar^8)   &   \text{layer 1 pixel 2 activations}\\
    12 &: \hvar^{12} := \phi(\gvar^9)   &   \text{layer 1 pixel 3 activations}\\
    13 &: \Avar^{13} := W^2_0\\
    14 &: \Avar^{14} := W^2_1   &   \text{layer 2 weights}\\
    15 &: \gvar^{15} := W^2_0 \hvar^{10}\\
    16 &: \gvar^{16} := W^2_0 \hvar^{11}\\
    17 &: \gvar^{17} := W^2_0 \hvar^{12}\\
    18 &: \gvar^{18} := W^2_1 \hvar^{10}\\
    19 &: \gvar^{19} := W^2_1 \hvar^{11}\\
    20 &: \gvar^{20} := W^2_1 \hvar^{12}\\
    21 &: \gvar^{21} := \gvar^{15} + \gvar^{19}   &   \text{layer 2 pixel 1 preactivations}\\
    22 &: \gvar^{22} := \gvar^{16} + \gvar^{20}   &   \text{layer 2 pixel 2 preactivations}\\
    23 &: \gvar^{23} := \gvar^{17} + \gvar^{18}   &   \text{layer 2 pixel 3 preactivations}\\
    24 &: \hvar^{24} := \phi(\gvar^{21})   &   \text{layer 2 pixel 1 activations}\\
    25 &: \hvar^{25} := \phi(\gvar^{22})   &   \text{layer 2 pixel 2 activations}\\
    26 &: \hvar^{26} := \phi(\gvar^{23})   &   \text{layer 2 pixel 3 activations}\\
\end{align*}

\section{Additional Notations}

We will use teletype font $\gvar, \hvar, \Avar$, etc, to denote variables or nodes as defined in the straightline program.
The superscripts in this case, like $\gvar^l$, will mean that line number associated to such a variable.
In contrast, we will use normal font $g, h, A$, etc, to denote arbitrary variables of the correct type in a program (though sometimes we use $h$ to also denote var of type H \textit{or} G), but the superscripts, such as in $g^i$, are not attached to the semantics of the program.
In either case, $\gvar^l_k$ or $g^i_k$ will denote the scalar value at the $k$th position of $\gvar^l$ or $g^i$.
We write $\lno(g)$ for the line number of the G-node $g$ so that $g = \gvar^{\lno(g)}$ (same for H-nodes and A-nodes).
We write $\nvar(g)$ for the dimension of a G-node so that $\nvar^{\lno(g)} = \nvar(g)$ (similar for H-node).
Similarly, $\nvar_1(A)$ and $\nvar_2(A)$ gives the first and second dimensions of an A-node $A$.
Let $\Gnodes_\pi$ be the collection of all G-nodes of a skeleton $\pi$, $\Hnodes_\pi$ be the collection of all H-nodes, and let $\Gnodesin_\pi$ be the collection of all input G-nodes.
Sometimes we need to talk about all G-nodes on or before line $L$.
We will use $\Gnodes_\pi^{\le L}$ to denote such a set.
When $\pi$ is clear from context, we suppress the subscript $\pi$ for brevity.

\newcommand{\cdcle}[1]{\cdc_{\le #1}}
\newcommand{\cdclt}[1]{\cdc_{< #1}}
\begin{defn}
If $\cdc$ is a CDC, then let $\cdcle m \defeq \{\gvar^l \in \cdc: l \le m\}$ and $\cdclt m \defeq \{\gvar^l \in \cdc: l < m\}.$
\end{defn}

Given a kernel $K: \R^m \times \R^m \to \R$ and subsets $\mathcal X, \mathcal Y \sbe \R^m$, write $K(\mathcal X, \mathcal Y)$ for the corresponding $|\mathcal X| \times |\mathcal Y|$ submatrix of $K$, and write
$K|_{\mathcal X} = K(\mathcal X, \mathcal X)$ to be the restriction of $K$ to $\mathcal X.$

Given two random variables $X, Y$, and a $\sigma$-algebra $\Aa$, the notation $X|_\Aa \disteq Y$ or $X \disteq_\Aa Y$ means that for any integrable function $\phi$ and for any random varible $Z$ measurable on $\Aa$, $\EV \phi(X) Z = \EV \phi(Y)Z$.
We say that $X$ is distributed as (or is equal in distribution to) $Y$ conditional on $\Aa$.
In case $\Aa$ is the trivial $\sigma$-algebra, we just write $X \disteq Y$.
The expression $X \distto Y$ means $X$ converges to $Y$ in distribution.
If random variables $X^t \asto X^\infty$, then we write $X^\infty = \limas_{t\to \infty} X^t$.

\begin{defn}
For a function $\Phi: \R^n \to \R^n$, we define
$$\Vt \Phi(\Sigma) \defeq \EV[\Phi(z) \Phi(z)^\trsp: z \sim \Gaus(0, \Sigma)]$$
for a PSD matrix $\Sigma$.
When $\phi: \R \to \R$, we write $\Vt\phi$ to mean $\Vt$ applied to the function that acts coordinatewise by $\phi$.
\end{defn}

\section{Consequences}
\label{sec:Conseq}

\subsection{Warmup: MLP}
\label{subsec:warmupConseq}
We warm up by considering the GP correspondence, gradient dynamics, and NTK convergence of MLPs first.
We define a fully-connected, feedforward neural network $f(x; \theta), x \in \R^{n^0}$ as follows
\begin{align*}
    x^0(x) &\defeq x\\
    h^l(x) &\defeq \f 1 {\sqrt{n^{l-1}}} W^l x^{l-1}(x) + b^l\\
    x^l(x) &\defeq \phi(h^l(x))\\
    f(x; \theta) &\defeq \f 1 {\sqrt{n^{L}}} v^\trsp x^{L}(x)
\end{align*}
with $b^l \in \R^{n^l}$, $v \in \R^{n^L}$, and $W^l \in \R^{n^l \times n^{l-1}}$ for $l = 1, \ldots, L$.
These form the parameters $\theta$ of $f$.
(Note here that we follow prior notation and use $h$ for ``preactivation,'' but it is in fact equivalent to a G-var).
We sample $W^l_{ij} \sim \Gaus(0, (\sigma_w^l)^2), v_i \sim \Gaus(0, (\sigma_w^{L+1})^2)$ and $b^l_i \sim \Gaus(0, (\sigma_b^l)^2)$.
We can think of this parametrization as ``pulling out the $1/\sqrt{n^l}$ from Glorot initialization.''
This parametrization doesn't change the forward kernel $\Sigma^l$ (defined below), but it does change the scaling of gradients.

Define kernels $\Sigma^{L+1}: (\R^{n^0})^2 \to \R$ by
\begin{align*}
    \Sigma^1(x, x')
        &\defeq
            (\sigma_w^1)^2\f 1 {n^0} \sum_{i=1}^{n^0} x_i x'_i + (\sigma_b^1)^2
            \\
    \Sigma^l
        &\defeq
            (\sigma_w^l)^2\Vt \phi(\Sigma^{l-1}) + (\sigma_b^l)^2
            \\
    \Sigma^{L+1}
        &\defeq
            (\sigma_w^{L+1})^2\Vt \phi(\Sigma^{L})
            .
\end{align*}

For any parametrized function $f(x; \theta)$, the Neural Tangent Kernel can be in general defined as $K_\theta(x, x') = \la \nabla_\theta f(x; \theta), \nabla_\theta f(x'; \theta) \ra.$
In the case when $f(x; \theta)$ is defined as above, there is a scaling limit of $K_\theta$ when $\theta$ is randomized \citep{jacot_neural_2018}.
The ``proof'' given by \citet{jacot_neural_2018} was a sketch and most importantly was silent about its application of the gradient independence assumption (used when applying induction).
Below we give a formal proof of NTK convergence, but first we introduce a ``gradient kernel.''
Suppose we take $n^1, \ldots, n^L \to \infty$ in such a way that $n^{l} / n^{m} \to \alpha_{l, m}$ for constants $\alpha_{l, m} \in (0, \infty)$.
Then define $\Pi^l: (\R^{n^0})^2 \to \R$ by
\begin{align*}
    \Pi^{L}(x, x')
        &\defeq
            (\sigma_w^{L+1})^2
            \\
    \Pi^{l}
        &\defeq
            \alpha_{l+1, l} \Vt\phi'(\Sigma^{l+1}) \odot \Pi^{l+1}
            .
\end{align*}

\begin{thm}\label{thm:GPNTKconv}
Fix a finite set of inputs $\mathcal X \sbe \R^{n^0}.$
As $n^1, \ldots, n^L \to \infty$ with $n^{l} / n^{m} \to \alpha_{l, m}$, with parameter sampled as above,
\begin{align*}
    f(\mathcal X; \theta) &\distto \Gaus(0, \Sigma^{L+1}|_{\mathcal X})
\end{align*}
if the nonlinearity $\phi$ is $\alpha$-controlled with $\alpha <2$; 
and
\begin{align*}
    \f {n^L} {n^l} \sum_{i=1}^{n^l} \psi\lp\{\pdf f {x^l_i}(x)\}_{x \in \mathcal X}\rp
        &\asto
            \EV[\psi(\zeta): \zeta \sim \Gaus(0, \Pi^l|_{\mathcal X})],
            \text{ for all polynomially bounded $\psi$, $l \ge 1$}
            \\
    K_\theta|_{\mathcal X} 
        &\asto
            \sum_{l=1}^L \alpha_{l, L} \Pi^l \odot \Vt\phi'(\Sigma^l) \odot \f{\Sigma^l + (\sigma_w^l)^2 - (\sigma_b^l)^2}{(\sigma_w^l)^2}
            + \Sigma^{L+1}/(\sigma_w^{L+1})^2
            \\
        &=
            \sum_{l=1}^L
            \bigodot_{m=l}^{L} \Vt\phi'(\Sigma^{m})
            \odot \f{\Sigma^l + (\sigma_w^l)^2 - (\sigma_b^l)^2}{(\sigma_w^l)^2}
            + \Sigma^{L+1}/(\sigma_w^{L+1})^2
            .
\end{align*}
if the nonlinearity $\phi$ has a polynomially bounded weak derivative.
\end{thm}
This theorem formally justifies the computations made in \citet{poole_exponential_2016,schoenholz_deep_2017,jacot_neural_2018}
\begin{remk}
If we set $\sigma_w^l = \sigma_b^l = 1$ for all $l$, then we can recover the NTK recurrence relation given in \citet{jacot_neural_2018}.
The $\beta$ factor in \citet{jacot_neural_2018} on the bias can also be easily accounted for, but we will not consider such a parametrization here.
\end{remk}
\begin{proof}
Since $\mathcal X$ is finite, it suffices to just consider two inputs $x^{01} = x, x^{02} = x'$.
We can form a tensor program to model a fully-connected, feedforward network, with a batch of inputs.
We construct it implicitly as follows, where we use superscripts in $\p {\bullet} \bullet$ to denote semantically relevant quantities.

Define the input vars $\p {\tilde \gvar} {i} := W^1_{:i}$, $\p \Avar l := W^l$ for $l = 2, \ldots L$, and $\p {\bar \gvar} l := b^l$ for $l = 1, \ldots, L$.
Define $\p \gvar {1a} := \sum_{i=1}^{n^0} \p {\tilde \gvar} i x^{0a}_i$ for $a = 1, 2$,
$\p {\hat\gvar} {la} := \p \gvar {la} + \p {\bar \gvar} l$ (this represents $h^l$), $\p \hvar {la} := \phi(\p {\hat\gvar} {la})$ (this represents $x^l$), $\p \gvar {l+1,a} := \p \Avar {l+1} \p \hvar {la}$.
For simplicity, we assume that $\nvar_1(\p \Avar l) \ne \nvar_2(\p \Avar l)$ for all $l$, so that each ``layer'' belongs to a different CDC.
Below, we write $K$ for $K^\cdc$ where $\cdc$ is automatically understood based on the arguments; similarly we write $\mu$ for $\mu^\cdc$ with $\cdc$ implied.

We have the corresponding sampling hyperparameters $\sigma^{\lno(\p \Avar l)t} = \sigma_w^l$, $\sigma^{\lno(\p {\bar \gvar} l) t} = \sigma_b^l$, $\mu^{\cdcin t} = 0$ for all input G-vars, and $K^{\cdcin t}$ is such that $\p {\tilde \gvar} i _j \sim \Gaus(0, (\sigma_w^1/n^0)^2)$ and $\p {\bar \gvar} l _j \sim \Gaus(0, (\sigma_b^l)^2)$, for all $j$ in appropriate ranges.

Then we can compute $\mu = 0$ and
\begin{align*}
    K(\p {\hat \gvar} {la}, \p {\hat \gvar} {lb})
        &=
            \Sigma^l(x, x')
            \\
    K(\p {\gvar} {la}, \p { \gvar} {lb})
        &=
            K^\cdc(\p {\hat \gvar} {la}, \p {\gvar} {lb})
            \\
        &=
            \Sigma^l(x, x') - (\sigma_b^l)^2
            \\
    K(\p {\hat \gvar} {la}, \p {\bar \gvar} {l})
        &=
            (\sigma_b^l)^2
\end{align*}
and $K(g, g') = 0$ for all other pairs of G-vars $g, g'$.

Thus, by \cref{thm:notransposeLimit}, for any $(<2)$-controlled $\psi$,
\begin{align*}
    &\phantomeq\f 1 {n^l} \sum_{i=1}^{n^l} \psi(h^l(x^{0a}))\psi(h^l(x^{0b}))\\
    &=\f 1 {n^l} \sum_{i=1}^{n^l} \psi(\p {\hat \gvar} {la})\psi(\p {\hat \gvar} {lb})\\
    &\asto \EV_{(z, z') \sim \Gaus(0, \Sigma^l|_{x, x'})} \psi(z)\psi(z')
\end{align*}
where $\Sigma^l|_{x, x'} = 
\begin{pmatrix}
\Sigma^l(x, x) & \Sigma^l(x, x')\\
\Sigma^l(x', x) & \Sigma^l(x', x')\\
\end{pmatrix}
$.
Obviously, given $h^{L}(x)$ for all $x$, $f(x; \theta)$ is a Gaussian process with kernel $K(x, x') = (\sigma_w^{L+1})^2 \f 1 {n^L} \sum_{i=1}^{n^L} \phi(h^L(x)) \phi(h^L(x'))$ and mean 0.
Since $K \asto \Sigma^{L+1}$ over the randomization of parameters in layer $< L$, we have that $f$ itself is a Gaussian process with this kernel, in the limit.

Now we think about backprop.

We have $\pdf f {x^L} = \f 1 {\sqrt{n^{L}}} v.$
We can thus extend the tensor program by
$\p {\underline \gvar} {La} := v, \p {\underline \hvar} {la} := \p {\underline \gvar} {la} \odot \phi'(\p {\hat\gvar} {la}),  \p {\underline \gvar} {l-1, a} := \p \Avar l{}^\trsp \p {\underline \hvar} {la}$, for $a = 1, 2$ and for $l = L-1, \ldots, 2$.
Here $\p {\underline \gvar} l$ represents $\pdf f {x^l} \sqrt{n^{L}}$ and $\p {\underline \hvar} l$ represents $\pdf f {h^l} \sqrt{n^{L}}.$
For brevity we just wrote $\p \Avar l {}^\trsp$ for an implicitly defined transposed var of $\p \Avar l$.
We can compute $K(\p {\underline \gvar} {la}, \p {\underline \gvar} {lb}) = \Pi^l(x, x')$,
and $K(g, g') = 0$ for all other G-var pairs $g, g'$ not appearing in the original tensor program.

Because $v \sim \Gaus(0, (\sigma_w^{L+1})^2)$, and all $\p {\underline \gvar} {la}, \p {\underline \hvar} {la}$ are odd in $v$ (being linear in $v$), \cref{thm:gradIndep} can be applied.
Then, for $l = 1, \ldots, L$,
\begin{align*}
    \f 1 {n^{l}} \sum_{i,j=1}^{n^l, n^{l-1}} \pdf f {W_{ij}^l}(x) \pdf f {W_{ij}^l}(x')
    &=
        \f 1 {n^{l}} \sum_{i=j=1}^{n^l, n^{l-1}} 
            \lp \f 1 {\sqrt{n^{l-1}}} \pdf f {h^l_i}(x)  {x^{l-1}_j}(x)\rp 
            \lp \f 1 {\sqrt{n^{l-1}}} \pdf f {h^l_i}(x') {x^{l-1}_j}(x')\rp
        \\
    &=  
        \f 1 {n^{l-1} n^l}
            \lp \sum_{i=1}^{n^l} \pdf f {h^l_i}(x) \pdf f {h^l_i}(x')\rp
            \lp \sum_{j=1}^{n^{l-1}} {x^{l-1}_j}(x) {x^{l-1}_j}(x') \rp
        \\
    &=
        \f 1 {n^{l-1} n^l}
            \lp \sum_{i=1}^{n^l} \p {\underline \hvar} {l a} \p {\underline \hvar} {l b}\rp
            \lp \sum_{j=1}^{n^{l-1}} \p {\hvar} {l-1, a} \p {\hvar} {l-1, b} \rp
        \\
    &\asto
        \lp \EV \underline z^l_a \phi'(z^l_a) \underline z^l_b \phi'(z^l_b) \rp
        \lp \EV \phi(z^{l-1}_a) \phi(z^{l-1}_b) \rp
        \\
    &\pushright{\text{by \cref{thm:gradIndep}}}
        \\
    &=
        \EV \underline z^l_a  \underline z^l_b \EV \phi'(z^l_a) \phi'(z^l_b)
        \EV \phi(z^{l-1}_a) \phi(z^{l-1}_b)
\end{align*}
where $(\underline z^{l}_a, \underline z^{l}_b) \sim \Gaus(0, K|_{\p {\underline \gvar} {la}, \p {\underline \gvar} {lb}}), $ and independently $(z^l_a, z^l_b) \sim \Gaus(0, K|_{\p {\hat\gvar} {la}, \p {\hat\gvar} {lb}})$.
Thus the above is just $[\Pi^l \odot \Vt\phi'(\Sigma^l) \odot (\Sigma^{l} - (\sigma_b^l)^2) / (\sigma_w^l)^2] (x, x').$

On the other hand,
\begin{align*}
    \f 1 {n^l} \sum_{i} \pdf f {b_{i}^l}(x) \pdf f {b_{i}^l}(x')
    &=
        \f 1 {n^l} \sum_{i} \pdf f {h_{i}^l}(x) \pdf f {h_{i}^l}(x')
        \\
    &=
        \f 1 {n^l} \lp \sum_i \p {\underline \hvar} {l a} \p {\underline \hvar} {l b}\rp
        \\
    &\asto
        \EV \underline z^l_a \phi'(z^l_a) \underline z^l_b \phi'(z^l_b)
        \\
    &=
        \EV \underline z^l_a \underline z^l_b \EV \phi'(z^l_a) \phi'(z^l_b)
        \\
    &=
        [\Pi^l \odot \Vt\phi'(\Sigma^l)](x, x')
\end{align*}
where $\underline z^{l}_a, \underline z^{l}_b, z^{l}_a, z^{l}_b$ are sampled as above.
Also, $\f 1 {n^L} \sum_{i} \pdf f {v_i}(x) \pdf f {v_i}(x') = \f 1 {n^L} \sum_{i} h_i^L(x) h_i^L(x') \asto \Sigma^{L+1}(x, x') / (\sigma_w^{L+1})^2$ as deduced before.

Thus
\begin{align*}
    \Theta^L
        &\asto
            \sum_{l=1}^L \alpha_{l, L} \Pi^l \odot \Vt\phi'(\Sigma^l) \odot \f{\Sigma^l + (\sigma_w^l)^2 - (\sigma_b^l)^2}{(\sigma_w^l)^2}
            + \Sigma^{L+1}/(\sigma_w^{L+1})^2
            \\
        &=
            \sum_{l=1}^L \alpha_{l, L} \prod_{m=l}^{L-1} \alpha_{m+1, m} \bigodot_{m=l}^{L-1} \Vt\phi'(\Sigma^{m+1}) \odot \Vt\phi'(\Sigma^l) \odot \f{\Sigma^l + (\sigma_w^l)^2 - (\sigma_b^l)^2}{(\sigma_w^l)^2}
            + \Sigma^{L+1}/(\sigma_w^{L+1})^2
            \\
        &=
            \sum_{l=1}^L
            \bigodot_{m=l}^{L} \Vt\phi'(\Sigma^{m})
            \odot \f{\Sigma^l + (\sigma_w^l)^2 - (\sigma_b^l)^2}{(\sigma_w^l)^2}
            + \Sigma^{L+1}/(\sigma_w^{L+1})^2
\end{align*}
\end{proof}

\paragraph{Global mean pooling as readout layer.}
Now suppose that $f(x; \theta) = \f 1 {n^L} \onev^\trsp x^L(x)$.
As in the proof above, we can construct a program $\pi$ for computing $f$ and its gradients over two inputs $x^{01}, x^{02}$:

\begin{description}
    \item[Forward]
        Define the input vars $\p {\tilde \gvar} {i} := W^1_{:i}$, $\p \Avar l := W^l$ for $l = 2, \ldots L$, and $\p {\bar \gvar} l := b^l$ for $l = 1, \ldots, L$.
        Define $\p \gvar {1a} := \sum_{i=1}^{n^0} \p {\tilde \gvar} i x^{0a}_i$ for $a = 1,2$,
        $\p {\hat\gvar} {la} := \p \gvar {la} + \p {\bar \gvar} l$ (this represents $h^l$), $\p \hvar {la} := \phi(\p {\hat\gvar} {la})$ (this represents $x^l$), $\p \gvar {l+1,a} := \p \Avar {l+1} \p \hvar {la}$.
    \item[Backward]
        We set
        $\p {\underline \gvar} {La} := \onev, \p {\underline \hvar} {la} := \p {\underline \gvar} {la} \odot \phi'(\p {\hat\gvar} {la}),  \p {\underline \gvar} {l-1, a} := \p \Avar l{}^\trsp \p {\underline \hvar} {la}$, for $a = 1, 2$ and for $l = L-1, \ldots, 2$.
        Here $\p {\underline \gvar} l$ represents $n^{L}\pdf f {x^l}$ and $\p {\underline \hvar} l$ represents $n^{L} \pdf f {h^l}.$
\end{description}

Now we construct the detransposition $\check \pi$ of $\pi$ (see \cref{subsec:exampleDetransposition} for a concrete example of detransposition).

The forward part of $\check \pi$ is almost identical to that of $\pi$:
\begin{description}
    \item[Forward]
        Define the input vars $\p {\check{\tilde \gvar}} {i} := W^1_{:i}$, $\p {\check \Avar} l := W^l$ for $l = 2, \ldots L$, and $\p {\check {\bar \gvar}} l := b^l$ for $l = 1, \ldots, L$.
        Define $\p {\check \gvar} {1a} := \sum_{i=1}^{n^0} \p {\check {\tilde \gvar}} i x^{0a}_i$ for $a = 1,2$,
        $\p {\check {\hat\gvar}} {la} := \p {\check \gvar} {la} + \p {\check {\bar \gvar}} l$ (this represents $h^l$), $\p {\check \hvar} {la} := \phi(\p {\check {\hat\gvar}} {la})$ (this represents $x^l$), $\p {\check \gvar} {l+1,a} := \p {\check \Avar} {l+1} \p {\check \hvar} {la}$.
        Finally, we set $\varphi(\bullet) = \check \bullet$ for all vars defined here.
\end{description}
Here we have automatically simplified the detransposition of $\p \gvar {la}$ produced from \cref{defn:detransposition}, by identifying $\varphi(\gvar^l)$ and $\varphig(\gvar^l)$, which in this case are the same.

Now the backward part
\begin{description}
    \item[Backward]
        Let $\p {\check \Avar} l {}'$ be an input A-var sampled iid as $\p \Avar l {}^\trsp$.
        We set
        $\p {\check{\underline \hvar}} {La} := \p {\check{\underline \gvar}} {La} := \onev$ (representing $n^L$ times the gradient at $x^L$), $\varphi(\p{\underline \hvar}{la}) =\p {\check{\underline \hvar}} {la} := \p {\check{\underline \hvar}} {la} \odot \phi'(\p {\check{\hat\gvar}} {la}),  \varphig(\p{\underline \gvar}{l-1,a}) = \p {\check{\tilde{\underline \gvar}}} {l-1, a} := \p {\check{\Avar}} l{}' \p {\check{\underline \hvar}} {la}$, $\varphi(\p{\underline \gvar}{l-1,a}) = \p {\check{\underline \hvar}} {l-1, a} := \p {\check{\tilde{\underline \gvar}}} {l-1, a} + \p \avar {l-1, a} \p \hvar {l-1, a}$ for $a = 1, 2$ and for $l = L-1, \ldots, 2$, and for $\p \avar {la}$ computed via the derivative rule of \cref{thm:generalTensorP}.
        Specifically, we have, by a simple induction,
        \begin{align*}
        \p \avar {L-1, a} &= \alpha_{L, L-1}(\sigma_w^L)^2\EV[\phi''(y): y \sim \Gaus(0, \Sigma^L_{aa})]\\
        \p \avar {l a} &= \p \avar {l+1, a} \alpha_{l+1,l}(\sigma_w^{l+1})^2\EV \pd_y(\phi(y) \phi'(y))
            = \p \avar {l+1, a} \alpha_{l+1,l} (\sigma_w^{l+1})^2 (\EV \phi'(y)^2 + \phi(y) \phi''(y)),\\
            &\qquad \text{with } y \sim \Gaus(0, \Sigma^{l+1}_{aa})
        \end{align*}
         The derivatives here should be interpreted as tempered distributions in general, testing against the Gaussian density of $y$.
        Note that if $\phi$ is odd, then $\phi''$ is odd, so that $\avar^{L-1, a}  = 0 = \avar^{la}$ for all $l < L-1$.
        If $\phi$ is ReLU, then $\phi''$ is the Dirac Delta tempered distribution at 0, so that $\avar^{L-1, a} = 1/\sqrt{2\pi\Sigma^L_{aa}}$.
\end{description}

In the forward pass, $(\p {\hat \gvar} {la}, \p {\hat \gvar} {lb}) ``\disteq" \Gaus(0, \Sigma^l|_{a,b})$ as before.
In the backward pass, $(\p {\check{ \underline {\tilde \gvar}}} {la}, \p {\check {\underline {\tilde \gvar}}} {lb}) ``\disteq" \Gaus(0, \Pi^l|_{a,b})$, where
\begin{align*}
    \Pi^{L-1}|_{a,b} &= \alpha_{L, L-1} (\sigma^L_w)^2 \Vt \phi' (\Sigma^L|_{a,b})\\
    \Pi^l|_{a,b} &= \alpha_{l+1, l}(\sigma_w^{l+1})^2 (\Pi^{l+1}|_{a,b} \odot \Vt\phi'(\Sigma^{l+1}|_{a,b}) + (\p \avar {l+1,a}, \p \avar {l+1,b})^{\otimes 2} \odot \Vt \phi(\Sigma^{l+1}|_{a,b}))
\end{align*}
and
\begin{align*}
    \f 1 {n^l} \sum_{i=1}^{n^l} \p {\underline \hvar} {la} _i \p {\underline \hvar} {lb} _i
        &\asto
            \Pi^l_{ab} \Vt\phi'(\Sigma^l)_{ab} + \p \avar {la} \p \avar {lb} \Vt(\phi\phi')(\Sigma^l)_{ab}
\end{align*}

Thus
\begin{cor}
The NTK of the MLP above with global mean pooling readout layer converges a.s. to
\begin{align*}
    &\mathrm{NTK}(x_a, x_b) \asto\\
    &
        \Vt\phi'(\Sigma^L)_{ab}
        \f{\Sigma^{L}_{ab}+(\sigma^L_w)^2 - (\sigma^L_b)^2}{\sigma^L_w}
    +
        \sum_{l=1}^{L-1}
        \alpha_{l, L}
            \big(\Pi^l_{ab} \Vt\phi'(\Sigma^l)_{ab} + \p \avar {la} \p \avar {lb} \Vt(\phi\phi')(\Sigma^l)_{ab}
            \big)
        \f{\Sigma^{l}_{ab}+(\sigma^l_w)^2 - (\sigma^l_b)^2}{\sigma^l_w}
\end{align*}
\end{cor}

\begin{cor}
If the readout layer is global mean pooling in an MLP and the last layer nonlinearity is odd, then the \emph{Gradient Independence Assumption} can be applied to give the correct computation of the gradient covariance and the NTK.
\end{cor}

Now that we have warmed up a little, we will work with tensor programs and apply \cref{thm:notransposeLimit,thm:gradIndep,thm:generalTensorP} more informally.

\subsection{Warmup: Semicircle Law}
\label{warmup:semicirclelaw}
\cref{thm:generalTensorP} has enough power to rederive the semicircle law for the Gaussian Orthogonal Ensemble \cite{tao_topics_2012}.
\begin{defn}
The Gaussian Orthogonal Ensemble (GOE) is the sequence of matrices $(W_n)_{n \ge 0}$ defined as follows:
Let $X_{ij} \sim \Gaus(0, 1)$, iid, for all $i, j \in \N$.
Then set $W_{n} \defeq \f 1 {\sqrt{2 n}} (X_{ij} + X_{ji})_{i,j \in [n]}$.
\end{defn}
\newcommand{\musc}{{\mu_{\mathrm{sc}}}}

\begin{defn}
The empirical spectral distribution (ESD) $\mu_{W_n}$ of $W_n$ is given by $\mu_{W_n} \defeq \f 1 n \sum_{i=1}^n \delta_{\lambda_i(W_n)}$, where $\delta_x$ is the Dirac Delta measure at $x$, and $\lambda_i$ are the eigenvalues in decreasing (in $i$) order.
\end{defn}

\begin{defn}
The semicircle law $\musc$ is defined to be the distribution with density $\propto \sqrt{4 - x^2}$.
\end{defn}

\begin{defn}
A random distribution $\mu$ on $\R$, i.e. a random variable taking values in the space of probability distributions on $\R$,
converges to a deterministic distribution $\mu^*$ \textit{almost surely}, if for all compactly supported continuous function $f$, $ \EV_{z \sim \mu} f(z) \asto \EV_{z \sim \mu^*} f(z).$
\end{defn}

To prove that $\mu_{W_n}$ converges almost surely to the semicircle law $\musc$, it suffices to compute the (polynomial) moments of $\mu_{W_n}$ and show that it converges almost surely to the moments of $\musc$ as $n \to \infty$ \cite{tao_topics_2012}.
It's well known that the odd moments of $\musc$ are 0 and for even $k$,  $\EV_{X \sim \musc} X^{k} = C_{k/2}$, where $C_j$ is the $j$th Catalan number defined by
\begin{align*}
    C_0
        &= 
            1,
            \qquad
    C_j
        =
            \sum_{i=0}^{j-1} C_i C_{j-1-i}.
\end{align*}
Now,
$$\EV_{X \sim  \mu_{W_n}} X^k = \EV_{W_n} \f 1 n \tr\lp W_n^k\rp = \EV_{W_n} \EV_{v \sim \Gaus(0, I_n)} \f 1 n v^\trsp W_n^k v.$$

The latter expectation can be expressed as a tensor program:
We couple the time index $t$ to $t=n$.
Define the A-var $A^{1t}$ to be an input var, with $\nvar_1(A^{1t}) = \nvar_2(A^{1t}) = n$, sampled $A^{1t}_{ij} \sim \Gaus(0, 1/\sqrt{2\nvar^2(A^{1t})}) = \Gaus(0, 1/\sqrt{2n})$, and define $A^{2t} = A^{1t}{}^\trsp.$
Thus $A^{1t} + A^{2t}$ represents $W_n$.
Set $g^0 := v$, an input var, sampling $g^0_i \sim \Gaus(0, 1)$.
Inductively, set $g'{}^j := A^1 g^{i-1}, g''{}^j := A^2 g^{i-1},$ and $g^j := g'{}^j + g''{}^j$.
Thus $g^j$ represents $W_n^j v$ (where the superscript $j$ represents an index in $g^j$ while it represents an exponent in $W_n^j$).
We aim to compute the limit of
$\f 1 n \sum_{i=1}^n v_i (W_n^k v)_i = \f 1 n \sum_{i=1}^n g^0_i g^k_i$
as $n \to \infty$.

This limit is prescribed by \cref{thm:generalTensorP}.
The detransposition $\check \pi$ of the above program can be described by the following:
Define $\check A^{1} := \varphi(A^{1}), \check A^{2} := \varphi(A^{2})$ (so that they are independently sampled in $\check \pi$), and 
\begin{align*}
\check h^0 := \check g^0 &:= v = \varphi(g^0),\\
\check{g}'{}^{j} &:= \check A^{1} \check h^{j-1} = \varphig(g'{}^j), \\
\check{g}''{}^{j} &:= \check A^{2} \check h^{j-1} = \varphig(g''{}^j), \\
\check h'{}^j &:= \check g'{}^j + \sum_{i=0}^{j-2} \avar'{}^j_i \check h''{}^i = \varphi(g'{}^j),\\
\check h''{}^j &:= \check g''{}^j + \sum_{i=0}^{j-2} \avar''{}^j_i \check h'{}^i = \varphi(g''{}^j),\\
\check h{}^j &:= \check h'{}^j + \check h''{}^j = \varphi(g^j),
\end{align*}
where $\avar'{}^j_i$ (resp. $\avar''{}^j_i$) is computed by differentiating $\fvar^{\check h^j}$ via \cref{thm:generalTensorP}, because it can be easily seen that $\fvar^{\check h^j}$ is always a linear function of $\{\check g'{}^r, \check g''{}^r\}_{r=1}^{j-1}$.
In fact, a symmetry argument shows that
$\fvar^{\check h^j}(Z) = \sum_{r=1}^{j-1} b_{r}^j (Z^{\check g'{}^r} + Z^{\check g''{}^r}) + b_0^j Z^{\check g^0}$
for some coefficients $\{b_r^j\}_{r=0}^j$.
An easy inductive argument shows that $b_r^j$ satisfies the recurrence
\begin{align*}
    b_0^0 &= 1,\\
    \forall r \not\in [0, j], b^j_r &= 0,\\
    \forall r<j, b_r^j &= \sum_{i=r}^{j-2} b_r^i b_{i+1}^{j-1},\\
    b_j^j &= 1
\end{align*}
These equations have the unique solution 
\begin{align*}
    b^j_r =
    \begin{cases}
    C_{(j-r)/2} &   \text{if $j-r$ is even}\\
    0   &   \text{else.}
    \end{cases}
\end{align*}

Simultaneously, another easy inductive argument shows that $\mu{\check \cdc} = 0$ and $K^{\check \cdc}(\check g^0, \check g^j) = 0$ for all $j > 0$.
Thus \cref{thm:generalTensorP} yields, for $Z \sim \Gaus(\mu^{\check \cdc}, K^{\check \cdc})$,
\begin{align*}
    \f 1 n \tr(W^k_n) = \f 1 n \sum_{i=1}^n g^0_i g^k_i
    &\asto
    \EV Z^{\check g^0} \lp \sum_{r=1}^{k-1} b_{r}^k (Z^{\check g'{}^r} + Z^{\check g''{}^r}) + b_0^k Z^{\check g^0} \rp\\
    &=
        \EV b_0^k (Z^{\check g^0})^2
        \\
    &=
        b_0^k = 
    \begin{cases}
    C_{k/2} &   \text{if $k$ is even}\\
    0   &   \text{else.}
    \end{cases}
\end{align*}
as desired.

\subsection{Warmup: Marchenko-Pastur Law}
\label{warmup:marchenkopasturlaw}
\newcommand{\mump}[1]{\mu_{\mathrm{mp}(#1)}}
\renewcommand{\p}[2]{#1^{(#2)}}

Suppose $Y_{ij} \sim \Gaus(0, 1)$ for all $i, j \in \N$, and $\p Y n = (Y_{ij})_{i\in[m_n], j\in[n]}$ for a sequence of $\{m_n\}_n$ satisfying $\lim_{n\to\infty} \f {m_n} n \to \alpha \in (0, \infty)$.
The Marchenko-Pastur Law says that the spectral distribution of $\f 1 n \p Y n \p Y n{}^\trsp$ converges almost surely to $\mump \alpha,$ defined as
$$(1 - \inv\alpha)_+ \delta_0 + \f 1 {\alpha 2 \pi x} \sqrt{(b-x)(x-a)} \ind_{[a, b]}(x) \dd x$$
where $(r)_+ = r$ if $r > 0$ and 0 else, and
$a = (1 - \sqrt \alpha)^2$ and $b = (1 + \sqrt \alpha)^2.$
We can again show this via the moment method.

We define a tensor program $\pi$ as follows.
Couple $n = t$.
Let $A := \p Y n /\sqrt n$ be an input A-var and let $A' := A^\trsp$.
Let $g^0 := v \sim \Gaus(0, I_{m_n}) = \Gaus(0, I_{\nvar(g^0)})$ be an input G-var.
Define recursively
\begin{align*}
    \underline g^i &:= A' g^{i-1}\qquad
    g^{i} := A \underline g^i
\end{align*}
We seek to compute $\limas_n \f 1 {m_n} \tr (AA^\trsp)^k = \limas_n \f 1 {m_n} \EV_{v \in \Gaus(0, I_{m_n})} v^\trsp (AA^\trsp)^k v = \limas_{t\to\infty} \f 1 {\nvar(g^{0t})} \sum_{i=1}^{\nvar(g^{0t})} g^{0t}_i g^{kt}_i.$

The detransposition $\check \pi$ of the above, is as follows.
Set $\check A, \check A'$ to be input A-vars, corresponding to $\varphi(A), \varphi(A')$, and sampled iid as such, so that $\sigma^{\infty}(\check A) = 1$ and $\sigma^\infty(\check A') = \sqrt \alpha.$
Then define
\begin{align*}
    \check h^0 &:= \check g^0 = v \sim \Gaus(0, I_{m_n})\\
    \check {\tilde {\underline g}}^i &:= \check A' \check {h}^{i} = \varphig(\underline g^i)\\
    \check {\tilde g}^i &:= \check A \check {\underline h}^{i-1} = \varphig(g^i)\\
    \check {\underline h}^i &:= \check {\tilde {\underline g}}^i + \sum_{j=0}^{i-1} \underline\avar_j^i \check {\underline h}^j = \varphi(\underline g^i)\\
    \check h^i &:= \check {\tilde g}^i + \sum_{j=0}^{i-1} \avar_j^i \check h^j = \varphi(g^i)
\end{align*}
where $\avar^i_j$ (resp. $\underline \avar^i_j$) is computed through the derivative rule of \cref{thm:generalTensorP}.
By a simple inductive argument, we see that we can express
\begin{align*}
    \check h^i
        &=
            \sum_{j=0}^{i-1} b_j^i \check g^j,
            \qquad
    \check {\underline h}^i
        =
            \sum_{j=0}^{i-1} {\underline b}_j^i \check {\underline g}^j
\end{align*}
for some set of coefficients $\{b^i_j, {\underline b}^i_j\}_{i\ge j \ge 0}$.
Then it's easy to see that they satisfy the recurrence
\begin{align*}
    b^0_0 &= \underline{b}^0_0 = 1,
    \\
    \forall j < i, b^i_j &= \sum_{k=j}^{i-1} \underline{b}^{i-1}_k b^k_j,
    \quad
    b^i_i = 1,
    \\
    \forall j < i, \underline{b}^i_j &= \alpha\sum_{k=j+1}^{i} b^i_k \underline{b}^{k-1}_j,
    \quad
    \underline{b}^i_i = 1.
\end{align*}

We claim that the solution to these equations is given by
\begin{align*}
    b^i_j &= M_{i-j}, \\
    \forall j < i, \underline{b}^i_j &= \alpha M_{i-j},\qquad \underline b^i_i = 1,
\end{align*}
where $M_r = \EV[x^r: x \sim \mump\alpha]$.
It suffices to verify the following Catalan-like identity
\begin{align}
    M_s = \alpha \sum_{r=1}^{s-2} M_r M_{s-1-r} + (1+\alpha) M_{s-1}. \label{eqn:catalanIdentityMP}
\end{align}
This can be done by the change of variable in the integral of $\EV[x^r: x \sim \mump\alpha]$ to get
\begin{align*}
    \EV[x^r: x \sim \mump\alpha] &= \EV[(\sqrt \alpha y + 1 + \alpha)^{r-1}: y \sim \musc]\\
        &=
            \sum_{k=0}^{\lfloor (r-1)/2 \rfloor} \alpha^k (1 + \alpha)^{r-1-2k} \binom{r-1}{2k} C_k
\end{align*}
where $C_k$ is the $k$th Catalan number.
Then one can verify \cref{eqn:catalanIdentityMP} by expanding and applying the Catalan identity $C_k = \sum_{i=0}^{k-1} C_i C_{k-1-i}$ repeatedly.

Finally, an easy inductive argument shows that $\mu^{\check \cdc} = 0$ and $K^{\check \cdc}(\check g^0, \check g^j) = 0$ for all $j > 0$.
Thus, we have 
$$
\limas_n \f 1 {m_n} \tr (AA^\trsp)^k
= \limas_t \f 1 {\nvar(g^{0t})} \sum_{i=1}^{\nvar(g^{0t})} g^{0t}_i g^{kt}_i
= \limas_t \EV \f 1 {\nvar(\check g^{0t})} \sum_{i=1}^{\nvar(\check g^{0t})} \check g^{0t}_i \check h^{kt}_i 
= \EV_{Z \sim \Gaus(\mu^{\check \cdc}, K^{\check \cdc})} b^k_0 (Z^{\check g^0})^2 = b^k_0 = M_{k}$$
as desired.

\subsection{DNN-GP correspondence}
\label{subsec:DNNGP}
\newcommand{\nout}{{n_{\mathrm{out}}}}
\newcommand{\nin}{{n_{\mathrm{in}}}}
\newcommand{\nhid}{{n_{\mathrm{hid}}}}

Suppose $\rho = F(z; \theta)$ is the part of a neural network that takes an input embedding $z = (z^1, \ldots, z^B)$ and produces a representation $\rho = (\rho^1, \ldots, \rho^m)$ of it.
For example, $z$ can be $Ax$ for an input $x$ and an embedding matrix $A$, or $(Ax^1, \ldots, Ax^B)$ for a sequence/batch of inputs $(x^i)_{i=1}^B$ (say when $x^1, \ldots, x^B$ is a sequence of tokens to be processed by an RNN, or when they form a batch, perhaps to be processed by batchnorm), or $(A^1 x^1, \ldots, A^B x^B)$ when they form the pixel vectors across the channels of an input image in the case of CNN, perhaps in combination with RNNs/batchnorm.
Similarly, $\rho$ can be a vector representation in a MLP or a sequence/batch of vector representations in the case of RNN/batchnorm/CNN.
The neural network then converts $\rho$ to an output via some linear transformation, say $\rho \mapsto (v^1 {}^\trsp \rho^1, \ldots, v^m {}^\trsp \rho^m)$, where each $v^i$ is a vector of appropriate size, and $v^i$ is allowed to equal to $v^j$ whenever they have the same shape.
Note that this scenario is general enough to cover simultaneous computation of a neural network on a batch of input, where $\rho$ can be partitioned into the corresponding representations of each parallel output.

Suppose $F(z; \theta)$ can be represented by a tensor program $\pi$ where $z$ and $\theta$ appear as input G- and A-vars; let the output $(\rho^1, \ldots, \rho^m)$ be represented by H- or G-vars $h^1, \ldots, h^m$ of $\pi$.
When the input embedding is linear, $z = (A^1 x^1, \ldots, A^B x^B)$, and its matrices $A^1, \ldots, A^B$ are sampled from zero mean Gaussian distributions, $(z^1, \ldots, z^B)$ is jointly Gaussian with a covariance depending on pairwise products between $x^1, \ldots, x^B$.
Furthermore, if $\theta$ is randomized by Gaussians according to \cref{subsec:setup} (with some set of compatible sampling hyperparameters $\sigma^l$, $\mu^\cdcin$, etc), then
by \cref{thm:generalTensorP} we get
\begin{cor}[DNN-GP correspondence]\label{cor:DNNGPappendix}
If all $\fvar^l$ of $\pi$ are polynomially bounded and almost sure rank convergence holds for $\check \pi$, then $\f 1 {\nvar(h^i)} h^i{}^\trsp h^j \asto C_{ij}$ for some PSD matrix $C$, whenever $\nvar(h^i) = \nvar(h^j)$, as the dimensions $\{\nvar^l\}_l$ of $\pi$ go to infinity.
The kernel $C$ can be computed via \cref{thm:generalTensorP}. 
\emph{A fortiori}, if each $v^i$ of the readout layer is sampled from $\Gaus(0, 1/\nvar(v^i))$, where for each $i \ne j$, either $v^i = v^j$ or $v^i$ is independent from $v^j$,
then the neural network output $(v^1 {}^\trsp \rho^1, \ldots, v^m {}^\trsp \rho^m) \distto \Gaus(0, C')$ in this limit, for 
\begin{align*}
    C'_{ij}
        &=
            \begin{cases}
            C_{ij}   &   \text{if $v^i = v^j$}\\
            0   &   \text{otherwise.}
            \end{cases}
\end{align*}
\end{cor}

For example, 
\begin{enumerate}
    \item if $z = (A x^1, \ldots, A x^B)$ is just a batch of inputs where each $A x^i$ is processed by the same neural network $f$ in parallel, and the network outputs $(v {}^\trsp \rho^1, \ldots, v {}^\trsp \rho^m)$ for readout weights $v$, then \cref{cor:DNNGPappendix} says $f$ converges to a Gaussian Process in distribution in the infinite width limit.
    \item if $z = (A x^{ij})_{i=j=1}^{BS}$ represents the embedding of a batch of $B$ sequences of length $S$, and the network is an RNN that processes each sequence in parallel, in a seq2seq fashion, then \cref{cor:DNNGPappendix} says $f$ converges to a (multivariate) Gaussian Process in distribution in the infinite width limit.
    \item we obtain similar GP convergence results for any standard architecture.
\end{enumerate}

\subsection{Gradient Independence Assumption}
\label{subsec:gradIndAssm}

\cref{thm:GPNTKconv} already shows that gradient independence assumption leads to the correct computation for MLPs.

In general, if, as before, $F(z; \theta)$ is the body of the network that takes an input embedding to a representation, and it can be represented by a tensor program $\pi$ having no line of type \ref{linetype:trsp}, then backprop can be represented by an extended program as in \cref{sec:zeroMeanGrad} with $v^i$ being readout layer weights.
Thus, if $v^i$ are sampled with zero mean and all nonlinearities have polynomially bounded weak derivatives, then \cref{thm:gradIndep} applies and we can compute the gradient dynamics by computing $K^\cdc$ and $\mu^\cdc$ according to \cref{sec:zeroMeanGrad}, which allows us to pretend that the G-vars in $\pi$ are independent from the weights used in the backward pass.
This is in particular true if $F(z; \theta)$ has a standard architecture without batchnorm (with no transposed weight sharing in the forward pass).
Batchnorm is not covered by our theorems because its gradient has singularities, for example at the origin.
However, based on the simulations of \citet{yang_mean_2018}, \cref{thm:gradIndep} seems to hold even when batchnorm is involved.

\paragraph{Singular value distribution.}
Let $F(z; \theta)$ be as above.
Denote by $J$ its Jacobian in $z$.
\citet{pennington_resurrecting_2017} applied free probability theory to compute the eigenvalues of $JJ^\trsp$ and hence of the singular value of $J$, when $F(z; \theta)$ represents a MLP.
Thus $J$ can be expressed as $D^L W^L \cdots D^2W^2D^1W^1$ for weight matrices $W^l$ for each layer $l$ and diagonal matrices $D^l = \Diag(\{\phi'(h^{l+1})\})$.
Specifically, the authors compute the Stieljes transform of $JJ^\trsp$ and then its S-transform by leveraging the latter's compatibility with matrix multiplication.
Crucial in this computation is the assumption that $D^l$ and $W^l$ are asymptotically free, allowing the application of S-transform.
We now justify this assumption.

The Stieljes and S-transform methods can be thought of a more nicely packaged way of applying the moment method \citep{tao_topics_2012}, i.e. computing $\tr((JJ^\trsp)^k)$ for each $k$.
Thus it suffices to show that, in the computation of $\tr((JJ^\trsp)^k)$, $\{D^l\}_l$ can be thought of as independent of $\{W^l\}_l.$

Now $\tr((JJ^\trsp)^k) = \EV_{a \sim \Gaus(0, I)} a^\trsp (JJ^\trsp)^k a$.
The computation $(JJ^\trsp)^k a$ can be expressed with a tensor program:
If $\pi$ represents the computation of $F$ (forward pass), $\pi'$ represents $J^\trsp$, i.e. backprop from gradient vector $a$ (so that $\tilde \pi = \pi | \pi'$ is an extended program of the form described in \cref{sec:zeroMeanGrad}), and $\pi''$ represents $J$, then $(JJ^\trsp)^k a$ is given by the output of $\hat \pi = \pi | (\pi' \| \pi'' \| \cdots \| \pi' \| \pi'') $.
Here, $|$ denotes concatenation and $\|$ denotes ``piping'', so that the output of $\rho$ is inserted as the input of $\tau$ in $\rho \| \tau$.
Then $\lim \EV_{a \sim \Gaus(0, I)} a^\trsp (JJ^\trsp)^k a$ can be computed via \cref{thm:generalTensorP}.
Finally, it only remains to notice that $K(g, h') = 0$ for any G-var of $\pi$ and H-var (of G-var) of $(\pi' \| \pi'' \| \cdots \| \pi' \| \pi'') $ other than $a$ because $h'$ is always odd in $a$ (apply the same reasoning from proof of \cref{thm:gradIndep}).
Thus $\limas \EV_{a \sim \Gaus(0, I)} a^\trsp (JJ^\trsp)^k a$ has the same limit as if the A-vars of $\pi$ are independent from the rest of $\hat \pi$.

In fact, this reasoning, applied to mixed moments, establishes that $\{D^i\}_i \cup \{W^j\}_j$ are {\it almost surely asymptotically free}.

\begin{cor}
In the MLP above, let its hidden layer widths $\{n^l\}_l$ go to infinity such that $n^l/n^{l'} \to \alpha_{l,l'} \in (0, \infty)$ for some constants $\alpha_{l,l'}$.
Then, for $X_1,\ldots, X_k$ chosen from $\{D^l, W^l, W^l{}^\trsp\}_l$ such that the sizes match and $X_1 \cdots X_k$ is a square matrix, 
\begin{align*}
    \f 1 {n^L} \tr\lp X_1 \cdots X_k \rp
    -\f 1 {n^L} \tr\lp \varphi(X_1) \cdots \varphi(X_k) \rp \asto 0,
\end{align*}
where $\varphi(W^l) = W^l$ and $\varphi(D^l) =$ an iid copy of $D^l$, independent from all other values of $\varphi$.
\end{cor}

This corollary is sufficient to justify the Stieljes transformation calculations of \cite{pennington_resurrecting_2017}, and show that the singular value distributions converge to their limits, almost surely.

More generally, even with weight tying and arbitrary architecture, we can compute the singular value distribution of the neural network Jacobian, by expressing the moment computations as tensor programs, just like the above, and crank the machinery of \cref{thm:generalTensorP}.
\cref{warmup:marchenkopasturlaw} can be thought of the most basic such case of linear regression.

\subsection{Signal Propagation}
\label{subsec:signalProp}

We begin by examining the simple RNN and the weight-tied autoencoder, before reviewing some mean field equations that appeared in prior literature, which can be justified rigorously.
Finally, we close by looking at the weight-tied residual network, which is perhaps the simpliest ``RNN'' where the weight-tying leads to a different behavior than not tying the weights (in contrast to the simple RNN vs MLP).

\paragraph{Simple RNN}
\newcommand{\RNN}{{\mathrm{RNN}}}
\newcommand{\MLP}{{\mathrm{MLP}}}

A simple RNN that takes in input only at time 1 and outputs only at time $L$ can be thought of as an MLP with parameters tied across layers:
\begin{align*}
    x^0(x) &\defeq x\\
    h^l(x) &\defeq \f 1 {\sqrt{n}} W x^{l-1}(x) + b\\
    x^l(x) &\defeq \phi(h^l(x))\\
    \RNN(x; \theta) &\defeq \f 1 {\sqrt{n}} v^\trsp x^{L}(x)
\end{align*}
for $W \in \R^{n \times n}$ and $x, v, b \in \R^n$.
We will sample $W_{ij} \sim \Gaus(0, \sigma_W^2/n), b_i \sim \Gaus(0, \sigma_b^2).$
The computation of $\{\RNN(x^i; \theta)\}_{i=1}^B$ over a batch of inputs can be expressed by a tensor program with no line of type \ref{linetype:trsp} (essentially the program in \cref{subsec:exampleFCFF} but with weights and biases tied).
By \cref{thm:notransposeLimit}, we can compute $K^\cdc(h^l(x^i), h^{l'}(x^{i'})) = 0$ whenever $l \ne l'$, and that $K^\cdc|_{\{h^l(x^i)\}_i} = \sigma_W^2 \Vt \phi(K^\cdc|_{\{h^{l-1}(x^i)\}_i}) + \sigma_b^2.$
This is, of course, exactly the same as the $K^\cdc$ computed if $W$ and $b$ are not tied across layers (see \cref{sec:Conseq}).
This therefore mathematically proves what the experiments of \citet{chen_dynamical_2018} suggested.
\begin{cor}
Suppose $\phi$ is $\alpha$-controlled for $\alpha <2$.
Assume almost sure rank convergence.
Then for any $\alpha$-controlled $\psi: \R^L \to \R$, as $n \to \infty$,
\begin{align*}
    \f 1 n \sum_{i=1}^n \psi(h^1_\RNN(x^j)_i, \ldots, h^L_\RNN(x^j)_i) - \f 1 n \sum_{i=1}^n \psi(h^1_\MLP(x^j)_i, \ldots, h^L_\MLP(x^j)_i)
    \asto 0,
\end{align*} 
where $h^l_\RNN$ ($h^l_\MLP$) denotes the hidden state of the RNN (MLP), with the weights and biases in each model identically sampled.
\end{cor}

\paragraph{Autoencoder}
A weight-tied autoencoder is described by the following equations (we follow \citet{li_random_2018} here)
\begin{align*}
    x^0(x)
        &\defeq
            x
            \\
    x^l(x)
        &\defeq
            W^l \sigma^{l-1}(x^{l-1}) + b^l, \forall l \in \{1,\ldots, L\}
            \\
    \hat x^L
        &\defeq
            W^L{}^\trsp \phi^L(x^L) + v^L
            \\
    \hat x^l(x)
        &\defeq
            W^l{}^\trsp \phi^l(\hat x^{l+1}) + v^l
\end{align*}
for input $x \in \R^{n^0}$, a set of weights $W^l \in \R^{n^l \times n^{l-1}}$, encoder biases $b^l \in \R^{n^l}$, decoder biases $v^l \in \R^{n^{l-1}}$, for $l =1, \ldots, L.$
We also have the decoder and encoder activation functions $\phi^l: \R \to \R, \sigma^l: \R \to \R$.
The parameters are sampled iid according to
\begin{align*}
    W^l_{ij} \sim \Gaus(0, \sigma_{W^l}^2/n^{l-1}),&&
    b^l_i \sim \Gaus(0, \sigma^2_{b^l}),&&
    v^l_i \sim \Gaus(0, \sigma^2_{v^l}).
\end{align*}
We consider taking the limit where $n^l \to \infty, \forall l$, with $n^l/n^{l-1} \to \alpha^l \in (0, \infty)$.

\citet{li_random_2018} proved a (forward) signal propagation theorem of the above weight-tied autoencoder that uses the following quantities.
Define $\{\tau_l\}_{l=1}^L$ and $\{\bar \tau_l\}_{l = 0}^L$ inductively:
\begin{align*}
    \bar \tau_0^2 &= \f 1 {n^0} \|\sigma^0(x)\|^2,
    &
    \bar \tau_l^2 &= \tau_l^2 + \sigma_{b^l}^2,
    &
    \forall l \in \{1, \ldots, L\},
    \\
    \tau_1^2 &= \sigma_{W^1}^2 \bar \tau_0^2,&
    \tau_l^2 &= \sigma_{W^l}^2 \EV_z \sigma^{l-1}(\bar \tau_{l-1} z)^2,&
    \forall l \in \{2, \ldots, L\}
\end{align*}
where $z \sim \Gaus(0, 1)$.
Next define $\{\gamma_l, \rho_l\}_{l=2}^{L+1}$ inductively:
\begin{align*}
    \gamma_{L+1} &= \f 1 {\bar \tau_L^2} \EV_{z_1} \bar \tau_L z_1 \phi^L (\bar \tau_L z_1),\qquad
    \rho_{L+1} = \EV_{z_1} \phi^L(\bar \tau_L z_1)^2,\\
    \gamma_l
        &=
            \f 1 {\bar \tau^2_{l-1}}
            \EV_{z_1, z_2}
                \bar \tau_{l-1} z_1 \phi^{l-1}
                    \lp
                    \alpha^l \sigma^2_{W^l} \gamma_{l+1}
                    \sigma^{l-1}(\bar \tau_{l-1} z_1)
                    +
                    \sqrt{\alpha^l \sigma^2_{W^l} \rho_{l+1}
                        + \sigma^2_{v^l}}
                    z_2
                    \rp,\\
    \rho_l
        &=
            \EV_{z_1, z_2}
                \phi^{l-1}
                \lp
                    \alpha^l \sigma^2_{W^l} \gamma_{l+1}
                    \sigma^{l-1}(\bar \tau_{l-1} z_1)
                    +
                    \sqrt{\alpha^l \sigma^2_{W^l} \rho_{l+1}
                        + \sigma^2_{v^l}}
                    z_2
                \rp^2,\qquad
    \forall l \in \{L-2, \ldots, 2\}
\end{align*}
where $z_1, z_2 \sim \Gaus(0, 1)$.

By expressing the autoencoder computation on a single input $x$ as a tensor program and applying \cref{thm:generalTensorP}, we obtain a version of the main theorem of \citet{li_random_2018} that assumes no smoothness of the nonlinearities and of test functions.
If $X^t, Y^t \in \R^{n(t)}$ are two sequences of random vectors in $t$, then write $X^t \cong Y^t$ to mean that for any polynomially bounded $\psi: \R \to \R$, $\f 1 {n(t)} \sum_{i=1}^{n(t)} \psi(X^t_i)$ and $\f 1 {n(t)} \sum_{i=1}^{n(t)} \psi(Y^t_i)$ converge a.s. to the same limit, as $n(t) \to \infty$.
\begin{cor}
Let the activation functions $\{\sigma^l, \phi^l\}_l$ be polynomially bounded.
Then in the limit $\{n^l\}_l \to \infty$ as described above,
\begin{enumerate}
    \item $x^l \cong \Gaus(0, \bar \tau_l I_{n^l}), \forall l \in \{1, \ldots, L\}$.
    \item
        $\hat x^l \cong
            \alpha^l \sigma^2_{W^l} \gamma_{l+1} \sigma^{l-1}(\bar \tau_{l-1} \vec z_1)
            +
            \sqrt{\alpha^l \sigma^2_{W^l} \rho_{l+1}
                + \sigma_{v^l}^2}
            z_2, \forall l \in \{2, \ldots, L\}$
        where $\vec z_1, \vec z_2 \sim \Gaus(0, I_{n^{l-1}})$ independently.
    \item
        the autoencoder output $\hat x$ satisfies
        $$\hat x \cong \phi^0(\alpha^1 \sigma_{W^1}^2 \gamma_2 \sigma^0(x)
        + \sqrt{\alpha^1 \sigma^2_{W^1} \rho_2 + \sigma_{v^1}^2}
            \vec z_2),$$
            where $\vec z_2 \sim \Gaus(0, I_{n^0})$ independent of $x$.
\end{enumerate}
\end{cor}
\citet{li_random_2018}'s main theorem is almost the same as this, except that
\begin{enumerate}
    \item
        \citet{li_random_2018} requires $\sigma^l$ to be nontrivial in the sense that for any $\tau > 0$, $\EV_{z \sim \Gaus(0, 1)} \sigma^l(\tau z)^2 > 0$.
        But this is equivalent to saying that $\sigma^l$ is not a.e. 0.
        Indeed, if $\sigma^l(x) = \sum_i a_i h_i(x) $ is its Hermite expansion in orthonormal Hermite basis $h_i$, then
        $\EV_{z \sim \Gaus(0, 1)} \sigma^l(\tau z)^2 = \sum_i a_i^2 \tau^{2i}$, which can be 0 for positive $\tau$ iff all $a_i$s vanish.
    \item 
        All nonlinearities $\sigma^l, \phi^l$ are required by \citet{li_random_2018} to be globally Lipschitz (and hence linearly bounded).
        Here we only need them to be polynomially bounded.
    \item
        The equivalence relation $\cong$ is defined differently in \citet{li_random_2018}.

        There, $X^t \cong Y^t$ if $\phi_t(X^t) - \EV \phi_t(Y^t) \probto 0$ for any sequence of uniformly pseudo-Lipschitz functions.
        A sequence of functions $\phi_t: \R^{n(t)} \to \R$ is said to be \emph{uniformly pseudo-Lipschitz} if there exists a constant $C$, independent of $n$, such that for any $x, y \in \R^{n(t)}$,
                $$|\phi_n(x) - \phi_n(y)| \le C 
                    \lp 1 + \f{\|x\|}{\sqrt n} + \f{\|y\|}{\sqrt n} \rp
                    \f{\|x - y\|}{\sqrt n}.$$
                   
        In contrast, the test functions $\phi_t$ we allow are coordinatewise functions --- a stronger assumption than the above --- but does not need to be smooth, just polynomially bounded --- a weaker assumption than the above.
        We also guarantee almost sure convergence, a stronger result than their convergence in probability.
        It would be interesting in future work to study whether one can remove the smoothness assumption even for noncoordinatewise test functions.
\end{enumerate}

\subsubsection{Justifying semirigorous equations}
\label{subsubsec:justifySemirigor}

Below, we give several examples of signal propagation equations derived heuristically in prior works, which can now be justified rigorously using the tensor program framework.
\newcommand{\Xx}{{\mathcal X}}

\paragraph{MLP \cite{schoenholz_deep_2017}}
See \cref{subsec:warmupConseq}.

\paragraph{Residual Network \cite{yang_mean_2017}}
We define a residual network $f(x; \theta), x \in \R^{n^0}$ as follows
\begin{align*}
    x^0(x) &\defeq x\\
    h^l(x) &\defeq \f 1 {\sqrt{n^{l-1}}} W^l x^{l-1}(x) + b^l\\
    x^l(x) &\defeq \f 1 {\sqrt{n^l}} V^l \phi(h^l(x)) + x^{l-1}(x) + a^l\\
    f(x; \theta) &\defeq \f 1 {\sqrt{n^{L}}} w^\trsp x^{L}(x)
\end{align*}
with $a^l, b^l \in \R^{n^l}$, $w \in \R^{n^L}$, $V^l \in \R^{n^{l} \times n^{l}}$, and $W^l \in \R^{n^l \times n^{l-1}}$ for $l = 1, \ldots, L$.
These form the parameters $\theta$ of $f$.
We sample $W^l_{ij} \sim \Gaus(0, (\sigma_w^l)^2), V^l_{ij} \sim \Gaus(0, (\sigma_V^l)^2), w_i \sim \Gaus(0, (\sigma_w^{L+1})^2)$,
$a_i^l \sim \Gaus(0, (\sigma_a^l)^2)$, and $b^l_i \sim \Gaus(0, (\sigma_b^l)^2)$.
Define kernels $\Sigma^{L+1}: (\R^{n^0})^2 \to \R$ by
\begin{align*}
    \tilde \Sigma^0(x, x')
        &\defeq
            \f 1 {n^0}\sum_{i=1}^{n^0} x_i x'_i
            \\
    \Sigma^l
        &\defeq
            (\sigma_w^l)^2 \tilde\Sigma^{l-1} + (\sigma_b^l)^2
            \\
    \tilde \Sigma^l
        &\defeq
            \tilde \Sigma^{l-1} + (\sigma_V^l)^2\Vt\phi(\Sigma^l) + \sigma_a^2
            \\
    \Sigma^{L+1}
        &\defeq
            (\sigma_w^{L+1})^2
            \Sigma^{L}
            .
\end{align*}
Then for any finite subset $\Xx \sbe \R^{n^0},$ for $\alpha$-controlled $\phi$,
\begin{align*}
    f(\Xx; \theta) \distto \Gaus(0, \Sigma^{L+1} |_\Xx).
\end{align*}

\paragraph{Convolutional Network \cite{xiao_dynamical_2018}}

Consider a convolutional network
\begin{align*}
    x^0_{\alpha i}(x) &\defeq x_{\alpha i}\\
    h^l_{\alpha i}(x)
        &\defeq
            \f 1 {\sqrt{n^l}} \sum_{\substack{j \in [n^{l-1}]\\\beta \in [s^{l-1}]}} W^l_{\beta ij} x^{l-1}_{\alpha + \beta,j}(x) + b^l_i\\
    x^l_{\alpha i}(x) &\defeq \phi(h^l_{\alpha i}(x))
\end{align*}
where $h^l_{\alpha i}$ denotes the preactivation at the $l$th layer, the $i$th channel, each with $s^l$ neurons, and the $\alpha$th neuron, and likewise for $x^l_{\alpha i}$.
$n^l$ is the number of channels in layer $l$.

Suppose we have a non-negative vector $(v^l_\beta)_{\beta \in ker}$ that sums up to 1 and $W^l_{\beta i j} \sim \Gaus(0, (\sigma_W^l)^2 v_\beta^l), b^l_{\beta i} \sim \Gaus(0, (\sigma_b^l)^2) $.
By \cref{thm:notransposeLimit}, $\{h^l_{\alpha\bullet}(x_a)\}_{\alpha, a}$ are ``jointly Gaussian'' in the limit.
Define $\Sigma^l_{\alpha a, \beta b} \defeq \lim^{\mathrm{a.s.}} \f 1 {n^l} \sum_{i=1}^{n^l} h^l_{\alpha i}(x_a) h^l_{\beta i}(x_b)$, for any $i$.
Then with $\star$ denoting 2D circular cross correlation, \citet{xiao_dynamical_2018} calculated, semirigorously,
\begin{align*}
    \Sigma^{l+1}
        &=
            (\sigma_W^l)^2\Diag(v^l) \star \Vt \phi(\Sigma^{l}) + (\sigma_b^l)^2\\
    \Sigma^{l+1}_{\alpha a, \beta b}
        &=
            (\sigma_W^l)^2 \sum_{\gamma \in [s^{l}]} v^l_\gamma \Vt \phi(\Sigma^{l})_{\alpha +\gamma;a,\beta+\gamma;b} + (\sigma_b^l)^2.
\end{align*}
These equations can now be recovered rigorously using \cref{thm:notransposeLimit}.

Now suppose the last layer (layer $L$) is linear with output,
\begin{align*}
    f(x; \theta)
        &\defeq
            \f 1 {\sqrt{n^{L-1} s^{L-1}}}
            \sum_{\substack{\alpha \in [s^{L-1}]\\i \in [n^{L-1}]}} 
            W^L_{\alpha i} x^{L-1}_{\alpha i}
            \in \R
\end{align*}
and the weights are sampled according to $W^L_{\alpha i} \in \Gaus(0, 1)$.
Then, we can compute via \cref{thm:notransposeLimit},
\begin{align*}
(f(x_a; \theta), f(x_b; \theta)) \distto \Gaus\lp 0, \f 1 {s^{L-1}}\begin{pmatrix} \tr \Vt\phi(\Sigma^{L-1})_{\bullet a, \bullet a} & \tr \Vt\phi(\Sigma^{L-1})_{\bullet a, \bullet b}\\ \tr \Vt\phi(\Sigma^{L-1})_{\bullet a, \bullet b} & \tr \Vt\phi(\Sigma^{L-1})_{\bullet b, \bullet b} \end{pmatrix}\rp.
\end{align*}

Define, via \cref{thm:gradIndep}, $\Pi^l_{\alpha a, \beta b} \defeq \limas \sum_{i=1}^{n^l} \pdf{f}{x^l_{\alpha i}}(x_a) \pdf f {x^l_{\beta i}}(x_b)$, for any $i$.
Then \citet{xiao_dynamical_2018} essentially calculated, semirigorously,
\begin{align*}
    \Pi^{L-1}_{\alpha a, \beta b}
        &=
            \f 1 {s^{L-1}}\ind(\alpha = \beta).
\end{align*}
and in all previous layers, the recurrence
\begin{align*}
    \Pi^{l-1}
        &=
            (\sigma_w^l)^2 \Diag(v^l{}^\#) \star (\Vt \phi'(\Sigma^l) \odot \Pi^{l})
\end{align*}
where $v^l{}^\#$ is the reverse of $v^l$.
These equations can now be justified rigorously using \cref{thm:gradIndep}.

\paragraph{Batchnorm \cite{yang_mean_2018}}
Given $\phi: \R \to \R$, let $\batchnorm_\phi: \R^B \to \R^B, z \mapsto \phi\lp \f{z - \bar z}{\|z - \bar z\|/\sqrt B}\rp$.
This is an application of batchnorm followed by coordinatewise action by $\phi$, where $z$ should be thought of as a fixed unit across a batch of size $B.$

If $\vec x = (x_i, \ldots, x_B)$ is a batch of inputs $x_i \in \R^{n_0}$, then define a deep batchnorm network $f(\vec x; \theta): \R^{B \times n_0} \to \R^{B \times 1}$ by
\begin{align*}
    \vec x^0(\vec x) &\defeq \vec x\\
    \vec h^l(\vec x) &\defeq  (\f 1 {\sqrt{n^{l-1}}} W^l x^{l-1}(\vec x)_i + b^l)_{i=1}^B\\
    \vec x^l(\vec x) &\defeq \batchnorm_\phi(\vec h^l(\vec x))\\
    f(\vec x; \theta) &\defeq \lp \f 1 {\sqrt{n^{L}}} w^\trsp x^{L}(\vec x)_i\rp_{i=1}^B.
\end{align*}
Here $B$ and $n_0$ will be fixed and $n^l \to \infty$ for $l > 0.$
We sample $W^l_{ij} \sim \Gaus(0, (\sigma_w^l)^2), w_i \sim \Gaus(0, (\sigma_w^{L+1})^2)$ and $b^l_i \sim \Gaus(0, (\sigma_b^l)^2)$.
Define multivariate kernels $\Sigma^l: (\R^{B \times n_0})^2 \to \R^{B \times B}$ by
\begin{align*}
    \Sigma^1(\vec x, \vec x')_{ij}
        &\defeq
            (\sigma_w^1)^2\f 1 {n^0} x_i{}^\trsp x'_i + (\sigma_b^1)^2
            \\
    \Sigma^l|_{\{\vec x, \vec x'\}}
        &\defeq
            (\sigma_w^l)^2 \Vt\batchnorm_\phi(\Sigma^{l-1}|_{\{\vec x, \vec x'\}}) + (\sigma_b^l)^2
            \\
    \Sigma^{L+1}|_{\{\vec x, \vec x'\}}
        &\defeq
            (\sigma_w^{L+1})^2 \Vt\batchnorm_\phi(\Sigma^{L}|_{\{\vec x, \vec x'\}}).
            \\
\end{align*}
Then for any finite set of batches $\Xx \sbe \R^{B \times n^0},$ for $\alpha$-controlled $\phi$,
\begin{align*}
    f(\Xx; \theta) \distto \Gaus(0, \Sigma^{L+1} |_\Xx).
\end{align*}

\citet{yang_mean_2018} also calculated the gradient dynamics of such a deep batchnorm network, but our theorems cannot rigorously justify them due to the singularity of the Jacobian of batchnorm.

\subsubsection{A taste of weight-tying}

\paragraph{Weight-tied Residual Network}
\renewcommand{\p}[2]{#1^{#2}}
The simpliest ``recurrent neural network'' for understanding when weight-tying can have a different behavior than not is perhaps in a residual network with weights tied across layers.

In this section, fix a matrix $W \in \R^{N \times N}$ and a function $\phi: \R \to \R$.
Consider the dynamics
\begin{align*}
    \p h t = W \phi(\p h {t-1}) + \p h {t-1}, \p h t \in \R^N.
\end{align*}
What is the ``average behavior'' of this dynamics as $N \to \infty$, if we were to sample $W_{ij} \sim \Gaus(0, \sigma_w^2/N)$?
\cref{thm:notransposeLimit} applies here when $\phi$ is $\alpha$-controlled, and it tells us that ``$(\p h t _i)_i$ are i.i.d. samples of a zero-mean Gaussian distribution, in the limit $N \to \infty$,'' as far as $\alpha$-controlled test functions are concerned.

\newcommand{\KK}{\mathsf{K}}
\newcommand{\CC}{\mathsf{C}}
By \cref{thm:notransposeLimit}, we can make the following
\begin{defn}
Define $\KK(l, m) \defeq \limas  \f 1 N \sum_{i=1}^N \p h l _i \p h m _i$ and $\CC(l, m) \defeq \limas \f 1 N \sum_{i=1}^N \p h l _i \sum_{j} W_{ij} \phi(\p h m _j)$.
\end{defn}

\begin{thm}
$\KK$ and $\CC$ satisfy the following equations in the limit $N \to \infty$.
\begin{align*}
    \KK(l, m)
        &=
            \CC(l, m-1) + \KK(l, m-1)
            \numberthis \label{eqn:KC1}\\
        &=
            \CC(m, l-1) + \KK(m, l-1)
            \numberthis \label{eqn:KC2}\\
        &=
            \sigma_w^2 \Vt \phi(K^{l-1, m-1})_{12}
            + \KK(l-1, m-1)
            + \CC(l-1, m-1) + \CC(m-1, l-1)
            \numberthis \label{eqn:KC3}\\
    \CC(l, m)
        &=
            \CC(l-1, m) + \sigma_w^2 \Vt \phi(K^{l-1, m})_{12}
            \numberthis \label{eqn:CCrec}
\end{align*}
where $K^{a, b}$ is the matrix $\begin{pmatrix} \KK(a, a) & \KK(a, b)\\ \KK(a, b) & \KK(b, b) \end{pmatrix}$.

In addition, for all $m, l \ge 0$, $\KK(l, m) = \KK(m, l), \KK(0, m) = \KK(m, 0) = \KK(0, 0), \CC(0, m) = 0$.
\end{thm}

\begin{proof}
The identities at the end are obvious.
We will focus on proving \cref{eqn:KC1,eqn:KC2,eqn:KC3,eqn:CCrec}.
We have
\begin{align*}
    \KK(l, m)
        &=
            \limas  \f 1 N \sum_{i=1}^N \p h l _i \p h m _i \\
        &=
            \limas  \f 1 N \sum_{i=1}^N \p h l _i \lp\sum_j W_{ij} \phi(\p h {m-1} _j) + \p h {m-1} _i\rp\\
        &=
            \limas  \f 1 N \sum_{i=1}^N \p h l _i \sum_j W_{ij} \phi(\p h {m-1} _j) 
            + \limas  \f 1 N \sum_{i=1}^N \p h l _i \p h {m-1} _i\\
        &=
            \CC(l, m-1) + \KK(l, m-1)
\end{align*}
which gives \cref{eqn:KC1} and also \cref{eqn:KC2} by symmetry.

Now
\begin{align*}
    \CC(l, m)
        &=
            \limas  \f 1 N \sum_{i=1}^N \p h l _i \sum_{j} W_{ij} \phi(\p h m _j)\\
        &=
            \limas  \f 1 N \sum_{i=1}^N \sum_{j, k} W_{ik}\phi(\p h {l-1} _k) W_{ij} \phi(\p h m _j)
            + \limas  \f 1 N \sum_{i=1}^N \p h {l-1} _i \sum_{j} W_{ij} \phi(\p h m _j)\\
        &=
            \limas  \f 1 N \sum_{i=1}^N \sum_{j, k} W_{ik}\phi(\p h {l-1}_k) W_{ij} \phi(\p h m _j)
            + \CC(l-1, m)
            \\
        &=
            \limas  \f 1 N \sum_{i=1}^N 
                \sum_j W_{ij}^2 \phi(\p h {l-1}_j)\phi(\p h {m}_j)
            + \CC(l-1, m)
            \\
        &=
            \sigma_w^2 \Vt \phi(K^{l-1, m})_{12}
            + \CC(l-1, m)
\end{align*}
yielding \cref{eqn:CCrec}.

Finally, \cref{eqn:KC3} is given by expanding $\CC(l, m-1)$ by \cref{eqn:KC1} and expanding $\KK(l, m-1)$ by \cref{eqn:KC2}.
\end{proof}

One can see immediately that the growth of $h^l$ norm is much faster here than for untied-weights residual network.

We now study the simultaneous evolution of two vectors $\p h l$ and $\p \hbar l$.

\begin{defn}
Define $\KK_{h\hbar}(l, m) \defeq
    \limas  \f 1 N \sum_{i=1}^N \p h l _i \p \hbar m _i \defeq \KK_{\hbar h}(m, l)$
    and $\CC_{h\hbar}(l, m) \defeq 
    \limas  \f 1 N \sum_{i=1}^N \p h l _i \sum_{j} W_{ij} \phi(\p \hbar m _j),
    \CC_{\hbar h}(l, m) \defeq
    \limas  \f 1 N \sum_{i=1}^N \p \hbar l _i \sum_{j} W_{ij} \phi(\p h m _j)$.
\end{defn}

\begin{thm}
\begin{align*}
    \KK_{h\hbar}(l, m)
        &=
            \CC_{h\hbar}(l, m-1) + \KK_{h\hbar}(l, m-1)
            \numberthis \label{eqn:KC1hh}\\
        &=
            \CC_{\hbar h}(m, l-1) + \KK_{\hbar h}(m, l-1)
            \numberthis \label{eqn:KC2hh}\\
        &=
            \sigma_w^2 \Vt \phi(K_{h\hbar}^{l-1, m-1})_{12}
            + \KK_{h\hbar}(l-1, m-1)
            + \CC_{h\hbar}(l-1, m-1) + \CC_{\hbar h}(m-1, l-1)
            \numberthis \label{eqn:KC3hh}\\
    \CC_{h\hbar}(l, m)
        &=
            \CC_{h\hbar}(l-1, m) + \sigma_w^2 \Vt \phi(K_{h\hbar}^{l-1, m})_{12}
            \numberthis \label{eqn:CCrechh}
\end{align*}
where $K_{h\hbar}^{l-1, m}$ is the matrix
$\begin{pmatrix}
\KK_{hh}(l-1, l-1) & \KK_{h\hbar}(l-1, m)\\
\KK_{h\hbar}(l-1, m) & \KK_{\hbar\hbar}(m, m)
\end{pmatrix}.$
\end{thm}
\begin{proof}
\begin{align*}
    \KK_{h\hbar}(l, m)
        &=
            \limas  \f 1 N \sum_{i=1}^N \p h l _i \p \hbar m _i \\
        &=
            \limas  \f 1 N \sum_{i=1}^N \p h l _i \lp\sum_j W_{ij} \phi(\p \hbar {m-1} _j) + \p \hbar {m-1} _i\rp\\
        &=
            \limas  \f 1 N \sum_{i=1}^N \p h l _i \sum_j W_{ij} \phi(\p \hbar {m-1} _j)
            + \limas  \f 1 N \sum_{i=1}^N \p h l _i \p \hbar {m-1} _i\\
        &=
            \CC_{h\hbar}(l, m-1) + \KK_{h\hbar}(l, m-1)
\end{align*}

\begin{align*}
    \CC_{h\hbar}(l, m)
        &=
            \limas  \f 1 N \sum_{i=1}^N \p h l _i \sum_{j} W_{ij} \phi(\p \hbar m _j)\\
        &=
            \limas  \f 1 N \sum_{i=1}^N \sum_{j, k} W_{ik}\phi(\p h {l-1} _k) W_{ij} \phi(\p \hbar m _j)
            + \limas  \f 1 N \sum_{i=1}^N \p h {l-1} _i \sum_{j} W_{ij} \phi(\p \hbar m _j)\\
        &=
            \sigma_w^2 \Vt \phi(K_{h\hbar}^{l-1, m})_{12}
            + \CC_{h\hbar}(l-1, m)
\end{align*}
\end{proof}

\newcommand{\hgrad}{\mathsf{h}}
\newcommand{\hgradt}{\mathsf{g}}
Now for the backward pass.
Define $\p \hgrad t := \nabla_{\p h t} E, \p \hgradt t := W^T \p \hgrad t$ for a loss function $E$.
Then
\begin{align*}
    \p \hgrad {t-1} &= \p \hgrad {t} + (W^T \p \hgrad t) \circ \phi'(\p h {t-1})
\end{align*}
So
\begin{align*}
    \limas  \f 1 N \sum_{i=1}^N \p \hgrad s _i \p \hgrad t _i
        &=
            \limas  \f 1 N \sum_{i=1}^N \p \hgrad {s+1} _i \p \hgrad t _i + \p \hgradt {s+1} _i \phi'(\p h s _i) \p \hgrad t _i \\
    \limas  \f 1 N \sum_{i=1}^N \p \hgradt t _i \p \hgradt s _i        
        &=
            \limas  \f 1 N \sum_{i=1}^N \sum_{j ,j'} W_{ji} \p \hgrad t _j W_{j'i} \p \hgrad t _{j'}\\
        &=
            \sigma_w^2 \limas  \f 1 N \sum_{i=1}^N \p \hgrad t _k \p \hgrad t _k\\
    \limas  \f 1 N \sum_{i=1}^N \p \hgrad t _i \p \hgradt s _i \phi'(\p h r _i)
        &=
            \limas  \f 1 N \sum_{i=1}^N \p \hgrad {t+1} _i \p \hgradt s _i \phi'(\p h r _i)
                + \p \hgradt {t+1} _i \p \hgradt s _i \phi'(\p h {t} _i)\phi'(\p h r _i)\\
        &=
            \limas  \f 1 N \sum_{i=1}^N
                \p \hgrad {t+1} _i \p \hgradt s _i \phi'(\p h r _i)
                + \sigma_w^2 
                    \lp \limas  \f 1 N \sum_{i=1}^N
                        \p \hgrad {t+1} _i \p \hgrad s _i \rp
                    \lp \limas  \f 1 N \sum_{i=1}^N
                        \phi'(\p h {t} _i) \phi'(\p h r _i)\rp
\end{align*}

Suppose we backprop a zero mean Gaussian vector with normalized norm 1.
For a weight-tied residual network that runs $S$ steps, we have boundary conditions
\begin{align*}
    \limas  \f 1 N \sum_{i=1}^N \p \hgrad S _i \p \hgrad S _i &= 1\\
    \limas  \f 1 N \sum_{i=1}^N \p \hgrad {S+1} _i \p \hgrad t _i &= 0, \forall t\\
    \limas  \f 1 N \sum_{i=1}^N \p \hgrad {S} _i \p \hgradt s _i \phi'(\p h r _i) &= 0, \forall s, r
\end{align*}

These equations then yield the dynamics of gradients in a weight-tied residual network.

\renewcommand{\hbar}{{\bar h}}
\newcommand{\xbar}{{\bar x}}
\newcommand{\ubar}{{\bar u}}

\subsection{Neural Tangent Kernel}
\label{subsec:NTKAppendix}
\newcommand{\NTK}{\mathrm{NTK}}
Again, let $F(z; \theta)$ be the body of a neural network as above, and suppose it's represented by a tensor program $\pi$.
For every input A-var $A$ of $\pi$, $\pdf F {A} = \sum_{g := A h} \pdf F {g} \otimes h + \sum_{g:=A^\trsp h} h \otimes \pdf F{g}$, where the sums are over vars satisfying the subscripts in the sums.
If the network has scalar output is given by $f(x) = v^\trsp F(E(x); \theta)$ where $E$ is an embedding function, then the contribution of $A$ to the NTK of $f$ is
\begin{align*}
    \NTK_A(x, y)
        &\defeq
            \sum_{g := A h} 
                \sum_i (v^\trsp \pdf F {g}(E(x)))_i (v^\trsp \pdf F {g}(E(y)))_i
                \sum_j h_j(E(x)) h_j(E(y))\\
        &\phantomeq
            +
            \sum_{g:=A^\trsp h}
                \sum_i h_i(E(x)) h_i(E(y))
                \sum_j (v^\trsp \pdf F{g}(E(x)))_j
                    (v^\trsp \pdf F{g}(E(y)))_j.
        \numberthis
        \label{eq:NTKA}
\end{align*}
Each of the four subsums in \cref{eq:NTKA} can be computed via \cref{thm:generalTensorP} (or \cref{thm:gradIndep} if $\pi$ doesn't use \ref{linetype:trsp} lines) when we expand the computation of $f$ over $x$ and $y$ as well as its gradients into a single tensor program.
Note that \cref{eq:NTKA} scales like $(\nvar^l)^2$; dividing by this factor roughly corresponds to using the parametrization of \citet{jacot_neural_2018}.
The contribution to NTK from input G-vars is similar and even simpler to compute.

The above computation would hold as long as we can apply \cref{thm:generalTensorP}, which requires that we have almost sure rank convergence of the relevant programs and that the nonlinearities of $f$ have polynomially bounded weak derivatives.

We give an example by computing the NTK of a CNN (which has not appeared in prior literature).

\paragraph{CNN}
Assume the notation of the CNN section of \cref{subsubsec:justifySemirigor}.

The contribution to the NTK of weights $W^l_{\beta ij}$ for $l < L$ is
\begin{align*}
    &\phantomeq
         \limas \sum_{\beta \in ker}\sum_{i,j=1}^{n^l, n^{l-1}} \pdf f {W^l_{\beta ij}}(x_a)\pdf f {W^l_{\beta ij}}(x_b)\\
    &=
        \limas \f 1 {n^{l-1}} \sum_{\beta \in ker}\sum_{i,j=1}^{n^l, n^{l-1}} 
            (\sum_{\alpha}\pdf f {h^l_{\alpha i}}(x_a) x^{l-1}_{\alpha+\beta,j}(x_a))
            (\sum_{\alpha'}\pdf f {h^l_{\alpha' i}}(x_b) x^{l-1}_{\alpha'+\beta,j}(x_b))\\
    &=
        \f 1 {n^l n^{l-1}}\sum_{\beta \in ker} \sum_{i,j=1}^{n^l, n^{l-1}} 
        \sum_{\alpha,\alpha'=1}^{2k+1}
        \Pi^l_{\alpha a, \alpha' b} \Vt\phi(\Sigma^l)_{\alpha a, \alpha' b} \Vt\phi'(\Sigma^{l-1})_{\alpha+\beta; a, \alpha'+\beta; b}\\
    &=
        \sum_{\alpha,\alpha'=1}^{2k+1}
        \Pi^l_{\alpha a, \alpha' b} \Vt\phi(\Sigma^l)_{\alpha a, \alpha' b}
        \sum_{\beta \in ker}\Vt\phi'(\Sigma^{l-1})_{\alpha+\beta; a, \alpha'+\beta; b}\\
    &=
        \la \Pi^l_{\bullet a,\bullet b} \odot \Vt \phi(\Sigma^l_{\bullet a, \bullet b}), I \star \Vt\phi'(\Sigma^{l-1}_{\bullet a, \bullet b})\ra
\end{align*}
Note that if we sample $W^l_{\beta ij} \sim \Gaus(0, (\sigma_w^l)^2)$ and replace $W^l_{\beta ij}$ with $\sqrt{v^l_\beta} W^l_{\beta ij}$, then in the above expression we replace $I$ with $\Diag(v^l)$.

Similarly, the contribution of $b^l_i$ for $l < L$ is
\begin{align*}
    &\phantomeq
        \limas  \sum_{i=1}^{n^l} \pdf f {b^l_i}(x_a)\pdf f {b^l_i}(x_b)\\
    &=
        \limas  \sum_{i=1}^{n^l} (\sum_\alpha \pdf f {h^l_{\alpha i}}(x_a))(\sum_{\alpha'} \pdf f {h^l_{\alpha' i}}(x_b))\\
    &=
        \sum_{\alpha,\alpha'} \Pi^l_{\alpha a, \alpha' b} \Vt\phi'(\Sigma^l)_{\alpha a, \alpha' b}\\
    &=
        \la \onem, \Pi^l_{\bullet a, \bullet b} \odot \Vt\phi'(\Sigma^l)_{\bullet a, \bullet b}\ra\\
    &=
        \la \Pi^l_{\bullet a, \bullet b}, \Vt\phi'(\Sigma^l)_{\bullet a, \bullet b}\ra
\end{align*}

The last layer weights (in the linear layer setting) contribute
\begin{align*}
    \limas  \f 1 {n^{L-1} s^{L-1}} \sum_{\alpha, i} x^{L-1}_{\alpha i}(x_a)x^{L-1}_{\alpha i}(x_b)
        &=
            \f 1 {s^{L-1}} \tr \Vt\phi(\Sigma^{L-1})_{\bullet a, \bullet b}.
\end{align*}

Therefore
\begin{cor}
The NTK of the CNN defined above converges almost surely to
\begin{align*}
    \mathrm{NTK}(x_a, x_b) \asto
    \f 1 {s^{L-1}} \tr \Vt\phi(\Sigma^{L-1})_{\bullet a, \bullet b} 
    +
    \sum_{l < L} 
        \la \Pi^l_{\bullet a,\bullet b} \odot \Vt \phi(\Sigma^l_{\bullet a, \bullet b}), I \star \Vt\phi'(\Sigma^{l-1}_{\bullet a, \bullet b})\ra
    +
    \la \Pi^l_{\bullet a, \bullet b}, \Vt\phi'(\Sigma^l)_{\bullet a, \bullet b}\ra
\end{align*}
as long as $\phi$ has a polynomially bounded weak derivative.
\end{cor}

\subsection{Approximate Message Passing}
\label{subsec:AMP}

We follow \citet{bayati_dynamics_2011,berthier_state_2017} for a brief introduction to Approximate Message Passing.

Given an $n \times N$ matrix $A$, the compressed sensing problem asks for a way to reconstruct a (sparse) vector $x_0 \in \R^N$ from a (small) vector of linear observations $y = A x_0 + w \in \R^{n}$.
Here $w$ is a noise vector and $A$ is assumed to be known.
The Approximate Message Passing algorithm \citep{donoho_message_2009} starts with an initial guess $x^0 = 0$ and proceed by
\begin{align*}
    x^{t+1}
        &=
            \eta_t(A^\trsp z^t + x^t),
            \\
    z^t
        &=
            y - A x^t + \alpha^t z^{t-1}
\end{align*}
for an appropriate sequence of nonlinearities $\{\eta_t: \R \to \R\}_{t\ge0}$ and $\alpha^t = \f 1 n \sum_{i=1}^N \eta'_{t-1}((A^\trsp z^{t-1} + x^{t-1})_i) \in \R$.
The algorithm succeeds if $x^t$ converges to a good approximation of $x_0$.
Similar algorithms have been applied to robust regression \cite{donoho_high_2016}, Bayesian estimation \cite{kamilov_approximate_2012}, low rank matrix recovery \cite{kabashima_phase_2016}, phase retrieval \cite{schniter_compressive_2015}, and community detection in graphs \cite{deshpande_asymptotic_2017}.

The behavior of the AMP algorithm is accurately described by a formalism called ``stated evolution'' (SE), as $n, N \to \infty$ with constant ratio $n/N \to \delta \in (0, \infty)$, that bears some resemblance to the evolution of kernels in the GP correspondence of deep neural networks (see \cref{subsec:GPNN}) and to the gradient dynamical equations in the signal propagation analysis of DNNs (see \cref{subsec:signalprop}).
SE was introduced in \citet{donoho_message_2009} and later suitably formalized and rigorously proved for random Gaussian $A$ and suitably smooth $\eta_t$ in \citet{bayati_dynamics_2011}.
A more general version of the algorithm where $\eta_t: \R^N \to \R^N$ (instead of acting coordinatewise) was analyzed and a similar SE equations proved in \citet{berthier_state_2017}.

As a corollary to one of our main theorems \cref{thm:generalTensorP}, we show that, in the main theorem of \citet{bayati_dynamics_2011}, we can forgo smoothness assumptions on $\eta_t$ when each component of $x_0$ is sampled iid from a Gaussian.
We'll work with the following more general version of AMP from \citet{bayati_dynamics_2011}.
The algorithm is defined by two sequences of functions $\{f_t: \R^2 \to \R\}_{t \ge 0}, \{g_t: \R^2 \to \R\}_{t \ge 0}$.
Given $w \in \R^n, x_0 \in \R^N$, define the sequence of vectors $h^t, q^t \in \R^N$ and $z^t, m^t \in \R^n$, by fixing the initial condition $q^0$, and obtaining $\{b^t\}_{t \ge0}, \{m^t\}_{t \ge 0}, \{h^t\}_{t \ge 1}, $ and $\{q^t\}_{t \ge 1}$ through
\begin{align*}
    h^{t+1} &= A^\trsp m^t - \xi_t q^t,&
    m^t &= g_t(b^t, w),\\
    b^t &= A q^t - \lambda_t m^{t-1},&
    q^t &= f_t(h^t, x_0),
\end{align*}
where $\xi_t = \f 1 {N\sigma^2_t} \la b^t, g_t(b^t, w) \ra$ and $\lambda_t = \f 1 {n \tau^2_{t-1}} \la h^t, f_t(h^t, x_0) \ra$,
\footnote{Note that here we are using $\la, \ra$ to denote (unscaled) inner product, which is different from the usage of this notation in \citet{bayati_dynamics_2011}.}
and $\sigma_t$ and $\tau_t$ are defined via
\begin{align*}
    \tau_t^2 &\defeq \EV g_t(\sigma_t(Z, W))^2,&
    \sigma^2_t &\defeq \f N n \EV f_t(\tau_{t-1} Z, X_0)^2,&
    \text{where }
    Z \sim \Gaus(0, 1),
    W \sim \Gaus(0, \sigma_w^2),
    X_0 \sim \Gaus(0, \sigma_{x_0}^2)
\end{align*}
for sampling hyperparameters $\sigma_w^2, \sigma_{x_0}^2$.

By translating the above computation into a tensor program and applying \cref{thm:generalTensorP}, we obtain
\begin{cor}
Let $\{q_0(N)\}_{N \ge 0}$ and $\{A(N)\}_{N \ge 0}$ be resp. a sequence of initial conditions and a sequence of matrices $A \in \R^{n \times N}$ indexed by $N$ with iid entries $A_{ij} \sim \Gaus(0, 1/n)$.
Assume $n/N \to \delta \in (0, \infty)$.
Consider the sequence of vectors $\{x_0(N), w(N)\}_{N \ge 0}$ whose empirical distributions converge weakly to $\Gaus(0, \sigma_{x_0}^2)$ and $\Gaus(0, \sigma_{w}^2)$.
Suppose that the functions $f_t$ and $g_t$ are polynomially bounded for all $t$.
Then for any polynomially bounded function $\psi: \R^2 \to \R$ and all $t \ge 0$,
\begin{align*}
    \f 1 N \sum_{i=1}^N \psi(h_i^{t+1}, x_{0,i})
        &\asto
            \EV \psi(\tau_t Z, X),
            \\
    \f 1 n \sum_{i=1}^n \psi(b^t_i, w_i)
        &\asto
            \EV \psi(\sigma_t Z, W),
\end{align*}
as $N \to \infty$,
where $X_0 \sim \Gaus(0, \sigma_{x_0}^2)$ and $W \sim \Gaus(0, \sigma_{w}^2)$ independent of $Z \sim \Gaus(0, 1)$.
\end{cor}
This version differs from theorem 2 of \citet{bayati_dynamics_2011} in the following ways
\begin{enumerate}
    \item
        \citet{bayati_dynamics_2011} defined $\xi_t = \f 1 N \sum_{i=1}^N g'_t(b^t_i, w_i)$ and $\lambda_t = \f 1 n \sum_{i=1}^n f'_t(h^t_i, x_{0,i})$, where the derivatives are taken against the first argument.
        This is asymptotically equivalent to our formulation here by Stein's lemma \cref{lemma:stein}.
        Our formulation has the benefit of being defined for $g_t$ and $f_t$ without weak derivatives.
    \item 
        We are requiring that the $x_0$ and $w$ have empirical distributions that converge to Gaussians; with a bit more effort, we can also prove a result that allow them to converge to any distribution with all moments.
        This is a much stronger assumption than \citet{bayati_dynamics_2011}, who only assume that the limit distributions have some finite number of bounded moments.
    \item
        We don't have any smoothness assumptions on the nonlinearities $f_t$ and $g_t$, whereas \citet{bayati_dynamics_2011} requires them to be Lipschitz.
    \item
        We don't have any smoothness assumptions on the test function $\psi$, whereas \citet{bayati_dynamics_2011} requires them to be pseudo-Lipschitz of some order \footnote{A function $f: \R^s \to \R$ is pseudo-Lipschitz of order $k$ if there is a universal constant $C$ s.t. $|f(x) - f(y)| \le C (1 + |f(x)|^{k-1} + |f(y)|^{k-1}) \|x - y\|$.
        Note that this implies $f$ is bounded by a polynomial of degree $k$}.
\end{enumerate}

This concludes our discussion of various corollaries of our main theorems.
We now turn to their proofs.
First let us present the necessary lemmas.

\section{Lemmas}

\subsection{The Conditioning Trick}
\label{sec:conditioningTrick}
We first recall Moore-Penrose pseudoinverse and some properties of it.
\begin{defn}\label{defn:pseuodoinverse}
For $A \in \R^{n \times m}$, a pseudoinverse of $A$ is defined as a matrix $A^+ \in \R^{m \times n}$ that satisfies all of the following criteria
\begin{itemize}
	\item $A A^+ A = A$
	\item $A^+ A A^+ = A^+$
	\item $(AA^+)^\trsp = AA^+$
	\item $(A^+ A)^\trsp = A^+ A$
\end{itemize}
\end{defn}

The following facts are standard
\begin{itemize}
    \item if $A$ has real entries, then so does $A^+$.
    \item The pseudoinverse always exists and is unique.
    \item When $A$ is invertible, $A^+ = \inv A$.
    \item $(A^\trsp)^+ = (A^+)^\trsp$, which we denote as $A^{+\trsp}$.
    \item $A^+ = (A^\trsp A)^+ A^\trsp = A^\trsp (A A^\trsp)^+$.
    \item $AA^+$ is the orthogonal projector to the column space of $A$;
        $I - A^+ A$ is the orthogonal project to the null space of $A$.
    \item if $A$ has singular value decomposition $A = U\Lambda V$ where $U$ and $V$ are orthogonal and $\Lambda$ has the singular values on its diagonal, then $A^+ = V^\trsp \Lambda^+ U^\trsp$ where $\Lambda^+$ inverts all nonzero entries of $\Lambda$.
    \item For any collection of vectors $\{v_i\}_{i=1}^n$ in a Hilbert space, $w \mapsto \sum_{i,j=1}^n v_i (\Sigma^+)_{ij} \la v_j, w \ra $, where $\Sigma_{ij} = \la v_i, v_j \ra$, is the projection operator to the linear span of $\{v_i\}_{i=1}^n$.
\end{itemize}

We present a slightly more general versions of lemmas from \citet{bayati_dynamics_2011} that deal with singular matrices.

\begin{lemma}\label{lemma:condTrickVec}
Let $z \in \R^n$ be a random vector with i.i.d. $\Gaus(0, v^2)$ entries and let $D \in \R^{m\times n}$ be a linear operator.
Then for any constant vector $b \in \R^n$ the distribution of $z$ conditioned on $Dz = b$ satisfies:
\begin{align*}
    z
        &\disteq_{Dz = b}
            D^+ b + \Pi \tilde z
\end{align*}
where $D^+$ is the (Moore-Penrose) pseudoinverse, $\Pi$ is the orthogonal projection onto subspace $\{z: Dz = 0\}$, and $\tilde z$ is a random vector of i.i.d. $\Gaus(0, v^2)$.
\end{lemma}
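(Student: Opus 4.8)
The plan is to split $z$ along the orthogonal decomposition $\R^n = \mathrm{row}(D) \oplus \ker D$ and exploit the rotational symmetry of the isotropic Gaussian. Recall from the listed properties of the pseudoinverse that $P \defeq D^+D$ is the orthogonal projector onto $\mathrm{row}(D)$ --- equivalently, onto $(\ker D)^\perp$ --- and that $\Pi = I - P$ is the orthogonal projector onto $\ker D = \{z : Dz = 0\}$. Write $z = Pz + \Pi z$. The first thing I would check is that $Pz$ and $\Pi z$ are independent: the pair $(Pz, \Pi z)$ is a linear image of the Gaussian vector $z$, hence jointly Gaussian, and its cross-covariance block is $v^2 P \Pi^\trsp = v^2 P(I - P) = 0$ because $P$ is a symmetric idempotent; uncorrelated jointly Gaussian blocks are independent.

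Next I would rewrite the conditioning event in terms of $Pz$. From the Moore--Penrose identity $D D^+ D = D$ we get $Dz = D(D^+D z) = DPz$, so $Dz$ is a fixed linear function of $Pz$; conversely $Pz = D^+(Dz)$, so $Pz$ is a fixed linear function of $Dz$. Hence $\sigma(Dz) = \sigma(Pz)$, and conditioning on $\{Dz = b\}$ is the same as conditioning on $\{Pz = D^+ b\}$. (If $b \notin \mathrm{colspace}(D)$ the event is empty and there is nothing to prove.) On this event $z = Pz + \Pi z = D^+ b + \Pi z$. By the independence established above, the conditional law of $\Pi z$ given $Pz = D^+ b$ coincides with its unconditional law, so, conditionally on $Dz = b$, one has $z \disteq_{Dz=b} D^+ b + \Pi z \disteq D^+ b + \Pi \tilde z$ where $\tilde z$ is any i.i.d.\ $\Gaus(0, v^2)$ vector (one may even take $\tilde z = z$); this is exactly the claim.

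The one genuinely delicate point --- the main obstacle --- is that $\{Dz = b\}$ is a null event whenever $D$ is rank-deficient (or $m < n$), so the word ``conditioned'' must be given meaning. I would make this precise by passing to the regular conditional distribution of $z$ with respect to the $\sigma$-algebra $\sigma(Dz) = \sigma(Pz)$: independence of $\Pi z$ from $\sigma(Pz)$ shows directly that a version of this conditional distribution, evaluated on $\{Pz = c\}$, is the law of $c + \Pi \tilde z$; substituting $c = D^+ b$ gives the lemma. Equivalently, one notes that $Dz$ has a nondegenerate Gaussian density on the subspace $\mathrm{colspace}(D)$ and disintegrates $\mathrm{Law}(z)$ over that density, obtaining the same answer. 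Everything else is a routine consequence of the Moore--Penrose axioms and the covariance computation above.
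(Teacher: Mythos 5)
Your proof is correct and follows essentially the same route as the paper's: both rest on the orthogonal decomposition $\R^n = \ker D \oplus (\ker D)^\perp$, the identification $\{z : Dz = b\} = \ker D + D^+ b$, and the independence of the Gaussian components along orthogonal subspaces (which the paper obtains by reducing to $D = [I\,|\,0]$ and invoking rotational symmetry, and which you obtain by the equivalent direct computation $v^2 P(I-P) = 0$). Your explicit treatment of the null-event conditioning via the regular conditional distribution with respect to $\sigma(Dz) = \sigma(Pz)$ is a welcome extra level of care that the paper leaves implicit.
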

\begin{proof}
When $D = [I_{m \times m} | 0_{m \times {n-m}}]$, this claim is immediate.
By rotational symmetry, this shows that, for any vector space $\mathcal V$ and $v$ orthogonal to it, conditioning $z$ on $\mathcal V + v$ yields a Gaussian centered on $v$ with covariance determined by $\Pi_{\mathcal V} z$.
Then the lemma in the general case is implied by noting that $\{z: Dz = b\}$ can be decomposed as $\{z: Dz = 0 \} + D^+ b$.
\end{proof}

\begin{lemma}\label{lemma:condTrick}
Let $A \in \R^{n \times m}$ be a matrix with random Gaussian entries, $A_{ij} \sim \Gaus(0, \sigma^2)$.
Consider fixed matrices $Q \in \R^{m \times q}, Y \in \R^{n \times q}, P \in \R^{n \times p}, X \in \R^{m \times p}$.
Suppose there exists a solution in $A$ to the equations $Y = AQ$ and $X = A^\trsp P$.
Then the distribution of $A$ conditioned on $Y = AQ$ and $X = A^\trsp P$ is
\begin{align*}
    A &\disteq_{Y=AQ, X=A^\trsp P} E + \Pi_P^\perp \tilde A \Pi_Q^\perp
\end{align*}
where
\begin{align*}
    E
        &=
            Y Q^+
            + P^{+\trsp} X^\trsp
            - P^{+\trsp} P^\trsp
                YQ^+,
\end{align*}
$\tilde A$ is an iid copy of $A$,
and $\Pi_P^\perp = I - \Pi_P = PP^+$ and $\Pi_Q^\perp = I - \Pi_Q=QQ^+$ in which $\Pi_P = I - PP^+$ and $\Pi_Q = I - QQ^+$ are the orthogonal projection to the space spanned by the column spaces of $P$ and $Q$ respectively.
\end{lemma}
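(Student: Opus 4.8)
The plan is to reduce the two-sided matrix conditioning to the vector case already handled in \cref{lemma:condTrickVec}, by viewing $A \in \R^{n\times m}$ as a single Gaussian vector $\mathrm{vec}(A) \in \R^{nm}$ and the constraints $Y = AQ$, $X = A^\trsp P$ as a single linear system $D\,\mathrm{vec}(A) = \mathrm{vec}([Y \mid X])$, where $D$ is the linear operator built out of $Q$ and $P$ via Kronecker products. Then \cref{lemma:condTrickVec} immediately gives that $A$, conditioned on the constraints, is distributed as $A^\star + \Pi\tilde A$ where $A^\star$ is the minimum-norm solution of the constraint system, $\tilde A$ is an iid copy of $A$, and $\Pi$ is the orthogonal projection (in the Frobenius inner product on $\R^{n\times m}$) onto the solution space of the homogeneous system $\{B : BQ = 0,\ B^\trsp P = 0\}$. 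So there are two things to identify explicitly: (i) that homogeneous solution space is exactly $\{B : B = \Pi_P^\perp B \Pi_Q^\perp\}$, i.e. $\Pi(B) = \Pi_P^\perp B \Pi_Q^\perp$; and (ii) the minimum-norm particular solution $A^\star$ equals the claimed $E = YQ^+ + P^{+\trsp}X^\trsp - P^{+\trsp}P^\trsp Y Q^+$.

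For (i): the homogeneous system says the column space of $B$ lies in the null space of the map $Z \mapsto $ (projection onto $\mathrm{col}(P)$ applied after... wait, more directly) — $BQ = 0$ means each row of $B$ is orthogonal to $\mathrm{col}(Q)$, i.e. $B = B\,\Pi_Q^\perp$ using $\Pi_Q^\perp = QQ^+$ is the projection onto $\mathrm{col}(Q)$; hmm, I need to be careful with the paper's (nonstandard) convention that $\Pi_Q^\perp = QQ^+$ projects onto $\mathrm{col}(Q)$ while $\Pi_Q = I - QQ^+$ is orthogonal to it. Then $BQ = 0 \iff B(I - \Pi_Q^\perp)^\perp$... let me restate: $BQ = 0 \iff \mathrm{rowspace}(B) \perp \mathrm{col}(Q) \iff B\Pi_Q^\perp = 0 \iff B(I - \Pi_Q^\perp) = B$. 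Combined with $B^\trsp P = 0 \iff \Pi_P^\perp B = 0 \iff (I - \Pi_P^\perp)B = B$. But the claim wants $B = \Pi_P^\perp B \Pi_Q^\perp$, so I must double-check the paper's conventions — reading again, the paper writes "$\Pi_P^\perp = I - \Pi_P = PP^+$ ... $\Pi_P = I - PP^+$ are the orthogonal projection to the space spanned by the column spaces of $P$". So $\Pi_P^\perp = PP^+$ is the projection \emph{onto} $\mathrm{col}(P)$ despite the $\perp$ superscript. Then $B^\trsp P = 0$ means $\mathrm{col}(B) \perp \mathrm{col}(P)$, which is $PP^+ B = 0$, i.e. $\Pi_P^\perp B = 0$ — that contradicts the desired form. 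So actually the intended reading must be the opposite: $\Pi_P^\perp$ projects onto $(\mathrm{col}\,P)^\perp$, and the displayed text has a typo, or $P^+P$ vs $PP^+$ needs care. I would resolve this by just carefully recomputing with standard projections $\Pi_{\mathrm{col}(P)} = PP^+$ and then matching; the homogeneous space is $\{B : B = (I-PP^+)\,B\,(I - QQ^+)\}$ and I will label the two factors to match whatever the lemma statement's $\Pi^\perp$ notation is, adjusting the exposition to the paper's convention. Concretely: $B^\trsp P = 0 \iff PP^+ B = 0 \iff (I - PP^+)B = B$, and $BQ = 0 \iff BQQ^+ = 0 \iff B(I - QQ^+) = B$; hence the homogeneous space is exactly the image of the Frobenius-orthogonal projection $B \mapsto (I - PP^+)B(I - QQ^+)$, which is $\Pi_P^\perp \tilde A \Pi_Q^\perp$ in the lemma's notation.

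For (ii): I will verify directly that $E$ satisfies both affine constraints and lies in the orthogonal complement of the homogeneous space (the two together uniquely characterize the minimum-norm solution, since the set of all solutions is $E + \{\text{homog.}\}$ and the minimum-norm one is the unique member orthogonal to the homogeneous subspace). Checking $EQ = Y$: using $Q^+ Q = $ projection onto $\mathrm{row}(Q)$ and the compatibility hypothesis that a solution exists (so $Y = AQ$ for some $A$, giving $YQ^+Q = AQQ^+Q = AQ = Y$), one computes $EQ = YQ^+Q + P^{+\trsp}X^\trsp Q - P^{+\trsp}P^\trsp Y Q^+ Q = Y + P^{+\trsp}X^\trsp Q - P^{+\trsp}P^\trsp Y$; and $X^\trsp Q = P^\trsp A Q = P^\trsp Y$, so the last two terms cancel and $EQ = Y$. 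The check $E^\trsp P = X$ is symmetric (using $X = A^\trsp P$, $X^\trsp = P^\trsp A$, and $P^+ P$-style identities). Finally, orthogonality: $E = YQ^+ + (I - PP^+)^{\!?}\cdots$; in fact grouping $E = P^{+\trsp}X^\trsp + (I - P^{+\trsp}P^\trsp)YQ^+ = P^{+\trsp}X^\trsp + (I - PP^+)^\trsp Y Q^+$, so $E$ is a sum of a term whose rows are in $\mathrm{row}(X^\trsp P^+ ) \subseteq \mathrm{col}(P)$-side... — the cleanest is: write $E = PP^+ E + (I - PP^+)E$ and show $(I-PP^+)E$ has columns in $\mathrm{row}$... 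I would simply observe $E \in \mathrm{col}(P) \otimes \R^m + \R^n \otimes \mathrm{row}(Q)$ (row/column space containment), which is precisely the Frobenius-orthogonal complement of $\{(I-PP^+)B(I-QQ^+) = B\}$, completing the identification $A^\star = E$.

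The main obstacle I expect is purely bookkeeping: getting the pseudoinverse identities and the (somewhat idiosyncratic) $\Pi^\perp$ conventions of the lemma statement to line up, and invoking the compatibility hypothesis ($Y = AQ$, $X = A^\trsp P$ solvable) at exactly the right places — in particular to get $X^\trsp Q = P^\trsp Y$, which is the one nontrivial consistency relation needed for the cross-term in $E$ to cancel. The probabilistic content is entirely contained in \cref{lemma:condTrickVec} applied to $\mathrm{vec}(A)$; everything after that is linear algebra identifying the particular solution and the projection.
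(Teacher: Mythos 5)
Your proposal is correct and follows essentially the same route as the paper: reduce to \cref{lemma:condTrickVec} applied to the operator $A \mapsto (AQ, P^\trsp A)$, verify that $E$ satisfies both constraints (using the consistency relation $X^\trsp Q = P^\trsp A Q = P^\trsp Y$, exactly as the paper does), and identify $E$ as the minimum-Frobenius-norm solution --- the paper does this last step via Lagrangian stationarity $2A = \Theta Q^\trsp + P\Gamma^\trsp$, which is precisely your orthogonality-to-the-homogeneous-subspace condition in dual form. You are also right that the lemma's stated convention is garbled: consistency with the rest of the paper (and with correctness of the conditional covariance) requires $\Pi_P = PP^+$ and $\Pi_P^\perp = I - PP^+$, so the fluctuation term is $(I-PP^+)\tilde A(I-QQ^+)$, matching the homogeneous space $\{B: BQ = 0,\ P^\trsp B = 0\}$ you identified.
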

\begin{proof}
We apply \cref{lemma:condTrickVec} to $D: A \mapsto (AQ, P^\trsp A)$.
The pseudoinverse of $D$ applied to $(Y, X^\trsp)$ can be formulated as the unique solution of
\begin{align*}
    \argmin_A \left\{ \|A\|^2_F : AQ = Y, P^\trsp A = X^\trsp \right\}
\end{align*}
where $\|-\|_F$ denotes Frobenius norm.
We check that $E$ is a 1) a solution to $AQ = Y, P^\trsp A = X^\trsp$ and 2) the minimal norm solution.

We have $EQ = 
        Y Q^+Q
            + P^{+\trsp} X^\trsp Q
            - P^{+\trsp} P^\trsp
                YQ^+Q$.
Note that $YQ^+Q = Y$ because $Y=AQ \implies YQ^+ Q = AQQ^+Q = AQ = Y$.
So $EQ = Y + P^{+T} (X^\trsp Q - P^\trsp Y)$.
But $X^\trsp Q = P^\trsp A Q = P^\trsp Y$, so $EQ = Y$ as desired.
A similar, but easier reasoning, gives $P^\trsp E = X^\trsp$.
This verifies that $E$ is a solution.

To check that $E$ is minimal norm, we show that it satisfies the stationarity of the Lagrangian
\begin{align*}
    L(A, \Theta, \Gamma)
        &=
            \|A\|^2_F + \la \Theta, Y - AQ \ra + \la \Gamma, X - A^\trsp P\ra.
\end{align*}
So $\pdf{L}{A} = 0 \implies 2A = \Theta Q^\trsp + P \Gamma^\trsp$ for some choices of $\Theta \in \R^{n \times q}$ and $\Gamma \in \R^{m \times p}$.
For $\Theta = 2 Y (Q^\trsp Q)^+$ and $\Gamma^\trsp = 2(P^\trsp P)^+ [ X^\trsp - P^\trsp Y Q^\trsp]$, we can check that
\begin{align*}
    \Theta Q^\trsp + P \Gamma^\trsp
        &=
            2 Y (Q^\trsp Q)^+ Q^\trsp + 2P(P^\trsp P)^+ [ X^\trsp - P^\trsp Y Q^+] \\
        &=
            2 Y Q^+ + 2 P^{+\trsp} X^\trsp - 2P^{+\trsp} P^\trsp Y Q^+\\
        &=
            2E
\end{align*}
as desired.
\end{proof}

\subsection{Probability Facts}

\begin{thm}[Strong Law of Large Numbers for triangular arrays \cite{hu_strong_1997}]\label{thm:SLLN}
Let $\{X_{n,i}: 1 \le i \le n, n \ge 1\}$ be a triangular array of random variables with $(X_{n, 1}, \ldots, X_{n, n})$ mutually independent with mean equal to zero for each $n$ and $\inv n \sum_{i=1}^n \EV |X_{n,i}|^{2+\rho} \le c n^{\rho/2}$ for some $0 < \rho < 1, c < \infty$.
Then $\f 1 n \sum_{i=1}^n X_{i, n} \to 0$ almost surely as $n \to \infty$.
\end{thm}

\begin{lemma}\label{lemma:momentBoundASConvergence}
Let $\{X_n\}_{n \ge 1}$ be a sequence of random variables with zero mean.
If for some $p \in \N$ and for all $n$, $\EV X_n^{2p} \le c n^{-1-\rho}$, for some $\rho > 0$, then $X_n \to 0$ almost surely.
\end{lemma}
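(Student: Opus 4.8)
The plan is a one-shot application of Markov's inequality followed by the first Borel--Cantelli lemma. First I would fix $\epsilon > 0$. Since $X_n^{2p}$ is a nonnegative random variable and $\{|X_n| > \epsilon\} = \{X_n^{2p} > \epsilon^{2p}\}$, Markov's inequality gives
$$
P(|X_n| > \epsilon) \;\le\; \frac{\EV X_n^{2p}}{\epsilon^{2p}} \;\le\; \frac{c}{\epsilon^{2p}}\, n^{-1-\rho}.
$$
(Note this step does not use the zero-mean hypothesis, which is stated only for context and is harmless.)

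Next I would sum over $n$. Because $1+\rho > 1$, the series $\sum_{n \ge 1} n^{-1-\rho}$ converges, so $\sum_{n \ge 1} P(|X_n| > \epsilon) < \infty$ for every fixed $\epsilon > 0$. The first Borel--Cantelli lemma then yields $P\big(|X_n| > \epsilon \text{ infinitely often}\big) = 0$; equivalently, on an event of probability $1$ we have $\limsup_n |X_n| \le \epsilon$.

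Finally I would upgrade "for each fixed $\epsilon$" to "almost surely for all $\epsilon$" by applying the previous step along the countable sequence $\epsilon = 1/k$, $k \in \N$, and intersecting the resulting probability-$1$ events. By countable subadditivity this intersection still has probability $1$, and on it $\limsup_n |X_n| \le 1/k$ for every $k$, hence $X_n \to 0$. I do not expect any genuine obstacle here; the only point requiring a line of care is this last passage through a countable dense set of thresholds, which is routine.
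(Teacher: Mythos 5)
Your proof is correct and follows essentially the same route as the paper's: Markov's inequality on $X_n^{2p}$, summability of $n^{-1-\rho}$, the first Borel--Cantelli lemma, and then a countable sequence of thresholds $\epsilon_k \to 0$ to conclude almost sure convergence. No gaps.
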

\begin{proof}
By Markov's inequality, for any $\epsilon > 0$,
\begin{align*}
    \Pr(|X_n| > \epsilon)
        &=
            \Pr(X_n^{2p} > \epsilon^{2p})
        \le
            \EV X_n^{2p}/\epsilon^{2p}
        \le c n^{-1-\rho}/\epsilon^{2p}
        \\
    \sum_n \Pr(|X_n| > \epsilon)
        &\le
            \sum_n c n^{-1-\rho}/\epsilon^{2p}
        <    
            \infty.
\end{align*}
By Borel-Cantelli Lemma, almost surely, $|X_n| \le \epsilon$ for all large $n$.
Then, if we pick a sequence $\{\epsilon_k > 0\}_k$ converging to 0, we have that, almost surely, for each $k$, $|X_n| \le \epsilon_k$ for large enough $n$ --- i.e. almost surely, $X_n \to 0$.
\end{proof}

The following is a standard fact about multivariate Gaussian conditioning
\begin{prop}\label{prop:GaussianCondition}
Suppose $\R^{n_1 + n_2} \ni x \sim \Gaus(\mu, K)$, where we partition $x = (x_1, x_2) \in \R^{n_1} \times \R^{n_2}, \mu = (\mu_1, \mu_2) \in \R^{n_1} \times \R^{n_2}$, and $K = \begin{pmatrix} K_{11} & K_{12}\\ K_{21} & K_{22}\end{pmatrix}$.
Then
$x_1 \disteq_{x_2} \Gaus(\mu|_{x_2}, K|_{x_2})$
where
\begin{align*}
    \mu|_{x_2}
        &=
            \mu_1 - K_{12} K_{22}^+ (x_2 - \mu_2)\\
    K|_{x_2}
        &=
            K_{11} - K_{12} K_{22}^+ K_{21}.
\end{align*}

\end{prop}

\begin{lemma}\label{lemma:gaussianDer}
Let $\Phi: \R^n \to \R$ be measurable.
Then for $z \sim \Gaus(\zeta, \Sigma)$,
\begin{align*}
    \Jac{^2}{\zeta^2} \EV \Phi(z)
        &=
            2\Jac{}{\Sigma} \EV \Phi(z)
\end{align*}
whenever both sides exist.
\end{lemma}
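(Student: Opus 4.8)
The plan is to reduce the identity to a pointwise statement about the Gaussian density and then pass the derivatives through the integral. First I would assume $\Sigma$ is strictly positive definite, so that $z \sim \Gaus(\zeta,\Sigma)$ has a density $p(x;\zeta,\Sigma) = (2\pi)^{-n/2}(\det\Sigma)^{-1/2}\exp\!\big(-\tfrac12(x-\zeta)^\trsp\Sigma^{-1}(x-\zeta)\big)$ and $\EV\Phi(z) = \int_{\R^n}\Phi(x)\,p(x;\zeta,\Sigma)\,\dd x$; the degenerate (singular $\Sigma$) case is then covered by the ``whenever both sides exist'' clause, or by replacing $\Sigma$ with $\Sigma+\epsilon I$ and letting $\epsilon \downarrow 0$. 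Throughout, $\Jac{}{\Sigma}$ is understood with the entries of $\Sigma$ differentiated freely (as opposed to the symmetric-gradient convention, which would put inconsistent factors on the diagonal versus off-diagonal).

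Next I would carry out the two elementary derivative computations on $p$, writing $\xi := \Sigma^{-1}(x-\zeta)$. Differentiating in $\zeta$ directly gives $\partial_{\zeta_i} p = \xi_i\,p$ and hence $\partial_{\zeta_i}\partial_{\zeta_j} p = (\xi_i\xi_j - [\Sigma^{-1}]_{ij})\,p$. Differentiating $\log p$ in the entries of $\Sigma$ near a positive-definite point, using the standard facts $\partial_{\Sigma_{ij}}\log\det\Sigma = [\Sigma^{-1}]_{ij}$ and $\partial_{\Sigma_{ij}}\Sigma^{-1} = -\Sigma^{-1}E_{ij}\Sigma^{-1}$ with $E_{ij}$ the matrix unit, gives $\partial_{\Sigma_{ij}}\log p = -\tfrac12[\Sigma^{-1}]_{ij} + \tfrac12\,\xi_i\xi_j$, i.e. $\partial_{\Sigma_{ij}} p = \tfrac12(\xi_i\xi_j - [\Sigma^{-1}]_{ij})\,p$. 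Comparing the two expressions yields the pointwise relation $\partial_{\zeta_i}\partial_{\zeta_j} p = 2\,\partial_{\Sigma_{ij}} p$ for every $x$ --- the heat-equation-type identity satisfied by the Gaussian kernel --- which in matrix form reads $\Jac{^2}{\zeta^2}p = 2\,\Jac{}{\Sigma}p$ entrywise.

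Finally I would multiply by $\Phi(x)$, integrate, and pass the derivatives inside to conclude $\Jac{^2}{\zeta^2}\EV\Phi(z) = \int_{\R^n}\Phi\,(\Jac{^2}{\zeta^2}p)\,\dd x = 2\int_{\R^n}\Phi\,(\Jac{}{\Sigma}p)\,\dd x = 2\,\Jac{}{\Sigma}\EV\Phi(z)$. The main obstacle is precisely this interchange of differentiation and integration for a merely measurable $\Phi$, for which no growth control is assumed; I would handle it entirely through the hypothesis that both sides exist. Concretely, on a small neighborhood of a fixed $(\zeta,\Sigma)$ the difference quotients of $p$ and of its first $\zeta$-derivatives are dominated by a fixed polynomial-in-$x$ multiple of a slightly inflated Gaussian density, and finiteness of $\EV|\Phi(z')|$ for $z'$ ranging over that neighborhood --- which is forced by the existence of the stated derivatives --- lets dominated convergence justify differentiating under the integral sign. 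This establishes the claimed equality wherever both sides are defined.
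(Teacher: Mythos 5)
Your proof is correct and follows essentially the same route as the paper's: verify the pointwise heat-equation identity $\Jac{^2}{\zeta^2}p = 2\Jac{}{\Sigma}p$ on the Gaussian density, integrate against $\Phi$, and handle singular $\Sigma$ by continuity. Your added care about the entrywise convention for $\Jac{}{\Sigma}$ and the dominated-convergence justification for differentiating under the integral only makes explicit what the paper leaves implicit.
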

\begin{proof}
First assume $\Sigma$ is invertible.
We check
\begin{align*}
    \Jac{}{\zeta} e^{-\f 1 2 (\zeta-z)\inv \Sigma (\zeta-z)}
        &=
            - \inv \Sigma (\zeta - z)
            e^{-\f 1 2 (\zeta-z)\inv \Sigma (\zeta-z)}
            \\
    \Jac{^2}{\zeta^2} e^{-\f 1 2 (\zeta-z)\inv \Sigma (\zeta-z)}
        &=
            \left[
                - \inv \Sigma
                + \inv \Sigma (\zeta-z) (\zeta-z)^\trsp \inv \Sigma
            \right]
            e^{-\f 1 2 (\zeta-z)\inv \Sigma (\zeta-z)}
            \\
    \Jac{}{\Sigma} \f{e^{-\f 1 2 (\zeta-z)\inv \Sigma (\zeta-z)}}{
                    \det(2\pi \Sigma)^{1/2}}
        &=
            \f 1 2 \left[
                - \inv \Sigma
                + \inv \Sigma (\zeta-z) (\zeta-z)^\trsp \inv \Sigma
            \right]
            \f{e^{-\f 1 2 (\zeta-z)\inv \Sigma (\zeta-z)}}{
                    \det(2\pi \Sigma)^{1/2}}
            \\
        &=
            \f 1 2
            \Jac{^2}{\zeta^2} e^{-\f 1 2 (\zeta-z)\inv \Sigma (\zeta-z)}
            .
\end{align*}
Integrating against $\Phi$ gives the result.
For general $\Sigma$, apply a continuity argument, since the set of invertible $\Sigma$s is dense inside the set of all PSD $\Sigma$.
\end{proof}

\begin{lemma}[Stein's lemma]\label{lemma:stein}
For jointly Gaussian random variables $Z_1, Z_2$ with zero mean, and any function $\phi: \R \to \R$ where $\EV \phi'(Z_1)$ and $\EV Z_1 \phi(Z_2)$ exists, we have
$$\EV Z_1 \phi(Z_2) = \Cov(Z_1, Z_2) \EV \phi'(Z_2).$$
\end{lemma}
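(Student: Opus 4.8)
The plan is to reduce the bivariate identity to the one-dimensional Gaussian integration-by-parts formula by conditioning on $Z_2$. First I would dispose of the degenerate case: if $Z_2$ has zero variance then $Z_2 = 0$ almost surely and $\Cov(Z_1,Z_2) = 0$, so both sides vanish; hence we may assume $\sigma^2 \defeq \Cov(Z_2,Z_2) > 0$. Since $(Z_1,Z_2)$ is jointly Gaussian with zero mean, \cref{prop:GaussianCondition} gives that the conditional law of $Z_1$ given $Z_2$ is Gaussian with mean $c\,Z_2$, where $c = \Cov(Z_1,Z_2)/\sigma^2$, and with a covariance that does not depend on $Z_2$. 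By the tower property,
\begin{align*}
\EV[Z_1 \phi(Z_2)] = \EV\bigl[\EV[Z_1 \mid Z_2]\,\phi(Z_2)\bigr] = c\,\EV[Z_2\,\phi(Z_2)].
\end{align*}

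It then remains to establish the scalar identity $\EV[Z_2\,\phi(Z_2)] = \sigma^2\,\EV[\phi'(Z_2)]$ for $Z_2 \sim \Gaus(0,\sigma^2)$. For this I would write the expectation against the density $p(z) = (2\pi\sigma^2)^{-1/2} e^{-z^2/2\sigma^2}$ and use the elementary identity $z\,p(z) = -\sigma^2\,p'(z)$, so that $\EV[Z_2\phi(Z_2)] = -\sigma^2 \int_{\R} \phi(z)\,p'(z)\,dz$; integrating by parts (the boundary terms vanish because the hypothesis that $\EV[Z_1\phi(Z_2)]$ exists forces $\phi p \to 0$ and $p$ decays faster than any polynomial) yields $\int_{\R}\phi'(z)p(z)\,dz = \EV[\phi'(Z_2)]$. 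Substituting back gives $\EV[Z_1\phi(Z_2)] = c\,\sigma^2\,\EV[\phi'(Z_2)] = \Cov(Z_1,Z_2)\,\EV[\phi'(Z_2)]$, as claimed. Alternatively, the same scalar identity drops out of differentiating $\zeta \mapsto \EV[\phi(z)]$ for $z \sim \Gaus(\zeta,\sigma^2)$ in two ways — via translation of the density (which produces $\EV[\phi'(z)]$) and via direct differentiation of the density (which produces $\sigma^{-2}\EV[(z-\zeta)\phi(z)]$) — and then setting $\zeta = 0$; this is the one-dimensional shadow of the computation behind \cref{lemma:gaussianDer}.

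The main obstacle is regularity bookkeeping rather than any conceptual difficulty: one has to make sense of $\phi'$ when $\phi$ is not classically differentiable (reading it as a weak derivative and justifying the integration by parts against the rapidly decaying Gaussian density), and one has to check that the stated existence hypotheses are exactly what is needed so that no integrability is lost when passing through the conditional expectation — which holds here because $\EV[Z_1 \mid Z_2]$ is a fixed scalar multiple of $Z_2$ and $Z_2$ has all moments. Everything else is a routine Gaussian computation.
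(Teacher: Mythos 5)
Your proof is correct. Note that the paper does not actually prove this lemma --- it is stated as a classical fact and used as a black box --- so there is no in-paper argument to compare against; your reduction (condition on $Z_2$ to get $\EV[Z_1\mid Z_2]=\Cov(Z_1,Z_2)\Var(Z_2)^{-1}Z_2$ via \cref{prop:GaussianCondition}, then one-dimensional Gaussian integration by parts) is the standard one, and your alternative derivation via differentiating $\zeta\mapsto\EV\phi(z)$ for $z\sim\Gaus(\zeta,\sigma^2)$ is exactly the mechanism behind the paper's \cref{lemma:gaussianDer}. One small quibble: integrability of $z\phi(z)p(z)$ does not by itself force $\phi(z)p(z)\to 0$ pointwise at $\pm\infty$, so the cleanest way to kill the boundary terms is to use the other hypothesis ($\EV|\phi'(Z_2)|<\infty$ together with absolute continuity of $\phi$), writing $\phi(z)=\phi(0)+\int_0^z\phi'(t)\,\dd t$ and applying Fubini, which avoids boundary terms entirely; since you already flagged the regularity bookkeeping as the remaining work, this is a refinement rather than a gap.
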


\subsection{$\alpha$-controlled functions}
The next lemma is easy to show using the equivalence of norms in finite dimensional Euclidean space.
\begin{lemma}
Let $\phi: \R^k \to \R$.
The following are equivalent
\begin{enumerate}
    \item $\phi$ is $\alpha$-controlled
    \item For some $p \ge 1$ and some $g(x) = o_{\|x\|_p \to \infty}(\|x\|_p^\alpha)$, $C, c > 0$, $|\phi(x)| \le e^{C \|x\|^{\alpha}_p + g(x)}$
    \item For all $p \ge 1$, there is some $C, c > 0$, $|\phi(x)| \le e^{C \|x\|^{\alpha}_p + c}$
\end{enumerate}
\end{lemma}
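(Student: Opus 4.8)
The plan is to route every implication through the single elementary fact that, on the finite-dimensional space $\R^k$, all the maps $x \mapsto \|x\|_q := (\sum_{i=1}^k |x_i|^q)^{1/q}$ with $q \in (0,\infty)$ are mutually equivalent (quasi-)norms: for each pair $q, q' \in (0,\infty)$ there are constants $0 < a_{q,q'} \le b_{q,q'}$, depending only on $k, q, q'$, such that $a_{q,q'}\|x\|_{q'} \le \|x\|_q \le b_{q,q'}\|x\|_{q'}$ for all $x \in \R^k$. I would prove this by comparing the two continuous functions $\|\cdot\|_q, \|\cdot\|_{q'}$, each homogeneous of degree one and strictly positive off the origin, on the compact sphere $\{x : \|x\|_{q'} = 1\}$, where their ratio attains a positive minimum and a finite maximum, then extending by homogeneity. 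Raising to the power $\alpha > 0$ (which preserves inequalities) and using $\sum_{i=1}^k |x_i|^\alpha = \|x\|_\alpha^\alpha$ gives, for any $p \ge 1$,
\[
 a_{\alpha,p}^\alpha \sum_{i=1}^k |x_i|^\alpha \;\le\; \|x\|_p^\alpha \;\le\; b_{\alpha,p}^\alpha \sum_{i=1}^k |x_i|^\alpha .
\]

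With this bound available, the easy implications are routine. For $(1) \Rightarrow (3)$, I would start from $|\phi(x)| \le e^{C\sum_i |x_i|^\alpha + c}$, fix any $p \ge 1$, use the left-hand inequality to get $\sum_i |x_i|^\alpha \le a_{\alpha,p}^{-\alpha}\|x\|_p^\alpha$, and conclude $|\phi(x)| \le e^{(C a_{\alpha,p}^{-\alpha})\|x\|_p^\alpha + c}$, which is (3) with constant $C a_{\alpha,p}^{-\alpha}$. For $(3) \Rightarrow (2)$ there is essentially nothing to do: fix one value of $p$, take the bound furnished by (3), and note that the constant $g(x) \equiv c$ is $o_{\|x\|_p \to \infty}(\|x\|_p^\alpha)$ since $\alpha > 0$.

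The one direction carrying any content is $(2) \Rightarrow (1)$. Given $|\phi(x)| \le e^{C\|x\|_p^\alpha + g(x)}$ with $g = o(\|x\|_p^\alpha)$, I would choose $R$ (taking $\varepsilon = 1$ in the definition of little-$o$) so that $|g(x)| \le \|x\|_p^\alpha$ for $\|x\|_p \ge R$; then on that region $C\|x\|_p^\alpha + g(x) \le (C+1)\|x\|_p^\alpha \le (C+1) b_{\alpha,p}^\alpha \sum_i |x_i|^\alpha$ by the right-hand inequality above. On the complementary bounded region $\|x\|_p \le R$ the term $\|x\|_p^\alpha$ is at most $R^\alpha$, and $g$ is bounded there under the mild regularity present in the application (e.g.\ $g$ locally bounded, which holds whenever $\phi$ is continuous; alternatively one may replace $g$ by $\min(g, \|x\|_p^\alpha)$ without weakening (2)), so $C\|x\|_p^\alpha + g(x) \le c'$ for a constant $c'$ there. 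Taking $C' = (C+1) b_{\alpha,p}^\alpha$ and $c'$ as above yields $|\phi(x)| \le e^{C'\sum_i |x_i|^\alpha + c'}$ for all $x$, i.e.\ (1).

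Thus the main (and only) obstacle I anticipate is the behavior of $g$ near the origin, which the purely asymptotic hypothesis in (2) does not by itself control; I expect to dispatch it with the local-boundedness caveat just noted, all the remaining work being bookkeeping with the norm-equivalence constants $a_{\alpha,p}, b_{\alpha,p}$ — exactly as the paper's remark that the lemma is \emph{``easy to show using the equivalence of norms''} suggests.
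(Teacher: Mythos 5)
Your proof is correct and is exactly the argument the paper intends: the paper offers no proof beyond the remark that the lemma follows from the equivalence of norms on $\R^k$, which is precisely your route (compare $\|\cdot\|_\alpha^\alpha = \sum_i|x_i|^\alpha$ with $\|\cdot\|_p^\alpha$ via homogeneity and compactness of the unit sphere, then push the constants through the exponential). Your observation about $(2)\Rightarrow(1)$ is well taken --- the little-$o$ hypothesis controls $g$ only at infinity, so (2) must be read with $g$ locally bounded (as it is in every use case) --- but note that your parenthetical alternative of replacing $g$ by $\min(g, \|x\|_p^\alpha)$ does \emph{not} work, since shrinking the exponent can invalidate the assumed bound on $|\phi|$ wherever $g$ exceeds $\|x\|_p^\alpha$; the local-boundedness reading is the one to keep.
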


\newcommand{\alpExp}{\mathsf{C}}
\begin{lemma}\label{lemma:alpExp}
Let $\alpExp_\alpha^k: \R^{\ge 0} \to \R, c \mapsto \EV_{z \sim \Gaus(0, I_k)}e^{c \|z\|^\alpha_2} $.
Then
\begin{enumerate}
    \item $\alpExp_\alpha^k < \infty$ iff $\alpha < 2$
    \item for $\alpha \ge 1$,
        \begin{align*}
            \EV_{z\sim \Gaus(\mu, \Sigma)}e^{C \|z\|^\alpha_2} \le
                e^{C\|\mu\|^\alpha_2 } \alpExp_\alpha^k(C  \alpha \|\Sigma\|_2^{\alpha/2})
        \end{align*}
    where $\|\Sigma\|_2$ denotes the spectral norm of $\Sigma$.
    \item for any $\alpha$-controlled $\phi: \R^k \to \R$ with $\alpha \ge 1$, there is $C > 0$ such that for all $\mu \in \R^k, \Sigma \in \PSD^k$,
        \begin{align*}
            \EV_{z \sim \Gaus(\mu, \Sigma)} |\phi(z)|
                &\le
                    C e^{C\|\mu\|^\alpha_2 } \alpExp_\alpha^k(C \alpha \|\Sigma\|_2^{\alpha/2})
        \end{align*}
        where $\|\Sigma\|_2$ denotes the spectral norm of $\Sigma$.
\end{enumerate}
Note that the RHS is a montonic function in $\|\mu\|_2$ and $\|\Sigma\|_2$, in the sense that if $\|\mu\|_2$ and $\|\Sigma\|_2$ don't decrease, then the RHS will not decrease either.
\end{lemma}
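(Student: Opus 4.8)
The plan is to prove the three parts in sequence, each reducing to a one-dimensional radial estimate or to the preceding part, and to read off the monotonicity remark from the final bound. For Part 1, I would pass to polar coordinates: writing $S_{k-1}$ for the surface measure of the unit sphere in $\R^k$, $\alpExp_\alpha^k(c) = \f{S_{k-1}}{(2\pi)^{k/2}} \int_0^\infty r^{k-1} e^{c r^\alpha - r^2/2}\,\dd r$. If $\alpha < 2$ then $c r^\alpha = o(r^2)$ as $r \to \infty$, so the integrand is eventually dominated by $r^{k-1} e^{-r^2/4}$ and the integral is finite for every $c \ge 0$; if $\alpha \ge 2$ then for $c$ large enough---any $c \ge 1/2$ when $\alpha = 2$, any $c > 0$ when $\alpha > 2$---the exponent $c r^\alpha - r^2/2$ fails to tend to $-\infty$ and the integral diverges. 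This gives the dichotomy: $\alpExp_\alpha^k$ is finite (as a function of $c$ on all of $\R^{\ge 0}$) precisely when $\alpha < 2$.

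For Part 2, I would represent $z \disteq \mu + \Sigma^{1/2} g$ with $g \sim \Gaus(0, I_k)$, which is legitimate since $\Sigma \in \PSD^k$ has a PSD square root, and bound $\|z\|_2 \le \|\mu\|_2 + \|\Sigma\|_2^{1/2}\|g\|_2$ using the triangle inequality and submultiplicativity of the spectral norm. Raising to the power $\alpha \ge 1$ and splitting $\big(\|\mu\|_2 + \|\Sigma\|_2^{1/2}\|g\|_2\big)^\alpha$ via an elementary convexity inequality for $t \mapsto t^\alpha$ into a term involving only $\|\mu\|_2$ and a term involving only $\|g\|_2$, one obtains $e^{C\|z\|_2^\alpha} \le e^{c_1\|\mu\|_2^\alpha}\, e^{c_2 \|\Sigma\|_2^{\alpha/2}\|g\|_2^\alpha}$ pointwise in $g$, with $c_1,c_2$ each equal to $C$ times a constant depending only on $\alpha$; taking $\EV_g$ and recognizing the definition of $\alpExp_\alpha^k$ then yields a bound of the stated shape. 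I expect this step---arranging the split of $(a+b)^\alpha$ so that the constants come out exactly as written---to be the only place demanding genuine care; on the range $\alpha \in [1,2)$ relevant to the applications, the crude bound $(a+b)^\alpha \le 2^{\alpha-1}(a^\alpha+b^\alpha)$ together with $2^{\alpha-1}\le\alpha$ already suffices up to the value of $C$, which is all that is used downstream.

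For Part 3, I would invoke the equivalence lemma stated just above to rewrite the $\alpha$-control bound as $|\phi(x)| \le e^{C'\|x\|_2^\alpha + c'}$ for some $C', c' > 0$, so that $\EV_{z \sim \Gaus(\mu,\Sigma)}|\phi(z)| \le e^{c'}\,\EV e^{C'\|z\|_2^\alpha}$; applying Part 2 to the last expectation and absorbing $e^{c'}$ and the $\alpha$-dependent constants into a single $C$ (and using monotonicity of $\alpExp_\alpha^k$ to rewrite its argument as $C\alpha\|\Sigma\|_2^{\alpha/2}$) gives the claim. The asserted monotonicity of the right-hand side is then immediate: $\alpExp_\alpha^k$ is nondecreasing in its argument because the integrand $e^{c\|z\|_2^\alpha}$ is nondecreasing in $c$, that argument is nondecreasing in $\|\Sigma\|_2$, and $e^{C\|\mu\|_2^\alpha}$ is nondecreasing in $\|\mu\|_2$.
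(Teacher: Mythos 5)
Your proposal is correct and follows essentially the same route as the paper: the paper's proof also writes $z \disteq \sqrt\Sigma g + \mu$, splits $\|\sqrt\Sigma g + \mu\|_2^\alpha \le \alpha(\|\sqrt\Sigma g\|_2^\alpha + \|\mu\|_2^\alpha)$, and pulls out the spectral norm, while dismissing Part 1 as obvious and Part 3 as an easy consequence of Part 2. Your polar-coordinate justification of Part 1 and the observation that $2^{\alpha-1}\le\alpha$ on $[1,2]$ (with the bound vacuous for $\alpha\ge 2$) simply fill in details the paper omits.
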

\begin{proof}
The first claim is obvious and the third follows from the second easily.
For the second,
\begin{align*}
    \EV_{z\sim \Gaus(\mu, \Sigma)}e^{C \|z\|^\alpha_2}
        &\le
            \EV_{z\sim \Gaus(0, I)}e^{C \|\sqrt \Sigma z + \mu\|^\alpha_2}\\
        &\le
            \EV_{z\sim \Gaus(0, I)}e^{C \alpha \lp
                \|\sqrt \Sigma z\|^\alpha_2 + \|\mu\|^\alpha_2 
            \rp}\\
        &\le
            e^{C\|\mu\|^\alpha_2 }
            \EV_{z\sim \Gaus(0, I)}e^{C  \alpha \|\Sigma\|_2^{\alpha/2}
                \|z\|^\alpha_2}\\
        &=
            e^{C\|\mu\|^\alpha_2 } \alpExp_\alpha^k(C  \alpha \|\Sigma\|_2^{\alpha/2}).
\end{align*}
\end{proof}

\subsection{Hermite Polynomials}
We follow a presentation roughly given by \citet{odonnell_analysis_2014}.

\newcommand{\pHerm}{\mathrm{He}}
\newcommand{\Herm}{H}

\begin{defn}
Let $\pHerm_n(x)$ be the {\it probabilist's Hermite polynomial}, given by the generating function $e^{xt - \f 1 2 t^2} = \sum_{n=0}^\infty \pHerm_n(x) \f{t^n}{n!}.$
Let $L^2(\R; \Gaus(0, 1))$ be the space of square-integrable functions against the standard Gaussian measure, equipped with inner product $\la \phi, \psi \ra_G = \EV_{x \sim \Gaus(0, 1)} \phi(x) \psi(x)$ and norm $\|\phi\|_G^2 = \la \phi, \phi \ra_G$.
Let $\Herm_n(x) = \pHerm_n(x) / \|\pHerm_n\|_G$ be the normalized versions.
\end{defn}

\begin{fact}
$\{\pHerm_n(x)\}_{n \ge 0}$ form an orthogonal basis for $L^2(\R; \Gaus(0, 1))$ and $\{\Herm_n(x)\}_{n \ge 0}$ form an orthonormal basis for $L^2(\R; \Gaus(0, 1))$.
\end{fact}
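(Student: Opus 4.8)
The plan is to prove the two assertions separately: first that $\la\pHerm_m,\pHerm_n\ra_G=n!\,\delta_{mn}$ (which gives orthogonality, the value $\|\pHerm_n\|_G^2=n!$, and hence orthonormality of the $\Herm_n$), and then completeness, i.e. that the closed linear span of $\{\pHerm_n\}_{n\ge0}$ is all of $L^2(\R;\Gaus(0,1))$.

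For orthonormality I would use the generating function directly. Multiplying the defining identities for parameters $s$ and $t$ and integrating against the standard Gaussian gives
$$\sum_{m,n\ge0}\f{s^m t^n}{m!\,n!}\,\la\pHerm_m,\pHerm_n\ra_G=\EV_{x\sim\Gaus(0,1)}\big[e^{xs-s^2/2}e^{xt-t^2/2}\big]=e^{-(s^2+t^2)/2}\,\EV_x\big[e^{x(s+t)}\big]=e^{st},$$
using the Gaussian moment generating function $\EV e^{\lambda x}=e^{\lambda^2/2}$. Expanding $e^{st}=\sum_k (st)^k/k!$ and comparing power series coefficients in $s$ and $t$ yields $\la\pHerm_m,\pHerm_n\ra_G=0$ for $m\ne n$ and $\la\pHerm_k,\pHerm_k\ra_G=k!$, so $\Herm_n=\pHerm_n/\sqrt{n!}$ is an orthonormal system. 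The interchange of sum and integral is legitimate because the partial sums are dominated by $e^{|x||s|+|x||t|}$, which is Gaussian-integrable.

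For completeness the key observation is that, since $\pHerm_n$ has degree exactly $n$, the polynomials of degree $\le N$ coincide with $\mathrm{span}\{\pHerm_n\}_{n\le N}$; hence completeness of $\{\pHerm_n\}$ is equivalent to density of polynomials in $L^2(\R;\Gaus(0,1))$. To prove the latter I would take $f\in L^2(\R;\Gaus(0,1))$ with $\EV[f(X)X^n]=0$ for all $n\ge0$ ($X\sim\Gaus(0,1)$) and show $f=0$ a.e. Define $F(z)\defeq\EV[f(X)e^{zX}]$ for $z\in\mathbb{C}$; by Cauchy--Schwarz $\EV[|f(X)|e^{|z||X|}]\le\|f\|_G\,\|e^{|z||X|}\|_G<\infty$, and differentiating under the expectation (dominated, on compact sets of $z$, by $|f(X)||X|^n e^{R|X|}$, which is in $L^1(\Gaus)$) shows $F$ is entire with $F^{(n)}(0)=\EV[f(X)X^n]=0$ for all $n$. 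Hence $F\equiv0$. Restricting to the imaginary axis, $0=F(it)=\f1{\sqrt{2\pi}}\int_\R f(x)e^{-x^2/2}e^{itx}\,dx$ for all $t\in\R$, so the Fourier transform of the $L^1$ function $x\mapsto f(x)e^{-x^2/2}$ vanishes identically; by Fourier uniqueness $f(x)e^{-x^2/2}=0$ a.e., i.e. $f=0$ a.e. Thus $\mathrm{span}\{\pHerm_n\}^\perp=\{0\}$, giving completeness.

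The main obstacle is the completeness step, and within it the measure-theoretic bookkeeping: justifying that $F(z)=\EV[f(X)e^{zX}]$ is genuinely entire (uniform $L^1(\Gaus)$ domination on compact subsets of $\mathbb{C}$ in order to differentiate under the integral) and then invoking Fourier uniqueness for an $L^1$ function. The orthonormality computation, by contrast, is essentially mechanical once the generating function identity is in hand.
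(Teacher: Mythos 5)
Your proof is correct. The paper states this as a classical \emph{fact} without proof, so there is no in-paper argument to compare against; what you give is the standard textbook derivation — the generating-function computation $\EV e^{xs-s^2/2}e^{xt-t^2/2}=e^{st}$ for orthogonality and norms, and the moment-problem/Fourier-uniqueness argument (via the entire function $F(z)=\EV[f(X)e^{zX}]$) for completeness — and both halves are sound. The only slightly informal point is the domination used to justify swapping sum and integral in the orthogonality step; it is cleaner to apply Fubini--Tonelli to the doubly indexed series $\sum_{m,n}\f{|s|^m|t|^n}{m!\,n!}\EV|\pHerm_m(x)\pHerm_n(x)|$, whose convergence follows from Cauchy--Schwarz and the (already finite) diagonal norms, but this is routine and does not affect correctness.
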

\begin{fact}
$\|\pHerm_n\|_G^2 = n!$ so that $\Herm_n(x) = \pHerm_n(x)/\sqrt{n!}$.
\end{fact}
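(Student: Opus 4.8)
The plan is to read off all the inner products $\la\pHerm_n,\pHerm_m\ra_G$ at once from the defining generating function, rather than computing $\|\pHerm_n\|_G^2$ separately for each $n$. Set $\Psi(x,t) \defeq e^{xt - t^2/2} = \sum_{n\ge 0}\pHerm_n(x)\,t^n/n!$ and consider $\EV_{x\sim\Gaus(0,1)}\bigl[\Psi(x,t)\Psi(x,s)\bigr]$. Expanding both factors and exchanging the expectation with the double series, this equals
\[
    \sum_{n,m\ge 0}\EV_{x\sim\Gaus(0,1)}\bigl[\pHerm_n(x)\pHerm_m(x)\bigr]\,\frac{t^n s^m}{n!\,m!}.
\]
On the other hand $\Psi(x,t)\Psi(x,s) = e^{x(t+s)}\,e^{-(t^2+s^2)/2}$, so by the standard-Gaussian moment generating function $\EV_{x\sim\Gaus(0,1)}e^{\lambda x} = e^{\lambda^2/2}$ the left-hand side equals $e^{(t+s)^2/2 - (t^2+s^2)/2} = e^{ts} = \sum_{k\ge 0} t^k s^k / k!$. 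Matching the coefficient of $t^n s^m$ in the two expansions gives $\EV_{x\sim\Gaus(0,1)}[\pHerm_n(x)\pHerm_m(x)] = \delta_{nm}\,n!$; taking $n=m$ is exactly $\|\pHerm_n\|_G^2 = n!$, whence $\Herm_n = \pHerm_n/\sqrt{n!}$. As a bonus this simultaneously reproves the orthogonality asserted in the preceding Fact.

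The only genuinely nontrivial point is justifying the term-by-term interchange of $\EV_{x\sim\Gaus(0,1)}$ with the double sum, and I expect this to be the (mild) main obstacle. I would handle it with Tonelli: replacing every coefficient of each $\pHerm_n$ by its absolute value converts the generating function $e^{xt - t^2/2}$ into $e^{|x|\,|t| + t^2/2}$ (a one-line computation grouping the monomials $x^{n-2k}$ by their power of $x$), so
\[
    \sum_{n,m\ge 0}\EV_{x\sim\Gaus(0,1)}\bigl|\pHerm_n(x)\pHerm_m(x)\bigr|\,\frac{|t|^n |s|^m}{n!\,m!} \le \EV_{x\sim\Gaus(0,1)}\,e^{|x|(|t|+|s|) + (t^2+s^2)/2} < \infty ,
\]
which licenses the rearrangement for any real $t,s$. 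Alternatively I could avoid analysis altogether by treating $t,s$ as formal indeterminates: each $\EV[\pHerm_n\pHerm_m]$ is finite because $\pHerm_n\pHerm_m$ is a polynomial, and the identity $\EV_x\Psi(x,t)\Psi(x,s) = e^{ts}$ then holds coefficientwise by expanding $e^{x(t+s)}$ and inserting the explicit Gaussian moments $\EV_{x\sim\Gaus(0,1)} x^{2j} = (2j-1)!!$, $\EV_{x\sim\Gaus(0,1)} x^{2j+1} = 0$.

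If a route with no power-series manipulation is preferred, I would instead first derive the Rodrigues formula $\pHerm_n(x) = (-1)^n e^{x^2/2}\,\frac{d^n}{dx^n}e^{-x^2/2}$ (by Taylor-expanding $\Psi(x,t) = e^{x^2/2}e^{-(x-t)^2/2}$ in $t$), write
\[
    \|\pHerm_n\|_G^2 = \frac{1}{\sqrt{2\pi}}\int_{\R}\pHerm_n(x)\,(-1)^n\frac{d^n}{dx^n}e^{-x^2/2}\,dx ,
\]
and integrate by parts $n$ times; the boundary terms vanish because $e^{-x^2/2}$ and its derivatives decay at $\pm\infty$ faster than any polynomial, leaving $\frac{1}{\sqrt{2\pi}}\int_{\R}\bigl(\frac{d^n}{dx^n}\pHerm_n(x)\bigr)e^{-x^2/2}\,dx = n!$, since $\pHerm_n$ is monic of degree $n$ so its $n$-th derivative is the constant $n!$. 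The same integration by parts yields orthogonality for $m\ne n$ (then $\frac{d^n}{dx^n}\pHerm_m \equiv 0$ when $m<n$, and symmetrically). I would present the generating-function argument as the primary proof, since it is the most direct given the way $\pHerm_n$ is defined in the paper.
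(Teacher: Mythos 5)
Your generating-function argument is correct, and the Tonelli justification for exchanging $\EV_{x\sim\Gaus(0,1)}$ with the double series is sound (the absolute-coefficient majorant $e^{|x||t|+t^2/2}$ is integrable against the Gaussian). The paper states this Fact without proof as standard, but the derivation it gives immediately afterwards for the multivariate expectations $\EV\prod_i \pHerm_{m_i}(z_i)$ is exactly your technique --- expand the product of generating functions, evaluate the Gaussian moment generating function, and match coefficients --- so your proof is the same approach specialized to $\|\pHerm_n\|_G^2 = n!$.
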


Suppose $u^1, \ldots, u^k$ are unit vectors in $\R^k$, and let $\rho_{ij} := \la u^i, u^j\ra$.
Construct a zero mean Gaussian vector $z = (z_1, \ldots, z_k)$ such that $\EV z_i z_j = \rho_{ij}$.
Note that $z \disteq U g$ where $g = (g_1, \ldots, g_k)$ is a standard Gaussian vector and $U = (u^i_j)_{i,j=1}^k$ is the matrix with $u^i$ as rows.
Then for any $s = (s_1, \ldots, s_k)$ we can compute
\begin{align*}
    \EV \exp(\la s, z \ra)
        &=
            \EV \exp(s^\trsp U g)
        = 
            \EV \prod_i \exp(g_i (U^\trsp s)_i)
            \\
        &=
            \prod_i \EV \exp(g_i (U^\trsp s)_i)
            \\
        &
            \pushright{\text{by independence of $\{g_i\}_i$}}
            \\
        &=
            \prod_i \exp\left(\f 1 2 (U^\trsp s)_i^2\right)
            \\
        &=
            \exp\left(\f 1 2 \sum_i (U^\trsp s)_i^2\right)
            \\
        &=
            \exp\left(\f 1 2 \|U^\trsp s\|^2\right)
            \\
        &=
            \exp \lp \f 1 2 \sum_{i,j} \la u^i, u^j \ra s_i s_j \rp
            \\
        &=
            \exp \lp \f 1 2 \sum_{i,j} \rho_{ij} s_i s_j \rp
            .
\end{align*}
Dividing by $\exp\lp \f 1 2 \sum_i s_i^2 \rp$, we obtain
\begin{align*}
    \EV \exp\left(\sum_i s_i z_i - s_i^2\right)
        &=
            \exp \lp \sum_{i< j} \rho_{ij} s_i s_j \rp
            \\
    \EV \prod_i \sum_m \pHerm_m(z_i) (m!)^{-1} s_i^m
        &=
            \prod_{i< j} \sum_n (n!)^{-1}\lp \rho_{ij} s_i s_j\rp^n
            \\
    \sum_{(m_i)_{i=1}^k}
        \prod_i \f{s_i^{m_i}}{m_i!} \EV \prod_i \pHerm_{m_i}(z_i)
        &=
            \sum_{(n_{(ij)})_{i<j}}
                \prod_i s_i^{\sum_{j \ne i} n_{(ij)}}
                \prod_{i<j} \f{\rho_{ij}^{n_{(ij)}}}
                                {n_{(ij)}!}
\end{align*}
where $m_i \ge 0$ for all $i$, and $n_{(ij)} = n_{(ji)} \ge 0$ are indexed by unordered sets $\{i,j\}$.
Matching coefficients of $s$, we get
\begin{thm}
For any sequence $(m_i \ge 0)_{i=1}^k$,
\begin{align*}
    \EV \prod_i \pHerm_{m_i}(z_i)
        &=
            \lp \prod_r m_r! \rp 
            \lp \prod_{i<j} \f{\rho_{ij}^{n_{(ij)}}}
                                {n_{(ij)}!}
                \rp
            \\
    \EV \prod_i \Herm_{m_i}(z_i)
        &=
            \lp \prod_r \sqrt{m_r!} \rp 
            \lp \prod_{i<j} \f{\rho_{ij}^{n_{(ij)}}}
                                {n_{(ij)}!}
                \rp
\end{align*}
whenever there are $(n_{(ij)} \ge 0)_{i<j}$ such that, for all $i$, $m_i = \sum_{j\ne i} n_{(ij)}$.
$\EV \prod_i \pHerm_{m_i}(z_i) = 0$ otherwise.
\end{thm}

In particular, 
\begin{thm}\label{thm:multivariateHermiteExpectation}
If $\phi_i: \R \to \R$ has Hermite expansion $\phi_i(z) = \sum_{u=0}^\infty a_{iu} \Herm_u(z) = \sum_{u=0}^\infty b_{iu} \pHerm_u(z)$ where $b_{iu} = a_{iu}/\sqrt{u!}$, then
\begin{align*}
    \EV \prod_i \phi_i(z_i)
        &=
            \sum_{(n_{(ij)})_{i<j}}
                \lp \prod_r b_{r m_r} m_r! \rp 
                \lp \prod_{i<j} \f{\rho_{ij}^{n_{(ij)}}}
                                    {n_{(ij)}!}
                    \rp
            \\
        &=
            \sum_{(n_{(ij)})_{i<j}}
                \lp \prod_r a_{r m_r} \sqrt{m_r!} \rp 
                \lp \prod_{i<j} \f{\rho_{ij}^{n_{(ij)}}}
                                    {n_{(ij)}!}
                    \rp
            \\
        &=
            \sum_{(n_{(ij)})_{i<j}}
                \lp \prod_r a_{r m_r} \sqrt{\binom{m_i}{\{n_{(ij)}\}_{j \ne i}}} \rp 
                \lp \prod_{i<j} \rho_{ij}^{n_{(ij)}}
                    \rp
\end{align*}
where $m_i = \sum_{j\ne i} n_{(ij)}$, whenever the RHS is absolutely convergent.
\end{thm}

\begin{lemma}\label{lemma:smallRhoMomentBound}
Suppose $\phi_i, i\in[k]$ are as in \cref{thm:multivariateHermiteExpectation}, with additionally the constraint that we have an index set $I \sbe [k]$ such that $b_{i0} = a_{i0} = 0$ (i.e. $\EV \phi_i(z_i) = 0$) for all $i \in I$.
Assume that, for some $\lambda < 1/2 $, $|\rho_{ij}| \le \lambda/(k-1)$ for all $i \ne j$.
Then
\begin{align*}
    \left|\EV \prod_{i=1}^k \phi_i(z_i)\right| \le
    C_{k,|I|} \lp \prod_{r=1}^k \|\phi_r\|_G\rp 
        \lambda^{\lceil |I|/2\rceil}
\end{align*}
for some constant $C_{k, |I|}$ depending on $k$ and $|I|$ but independent of $\{\phi_i\}_i$ and $\lambda$.
\end{lemma}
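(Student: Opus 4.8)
The plan is to expand $\EV\prod_{i}\phi_i(z_i)$ by the multivariate Hermite formula of \cref{thm:multivariateHermiteExpectation} and then bound the resulting combinatorial sum term by term. Writing the expansion over symmetric edge multiplicities $(n_{(ij)})_{i<j}$ with $m_i := \sum_{j\ne i} n_{(ij)}$, a generic summand is $\bigl(\prod_r a_{r m_r}\sqrt{\binom{m_r}{\{n_{(rj)}\}_{j\ne r}}}\bigr)\prod_{i<j}\rho_{ij}^{n_{(ij)}}$. First I would apply three crude bounds: $|a_{r m_r}|\le\|\phi_r\|_G$ (Parseval, since $\sum_u a_{ru}^2=\|\phi_r\|_G^2$), $\binom{m_r}{\{n_{(rj)}\}_{j\ne r}}\le(k-1)^{m_r}$ (a multinomial with $k-1$ parts is at most the total count $(k-1)^{m_r}$), and $|\rho_{ij}|\le\lambda/(k-1)$. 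Since $\sum_r m_r = 2\sum_{i<j}n_{(ij)} =: 2s$, the product $\prod_r\sqrt{\binom{m_r}{\cdot}}$ is at most $(k-1)^{s}$, which exactly cancels the $(k-1)^{-s}$ from the $\rho$ bound, leaving $\prod_r\sqrt{\binom{m_r}{\cdot}}\prod_{i<j}|\rho_{ij}|^{n_{(ij)}}\le\lambda^{s}$. Hence $\bigl|\EV\prod_i\phi_i(z_i)\bigr|\le\bigl(\prod_r\|\phi_r\|_G\bigr)\sum_{(n_{(ij)})}\lambda^{s}$, the sum running over multiplicity vectors giving a nonzero term.

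The second step extracts the power $\lambda^{\lceil|I|/2\rceil}$. For a nonzero term we need $a_{r m_r}\ne0$ for all $r$; for $r\in I$ we are given $a_{r0}=0$, so $m_r\ge1$, i.e. every vertex of $I$ is incident to some edge $\{i,j\}$ with $n_{(ij)}\ge1$. A set of edges covering $|I|$ vertices has at least $\lceil|I|/2\rceil$ edges, and each contributes at least $1$ to $s=\sum_{i<j}n_{(ij)}$; therefore $s\ge q:=\lceil|I|/2\rceil$ on every nonzero term. Splitting $\lambda^{s}=\lambda^{q}\lambda^{s-q}$ and using $\lambda<1/2$, I would bound $\sum_{(n_{(ij)})}\lambda^{s}\le\sum_{s\ge q}\lambda^{s}\binom{s+\binom k2-1}{\binom k2-1}=\lambda^{q}\sum_{t\ge0}\lambda^{t}\binom{q+t+\binom k2-1}{\binom k2-1}\le C_{k,|I|}\,\lambda^{q}$, where $C_{k,|I|}$ is finite (a tail $\sum_t(1/2)^{t}\mathrm{poly}(t)$), depends only on $k$ and $q$ hence only on $k$ and $|I|$, and is independent of $\lambda$ and of the $\phi_i$. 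This estimate also certifies the absolute convergence needed to invoke \cref{thm:multivariateHermiteExpectation}. Combining the two steps gives $\bigl|\EV\prod_i\phi_i(z_i)\bigr|\le C_{k,|I|}\bigl(\prod_r\|\phi_r\|_G\bigr)\lambda^{\lceil|I|/2\rceil}$; the degenerate case $k=1$ is handled trivially by Cauchy--Schwarz (or is zero when $I=\{1\}$).

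I expect the only genuine subtlety to be the combinatorial claim: making precise that ``every $r\in I$ meets a positive-weight edge'' forces at least $\lceil|I|/2\rceil$ distinct positive-weight edges, and then keeping the tail estimate $\sum_{t\ge0}(1/2)^{t}\mathrm{poly}(t)$ uniform in $\lambda$ and in the functions, so that $C_{k,|I|}$ really does not depend on them. The algebraic cancellation of the $(k-1)$ factors against the multinomial bounds is routine once the identity $\sum_r m_r=2s$ is recorded.
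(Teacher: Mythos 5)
Your proposal is correct and follows essentially the same route as the paper: Hermite expansion, the multinomial bound $\binom{m_r}{\{n_{(rj)}\}}\le(k-1)^{m_r}$, cancellation of the $(k-1)$ factors via $\sum_r m_r = 2\sum_{i<j}n_{(ij)}$, and the observation that $m_r\ge 1$ for $r\in I$ forces at least $\lceil|I|/2\rceil$ positive-multiplicity edges. Your final tail estimate via stars-and-bars is in fact a more explicit justification of the step the paper summarizes as $B_{|I|}\lambda^{\lceil|I|/2\rceil}(1+o(1))$.
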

\begin{proof}
In the notation of \cref{thm:multivariateHermiteExpectation}, $\binom{m_i}{\{n_{(ij)}\}_{j \ne i}} \le (k-1)^{m_i}$ by the multinomial theorem.
Thus
\begin{align*}
    \left|\EV \prod_{i=1}^k \phi_i(z_i)\right|
    &\le
        \sum_{\substack{(n_{(ij)})_{i<j}:\\\forall r\in I, m_r \ge 1}}
        \left|
        \lp \prod_{r=1}^k a_{r m_r} \sqrt{\binom{m_r}{\{n_{(rj)}\}_{j \ne r}}} \rp 
        \lp \prod_{i<j} \rho_{ij}^{n_{(ij)}}
            \rp
        \right|
        \\
    &\le
        \sum_{\substack{(n_{(ij)})_{i<j}:\\\forall r \in I, m_r \ge 1}}
        \lp \prod_{r=1}^k \|\phi_r\|_G \sqrt{(k-1)^{m_r}} \rp 
        \lp \prod_{i<j} \lp \f \lambda{k-1} \rp^{n_{(ij)}}
            \rp
        \\
    &\le
        \sum_{\substack{(n_{(ij)})_{i<j}:\\\forall r \in I, m_r \ge 1}}
        \lp \prod_{r=1}^k \|\phi_r\|_G  \rp 
        \lambda^{\sum_{i<j} n_{(ij)}}
        \\
    &=
        \lp \prod_{r=1}^k \|\phi_r\|_G\rp 
        \lp B_{|I|} \lambda^{\lceil |I|/2\rceil} (1 + o(1)) \rp
        .
\end{align*}
where $B_V$ is the number of ways to cover $V$ vertices with $\lceil V/2 \rceil$ edges, and
$o(1)$ is a term that goes to 0 as $\lambda \to 0$ and is bounded above by a function of $k$ whenever $\lambda < 1/2$.
Then an appropriate $C_{k, |I|}$ can be chosen to obtain the desired result.
\end{proof}

\begin{lemma}\label{lemma:largeRhoMomentBound}
Suppose $\phi_i, i\in[k]$ are as in \cref{thm:multivariateHermiteExpectation}, with additionally  the constraint that, we have some index set $I \sbe [3, k]$ such that for all $i \in I$, $b_{i0} = a_{i0} = 0$ (i.e. $\EV \phi_i(z_i) = 0$).
Assume that $|\rho_{12}| \le 1/2$, for some $\lambda < 1/\sqrt 8 $, $|\rho_{ij}| \le \lambda/(k-1)$ for all $i \ne j$ and $\{i,j\} \ne \{1, 2\}$.
Then
\begin{align*}
    \left|\EV \prod_{i=1}^k \phi_i(z_i)\right| \le
    C'_{k, |I|} \lp \prod_{r=1}^k \|\phi_r\|_G\rp 
        \lambda^{\lceil |I|/2\rceil}
\end{align*}
for some constant $C'_k$ depending on $k$ and $I$ but independent of $\{\phi_i\}_i$ and $\lambda$.
\end{lemma}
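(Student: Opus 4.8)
The plan is to follow the proof of \cref{lemma:smallRhoMomentBound} closely, the only new ingredient being that the edge $\{1,2\}$ must be treated by a Cauchy--Schwarz argument rather than by smallness of its correlation. Starting from the absolute form of \cref{thm:multivariateHermiteExpectation},
\[
\left|\EV\prod_{i=1}^k\phi_i(z_i)\right|\le\sum_{(n_{(ij)})_{i<j}}\Big(\prod_{r=1}^k|a_{rm_r}|\sqrt{\binom{m_r}{\{n_{(rj)}\}_{j\ne r}}}\Big)\prod_{i<j}|\rho_{ij}|^{n_{(ij)}},\qquad m_i=\sum_{j\ne i}n_{(ij)},
\]
where a term vanishes unless $m_r\ge1$ for every $r\in I$ (as $a_{r0}=0$). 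I would single out $n_{12}:=n_{(12)}$, set $m_1'':=\sum_{j\ge3}n_{(1j)}$ and $m_2'':=\sum_{j\ge3}n_{(2j)}$, and write $N:=\sum_{\{i,j\}\ne\{1,2\}}n_{(ij)}$ for the total multiplicity of the remaining edges.

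For the multinomial factors I would use $\binom{m_r}{\{n_{(rj)}\}_{j\ne r}}\le(k-1)^{m_r}$ for $r\ge3$, and for $r\in\{1,2\}$ peel off the $\{1,2\}$-slot: $\binom{m_1}{\{n_{(1j)}\}_{j\ne1}}=\binom{m_1}{n_{12}}\binom{m_1''}{\{n_{(1j)}\}_{j\ge3}}\le2^{m_1}(k-1)^{m_1''}$, and likewise for $r=2$. Taking square roots and collecting exponents, using the identities $\sum_{r}m_r=2\sum_{i<j}n_{(ij)}$ and $m_1+m_2=2n_{12}+m_1''+m_2''$, one checks that: the accumulated power of $(k-1)$ equals $(k-1)^N$, which cancels the $(k-1)^{-N}$ produced by $|\rho_{ij}|\le\lambda/(k-1)$ over the edges $\{i,j\}\ne\{1,2\}$; the factor $2^{n_{12}}$ coming from $2^{(m_1+m_2)/2}=2^{n_{12}}2^{(m_1''+m_2'')/2}$ cancels against $|\rho_{12}|^{n_{12}}\le2^{-n_{12}}$; and $2^{(m_1''+m_2'')/2}\le(\sqrt2)^{N}$ since $m_1''+m_2''\le N$. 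What survives is a bound of the form
\[
\left|\EV\prod_{i=1}^k\phi_i(z_i)\right|\le\sum_{(n_{(ij)}):\,m_r\ge1\ \forall r\in I}\Big(\prod_{r=1}^k|a_{rm_r}|\Big)(\sqrt2\,\lambda)^{N}.
\]

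To finish, observe that since $I\subseteq[3,k]$ no vertex of $I$ is incident to the edge $\{1,2\}$, so any configuration touching all of $I$ must use at least $\lceil|I|/2\rceil$ of the remaining edges; hence the sum is supported on $N\ge\lceil|I|/2\rceil$ and I can extract a factor $(\sqrt2\,\lambda)^{\lceil|I|/2\rceil}$, whose $(\sqrt2)^{\lceil|I|/2\rceil}$ part is absorbed into the constant. For the residual sum I would sum over $n_{12}$ first: with the other edges (hence $m_1'',m_2''$) fixed, Cauchy--Schwarz gives $\sum_{n_{12}\ge0}|a_{1,n_{12}+m_1''}|\,|a_{2,n_{12}+m_2''}|\le\|\phi_1\|_G\|\phi_2\|_G$ --- this is exactly where $|\rho_{12}|\le\tfrac12$ enters (the edge $\{1,2\}$ gives no geometric decay, but the bound on its multiplicity makes the $\phi_1,\phi_2$-coefficient sum converge). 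Bounding $|a_{rm_r}|\le\|\phi_r\|_G$ for $r\ge3$, and using that the number of non-$\{1,2\}$ edge configurations with total $N$ is polynomial in $N$, the remaining series is $\sum_{N}\mathrm{poly}_k(N)\,(\sqrt2\,\lambda)^{N-\lceil|I|/2\rceil}$; since $\lambda<1/\sqrt8$ forces $\sqrt2\,\lambda<1/2$, it converges to a constant depending only on $k$ and $|I|$, giving the claimed $C'_{k,|I|}$.

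The main obstacle is the exponent bookkeeping of the second paragraph: verifying that the powers of $k-1$, of $2$, and of $\lambda$ coming from the multinomial bounds, from the small correlations, and from the $\{1,2\}$ edge all cancel so that exactly $(\sqrt2\,\lambda)^N$ remains --- the two displayed identities $\sum_r m_r=2\sum_{i<j}n_{(ij)}$ and $m_1+m_2=2n_{12}+m_1''+m_2''$ are the clean way to organize it. A tidier but essentially equivalent route would be to treat $(z_1,z_2)$ as a single two-dimensional Gaussian ``super-coordinate'', expand $\phi_1(z_1)\phi_2(z_2)$ in an $L^2$-orthonormal basis adapted to it, and invoke a $(k-1)$-variable version of \cref{lemma:smallRhoMomentBound} in which the correlations between this super-coordinate and each $z_j$, $j\ge3$, are still $O(\lambda/(k-1))$; that pushes the bookkeeping into the cited lemma at the cost of verifying that the chosen basis has the right norm and growth.
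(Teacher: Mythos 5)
Your proposal is correct, and its skeleton is the same as the paper's: expand in Hermite polynomials via \cref{thm:multivariateHermiteExpectation}, bound the multinomial factors by powers of $k-1$ so they cancel against the $(\lambda/(k-1))^{n_{(ij)}}$ decay of the small edges, and use that $I\sbe[3,k]$ forces every contributing configuration to spend at least $\lceil|I|/2\rceil$ units of multiplicity on non-$\{1,2\}$ edges. The one place you genuinely diverge is the treatment of the $\{1,2\}$ edge, which is the crux of this lemma relative to \cref{lemma:smallRhoMomentBound}. The paper keeps the factor $\sqrt{\binom{m_1}{n_{(12)}}\binom{m_2}{n_{(12)}}}\le\binom{(m_1+m_2)/2}{n_{(12)}}$, bounds $|a_{1m_1}|,|a_{2m_2}|$ by $\|\phi_1\|_G,\|\phi_2\|_G$ up front, and then sums $\sum_{n}\binom{n+s}{n}2^{-n}=2^{1+s}$, so $|\rho_{12}|\le 1/2$ is what makes that negative-binomial series converge. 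You instead use the cruder bound $\binom{m_r}{n_{12}}\le 2^{m_r}$, cancel $2^{n_{12}}$ exactly against $|\rho_{12}|^{n_{12}}\le 2^{-n_{12}}$, and recover summability in $n_{12}$ from Cauchy--Schwarz applied to the shifted coefficient sequences $n\mapsto a_{1,n+m_1''}$ and $n\mapsto a_{2,n+m_2''}$, which is where $\|\phi_1\|_G\|\phi_2\|_G$ appears. Both routes are valid and land on the same constant structure; yours localizes the role of $|\rho_{12}|\le 1/2$ to a single exact cancellation and replaces the generating-function identity with an $\ell^2$ argument, at the price of the exponent bookkeeping you flag (which does check out: $\sum_{r\ge 3}m_r+m_1''+m_2''=2N$ gives the clean $(k-1)^N$ cancellation, and $m_1''+m_2''\le N$ gives the $(\sqrt2)^N$ absorbed by $\lambda<1/\sqrt8$).
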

\begin{proof}
\newcommand{\suchthat}{\mathrm{s.t.}}
Define $\mathcal P = \{(i,j): 1\ne i<j \ne 2\}$ and $\mathcal Q = \{(i,j): i<j \And (\text{$i=1$ XOR $j=2$})\}$.
Also write $R = \prod_{r=1}^k \|\phi_r\|_G.$
As in the above proof,
\begin{align*}
    |\EV \prod_{i=1}^k \phi_i(z_i)|
    &\le
        \sum_{\substack{(n_{(ij)})_{i<j}:\\\forall r \in I, m_r \ge 1}}
        \left|
        \lp \prod_{r=1}^k a_{r m_r} \sqrt{\binom{m_r}{\{n_{(rj)}\}_{j \ne r}}} \rp 
        \lp \prod_{(i,j) \in \mathcal P} \rho_{ij}^{n_{(ij)}}
            \rp
        2^{-n_{(12)}}
        \right|
        \\
    &\le
        \sum_{\substack{(n_{(ij)})_{i<j}:\\\forall r \in I, m_r \ge 1}}
        R
        \sqrt{\prod_{r=1}^2 \binom{m_r}{n_{(12)}}
                    \binom{m_r - n_{(12)}}{\{n_{(rj)}\}_{j \not \in \{1,2\}}}
            }
        \prod_{r=3}^k \sqrt{(k-1)^{m_r}}
        \lp \prod_{(i,j) \in \mathcal P} \lp \f \lambda{k-1} \rp^{n_{(ij)}}
            \rp
        2^{-n_{(12)}}
        \\
    &\le
        R
        \sum_{\substack{(n_{(ij)})_{i<j}:\\\forall r \in I, m_r \ge 1}}
        \sqrt{\prod_{r=1}^2 \binom{m_r}{n_{(12)}}
                    (k-1)^{m_r - n_{(12)}}
            }
        (k-1)^{\f 1 2 \sum_{r=3}^k m_r}
         \lp \f \lambda{k-1} \rp^{\sum_{i<j} n_{(ij)} - n_{(12)}}
        2^{-n_{(12)}}
        \\
    &=
        R
        \sum_{\substack{(n_{(ij)})_{i<j}:\\\forall r \in I, m_r \ge 1}}
        \sqrt{\prod_{r=1}^2 \binom{m_r}{n_{(12)}}
            }
         \lambda^{\sum_{i<j} n_{(ij)} - n_{(12)}}
         2^{-n_{(12)}}
        \\
    &\le
        R
        \sum_{\substack{(n_{(ij)})_{i<j}:\\\forall r \in I, m_r \ge 1}}
        \binom{\f{m_1+m_2}2}{n_{(12)}}
         2^{-n_{(12)}}
         \lambda^{\sum_{i<j} n_{(ij)} - n_{(12)}}
        \\
    &\le
        R
        \sum_{\substack{(n_{(ij)})_{i<j}:\\\forall r \in I, m_r \ge 1}}
        \binom{n_{(12)} + \f 1 2 m_{(12)}}{n_{(12)}}
         2^{-n_{(12)}}
         \lambda^{\sum_{i<j} n_{(ij)} - n_{(12)}}
        \\
    &\qquad {\text{where $m_{(12)} = \sum_{(i,j) \in \mathcal Q} n_{(ij)}$}}
        \\
    &\le
        2R
        \sum_{\substack{(n_{(ij)})_{(i,j) \in \mathcal P}:\\\forall r \in I, m_r \ge 1}}
        \lp \f 1 {1 - 1/2} \rp^{1 + \f 1 2 m_{(12)}}
         \lambda^{\sum_{(i,j) \in \mathcal P} n_{(ij)}}
        \\
    &\le
        2R
        \sum_{\substack{(n_{(ij)})_{(i,j) \in \mathcal P}:\\\forall r \in I, m_r \ge 1}}
         (\sqrt 2 \lambda)^{\sum_{(i,j) \in \mathcal Q} n_{(ij)}}
         \lambda^{\sum_{(i,j)\in \mathcal P \setminus \mathcal Q} n_{(ij)}}
        \\
    &\le
        2R
        \lp 2^{|I|/4} B_{|I|} \lambda^{\lceil |I|/2\rceil} (1 + o(1)) \rp
\end{align*}
where $B_{|I|}$ is the number of ways of covering $|I|$ vertices with $\lceil \f{|I|}2 \rceil$ edges, and
$o(1)$ is a term that goes to 0 as $\lambda \to 0$ and is upper bounded by a function of $k$ for all $\lambda < 1/\sqrt 8$.
Choosing the appropriate constant $C'_{k, |I|}$ then gives the result.
\end{proof}

\subsection{Moment Bounds of Lightly Correlated Gaussians}

\begin{lemma}\label{lemma:projectionDiagonal}
Let $\Pi \in \R^{n \times n}$ be an orthogonal projection matrix.
Then each diagonal entry $\Pi_{ii} \in [0, 1].$
\end{lemma}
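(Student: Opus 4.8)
The plan is to use only the two defining properties of an orthogonal projection: $\Pi = \Pi^\trsp$ and $\Pi^2 = \Pi$. First I would write the diagonal entry as a quadratic form, $\Pi_{ii} = e_i^\trsp \Pi e_i$, where $e_i$ is the $i$th standard basis vector, and then rewrite it using idempotence and self-adjointness as
\begin{align*}
    \Pi_{ii}
        &=
            e_i^\trsp \Pi e_i
        =
            e_i^\trsp \Pi^\trsp \Pi e_i
        =
            \|\Pi e_i\|^2 .
\end{align*}
This immediately gives $\Pi_{ii} \ge 0$.

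For the upper bound I would invoke the orthogonal decomposition $e_i = \Pi e_i + (I - \Pi) e_i$, noting that the two summands are orthogonal (since $(\Pi e_i)^\trsp (I-\Pi)e_i = e_i^\trsp \Pi^\trsp(I-\Pi)e_i = e_i^\trsp(\Pi - \Pi^2)e_i = 0$). By the Pythagorean identity, $1 = \|e_i\|^2 = \|\Pi e_i\|^2 + \|(I-\Pi)e_i\|^2 \ge \|\Pi e_i\|^2 = \Pi_{ii}$, so $\Pi_{ii} \le 1$. Combining the two bounds yields $\Pi_{ii} \in [0,1]$.

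There is essentially no obstacle here; the only point requiring a moment of care is making sure ``orthogonal projection'' is being used in the sense $\Pi = \Pi^\trsp = \Pi^2$ (as opposed to merely idempotent), which is the convention consistent with the pseudoinverse facts recalled earlier in the paper. An equally short alternative, if one prefers to avoid invoking $\|\Pi e_i\|^2$ directly, is to expand $\Pi_{ii} = (\Pi^2)_{ii} = \sum_j \Pi_{ij}\Pi_{ji} = \Pi_{ii}^2 + \sum_{j \ne i}\Pi_{ij}^2$, which shows $\Pi_{ii} \ge \Pi_{ii}^2$ and hence $\Pi_{ii} \in [0,1]$; I would include whichever phrasing reads more cleanly in context.
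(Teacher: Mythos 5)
Your proof is correct, and your second phrasing ($\Pi_{ii} = \Pi_{ii}^2 + \sum_{j\neq i}\Pi_{ij}^2$, hence $\Pi_{ii}(1-\Pi_{ii}) \ge 0$) is exactly the argument the paper gives; the first phrasing via $\|\Pi e_i\|^2$ and the Pythagorean identity is just a coordinate-free restatement of the same fact.
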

\begin{proof}
Because $\Pi = \Pi^2$, we have for each $i$, $\Pi_{ii} = \sum_{j} \Pi_{ij}^2 \implies \Pi_{ii} (1 - \Pi_{ii}) = \sum_{j \ne i} \Pi_{ij}^2 \ge 0 \implies \Pi_{ii} \in [0, 1].$
\end{proof}

\begin{lemma}\label{lemma:projectionCorrelation}

Let $\Pi \in \R^{n \times n}$ be an orthogonal projection matrix of rank $k$.
Consider the correlation matrix $C \defeq D^{-1/2} \Pi D^{-1/2}$ where $D = \Diag(\Pi)$.
Then the off-diagonal entries of $C$ satisfy $\sum_{i < j} C_{ij}^2 \le 2.5 (n - k)^2,$ assuming $k \ge n/2$.
\end{lemma}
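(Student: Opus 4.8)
The plan is to combine a few elementary identities for an orthogonal projection with a dichotomy on the size of its diagonal entries; this is in essence weak linear programming duality (as flagged in the proof-techniques section), but it reads more cleanly done directly. Write $d_i \defeq \Pi_{ii}$ and $e_i \defeq 1 - d_i$, and assume all $d_i > 0$, as is needed for $C$ to be defined (an index with $d_i = 0$ has $\Pi_{ij} = 0$ for every $j$ and may be deleted). By \cref{lemma:projectionDiagonal} we have $d_i \in (0,1]$. Idempotence and symmetry of $\Pi$ give $d_i = (\Pi^2)_{ii} = \sum_j \Pi_{ij}^2$, hence
\begin{align*}
    \sum_{j \ne i} \Pi_{ij}^2 = d_i - d_i^2 = d_i e_i,
    \qquad
    \sum_i e_i = n - \tr \Pi = n - k =: m .
\end{align*}
Also $\Pi \succeq 0$, so the $2 \times 2$ principal submatrix of $\Pi$ on rows and columns $i, j$ is PSD, giving $d_i d_j - \Pi_{ij}^2 \ge 0$, i.e. $C_{ij}^2 \le 1$ for all $i \ne j$ ($C$ is a correlation matrix).

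Next I would partition the indices as $S \defeq \{i : d_i \ge 1/2\}$ and $B \defeq \{i : d_i < 1/2\}$. Since $e_i > 1/2$ on $B$ while $\sum_{i \in B} e_i \le \sum_i e_i = m$, we get $|B| < 2m$, hence $|B| \le 2m - 1$ whenever $m \ge 1$ (when $m = 0$, $\Pi = I$ and the claim is trivial). Then split $\sum_{i < j} C_{ij}^2$ according to whether the pair sits inside $S$, straddles $S$ and $B$, or sits inside $B$. For pairs inside $S$, $d_i d_j \ge 1/4$, so
\begin{align*}
    \sum_{\{i,j\} \subseteq S} C_{ij}^2
        \le 4 \sum_{\{i,j\} \subseteq S} \Pi_{ij}^2
        \le 2 \sum_{i \in S} \sum_{j \ne i} \Pi_{ij}^2
        = 2 \sum_{i \in S} d_i e_i
        \le 2 \sum_{i \in S} e_i .
\end{align*}
For a pair with $i \in B$ and $j \in S$, using $d_j \ge 1/2$ gives $\sum_{j \in S} C_{ij}^2 \le \frac{2}{d_i} \sum_{j \in S} \Pi_{ij}^2 \le \frac{2}{d_i}\cdot d_i e_i = 2 e_i$, so these pairs contribute at most $2 \sum_{i \in B} e_i$; together with the previous display this is at most $2 \sum_i e_i = 2m$. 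For pairs inside $B$ I would simply use $C_{ij}^2 \le 1$, which contributes at most $\binom{|B|}{2}$.

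Adding the three pieces,
\begin{align*}
    \sum_{i < j} C_{ij}^2
        \le 2m + \binom{|B|}{2}
        \le 2m + \binom{2m - 1}{2}
        = 2m^2 - m + 1
        \le \frac{5}{2}\, m^2 ,
\end{align*}
the last step holding for every integer $m \ge 1$ (and trivially at $m = 0$); this is the assertion with $m = n - k$. The step I expect to be the crux is the contribution of pairs inside $B$: there $C_{ij}^2$ genuinely can be of order $1$, so the naive Frobenius estimate $\sum_{i<j}\Pi_{ij}^2 = \frac12\sum_i d_i e_i \le m/2$ alone does not control it; the resolution is that the normalization forces $|B|$ to be small ($|B| < 2m$), so only $O(m^2)$ such pairs exist. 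One can recast the whole estimate as weak duality for the LP maximizing $\sum_{i<j} \Pi_{ij}^2/(d_i d_j)$ subject to $\sum_{j \ne i}\Pi_{ij}^2 = d_i e_i$ for all $i$ and $\Pi_{ij}^2 \le d_i d_j$, certified by the dual point $y_i = 2$ on $S$ and $y_i = 2/d_i$ on $B$ with slack placed on the box constraints only for pairs inside $B$ --- this is the ``linear programming duality'' referenced in the proof-techniques discussion. (Incidentally, the hypothesis $k \ge n/2$ is not actually used above, since one always has $|B| \le \min(n, 2m-1)$; I would keep it only because that is the regime in which the lemma is invoked.)
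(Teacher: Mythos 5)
Your proof is correct, and it takes a genuinely different route from the paper's. The paper proves the bound by writing down the linear program explicitly (maximize $\sum_{i<j}C_{(ij)}^2$ subject to $1-\Pi_{ii}=\sum_{j\ne i}\Pi_{jj}C_{(ij)}^2$ and $C_{(ij)}^2\in[0,1]$), passing to the dual, and exhibiting a feasible dual point built from the sorted diagonal entries, with $\zeta_i$ defined piecewise according to whether $\Pi_{ii}$ is among the $k$ largest. Your argument is the same duality in spirit but is executed primally and far more cleanly: the dichotomy $S=\{d_i\ge 1/2\}$ vs.\ $B=\{d_i<1/2\}$, the row-sum identity $\sum_{j\ne i}\Pi_{ij}^2=d_ie_i$, and the cardinality bound $|B|<2(n-k)$ do all the work, and each step is checkable by inspection (I verified the arithmetic: $2m+\binom{2m-1}{2}=2m^2-m+1\le 2.5m^2$ for integers $m\ge1$, with $m=0$ trivial). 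Two things your version buys. First, it dispenses with the hypothesis $k\ge n/2$, since nothing in your estimates uses it; the lemma is only ever invoked for $\rank\Pi=n-O(1)$, so this costs nothing but is a genuine strengthening. Second, it sidesteps a delicate point in the paper's dual construction: the paper asserts that after sorting, $\Pi_{ii}\ge k/n$ for all $i\in[k]$, and uses this to certify $\tau_{ij}\ge0$; that assertion can fail (e.g.\ the rank-$2$ projection in $\R^3$ onto $\mathrm{span}\{e_1,(e_2+e_3)/\sqrt2\}$ has sorted diagonal $(1,\tfrac12,\tfrac12)$ with $\tfrac12<\tfrac23$), so your self-contained argument is also the more robust one. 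Your handling of the degenerate cases ($d_i=0$, $m=0$) and the observation that the $B\times B$ block is exactly where $C_{ij}^2=O(1)$ must be paid for by counting rather than by Frobenius mass are both on point.
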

\begin{proof}
Because $\Pi = \Pi^2$, we have for each $i$, $\Pi_{ii} = \sum_{j} \Pi_{ij}^2 = \sum_j \Pi_{ii}\Pi_{jj} C_{ij}^2
\implies
1 - \Pi_{ii} = \sum_{j \ne i} \Pi_{jj} C_{ij}^2.$
At the same time, each $C_{ij}^2 \in [0, 1]$.
Thus we seek an upper bound on the following linear program in the $n(n-1)/2$ variables $C_{(ij)}^2$ (which identiy $C_{ij} = C_{ji} = C_{(ij)}$).
\begin{align*}
    \text{Maximize }&
        \sum_{i \ne j} C_{(ij)}^2
        \\
    \text{s.t. }&
        \forall i, 1 - \Pi_{ii} = \sum_{j \ne i} \Pi_{jj} C_{(ij)}^2
        \\
        &\forall i < j, C_{(ij)}^2 \in [0, 1].
\end{align*}

This LP has the dual
\begin{align*}
    \text{Minimize }&
        \sum_{i < j} \tau_{ij} + \sum_i (1 - \Pi_{ii}) \zeta_i\\
    \text{s.t. }&
        \forall i < j,
            \tau_{ij} + \zeta_i \Pi_{jj} + \zeta_j \Pi_{ii} \ge 1\\
        &\forall i < j,
            \tau_{ij} \ge 0
            \\
        &\forall i, \zeta_i \in \R.
\end{align*}

Any feasible value of the dual LP is an upper bound on the original LP.
We now set the dual variables.

WLOG, assume $\Pi_{11} \ge \cdots \ge \Pi_{nn}$.
Then necessarily, $1 \ge \Pi_{ii} \ge \f k n$ for each $i \in [k]$.
First define $\rho \defeq \f k n + \f n k = \f{k^2 + n^2}{nk} \ge 2$.
Note that $\rho - 2 = \f{(n-k)^2}{nk}.$

\paragraph{Dual variables for $1 \le i < j \le k$.}
Now set $\zeta_i \defeq \f 1 {\rho \Pi_{ii}}$ for $i \in [k]$.
Then for $1 \le i< j \le k$, 
\begin{align*}
    \tau_{ij}
        &\defeq
            1 - (\zeta_i \Pi_{jj} + \zeta_j \Pi_{ii})
            \\
        &=
            1  - \f 1 \rho \lp r_{ij} + \inv r_{ij}\rp
\end{align*}
where $r_{ij} = \Pi_{ii}/\Pi_{jj} \ge 1$.
Note that 1) $\tau_{ij}$ is nonnegative:
indeed, since $r + \inv r$ is increasing in $r$ for $r \ge 1$, and $r_{ij} \le r_{1k} \le n/k$, we have $r_{ij} + \inv r_{ij} \le \rho$, so that $\tau_{ij} \ge 0$;
2) $\tau_{ij} \le \f{(n-k)^2}{n^2 + k^2}$: $r_{ij} + \inv r_{ij} \ge 2$ so $\tau_{ij} \le 1 - 2/\rho = \f{(n-k)^2}{n^2 + k^2}.$

\paragraph{$\zeta_j$ for $j > k$.}
Now set $\zeta_j \defeq \f 1 {\Pi_{11}} \lp 1 - \f {\Pi_{jj}}{2\Pi_{11}}\rp$ for $k < j \le n$.
Note that $\zeta_j \ge 1/2.$

\paragraph{$\tau_{ij}$ for $i \le k < j$.}
Then for $i \le k < j$, set
\begin{align*}
    \tau_{ij}
        &\defeq
            1 - \zeta_i \Pi_{jj} - \zeta_j \Pi_{ii}
            \\
        &=
            1 - \f{\Pi_{jj}}{\rho \Pi_{ii}}
            - \zeta_j \Pi_{ii}.
\end{align*}
Note that for $\Pi_{11} = x \ge y \ge k/n$,
\begin{align*}
    &\phantomeq
        \f{\Pi_{jj}}{\rho x} + \zeta_j x 
        - \lp \f{\Pi_{jj}}{\rho y} + \zeta_j y \rp
        \\
    &=
        \f{\Pi_{jj}}\rho \f{y - x}{xy} + \zeta_j (x - y)
        \\
    &=
        (x - y) \lp \zeta_j - (xy)^{-1} \f{\Pi_{jj}}\rho\rp
        \\
    &=
        (x - y)\lp x^{-1} \lp 1 - \f{\Pi_{jj}}{2x}\rp - (xy)^{-1} \f{\Pi_{jj}}{\rho}\rp
        \\
    &=
        (x - y)x^{-1} \lp 1 - \f{\Pi_{jj}}{2x} - \inv y \f{\Pi_{jj}}{\rho}\rp
        \numberthis \label{eqn:zetaDiff}
        \\
    &\ge
        (x - y)x^{-1} \lp 1 - \f{k/n}{2x} - (k/n)^{-1} \f{k/n}{\rho}\rp
        \\
    &\ge
        (x - y)x^{-1} \lp 1 - \f 1 2 - \f 1 2\rp
        \\
    &\ge 0.
\end{align*}
Thus for all $i \le k < j$, $ \tau_{1j} \le \tau_{ij}$.
Simultaneously, Eq.~(\ref{eqn:zetaDiff}) also shows that $\f{\Pi_{jj}}{\rho x} + \zeta_j x 
        - \lp \f{\Pi_{jj}}{\rho y} + \zeta_j y \rp \le (x - y)\inv x$, so that $\tau_{ij} - \tau_{1j} \le \f{\Pi_{11} - \Pi_{ii}}{\Pi_{11}} \le \f{n-k}n.$
We check that $\tau_{1j} \ge 0$:
\begin{align*}
    \tau_{1j}
        &=
            1 - \f{\Pi_{jj}}{\rho \Pi_{11}} - \zeta_j \Pi_{11}
            \\
        &=
            1 - \f{\Pi_{jj}}{\rho \Pi_{11}} - \lp 1 - \f{\Pi_{jj}}{2 \Pi_{11}}\rp
            \\
        &=
            \f{\Pi_{jj}}{\Pi_{11}}\lp \f 1 2 - \f 1 \rho \rp
            \\
        &\in
            \left[0, \f{\Pi_{jj}}{\Pi_{11}} \f{1}{2\rho} \f{(n-k)^2}{nk}\right]
            \\
        &\sbe
            \left[0, \f{(n-k)^2}{nk}\right]
\end{align*}
Combined with our deduction above, we get $\tau_{ij} \le \tau_{1j} + \f{n-k}n \le \f{n(n-k)}{nk} = \f{n-k}{k}.$

\paragraph{$\tau_{ij}$ for $k < i < j$.}
For $k < i < j$, we set
\begin{align*}
    \tau_{ij}
        &\defeq
            1 - \lp \zeta_i \Pi_{jj} + \zeta_j \Pi_{ii}\rp\\
        &=
            1 - \lp 
                \f {\Pi_{jj}} {\Pi_{11}} \lp 1 - \f {\Pi_{ii}}{2\Pi_{11}}\rp
                + \f {\Pi_{ii}} {\Pi_{11}} \lp 1 - \f {\Pi_{jj}}{2\Pi_{11}}\rp
                \rp\\
        &=
            1 - \lp
                \f{\Pi_{jj} + \Pi_{ii}}{\Pi_{11}}
                - \f{\Pi_{jj} \Pi_{ii}}{\Pi_{11}^2}
                \rp
            \\
        &=
            \f{(\Pi_{11} - \Pi_{jj})(\Pi_{11} - \Pi_{ii})}
            {\Pi_{11}^2}
            \\
        &\in
            [0, 1].
\end{align*}

\paragraph{Summary of dual variables.}
In summary, we have
\begin{align*}
    \forall 1 \le i \le k,
    \zeta_i
        &\defeq
            (\rho \Pi_{ii})^{-1}
        \in
            \left[\inv \rho, \inv \rho \f n k\right]
            \\
    \forall k < i \le n,
    \zeta_i
        &\defeq
            \f 1 {\Pi_{11}} \lp 1 - \f {\Pi_{jj}}{2\Pi_{11}}\rp
        \in
            \left[\f 1 2, \f n k\right]
            \\
    \forall 1 \le i < j \le k,
    \tau_{ij}
        &\in
            \left[0, \f{(n-k)^2}{n^2 + k^2}\right]
            \\
    \forall i \le k < j \le n,
    \tau_{ij}
        &\in
            \left[0, \f{n-k}k\right]
            \\
    \forall k < i < j \le n,
    \tau_{ij}
        &\in 
            [0, 1].
\end{align*}

\paragraph{Objective value given by the dual variables.}
Now we compute
\begin{align*}
    &\phantomeq
        \sum_{i < j} \tau_{ij} + \sum_i (1 - \Pi_{ii}) \zeta_i
        \\
    &=
        \lp \sum_{i < j \le k} + \sum_{i \le k < j} + \sum_{k < i < j} \rp \tau_{ij}
        + 
        \lp \sum_{i=1}^k + \sum_{i=k+1}^n\rp (1 - \Pi_{ii}) \zeta_i
        \\
    &\le
        \lp
        \f{k(k-1)}2 \f{(n-k)^2}{n^2 + k^2}
        +
        k(n-k) \f{n-k}{k}
        +
        \f{(n-k)(n-k-1)}2
        \rp
        +
        \lp
        k\f{n-k}n \f{n}{k\rho}
        +
        (n-k) \f{n}k
        \rp
        \\
    &<
        (n-k)^2\lp \f 1 4 + 1 + \f 1 2\rp
        + (n-k)\lp \f 1 \rho + \f n k\rp
        \\
    &\le
        2.5 (n-k)^2
\end{align*}
assuming $k \ge n/2$.

\end{proof}

\begin{lemma}\label{lemma:projectVariance}
Let $z \sim \Gaus(0, \Pi)$ where $\Pi \in \R^{n \times n}$ is an orthogonal projection matrix of rank $k$.
Suppose $\phi_i: \R \to \R$ for each $i$ has finite variance $\Var\lp \phi_i(x) : x \sim \Gaus(0, \Pi_{ii})\rp$.
Then
\begin{align*}
    \Var\lp\sum_{i=1}^n \phi_i(z_i)\rp
        &\le
            \lp\f{5(n-k)^2}{\sqrt 2} + 1 \rp \sum_i \Var\lp \phi_i(x) : x \sim \Gaus(0, \Pi_{ii})\rp.
\end{align*}
In particular, if $n-k = O(1)$, then $\Var\lp\sum_{i=1}^n \phi_i(z_i)\rp = \Theta\lp \sum_i \Var\lp \phi_i(x) : x \sim \Gaus(0, \Pi_{ii})\rp\rp.$
\end{lemma}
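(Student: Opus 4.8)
The plan is to split the variance into its diagonal and off-diagonal parts and to control the off-diagonal part via the Hermite expansion together with the correlation estimate of \cref{lemma:projectionCorrelation}. Write $v_i \defeq \Var(\phi_i(x): x\sim\Gaus(0,\Pi_{ii}))$. First I would reduce to the case $\EV[\phi_i(z_i)] = 0$: replacing each $\phi_i$ by $\phi_i - \EV[\phi_i(z_i)]$ changes neither the $v_i$, nor any covariance, nor $\Var(\sum_i\phi_i(z_i))$. By \cref{lemma:projectionDiagonal} each $\Pi_{ii}\in[0,1]$, and any index with $\Pi_{ii}=0$ has $z_i=0$ a.s., hence $\phi_i(z_i)\equiv 0$ and $v_i=0$, so such indices can be dropped. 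After this,
\begin{align*}
\Var\lp\sum_{i=1}^n\phi_i(z_i)\rp = \sum_i v_i + \sum_{i\ne j}\Cov\lp\phi_i(z_i),\phi_j(z_j)\rp,
\end{align*}
the first sum being precisely the ``$+1$'' term; the task is then to bound the off-diagonal sum by $\tfrac{5(n-k)^2}{\sqrt 2}\sum_i v_i$.

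For the off-diagonal terms I would expand $\phi_i(\sqrt{\Pi_{ii}}\,x) = \sum_{m\ge 1} a_{im}\Herm_m(x)$ in the orthonormal Hermite basis, so $v_i = \sum_{m\ge1}a_{im}^2$ and $a_{i0}=0$. Since $(z_i/\sqrt{\Pi_{ii}},z_j/\sqrt{\Pi_{jj}})$ is a centered bivariate Gaussian with correlation $C_{ij} \defeq \Pi_{ij}/\sqrt{\Pi_{ii}\Pi_{jj}}$ satisfying $|C_{ij}|\le 1$, \cref{thm:multivariateHermiteExpectation} specialized to two variables (Mehler's formula) gives $\Cov(\phi_i(z_i),\phi_j(z_j)) = \sum_{m\ge 1}a_{im}a_{jm}C_{ij}^m$. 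Using $|C_{ij}|^m\le|C_{ij}|$ for $m\ge 1$ and Cauchy--Schwarz in $m$, one gets $|\Cov(\phi_i(z_i),\phi_j(z_j))|\le|C_{ij}|\sqrt{v_iv_j}$. Summing over $i\ne j$ and applying Cauchy--Schwarz over the index pairs,
\begin{align*}
\Bigl|\sum_{i\ne j}\Cov\lp\phi_i(z_i),\phi_j(z_j)\rp\Bigr|
&\le \Bigl(\sum_{i\ne j}C_{ij}^2\Bigr)^{1/2}\Bigl(\sum_{i\ne j}v_iv_j\Bigr)^{1/2}\\
&\le \Bigl(\sum_{i\ne j}C_{ij}^2\Bigr)^{1/2}\sum_i v_i.
\end{align*}
Now \cref{lemma:projectionCorrelation} supplies $\sum_{i<j}C_{ij}^2\le 2.5(n-k)^2$ when $k\ge n/2$, so $\sum_{i\ne j}C_{ij}^2\le 5(n-k)^2$ and the off-diagonal contribution is at most $\sqrt 5\,(n-k)\sum_i v_i \le \tfrac{5(n-k)^2}{\sqrt 2}\sum_i v_i$, using $\sqrt 5\le 5/\sqrt 2$ and $n-k\le(n-k)^2$ for integers $n-k\ge 1$ (both bounds vanishing when $n=k$); combined with the diagonal term this is the claim. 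The complementary range $k<n/2$, where $n-k>n/2$, is handled by the crude estimate $\Var(\sum_i\phi_i(z_i))\le n\sum_i v_i$ (Cauchy--Schwarz over $i,j$), since then $n\le\tfrac{5}{\sqrt 2}(n-k)^2$ for all $n\ge 2$ while $n=1$ forces the trivial $k=0$. The ``in particular'' clause is then immediate: when $n-k=O(1)$ the off-diagonal correction is $O(1)\cdot\sum_i v_i$, matching the diagonal term up to constants.

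I expect the proof to be essentially a two-step Cauchy--Schwarz computation once \cref{lemma:projectionCorrelation} is available, so the only real care is bookkeeping: disposing of the degenerate indices $\Pi_{ii}=0$, checking that Mehler's formula remains valid at correlations $\pm 1$ (by continuity of both sides in the correlation), and tracking constants to land on the stated (deliberately non-optimal) bound and to cover the $k<n/2$ regime not directly addressed by \cref{lemma:projectionCorrelation}. The one genuinely substantive ingredient, the $O((n-k)^2)$ bound on $\sum_{i<j}C_{ij}^2$ obtained by linear-programming duality, has already been established in \cref{lemma:projectionCorrelation}, so no new hard step remains.
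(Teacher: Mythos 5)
Your proposal is correct and follows essentially the same route as the paper's proof: Hermite-expand the centered $\phi_i$, use Mehler's formula for the pairwise covariances, and combine the bound $\sum_{i<j}C_{ij}^2\le 2.5(n-k)^2$ from \cref{lemma:projectionCorrelation} with Cauchy--Schwarz (you apply Cauchy--Schwarz first in the Hermite degree and then over index pairs, the paper does it in the opposite order, but this is cosmetic and in fact yields the slightly sharper intermediate bound $\sqrt5\,(n-k)\sum_i v_i$). You are also more careful than the paper on one point: the paper invokes \cref{lemma:projectionCorrelation} without verifying its hypothesis $k\ge n/2$, whereas you dispose of the complementary regime with the trivial $n\sum_i v_i$ bound.
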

\begin{proof}
Let $C = D^{-1/2} \Pi D^{-1/2}, D = \Diag(\Pi),$ be the correlation matrix of $\Pi$.
Let $\psi_i(y) = \phi_i(\sqrt{\Pi_{ii}}y) - \EV_{x \sim \Gaus(0, 1)}[\phi_i(\sqrt{\Pi_{ii}}x)]$.
Then
\begin{align*}
\Var_{z \sim \Gaus(0, \Pi)}\lp \sum_{i=1}^n \phi_i(z_i) \rp
    &=
        \EV_{z \sim \Gaus(0, C)}
            \lp \sum_{i=1}^n \psi_i(z_i) \rp^2\\
    &=
        \EV_{z \sim \Gaus(0, C)}
            \sum_{i,j} \psi_i(z_i) \psi_j(z_j).
\end{align*}
Expand $\psi_i$ in the Hermite orthonormal basis,
\begin{align*}
    \psi_i(x) = a_{i1} H_1(x) + a_{i2} H_2(x) + \cdots
\end{align*}
where $H_j(x)$ is the $j$th Hermite polynomial, normalized so that $\EV_{z \sim \Gaus(0, 1)} H_j(z)^2 = 1$, (note that $H_0(x) = 1$ and does not appear here because $\EV_{x \sim \Gaus(0, 1)}\psi_i(x) = 0$ by construction).
For any locally integrable $\phi: \R \to \R$, let $\|\phi\|_G^2 \defeq \EV_{z \sim \Gaus(0, 1)} \phi(z)^2$, so that $\|\psi_i\|_G^2 = \sum_k a_{ik}^2 = \Var\lp \phi_i(x) : x \sim \Gaus(0, \Pi_{ii})\rp.$
Then,
\begin{align*}
    \sum_{i < j} \EV_{z \sim \Gaus(0, C)} \psi_i(z_i) \psi_j(z_j)
        &=
            \sum_{i<j} \sum_{k=1}^\infty a_{ik} a_{jk} C_{ij}^k
            \\
        &\le
            \sum_{k=1}^\infty \sqrt{\lp \sum_{i<j} a_{ik}^2 a_{jk}^2\rp
                                    \lp \sum_{i<j} C_{ij}^{2k}\rp}
            \\
        &\le
            \sum_{k=1}^\infty \sqrt{\f 1 2 \lp \sum_{i} a_{ik}^2\rp^2
                                    \lp \sum_{i<j} C_{ij}^2\rp}
            \\
        &
            \pushright{\text{since $|C_{ij}| \le 1$}}
            \\
        &\le
            2^{-1/2}\sum_{k=1}^\infty \lp \sum_{i} a_{ik}^2\rp (2.5 (n-k)^2)
            \\
        &
            \pushright{\text{by \cref{lemma:projectionCorrelation}}}
            \\
        &=
            \f{5(n-k)^2}{2^{3/2}} \sum_i \|\psi_i\|_G^2
\end{align*}
On the other hand, $\sum_i \EV_{x \sim \Gaus(0, 1)} \psi_i(x)^2 = \sum_i \|\psi\|_G^2$, so that
\begin{align*}
    \sum_{i,j}\EV_{z \sim \Gaus(0, C)} \psi_i(z_i) \psi_j(z_j)
        &\le
            \lp\f{5(n-k)^2}{\sqrt 2} + 1 \rp \sum_i \|\psi_i\|_G^2
            \\
        &=
            \lp\f{5(n-k)^2}{\sqrt 2} + 1 \rp \sum_i \Var\lp \phi_i(x) : x \sim \Gaus(0, \Pi_{ii})\rp.
\end{align*}

\end{proof}

\begin{thm}\label{thm:controlHighMoments}
\newcommand{\LL}{2p}
Let $z \sim \Gaus(0, \Pi)$ where $\Pi \in \R^{n \times n}$ is an orthogonal projection matrix of rank $n - O(1)$, where $O(1)$ denotes a quantity that stays bounded as $n \to \infty$.
Suppose $\phi_i: \R \to \R$ for each $i \in [n]$ has finite centered moments $\EV_x[(\phi_i(x) - \EV_{x'} \phi_i(x'))^{r}]$, for $x, x' \sim \Gaus(0, \Pi_{ii})$, for all $r \le \LL$, where $p \ge 6$.
Then for $Q \defeq \f 1 n \sum_{i=1}^n \phi_i(z_i),$ as $n \to \infty$,
\begin{align*}
    \EV[(Q - \EV Q)^{2p}]
        &\le
           O\lp
            n^{-1.5} \max_{i \in [n]} \EV_x\left[\left(\phi_{i}(x) - \EV_{x'} \phi_i(x')\right)^{2p}: 
                        x, x' \sim \Gaus(0, \Pi_{ii})\right]
            \rp.
\end{align*}
If in addition, each $\phi_i$ has finite centered moments up to $r \le 2p L$ for some $L > 1$, then
\begin{align*}
    \EV[(Q - \EV Q)^{2p}]
        &\le
           O\lp
            n^{-1.5+1/L} \sqrt[L]{\f 1 n \sum_{i=1}^n 
                \EV_x\left[\left(\phi_{i}(x) - \EV_{x'} \phi_i(x')\right)^{2p L}: 
                        x, x' \sim \Gaus(0, \Pi_{ii})\right]}
            \rp.
\end{align*}
Here $O(-)$ hides constants that do not depend on $n$, any of the functions $\phi_i$, or $\Pi$.
\end{thm}
\begin{proof}
\newcommand{\Cijt}[3]{C_{\p{(#1 #2)}{#3}}}
Let $C = D^{-1/2} \Pi D^{-1/2}, D = \Diag(\Pi),$ be the correlation matrix of $\Pi$.
Let $\psi_i(y) = \phi_i(\sqrt{\Pi_{ii}}y) - \EV_{x \sim \Gaus(0, 1)}[\phi_i(\sqrt{\Pi_{ii}}x)]$.

Order the off-diagonal entries of the correlation matrix in the order of decreasing squared value:
$$C_{(ij)^{(1)}}^2 \ge C_{(ij)^{(2)}}^2 \ge \ldots \ge C_{(ij)^{(N)}}^2,$$
where $N = \binom n 2$, and $(ij)^{(t)} = (\p i t \p j t)$ are unordered pairs of distinct indices $\p i t \ne \p j t$.
Since $\sum_t C_{\p{(ij)}{t}}^2 \le R$ for some constant $R$, by \cref{lemma:projectionCorrelation}, we deduce that $|\Cijt i j t| \le n^{-1/4}$ for all $t > R \sqrt n$.

Consider the $(2p)$th centered moment $\EV \lp \f 1 n \sum_{i=1}^n \psi_i(y_i)\rp^{2p} = \EV n^{-2p} \sum_{\sigma: [2p] \to [n]} \prod_{a=1}^{2p} \psi_{\sigma(a)}(y_{\sigma(a)})$, where $y \sim \Gaus(0, C).$
We shall bound the sum to show that this moment is not too large.

First note the naive bound via AM-GM,
\begin{align*}
    \EV \left|\prod_{a=1}^{2p} \psi_{\sigma(a)}(y_{\sigma(a)})\right|
    &\le
        \EV \f 1 {2p} \sum_{a=1}^{2p} \psi_{\sigma(a)}(y_{\sigma(a)})^{2p}\\
    &\le
        \max_{i \in [n]} \EV_{y \sim \Gaus(0, 1)}[\psi_i(y)^{2p}]
        \\
    &=
        \max_{i \in [n]} \EV[(\phi_{i}(z_i) - \EV \phi_i(z_i))^{2p}: 
                        z_i \sim \Gaus(0, \Pi_{ii})]
        \\
    &\defeq
        B_{2p}.
        \numberthis
        \label{eqn:naiveAMGM}
\end{align*}
Now, for any collection of numbers $\{x_i \in \R \}_{i=1}^m$ and any $L > 0$, we have the trivial bound $\max_i |x_i| \le \lp \sum_{j=1}^m |x_j|^L \rp^{1/L}$, and this bound is tighter the larger $L$ is.
Thus $B_{2p} \le n^{1/L} \sqrt[L]{\f 1 n \sum_{i=1}^n (\EV[\psi_i(y)^{2p}])^L} \le n^{1/L} B_{2p,L}$, where $B_{2p,L} \defeq \sqrt[L]{\f 1 n \sum_{i=1}^n \EV[\psi_i(y)^{2pL}]}$, for any $L$.

\renewcommand{\CC}{\mathsf{C}}

We can categorize the $n^{2p}$ terms of $\EV \sum_{\sigma: [2p] \to [n]} \prod_{a=1}^{2p} \psi_{\sigma(a)}(y_{\sigma(a)})$ 
as follows.
Here we use $O(-)$ to hide any constant not depending on $n$ or the functions $\psi_i$.
\begin{itemize}
    \item Suppose $\sigma$ is injective.
    \begin{itemize}
        \item
            Suppose for each $a\ne b$, $(\sigma(a)\sigma(b)) = \p{(ij)}{t}$ for some $t > R\sqrt n$.
            By \cref{lemma:smallRhoMomentBound}, $\EV \prod_{a=1}^{2p} \psi_{\sigma(a)}(y_{\sigma(a)}) \le \CC \lp \prod_{r=1}^{2p} \|\psi_{\sigma(r)}\|_G\rp \lp n^{-1/4} \rp ^{p}$
            for some constant $\CC$ independent of $\{\psi_r\}_r$ and $n$.
            Thus the contribution of all such $\sigma$ to the sum is at most
            \begin{align*}
                \sum_\sigma \CC \lp \prod_{r=1}^{2p} \|\psi_{\sigma(r)}\|_G\rp n^{-p/4} 
                \le 
                O\lp n^{-p/4} \lp \sum_{i=1}^{n} \|\psi_{\sigma(r)}\|_G \rp^{2p}\rp.
            \end{align*}
        \item
            Suppose for some $a, b \in [2p]$, $(\sigma(a)\sigma(b)) = \p{(ij)}t$ for $t \le R\sqrt n$.
            There are at most $2\binom{2p}2 R \sqrt n \cdot n^{2p-2} = O(n^{2p-1.5})$ such $\sigma$ (indeed, there are $R \sqrt n$ of choosing such a $t$, $2\binom {2p} 2$ ways of choosing their preimages under $\sigma$ out of $2p$, and $\le n^{2p-2}$ ways of choosing the rest of the values of $\sigma$).
            By \cref{eqn:naiveAMGM}, the contribution of all such $\sigma$ to the sum is at most $O(n^{2p-1.5} B_{2p}).$
            
    \end{itemize}
    \item
        Suppose for some $a^* \ne b^*$ in $[2p]$, $\sigma(a^*) = \sigma(b^*)$, but $\sigma|_{[n] \setminus \{a^*, b^*\}}$ is injective and takes range outside $\{\sigma(a^*)\}$.
        There are $\binom {2p} 2 n \binom{n-1}{2p-2} = O(n^{2p-1})$ such $\sigma$.
        \begin{itemize}
            \item 
                Suppose for each $a\ne b$, $(\sigma(a)\sigma(b)) = \p{(ij)}{t}$ for some $t > R\sqrt n$, so that $|C_{\sigma(a)\sigma(b)}| \le n^{-1/4}.$
                We apply \cref{lemma:smallRhoMomentBound} to the functions $\{\psi_{\sigma(a^*)}^2\} \cup \{\psi_{\sigma(a)}\}_{a \not \in \{a^*, b^*\}}$, with $\psi_{\sigma(a^*)}^2$ being the sole function whose expectation is not 0, so that the $I$ of \cref{lemma:smallRhoMomentBound} has size $2p-2$, and the $\lambda$ of \cref{lemma:smallRhoMomentBound} is $(k-1)n^{-1/4}.$
                Then \cref{lemma:smallRhoMomentBound} gives
                \begin{align*}
                    \EV \prod_{a=1}^{2p} \psi_{\sigma(a)}(z_{\sigma(a)})
                        &\le
                            \CC
                            \|\psi^2_{\sigma(a^*)}\|_G
                            \lp \prod_{a\not\in\{a^*, b^*\}} \|\psi_{\sigma(a)}\|_G \rp
                            (n^{-1/4})^{(2p-2)/2}
                            \\
                        &=
                            \CC
                            \|\psi^2_{\sigma(a^*)}\|_G
                            \lp \prod_{a\not\in\{a^*, b^*\}} \|\psi_{\sigma(a)}\|_G \rp
                            n^{-(p-1)/4}
                \end{align*}
                for some constant $\CC$.
                Thus the collective contribution of such $\sigma$ to the sum is at most
                \begin{align*}
                    \sum_{\sigma} \CC
                            \|\psi^2_{\sigma(a^*)}\|_G
                            \lp \prod_{a\not\in\{a^*, b^*\}} \|\psi_{\sigma(a)}\|_G \rp
                            n^{-(p-1)/4}
                        &\le
                            O\lp 
                            n^{-(p-1)/4}
                            \lp \sum_{i=1}^n \|\psi_i^2\|_G\rp
                            \lp \sum_{i=1}^n \|\psi_i\|_G\rp^{2p-2}
                            \rp.
                \end{align*}
            \item
                 Suppose for some $a, b \in [2p]$, $(\sigma(a)\sigma(b)) = \p{(ij)}t$ for $t \le R\sqrt n$.
                 There are at most $\binom{2p}{2} n \cdot R\sqrt n \cdot \binom{n-2}{2p-3} = O(n^{2p-1.5})$ such $\sigma$.
                 Using \cref{eqn:naiveAMGM} again, we can upper bound the contribution of such $\sigma$ by $O(n^{2p-1.5} B_{2p}).$
        \end{itemize}
        \item Otherwise, there are more than one pair of inputs that collide under $\sigma$. 
        There are at most $O(n^{2p-2})$ such $\sigma$.
        Using \cref{eqn:naiveAMGM}, we upper bound their contributions by $O(n^{2p-2} B_{2p})$.
\end{itemize}
To summarize, 
\begin{align*}
    \EV \sum_{\sigma: [2p] \to [n]} \prod_{a=1}^{2p} \psi_{\sigma(a)}(y_{\sigma(a)})
    &\le
        O\lp
        n^{1.75p} B'_{2p}
        + n^{1.75(p-1)} B''_{2p}
        + n^{2p-1.5} B_{2p}
        \rp
        \\
    &\le
        O\lp
        n^{1.75p} B'_{2p}
        + n^{1.75(p-1)} B''_{2p}
        + n^{2p-1.5+1/L} B_{2p, L}
        \rp
        \\
    \EV \lp \f 1 n \sum_{i=1}^n \phi_i(z_i) - \EV \f 1 n \sum_{i=1}^n \phi_i(z_i)\rp^{2p}
        &\le
           O\lp
            n^{-0.25 p} B'_{2p}
            + n^{-0.25 p - 1.75} B''_{2p}
            + n^{-1.5} B_{2p}
            \rp
            \\
        &\le
           O\lp
            n^{-0.25 p} B'_{2p}
            + n^{-0.25 p - 1.75} B''_{2p}
            + n^{-1.5+1/L} B_{2p, L}
            \rp
\end{align*}
where
\begin{align*}
    B'_{2p}
        &=
            \lp \f 1 n \sum_{i=1}^{n} \|\psi_{i}\|_G \rp^{2p}
            \\
    B''_{2p}
        &=
            \lp \f 1 n \sum_{i=1}^n \|\psi_i^2\|_G\rp
            \lp \f 1 n \sum_{i=1}^n \|\psi_i\|_G\rp^{2p-2}.
\end{align*}

By the power mean inequality, we get that $B'_{2p}, B''_{2p} \le B_{2p} \le n^{1/L} B_{2p,L}$.
Substitution then gives the desired result.

\end{proof}

This theorem can be significantly strengthened with more careful case work and applying more involved versions of \cref{lemma:smallRhoMomentBound,lemma:largeRhoMomentBound}.

\begin{lemma}
Suppose $\phi: \R \to \R$ is polynomially bounded: $\forall x, |\phi(x)| \le C(1 + |x|^p)$ for some $C$.
Then for $y \sim \Gaus(0, 1)$,
\begin{align*}
    \left|\pd_\mu \Var \phi(\sigma y + \mu)\right|    
        &\le
            R\inv \sigma (1 + \sigma^{2p} + |\mu|^{2p})
\end{align*}
for a universal constant $R$ depending only on $p$ but not on $\mu$ or $\sigma$.
\end{lemma}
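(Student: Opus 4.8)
The plan is to reduce the $\mu$-derivative of the variance to a small combination of Gaussian expectations by differentiating under the integral sign, and then to bound those expectations crudely using the polynomial growth of $\phi$ together with the finiteness of all absolute moments of a standard Gaussian.

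First I would write $F(\mu) = \EV[\phi(\sigma y + \mu)]$ and $G(\mu) = \EV[\phi(\sigma y + \mu)^2]$, so that $\Var\phi(\sigma y+\mu) = G(\mu) - F(\mu)^2$ and $\pd_\mu\Var\phi(\sigma y+\mu) = G'(\mu) - 2F(\mu)F'(\mu)$. Changing variables to $z = \sigma y+\mu$ exhibits $F = \phi * k_\sigma$ and $G = \phi^2 * k_\sigma$ where $k_\sigma(t) = (2\pi\sigma^2)^{-1/2}e^{-t^2/(2\sigma^2)}$; since $\phi$ and $\phi^2$ are locally integrable with polynomial growth and $k_\sigma$ is a Schwartz function, $F$ and $G$ are smooth, and differentiating under the integral (using $k_\sigma'(t) = -t\sigma^{-2}k_\sigma(t)$) and changing variables back to $y=(z-\mu)/\sigma$ gives
\[
F'(\mu) = \tfrac1\sigma\EV[y\,\phi(\sigma y+\mu)], \qquad G'(\mu) = \tfrac1\sigma\EV[y\,\phi(\sigma y+\mu)^2].
\]
Hence $\pd_\mu\Var\phi(\sigma y+\mu) = \tfrac1\sigma\big(\EV[y\,\phi(\sigma y+\mu)^2] - 2\,\EV[\phi(\sigma y+\mu)]\,\EV[y\,\phi(\sigma y+\mu)]\big)$, which isolates the $\inv\sigma$ factor. (This convolution viewpoint is also what lets one avoid assuming $\phi$ is differentiable, so the lemma applies e.g. to $\phi$ a ReLU or a step function.)

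The remaining step is elementary estimation. Using $(a+b)^p \le c_p(a^p+b^p)$ for $a,b\ge0$ with $c_p = \max(1,2^{p-1})$, the hypothesis $|\phi(x)|\le C(1+|x|^p)$ yields $|\phi(\sigma y+\mu)| \le C_p(1 + \sigma^p|y|^p + |\mu|^p)$ and $\phi(\sigma y+\mu)^2 \le 3C_p^2(1 + \sigma^{2p}|y|^{2p} + |\mu|^{2p})$, with $C_p$ depending only on $C$ and $p$. Writing $m_q := \EV|y|^q < \infty$, each of the three expectations above is bounded termwise by a constant times $1+\sigma^{2p}+|\mu|^{2p}$: for instance $|\EV[y\,\phi(\sigma y+\mu)^2]| \le 3C_p^2(m_1 + \sigma^{2p}m_{2p+1} + |\mu|^{2p}m_1)$, and $|\EV[\phi(\sigma y+\mu)]|\cdot|\EV[y\,\phi(\sigma y+\mu)]| \le C_p^2(1+\sigma^pm_p+|\mu|^p)(m_1+\sigma^pm_{p+1}+|\mu|^pm_1)$, which after expanding and using $\sigma^p\le1+\sigma^{2p}$, $|\mu|^p\le1+|\mu|^{2p}$, $\sigma^p|\mu|^p\le\sigma^{2p}+|\mu|^{2p}$ is $\le R'(1+\sigma^{2p}+|\mu|^{2p})$. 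Collecting all constants into a single $R$ (depending on $p$ and $C$ only, not on $\mu$ or $\sigma$) gives $|\pd_\mu\Var\phi(\sigma y+\mu)| \le R\inv\sigma(1+\sigma^{2p}+|\mu|^{2p})$. The only point requiring any care — rather than being a genuine obstacle — is the justification of differentiation under the integral for merely measurable polynomially bounded $\phi$, which the $\phi * k_\sigma$ observation settles; everything else is routine bookkeeping of Gaussian moments.
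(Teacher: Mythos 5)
Your proposal is correct and follows essentially the same route as the paper: decompose $\pd_\mu\Var = \pd_\mu\EV\phi^2 - 2\,\EV\phi\cdot\pd_\mu\EV\phi$, use the Gaussian-density differentiation identity $\pd_\mu\EV\Phi(\sigma y+\mu)=\inv\sigma\EV[y\,\Phi(\sigma y+\mu)]$ to extract the $\inv\sigma$ factor, and bound the resulting expectations by polynomial growth plus finiteness of Gaussian moments. Your convolution justification of differentiating under the integral is a slightly more careful rendering of the same step the paper delegates to its Gaussian-derivative lemma; nothing is missing.
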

\begin{proof}
By \cref{lemma:gaussianDer},
\begin{align*}
    \pd_\mu \EV \phi(\sigma y + \mu)^2
        &=
            \inv \sigma \EV y \phi(\sigma y + \mu)^2
            \\
    \left|\pd_\mu \EV \phi(\sigma y + \mu)^2\right|
        &\le
            \inv \sigma C^2 \EV |y|(1 + |\sigma y + \mu|^p)^2
            \\
        &\le
            \inv \sigma C^2 \EV |y|(1 + 2^{p-1}(\sigma^p |y|^p + |\mu|^p))^2
            \\
        &\le
            \inv \sigma 3C^2 \EV |y|(1 + 2^{2p-2}(\sigma^{2p} |y|^{2p} + |\mu|^{2p}))
            \\
        &\le
            \inv \sigma C' (1 + \sigma^{2p} + |\mu|^{2p})
\end{align*}
where $C'$ depends only on $p$ but not on $\mu$ or $\sigma$.
Similarly, 
\begin{align*}
    \left|\pd_\mu \EV \phi(\sigma y + \mu)\right| 
        &\le
            \inv \sigma C'' (1 + \sigma^p + |\mu|^p)
            \\
    \left|\EV \phi(\sigma y + \mu)\right|
        &\le
            C''' (1 + \sigma^p + |\mu|^p)
\end{align*}
for constants $C'', C'''$ depending only on $p$ but not on $\mu$ or $\sigma$.
Therefore,
\begin{align*}
    \left|\pd_\mu \Var \phi(\sigma y + \mu)\right|
        &\le
            \left|\pd_\mu \EV \phi(\sigma y + \mu)^2\right|
            + 2 \left|\EV \phi(\sigma y + \mu)\right|
                \left|\pd_\mu \EV \phi(\sigma y + \mu)\right| 
            \\
        &\le
            \inv \sigma C' (1 + \sigma^{2p} + |\mu|^{2p})
            + 2C''' (1 + \sigma^p + |\mu|^p)
                \inv \sigma C'' (1 + \sigma^p + |\mu|^p)
            \\
        &\le
            C'''' \inv \sigma (1 + \sigma^{2p} + |\mu|^{2p})
\end{align*}
for constant $C''''$ depending only on $p$ but not on $\mu$ or $\sigma$.

\end{proof}

\section{Proof of Main Theorems}

\notransposeLimit*
\begin{proof}

WLOG we assume that all input vars appear in the beginning of the program.
We do induction on the length of the program.

\textbf{Base case.}
We show that for each $\cdc$, 
    $\f 1 {\nvar^{\cdc t}} \sum_{i=1}^{\nvar^{\cdc t}}\phi(\gvar^{\cdcin t}_i) \asto \EV \phi(z)$
where $\gvar^{\cdcin t}_i = (\gvar^{lt}_i)_{\gvar^l \in \cdcin} \disteq \Gaus(\mu^{\cdcin t}, K^{\cdcin t})$ and $z \sim \Gaus(\mu^{\cdcin\infty}, K^{\cdcin\infty})$.

For $\alpha$-controlled $\phi$, $\alpha \in [1, 2)$, for every $q> 0$, there is monotonic function $f$ such that $\EV_{x \sim \Gaus(\mu, K)} |\phi(x)|^q \le f(\|\mu\|_2, \|\Sigma\|_2)$ by \cref{lemma:alpExp} uniformly over all $\mu, K$.
If we let $X_{\nvar^{\cdc t}, i} = \phi(\gvar^{\cdcin t}_i) - \EV\phi(\gvar^{\cdcin t}_i)$, then $\f 1 {\nvar^{\cdc t}} \sum_{i=1}^{\nvar^{\cdc t}}\EV |X_{n, i}|^{2+\rho}$ is bounded uniformly over all $t$ as $\mu^{\cdcin t}$ and $K^{\cdcin t}$ are bounded uniformly over all $t$.
So by \cref{thm:SLLN}, $\f 1 {\nvar^{\cdc t}} \sum_{i=1}^{\nvar^{\cdc t}}\phi(\gvar^{\cdcin t}_i) - \EV\phi(\gvar^{\cdcin t}_i) \asto 0$.
Because $\EV_{z \sim \Gaus(\mu^{\cdcin t}, K^{\cdcin t})} \phi(z) \to \EV_{z \sim \Gaus(\mu^{\cdcin\infty}, K^{\cdcin\infty})} \phi(z)$ as $t \to \infty$ (for example by an easy application of dominated convergence), we have our desired result.

\textbf{Inductive Case.}
The inductive case for lines of type~\ref{linetype:lincomb} is obvious, as $\alpha$-controlled functions are closed under composition with linear transforms.

Suppose at time $t$, $g^t = \gvar^{Lt} = A^t h^t$ is a line of type~\ref{linetype:GLinear} and $h^t = f(\gvar^{l_{01}t}, \ldots, \gvar^{l_{0 k_0}t})$ is a line of type~\ref{linetype:nonlin} where $\gvar^{l_{01}t}, \ldots, \gvar^{l_{0 k_0}t}$ (resp. $A^t$) are some previous G-vars (resp. A-var).
(The case for $g^t = A^t g'{}^t$ for a G-var $g'{}^t$ can be reduced to this case by setting $f = \id$ and $h^t = f(g'{}^t)$).
Set $\cdc \defeq \cdc_1 \defeq \cdc(g) = \cdc_1(A)$ and $\cdc_2 \defeq \cdc(h) = \cdc_2(A).$
Suppose $g^i := A h^i$, $i = 1, \ldots, r$, are all previous lines of type \ref{linetype:GLinear} involving $A$.
Here $\{g^i\}_{i=1}^r$ are G-vars and $\{h^i\}_{i=1}^r$ are G- or H-vars, defined by \ref{linetype:nonlin} lines $h^i := f^i(\gvar^{l_{i1}}, \ldots, \gvar^{l_{ik_i}})$ for a collection of functions $\{f^i: \R \to \R\}_{i=1}^r.$
Let $G^t \defeq [g^{1t}|\cdots|g^{rt}] \in \R^{\nvar^{\cdc_1 t} \times r}, H^t \defeq [h^{1t}|\cdots |h^{rt}] \in \R^{\nvar^{\cdc_2 t} \times r}$, so that $G^t = A^t H^t.$
We will abuse notation and sometimes use $G$ to mean the collection of G-vars $\{g^j\}_{j=1}^r$.
Let $\Aa^t$ be the $\sigma$-algebra spanned by the G-vars appearing before $g$.
By the conditioning trick \cref{lemma:condTrick},
\begin{align*}
    g^t
        &\disteq_{\Aa^t}
            (G^t (H^t)^+ + \tilde A^t \Pi_{H^t}^\perp) h^t
\end{align*}
where $\tilde A^t$ is an independent copy of $A^t$ and $\Pi_{H^t} = H^t (H^t)^+$ is projection to the space spanned by the columns of $H^t$.
Each $i$th coordinate of $g^t$ is independent conditioned on $\Aa^t$, with mean $\mu_i^t \defeq G^t_{i:}(H^t)^+ h^t$ and standard deviation $\sigma^t \defeq \sigma^{At}\sqrt{\|\Pi_{H^t}^\perp h^t\|^2/\nvar_2(A^t)}$ where $\sigma^{At}$ is a shorthand for $\sigma^{\lno(A)t}$.
For simplicity, we assume $\sigma^{At} = 1$ for all $t$; the general case follows very easily from the reasoning below and the fact that $\sigma^{At}$ converges to a finite, nonzero value.

\begin{claim}
$(\sigma^t)^2 \asto (\sigma^\infty)^2 \defeq K^\cdc(g, g) - K^\cdc(g, G) K^\cdc(G, G)^+ K^\cdc(G, g)$.
\end{claim}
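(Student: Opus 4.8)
The idea is to rewrite $(\sigma^t)^2$ entirely in terms of empirical second moments of the H-vars $h^t, h^{1t}, \dots, h^{rt}$ feeding into lines of type~\ref{linetype:GLinear} with matrix $A$, apply the inductive hypothesis of \cref{thm:notransposeLimit} to each such moment, and then pass to the limit through the pseudoinverse using the almost sure rank convergence assumption. Concretely, take $\sigma^{At} = 1$ as in the text (the general case merely carries along the factor $(\sigma^{At})^2 \to (\sigma^{A\infty})^2$ already present in branch~4 of \cref{eqn:KRecur}). Since $\Pi_{H^t}^\perp = I - H^t (H^t)^+$ is a symmetric idempotent and $(H^t)^+ = ((H^t)^\trsp H^t)^+ (H^t)^\trsp$, one obtains
\begin{align*}
    (\sigma^t)^2 = \frac{\|h^t\|^2}{\nvar_2(A^t)} - \frac{(H^t{}^\trsp h^t)^\trsp\big((H^t)^\trsp H^t\big)^+(H^t{}^\trsp h^t)}{\nvar_2(A^t)} = \frac{\|h^t\|^2}{\nvar^{\cdc_2 t}} - (u^t)^\trsp (M^t)^+ u^t,
\end{align*}
where $u^t \defeq \tfrac{1}{\nvar^{\cdc_2 t}} H^t{}^\trsp h^t \in \R^r$, $M^t \defeq \tfrac{1}{\nvar^{\cdc_2 t}} (H^t)^\trsp H^t \in \R^{r \times r}$, and $\nvar_2(A^t) = \nvar^{\cdc_2 t}$ (recall $\cdc_2 = \cdc_2(A) = \cdc(h)$).

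\textbf{Identifying the limits via the inductive hypothesis.} Writing $h^t = f(\dots)$ and $h^{jt} = f^j(\dots)$ for the defining \ref{linetype:nonlin} functions, each entry of the right-hand side is an empirical average over $i$ of one of $f(\gvar^{\cdc_2 t}_i)^2$, $f(\gvar^{\cdc_2 t}_i)f^j(\gvar^{\cdc_2 t}_i)$, $f^j(\gvar^{\cdc_2 t}_i)f^k(\gvar^{\cdc_2 t}_i)$, where the relevant G-vars all occur before the current line. Products of $\alpha$-controlled functions are $\alpha$-controlled, so the inductive hypothesis, applied to the truncated program consisting of the lines preceding the current one, gives almost surely and simultaneously for the finitely many pairs $(j,k)$ that, with $Z \sim \Gaus(\mu^{\cdc_2}, K^{\cdc_2})$,
\begin{align*}
    \frac{\|h^t\|^2}{\nvar^{\cdc_2 t}} \asto \EV f(Z)^2, \qquad u^t_j \asto \EV f(Z) f^j(Z), \qquad M^t_{jk} \asto \EV f^j(Z) f^k(Z).
\end{align*}
Here the restrictions of $\mu, K$ computed in the truncated program coincide with $\mu^{\cdc_2}, K^{\cdc_2}$ on the relevant block, since \cref{eqn:muRecur,eqn:KRecur} reference only earlier lines. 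By branch~4 of \cref{eqn:KRecur} (with $\sigma^{A\infty}=1$) these limits are precisely $K^\cdc(g,g)$, $K^\cdc(g, g^j)$ and $K^\cdc(g^j, g^k)$; that is, $u^t \asto K^\cdc(g,G)^\trsp$ and $M^t \asto K^\cdc(G,G)$.

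\textbf{Passing to the limit in the pseudoinverse, and the main obstacle.} The one non-continuous operation is $M^t \mapsto (M^t)^+$, which jumps where the rank of $M^t$ changes, and this is exactly what the \emph{almost sure rank convergence} assumption is for. Apply it with the matrix $A$ and the collection $\Ss = \{h^1, \dots, h^r\}$ — so that its matrix ``$H^t$'' is ours and its limit ``$C^*$'' is $K^\cdc(G,G)$ — and note $\rank H^t = \rank\big((H^t)^\trsp H^t\big) = \rank M^t$; hence almost surely $\rank M^t = \rank K^\cdc(G,G)$ for all large $t$. On the stratum of matrices of a fixed rank the Moore--Penrose pseudoinverse is continuous, so on that almost sure event $(M^t)^+ \asto K^\cdc(G,G)^+$, and combining with the previous paragraph,
\begin{align*}
    (\sigma^t)^2 \asto K^\cdc(g,g) - K^\cdc(g,G) K^\cdc(G,G)^+ K^\cdc(G,g) = (\sigma^\infty)^2,
\end{align*}
which is automatically $\ge 0$ as a limit of nonnegative quantities. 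I expect the delicate step to be this last one — correctly invoking the rank assumption and the continuity of $(\cdot)^+$ on a fixed-rank stratum (together with the bookkeeping that the truncated-program kernels match $\mu^{\cdc_2}, K^{\cdc_2}$) — while the rest is algebra plus the inductive hypothesis.
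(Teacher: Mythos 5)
Your proof is correct and follows essentially the same route as the paper: expand $\Pi_{H^t} = H^t(H^t{}^\trsp H^t)^+H^t{}^\trsp$, identify the limits of $\|h^t\|^2/\nvar^{\cdc_2 t}$, $H^t{}^\trsp h^t/\nvar^{\cdc_2 t}$, and $H^t{}^\trsp H^t/\nvar^{\cdc_2 t}$ via the induction hypothesis and branch~4 of \cref{eqn:KRecur}, then pass through the pseudoinverse using almost sure rank convergence. Your explicit remark that the pseudoinverse is continuous only on fixed-rank strata is exactly the point the rank assumption is designed to handle, and the paper uses it in the same (if more terse) way.
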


\begin{claimproof}
Note that $(\sigma^t)^2 = \f 1 {\nvar^{\cdc_2 t}} (h^t{}^\trsp h^t - h^t{}^\trsp \Pi_{H^t} h^t) = \f 1 {\nvar^{\cdc_2 t}} (h^t{}^\trsp h^t - h^t{}^\trsp H^t (H^t{}^\trsp H^t)^+ H^t{}^\trsp h^t)$.
By induction hypothesis, because $f(z)^2$ is $\alpha$-controlled for some $\alpha <2$, 
\begin{align*}
\f 1 {\nvar^{\cdc_2 t}} h^t{}^\trsp h^t 
    = 
        \f 1 {\nvar^{\cdc_2 t}} \sum_{i=1}^{\nvar^{\cdc_2 t}} f(\gvar^{l_{01}t}, \ldots, \gvar^{l_{0 k_0}t})^2 
    \asto
        \EV[ f(z^{l_{01}}, \ldots, z^{l_{0 k_0}})^2: z \sim \Gaus(\mu^\cdc, K^\cdc)]
    =
        K^\cdc(g, g),
\end{align*}
where $z = (z^{l})_{\gvar^l \in \cdc}$.
Likewise, because both $f$ and $\{f^j\}_j$ are $\alpha$-controlled jointly for some $\alpha < 2$, by induction hypothesis,
\begin{align*}
\f 1 {\nvar^{\cdc_2 t}} h^t{}^\trsp h^{jt}
    &=
        \f 1 {\nvar^{\cdc_2 t}} \sum_{i=1}^{\nvar^{\cdc_2 t}} 
            f(\gvar^{l_{01}t}, \ldots, \gvar^{l_{0 k_0}t})
            f^j(\gvar^{l_{j1}t}, \ldots, \gvar^{l_{jk_j} t}) \\
    &\asto
        \EV[f(z^{l_{01}}, \ldots, z^{l_{0 k_0}})
            f^j(z^{l_{j1}}, \ldots, z^{l_{jk_j}})
            : z \sim \Gaus(\mu^\cdc, K^\cdc)]\\
    &=
        K^\cdc(g, g^j)\\
\f 1 {\nvar^{\cdc_2 t}} h^{j't}{}^\trsp h^{jt}
    &=
        \f 1 {\nvar^{\cdc_2 t}} \sum_{i=1}^{\nvar^{\cdc_2 t}} 
            f^{j'}(\gvar^{l_{j'1}t}, \ldots, \gvar^{l_{j k_{j'}}t})
            f^j(\gvar^{l_{j1}t}, \ldots, \gvar^{l_{j k_j} t}) \\
    &\asto
        \EV[f^{j'}(z^{l_{j'1}}, \ldots, z^{l_{j' k_{j'}}})
            f^j(z^{l_{j1}}, \ldots, z^{l_{jk_j}})
            : z \sim \Gaus(\mu^\cdc, K^\cdc)]\\
    &=
        K^\cdc(g^{j'}, g^j).
\end{align*}
Finally, by the rank convergence assumption
we get $(H^t {}^\trsp H^t)^+ \asto K^\cdc(G, G)^+$.
\end{claimproof}

\begin{claim}
$(H^t)^+ h^t \asto v^\infty \defeq K^\cdc(G, G)^+ K^\cdc(G, g)$.
\end{claim}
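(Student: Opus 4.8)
The plan is to reduce the claim to the two almost sure limits already in hand (the entrywise limits of $\f 1 {\nvar^{\cdc_2 t}} H^t{}^\trsp H^t$ and of $\f 1 {\nvar^{\cdc_2 t}} H^t{}^\trsp h^t$ computed in the previous claim), together with the convergence of the relevant pseudoinverse. First I would apply the standard identity $B^+ = (B^\trsp B)^+ B^\trsp$ to $B = H^t$ and pull the normalization out of the pseudoinverse using $(cB)^+ = c^{-1} B^+$ for the scalar $c = \nvar^{\cdc_2 t} > 0$:
\begin{align*}
    (H^t)^+ h^t
        = (H^t{}^\trsp H^t)^+ H^t{}^\trsp h^t
        = \f 1 {\nvar^{\cdc_2 t}}\left(\f{H^t{}^\trsp H^t}{\nvar^{\cdc_2 t}}\right)^{\!\!+} H^t{}^\trsp h^t
        = \left(\f{H^t{}^\trsp H^t}{\nvar^{\cdc_2 t}}\right)^{\!\!+}\f{H^t{}^\trsp h^t}{\nvar^{\cdc_2 t}}.
\end{align*}

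Next I would identify the a.s. limit of each of the two factors. The $(j',j)$ entry of $\f 1 {\nvar^{\cdc_2 t}} H^t{}^\trsp H^t$ is $\f 1 {\nvar^{\cdc_2 t}} h^{j't}{}^\trsp h^{jt}\asto K^\cdc(g^{j'},g^j)$ by the induction hypothesis (as in the preceding claim), so $\f 1 {\nvar^{\cdc_2 t}} H^t{}^\trsp H^t \asto K^\cdc(G,G)$ entrywise; invoking almost sure rank convergence exactly as in the proof of the previous claim gives $\rank\f 1 {\nvar^{\cdc_2 t}} H^t{}^\trsp H^t = \rank K^\cdc(G,G)$ for all large $t$ a.s., and since the Moore--Penrose pseudoinverse is continuous on the set of matrices of any fixed rank, $\left(\f 1 {\nvar^{\cdc_2 t}} H^t{}^\trsp H^t\right)^{\!+}\asto K^\cdc(G,G)^+$ — this is precisely the conclusion $(H^t{}^\trsp H^t)^+\asto K^\cdc(G,G)^+$ established there. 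Likewise, the $j$th entry of $\f 1 {\nvar^{\cdc_2 t}} H^t{}^\trsp h^t$ is $\f 1 {\nvar^{\cdc_2 t}} h^{jt}{}^\trsp h^t\asto K^\cdc(g,g^j)$, so $\f 1 {\nvar^{\cdc_2 t}} H^t{}^\trsp h^t \asto K^\cdc(G,g)$.

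Finally, since the product of two almost surely convergent (matrix/vector) sequences converges almost surely to the product of the limits, I conclude $(H^t)^+ h^t \asto K^\cdc(G,G)^+ K^\cdc(G,g) = v^\infty$, as claimed. The only delicate step is the convergence of the pseudoinverse, which fails without a rank hypothesis but holds here precisely because almost sure rank convergence forces $\f 1 {\nvar^{\cdc_2 t}} H^t{}^\trsp H^t$ to have the same rank as its limit for large $t$; since this was already dispatched in the previous claim, no additional argument is needed and the rest is routine.
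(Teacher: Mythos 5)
Your proposal is correct and follows essentially the same route as the paper, which simply writes $(H^t)^+ h^t = (H^t{}^\trsp H^t)^+ H^t{}^\trsp h^t$ and invokes the limits and rank-convergence argument from the preceding claim. You have merely made explicit the normalization by $\nvar^{\cdc_2 t}$ and the continuity of the pseudoinverse on fixed-rank matrices, both of which the paper leaves implicit.
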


\begin{claimproof}
This is similar to the above.
$(H^t)^+ h^t = (H^t{}^\trsp H^t)^+ H^t{}^\trsp h^t\asto K^\cdc(G, G)^+ K^\cdc(G, g).$
\end{claimproof}

Let $L = \lno(g).$
Let $\phi: \R^{|\cdclt L|} \to \R$ be $\alpha$-controlled with $\alpha \in [1, 2)$, such that for coefficients $C, c > 0$, $|\phi(x)| \le e^{C\sum_i |x_i|^\alpha + c}$.
Since for every $q > 0$,
\begin{align*}
    \EV \left[\left. |\phi(g^t_i, \gvar_i^{\cdclt L t})|^q \right| \Aa^t\right]
        &=
            \EV\left[ |\phi(\mu^t_i + \sigma^t z, 
                    \gvar_i^{\cdclt L t}
                    )|^q: z \sim \Gaus(0, 1)
            \right]\\
        &\le
            \EV_z e^{Cq
                \left(
                    |\mu_i^t + \sigma^t z|^\alpha + \sum_{\hat g \in \cdclt L} |\hat g^t_i|^\alpha
                \right) + cq}\\
        &\le
            \EV_z e^{Cq \alpha
                \left(
                    |\mu_i^t|^\alpha + |\sigma^t z|^\alpha + \sum_{\hat g \in \cdclt L} |\hat g^t_i|^\alpha
                \right) + cq}\\
        &=
            e^{C q \alpha \left(
                    |\mu_i^t|^\alpha+ \sum_{\hat g \in \cdclt L} |\hat g^t_i|^\alpha
                \right) + cq}
            \EV_z e^{Cq\alpha (\sigma^t)^\alpha |z|^\alpha}\\
        &= 
            e^{C q \alpha \left(
                    |\mu_i^t|^\alpha+ \sum_{\hat g \in \cdclt L} |\hat g^t_i|^\alpha
                \right) + cq}
            \alpExp_\alpha^1(Cq\alpha (\sigma^t)^\alpha),
\end{align*}
we have
\begin{align*}
    \f 1 {\nvar^{\cdc t}} \sum_{i=1}^{\nvar^{\cdc t}}
        \EV \left[\left. |\phi(g^t_i, \gvar^{\cdclt{L}t}_i)|^q \right| \Aa^t\right]
    &\le
        \f 1 {\nvar^{\cdc t}} 
            \alpExp_\alpha^1(Cq\alpha (\sigma^t)^\alpha)
        \sum_{i=1}^{\nvar^{\cdc t}}
        e^{C q \alpha \left(
                    |\mu_i^t|^\alpha+ \sum_{\hat g \in \cdclt L} |\hat g^t_i|^\alpha
                \right) + cq}\\
    &\le
        \f 1 {\nvar^{\cdc t}} 
            \alpExp_\alpha^1(Cq\alpha (\sigma^t)^\alpha)
        \sum_{i=1}^{\nvar^{\cdc t}}
        e^{C' q \alpha \left(
                    \sum_j |v^t_j g^{jt}_i|^\alpha+ \sum_{\hat g \in \cdclt L} |\hat g^t_i|^\alpha
                \right) + cq}\\
    &\le
        \f 1 {\nvar^{\cdc t}} 
            \alpExp_\alpha^1(Cq\alpha (\sigma^t)^\alpha)
        \sum_{i=1}^{\nvar^{\cdc t}}
        e^{C' q \alpha \left(
                    \sum_j (|v^\infty_j|+1) |g^{jt}_i|^\alpha+ \sum_{\hat g \in \cdclt L} |\hat g^t_i|^\alpha
                \right) + cq}\\
    &\pushright{\text{for large enough $t$, almost surely}}
        \\
    &\defeq
        \f 1 {\nvar^{\cdc t}} 
            \alpExp_\alpha^1(Cq\alpha (\sigma^t)^\alpha)
        \sum_{i=1}^{\nvar^{\cdc t}}
            \psi(\gvar_i^{\cdclt L t})
        \numberthis
        \label{eqn:asUniformlyBoundedMoment}
\end{align*}
for some constant $C'$, where $\psi$ is a $\alpha$-controlled function.
Thus by our claims above, this converges as $\nvar^{\cdc t} \to \infty$ to
\begin{align*}
    \alpExp_\alpha^1(Cq\alpha (\sigma^\infty)^\alpha)
    \EV_{z \sim \Gaus(\mu^\cdc, K^\cdc)}
        \psi( z^{\cdclt L}),
\end{align*}
where $z^{\cdclt L} = (z^l)_{l \in \cdc, l < L}$.
Therefore it is also almost surely uniformly bounded in $t$, so that with $q = 2 + \rho$, we can apply \cref{thm:SLLN} to $X_{\nvar^{\cdc t}, i} = \phi(g^t_i, \gvar^{\cdclt L t}_i) - \EV\left[\left.\phi(g^t_i, \gvar^{\cdclt L t}_i) \right| \Aa^t\right]$ to conclude that
\begin{align*}
    \f 1 {\nvar^{\cdc t}} 
    \sum_{i=1}^{\nvar^{\cdc t}}
        \phi(g^t_i, \gvar^{\cdclt L t}_i)
         - \EV\left[\left.\phi(g^t_i, \gvar^{\cdclt L t}_i) \right| \Aa^t\right]
         &\asto 
            0.
\end{align*}

After applying the following claim and the induction hypothesis on 
$$\gvar^{\cdclt L t}_i \mapsto \EV\left[\phi\lp\sum_j g^{jt}_i v_j^\infty + \sigma^\infty z, \gvar^{\cdclt L t}_i\rp : z \sim \Gaus(0, 1)\right],$$
we get
\begin{align*}
    \f 1 {\nvar^{\cdc t}} 
    \sum_{i=1}^{\nvar^{\cdc t}}
        \phi(g^t_i, \gvar^{\cdclt L t}_i)
         &\asto 
            \EV\left[
                \phi\left(\sum_{j=1}^r \zeta^{\lno(g^{j})} v_j^\infty + \sigma^\infty z, \zeta^{\cdclt L}\right)
                :
                z \sim \Gaus(0, 1), \zeta \sim \Gaus(\mu^\cdc, K^\cdc)
            \right].
\end{align*}
This yields the desired theorem by noting that conditioned on $\zeta^{\cdclt L}$, $\sum_j \zeta^{\lno(g^{j})} v_j^\infty + \sigma^\infty z \disteq \zeta^L$ via \cref{prop:GaussianCondition}.

\begin{claim}
 $
    \f 1 {\nvar^{\cdc t}} 
    \sum_{i=1}^{\nvar^{\cdc t}}
    \EV\left[\left.\phi(g^t_i, \gvar^{\cdclt L t}_i) \right| \Aa^t\right] - \EV\left[\phi(\sum_j g^{jt}_i v_j^\infty + \sigma^\infty z, \gvar^{\cdclt L t}_i) : z \sim \Gaus(0, 1)\right] \asto 0$.
\end{claim}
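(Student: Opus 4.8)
The plan is to recognize both averaged quantities as Gaussian‑smoothed versions of $\phi$ whose parameters converge, and then to conclude by a domination argument driven by the induction hypothesis. First I would rewrite the conditional expectation: by the conditioning trick (\cref{lemma:condTrick}) and the description of the law of $g^t$ given $\Aa^t$, each coordinate $g^t_i$ is, conditionally on $\Aa^t$, an independent $\Gaus(\la g^{Gt}_i, v^t\ra,\,(\sigma^t)^2)$ variable, where $g^{Gt}_i := (g^{1t}_i,\dots,g^{rt}_i)$ and $v^t := (H^t)^+ h^t$. Hence $\EV[\phi(g^t_i,\gvar^{\cdclt{L}t}_i)\mid\Aa^t] = \Psi_{v^t,\sigma^t}(g^{Gt}_i,\gvar^{\cdclt{L}t}_i)$, where I abbreviate $\Psi_{v,\sigma}(x,y) := \EV_{z\sim\Gaus(0,1)}\phi(\la x,v\ra + \sigma z,\, y)$, and the target summand is exactly $\Psi_{v^\infty,\sigma^\infty}(g^{Gt}_i,\gvar^{\cdclt{L}t}_i)$. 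The two claims already established give $v^t\asto v^\infty$ and $(\sigma^t)^2\asto(\sigma^\infty)^2$, so $\delta_t := \|v^t-v^\infty\|_2 + |(\sigma^t)^2-(\sigma^\infty)^2| \asto 0$.

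The heart of the matter is then to exhibit a single $\alpha$-controlled function $\Theta$ with $\alpha\in[1,2)$ such that, almost surely for all large $t$,
\[
  |\Psi_{v^t,\sigma^t}(x,y) - \Psi_{v^\infty,\sigma^\infty}(x,y)| \le \delta_t\,\Theta(x,y).
\]
Granting this, $\bigl|\tfrac{1}{\nvar^{\cdc t}}\sum_i(\Psi_{v^t,\sigma^t}-\Psi_{v^\infty,\sigma^\infty})(g^{Gt}_i,\gvar^{\cdclt{L}t}_i)\bigr| \le \delta_t\cdot\tfrac{1}{\nvar^{\cdc t}}\sum_i\Theta(g^{Gt}_i,\gvar^{\cdclt{L}t}_i)$, and since $\{g^1,\dots,g^r\}\subseteq\cdclt{L}$ the induction hypothesis makes the average of $\Theta$ converge a.s.\ to a finite limit, hence stay a.s.\ bounded, so the product tends to $0$. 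When $\sigma^\infty>0$ (so $\sigma^t\in[c,C]$ and $\|v^t\|_2\le C$ for large $t$), I would obtain the displayed bound by writing $\Psi_{v,\sigma}(x,y)=\int\phi(w,y)\,p_{\la x,v\ra,\sigma^2}(w)\,dw$ for the $\Gaus(m,\sigma^2)$ density $p_{m,\sigma^2}$ — valid for merely measurable $\alpha$-controlled $\phi$ — and applying the mean value theorem along the parameter segment from $(\la x,v^t\ra,(\sigma^t)^2)$ to $(\la x,v^\infty\ra,(\sigma^\infty)^2)$. The derivatives $\partial_m p$ and $\partial_{\sigma^2}p$ are (polynomials of bounded degree in $w,m$, with coefficients controlled by $c,C$) times $p_{m,\sigma^2}$; integrating $|\phi(w,y)|\le e^{C'|w|^\alpha+c'(1+\|y\|^\alpha)}$ against these and invoking \cref{lemma:alpExp} yields an $\alpha$-controlled function of $m$ and $y$, hence of $x,y$, while the mean value prefactor is $\le (1+\|x\|_2)\delta_t$ and the spare polynomial factor is absorbed into $\Theta$ since $\alpha\ge1$.

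The remaining case $\sigma^\infty=0$ is the one I expect to be the main obstacle: there the target summand is the \emph{un-mollified} $\phi(\la x,v^\infty\ra,y)$, and because $\phi$ is only $\alpha$-controlled (with no smoothness assumed), the estimate just sketched fails as $\sigma^t$ may tend to $0$ and $\phi$'s discontinuities are no longer smeared out. I would handle it by approximating $\phi$ in $L^1$ against every relevant Gaussian law by a continuous $\alpha$-controlled $\phi_\epsilon$: on the $\Psi_{v^\infty,0}$ side the resulting error is controlled because, by the almost sure rank convergence hypothesis, $\la \zeta^G, v^\infty\ra$ (with $\zeta\sim\Gaus(\mu^\cdc,K^\cdc)$) is an a.s.\ nondegenerate Gaussian, so its law is absolutely continuous and approximate-identity/Lebesgue-point arguments apply; on the $\Psi_{v^t,\sigma^t}$ side the error is controlled by the same domination as above together with the smallness of $\|\phi-\phi_\epsilon\|$. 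With $\phi$ replaced by the continuous $\phi_\epsilon$, Gaussian mollification is uniformly continuous in $(m,\sigma)$ down to $\sigma=0$ on compacta, so the mean value scheme applies, and one closes by letting $\epsilon\to0$. In short, the nondegenerate case is routine ``Gaussian averaging is smooth'' bookkeeping, and the only real difficulty is recovering enough regularity in the rank-deficient case — which is precisely the role of the almost sure rank convergence assumption.
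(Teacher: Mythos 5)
For the nondegenerate case $\sigma^\infty>0$ your argument is essentially the paper's: both quantities are recognized as Gaussian mollifications $\Phi(m,y;\sigma)=\EV_{z}\,\phi(m+\sigma z,y)$ of $\phi$ evaluated at converging parameters, the increment is controlled by differentiating the Gaussian density in its mean (and variance) via \cref{lemma:gaussianDer} together with a mean-value bound, the result is $\delta_t$ times an $\alpha$-controlled function of $(g^{1t}_i,\dots,g^{rt}_i,\gvar^{\cdclt L t}_i)$, and the induction hypothesis keeps the average of that dominating function a.s.\ bounded so that the $\delta_t$ prefactor kills it. The only place you diverge is your explicit treatment of $\sigma^\infty=0$; the paper silently skips this case (its bound carries the same $\sigma^{-1}$ factor, and its closing remark about replacing $\sigma^t$ by $\sigma^\infty$ likewise presupposes $\sigma^\infty>0$). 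Your instinct that this is the delicate branch is right, but the repair you sketch has a flaw: almost sure rank convergence is a statement about $\rank H^t$ versus $\rank C^*$ and does not imply that $\la\zeta^G,v^\infty\ra$ is a nondegenerate Gaussian --- for instance the conditional variance and $v^\infty$ can both vanish, in which case the limiting law of $\gvar^{Lt}_i$ given $\Aa^t$ is a point mass, no absolute-continuity or Lebesgue-point argument is available, and a merely measurable $\alpha$-controlled $\phi$ can genuinely distinguish $\EV\phi(\mu^t_i+\sigma^t z,\cdot)$ from $\phi(\mu^\infty_i,\cdot)$. That branch would need either a continuity hypothesis on $\phi$ or an argument that degeneracy in the limit forces exact degeneracy at finite $t$; since the paper itself elides the case, I would treat this as a shared loose end rather than a defect specific to your proof.
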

    
\begin{claimproof}
From the claims above we have that almost surely, $g^t_i \disteq_{\Aa^t} \sum_j g^{jt}_i v_j^\infty + o(1)(\sum_j |g^{jt}_i|) + (1 + o(1))z,$ where $o_t(1)$ is a quantity that decreases to 0 with $t$ and doesn't depend on $i$.
Let $\Phi(x^L, x^{\cdclt L}; \sigma) \defeq \EV[\phi(x^L + \sigma z, x^{\cdclt L}): z \sim \Gaus(0, 1)] = \EV[\phi(z, x^{\cdclt L}): z \sim \Gaus(x^L, \sigma^2)].$
Then by \cref{lemma:gaussianDer}, 
$\Phi$ is differentiable in $x^L$ and $\pd_{x^L} \Phi(x^L, x^{\cdclt L}; \sigma)
= \sigma^{-1}\EV[z\phi(x^L + \sigma z, x^{\cdclt L}): z \sim \Gaus(0, 1)].$
Clearly, 
\begin{align*}
|\pd_{x^L} \Phi(x^L, x^{\cdclt L}; \sigma)|
    &\le
        \sigma^{-1}\EV[|z\phi(x^L + \sigma z, x^{\cdclt L})|: z \sim \Gaus(0, 1)]\\
    &\le
        \sigma^{-1}\EV[|z|e^{C\left(
                    |x^L + \sigma z|^\alpha
                    + \|x^{\cdclt L}\|^\alpha_\alpha
                \right) + c}: z \sim \Gaus(0, 1)]\\
    &\le
        \sigma^{-1}
        e^{C\alpha\left(
                    |x^L|^\alpha
                    + \|x^{\cdclt L}\|^\alpha_\alpha
                \right) + c}
        \EV[|z|e^{C \alpha\sigma^\alpha |z|^\alpha}: z \sim \Gaus(0, 1)]\\
    &\defeq
        \sigma^{-1}
        e^{C\alpha\left(
                    |x^L|^\alpha
                    + \|x^{\cdclt L}\|^\alpha_\alpha
                \right) + c}
        R(\sigma)\\
|\Phi(x^L, x^{\cdclt L}; \sigma)
- \Phi(x^L + \epsilon, x^{\cdclt L}; \sigma)|
    &=
        \left|
            \int_{x^L}^{x^L+\epsilon} \dd \xi \ 
                \pd_\xi \Phi(\xi, x^{\cdclt L}; \sigma)
        \right|\\
    &\le
        \sigma^{-1}R(\sigma)
        \int_{x^L}^{x^L+\epsilon} |\dd \xi| \ 
            e^{C\alpha\left(
                        |\xi|^\alpha
                        + \|x^{\cdclt L}\|^\alpha_\alpha
                    \right) + c}
            \\
    &\le
        \sigma^{-1}R(\sigma)
        \int_{0}^{\epsilon} |\dd \xi| \ 
            e^{C\alpha^2\left(
                        |x^L|^\alpha + |\xi|^\alpha
                        + \|x^{\cdclt L}\|^\alpha_\alpha
                    \right) + c}
            \\
    &\le
        \sigma^{-1}R(\sigma)
        |\epsilon| 
        e^{C\alpha^2\left(
            |x^L|^\alpha + |\epsilon|^\alpha
            + \|x^{\cdclt L}\|^\alpha_\alpha
        \right) + c}.
\end{align*}
Hence
\begin{align*}
    &\phantomeq
    \left|\f 1 {\nvar^{\cdc t}} 
    \sum_{i=1}^{\nvar^{\cdc t}}
        \EV\left[\left.\phi(g^t_i, \gvar^{\cdclt L t}_i) \right| \Aa^t\right] - \EV\left[\left.\phi(\sum_j g^{jt}_i v_j^\infty + \sigma^t z, \gvar^{\cdclt L t}_i) \right| z \sim \Gaus(0, 1)\right]
    \right|
    \\
    &\le
        \f 1 {\nvar^{\cdc t}} 
        \sum_{i=1}^{\nvar^{\cdc t}}
            (\sigma^t)^{-1} R(\sigma^t) o_t(1)(\sum_j |g^{jt}_i|)
            e^{C\alpha^2\left(
                \left|\sum_j g^{jt}_i v_j^\infty\right|^\alpha
                + o_t(1)\lp\sum_j |g^{jt}_i|\rp^\alpha
                + \|\gvar_i^{\cdclt L t}\|^\alpha_\alpha
            \right) + c}\\
    &\le
        (\sigma^t)^{-1} R(\sigma^t) o_t(1)
        \f 1 {\nvar^{\cdc t}} 
        \sum_{i=1}^{\nvar^{\cdc t}}
        \Psi(\gvar_i^{\cdclt L t})
\end{align*}
for some $\alpha$-controlled function $\Psi$.
By induction hypothesis, $\f 1 {\nvar^{\cdc t}} 
        \sum_{i=1}^{\nvar^{\cdc t}}
        \Psi(\gvar_i^{\cdclt L t})$ converges almost surely, so that the entire quantity decreases to 0 due to $o_t(1)$.
A similar argument shows that we can replace $\sigma^t$ with $\sigma^\infty$.
\end{claimproof}

\end{proof}

\gradIndep*

Note that we impose a more stringent condition here, compared to \cref{thm:notransposeLimit}, that $\fvar^l$ are polynomial bounded, because, as it will be apparent from the reasoning below, we need to reason about compositions of $\phi$ and $\fvar^l$; if we still allow $\fvar^l$ and $\phi$ to be $\alpha$-controlled in general, then their composition in general is not integrable against Gaussian measures.
\begin{proof}
\cref{thm:notransposeLimit} already show that this is true for all $\gvar^l$ in $\pi$.
Because we assume that $v^{it}$ are sampled independently from $x^{lt}$, this is also true up to line $L+L_A+\Lnabla$.
We induct on the line number starting from $\ell = L +L_A+\Lnabla+1$.

If line $\ell$ does not produce a new G-var, then there is nothing to prove.

If line $\ell$ is of type \ref{linetype:lincomb}, then the induction hypothesis is obviously true.

In the following, suppose line $\ell$ is of type \ref{linetype:GLinear}.

\noindent\textbf{Setup}
This line involves a transposed matrix, $\ell: \gvar^{\ell} := \Avar^{L+a} \hvar^l$, where $\Avar^{L+a}=(\Avar^a)^\trsp$ by line $L+a$ and $l > L$ (the argument for $\gvar^l$ instead of $\hvar^l$ is similar and simpler).
Let $\cdc = \cdc_1 = \cdc_1(\Avar^{L+a}) = \cdc(\gvar^\ell)$ and $\cdc_2 = \cdc_2(\Avar^{L+a}) = \cdc(\hvar^l).$
Conditioning on all G-vars that appeared before, we have constraints of the form $g^{it} = \Avar^{L+a,t} h^{it}$ for $i = 1, \ldots, r$, and $g'{}^{it} = \Avar^{at} h'{}^{it}$ for $i = 1, \ldots, s$.
Here $\{g^i\}_{i=1}^r$ and $\{g'{}^i\}_{i=1}^s$ are previous G-vars and $\{h^i\}_{i=1}^r$ and $\{h'{}^i\}_{i=1}^s$ are previous G- or H-vars.
Letting $G^t = [g^{1t}|\cdots|g^{rt}] \in \R^{\nvar^{\cdc_1 t} \times r}$ (where $g^{it}$ are treated as column vectors), and similarly for $H^t\in \R^{\nvar^{\cdc_2 t} \times r}, G'{}^t\in \R^{\nvar^{\cdc_2 t} \times s}, H'{}^t\in \R^{\nvar^{\cdc_1 t} \times s}$, we get the expressions
$G^t=\Avar^{L+a,t}H^t, G'{}^t=\Avar^{at} H'{}^t$.
We will abuse notation and sometimes use $G$ to also denote the corresponding collection of G-vars; likewise for $G', H, H'$.
By the construction of $\tilde \pi$, $g^{i} = \Avar^{L+a} h^{i}$ are lines that appear after $\pi$, and $g'{}^{i} = \Avar^{a} h'{}^{i}$ are lines that appear in $\pi$.
In addition, for each $i \in [r]$, $h_i$ is an H-var that appears after $\pi$.

Let $\Aa^t$ be the $\sigma$-algebra spanned by the values of all G-vars that appeared before line $\ell$ at time $t$.
By the conditioning trick, we have
\begin{align*}
    \gvar^{\ell t} 
        &\disteq_{\Aa^t}
            (E^t + \Pi_{H'{}^t}^\perp \tilde A^t \Pi_{H^t}^\perp) \hvar^{lt}
        &\\
        &\text{with}
            \\
    E^t
        &=
            G^t H^t{}^+
            + H'{}^t{}^{+\trsp} G'{}^t{}^\trsp
            - H'{}^t{}^{+\trsp} G'{}^t{}^{\trsp} H^t H^t{}^+
            \\
        &=
            G^t(H^t{}^\trsp H^t)^+ H^t{}^\trsp + H'{}^t(H'{}^t{}^\trsp H'{}^t)^+ G'{}^t{}^\trsp
            - H'{}^t(H'{}^t{}^{\trsp} H'{}^t)^+ G'{}^t{}^{\trsp} H^t(H^t{}^\trsp H^t)^+ H^t{}^\trsp
\end{align*}
where $\tilde A^t$ is sampled independently and identically as $\Avar^{L+a,t}$.
Note that
\begin{align*}
    \gvar^{\ell t} &\disteq_{\Aa^t} \mu^t + \sigma^t \Pi^\perp_{H'{}^t} y,\text{ with $y \sim \Gaus(0, I_{\nvar^{\cdc_1 t}})$}\\
    \mu^t
        &\defeq E^t \hvar^{lt}\\
    (\sigma^t)^2 
        &\defeq (\sigma^{at})^2\f{\nvar^{\cdc_2 t}}{\nvar^{\cdc_1 t}} \|\Pi_{H^t}^\perp \hvar^{lt}\|^2/\nvar^{\cdc_2 t}
\end{align*}
where, to recall, $(\sigma^{at})^2/\nvar_2(\Avar^{at}) = (\sigma^{at})^2/\nvar_1(\Avar^{L+a,t})$ is the sampling variance of each entry of $\Avar^{at}$ and $\Avar^{L+a,t}.$
For brevity, we use the following shorthands
\begin{align*}
    \Sigma^t
        &\defeq H^t {}^\trsp H^t / \nvar^{\cdc_2 t}
            \in \R^{r \times r}
            &
    \Sigma'{}^t
        &\defeq H'{}^t{}^\trsp H'{}^t / \nvar^{\cdc_1 t}
            \in \R^{s \times s}
            &
    \Upsilon^t
        &\defeq
            G'{}^t{}^{\trsp} H^t / \nvar^{\cdc_2 t}
            \in \R^{s \times r}
            \\
    \omega^t
        &\defeq
            H^t{}^\trsp \hvar^{lt} / \nvar^{\cdc_2 t}
            \in \R^{r}
            &
    \beta^t
        &\defeq
            G'{}^t{}^\trsp \hvar^{lt}/ \nvar^{\cdc_2 t}
            \in \R^{s}
\end{align*}
so that $$
\mu^t = 
    G^t \Sigma^{t}{}^+ \omega^t 
    + \f{\nvar^{\cdc_2 t}}{\nvar^{\cdc_1 t}} H'{}^t \Sigma'{}^t{}^+ \beta^t 
    - \f{\nvar^{\cdc_2 t}}{\nvar^{\cdc_1 t}} H'{}^t \Sigma'{}^t{}^+ \Upsilon^t \Sigma^t{}^+ \omega^t.$$
By induction hypothesis, $\Sigma^t, \Sigma'{}^t, \Upsilon^t, \omega^t, \beta^t$ all converge almost surely to corresponding limit values:
Let $\alpha = \alpha_{\cdc_2, \cdc_1} = \lim_{t \to \infty} \f{\nvar^{\cdc_2 t}}{\nvar^{\cdc_1 t}}$;
if $\lambda_i = \lno(h^i), \lambda_i' = \lno(h'{}^i)$, then, with $Z \sim \Gaus(\mu^\cdc, K^\cdc)$,
\begin{align*}
    \Sigma^t_{ij} 
        &\asto
            \Sigma^\infty_{ij} \defeq
            \EV \fvar^{\lambda_i}(Z)\fvar^{\lambda_j}(Z)
            = (\sigma^{k\infty})^{-2} \alpha^{-1} K^\cdc(g^i, g^j)
            \\
    \Sigma'{}^t_{ij}
        &\asto
            \Sigma'{}^\infty_{ij} \defeq
            \EV \fvar^{\lambda_i'}(Z)\fvar^{\lambda_j'}(Z)
            = (\sigma^{k\infty})^{-2} K^\cdc(g'{}^i, g'{}^j)
            \\
    \omega^t_i
        &\asto
            \omega^\infty_i \defeq
            \EV \fvar^{\lambda_i}(Z) \fvar^l(Z)
            = (\sigma^{k\infty})^{-2} \alpha^{-1} K^\cdc(g^i, \gvar^\ell)
            &
    \Upsilon^t_i
        &\asto
            0
            &
    \beta^t_i
        &\asto
            0.
\end{align*}
The last two limits go to 0 because $H^t$ and $\hvar^{lt}$ are odd in $v^1, \ldots v^\Lnabla$, which are sampled independently from $G'{}^t$ as remarked above.
Consequently, by our rank assumption, $\Sigma^t{}^+ \asto \Sigma^\infty{}^+, \Sigma'{}^t{}^+ \asto \Sigma'{}^\infty{}^+$.
\begin{claim}
$(\sigma^t)^2
    \asto
        (\sigma^\infty)^2
    \defeq
        K^\cdc(\gvar^\ell, \gvar^\ell) - K^\cdc(\gvar^\ell, G) K^\cdc(G, G)^+ K^\cdc(G, \gvar^\ell)$ with $t$.
\end{claim}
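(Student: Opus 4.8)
The claim is the ``variance'' part of the inductive step in the proof of \cref{thm:gradIndep}, so throughout I may use the inductive hypothesis: the conclusion of \cref{thm:gradIndep} holds for every G-var introduced before line $\ell$. First I would rewrite $(\sigma^t)^2$ in terms of normalized inner products. Using the pseudoinverse identities $\Pi_{H^t} = H^t(H^t{}^\trsp H^t)^+ H^t{}^\trsp$, $(H^t{}^\trsp H^t)^+ = \tfrac1{\nvar^{\cdc_2 t}}(\Sigma^t)^+$ and $H^t{}^\trsp\hvar^{lt} = \nvar^{\cdc_2 t}\omega^t$, one gets
\begin{align*}
    \f1{\nvar^{\cdc_2 t}}\|\Pi_{H^t}^\perp \hvar^{lt}\|^2
        &=
            \f1{\nvar^{\cdc_2 t}}\hvar^{lt}{}^\trsp\hvar^{lt} - \omega^t{}^\trsp(\Sigma^t)^+\omega^t,
\end{align*}
hence $(\sigma^t)^2 = (\sigma^{at})^2\,\tfrac{\nvar^{\cdc_2 t}}{\nvar^{\cdc_1 t}}\bigl(\tfrac1{\nvar^{\cdc_2 t}}\hvar^{lt}{}^\trsp\hvar^{lt} - \omega^t{}^\trsp(\Sigma^t)^+\omega^t\bigr)$. (When $\hvar^l$ is in fact a G-var we set $\fvar^l=\id$, as in the statement.)

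Next I would pass to the limit term by term. By the inductive hypothesis, applied with the polynomially bounded test functions $(\fvar^l)^2$, $\fvar^{\lambda_i}\fvar^l$, and $\fvar^{\lambda_i}\fvar^{\lambda_j}$ (these are products of polynomially bounded functions of G-vars all lying in the CDC $\cdc_2$, which is exactly why we need the stronger polynomial-bound hypothesis of \cref{thm:gradIndep} rather than $\alpha$-controlledness), we have a.s.
\begin{align*}
    \f1{\nvar^{\cdc_2 t}}\hvar^{lt}{}^\trsp\hvar^{lt}\asto\EV\fvar^l(Z)^2,
    \qquad
    \omega^t\asto\omega^\infty,
    \qquad
    \Sigma^t\asto\Sigma^\infty,
\end{align*}
with $\omega^\infty_i=\EV\fvar^{\lambda_i}(Z)\fvar^l(Z)$ and $\Sigma^\infty_{ij}=\EV\fvar^{\lambda_i}(Z)\fvar^{\lambda_j}(Z)$, $Z\sim\Gaus(\mu^\cdc,K^\cdc)$. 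Then I would invoke the almost sure rank convergence assumption for $\tilde\pi$ applied to the collection $\Ss=\{h^1,\dots,h^r\}$ feeding into $\Avar^{L+a}$: since $\tfrac1{\nvar^{\cdc_2 t}}H^t{}^\trsp H^t=\Sigma^t\asto\Sigma^\infty$, a.s. $\rank\Sigma^t=\rank\Sigma^\infty$ for all large $t$, and the Moore--Penrose pseudoinverse is continuous when restricted to matrices of a fixed rank, so $(\Sigma^t)^+\asto(\Sigma^\infty)^+$. Combining these with $(\sigma^{at})^2\to(\sigma^{a\infty})^2$ and $\nvar^{\cdc_2 t}/\nvar^{\cdc_1 t}\to\alpha\defeq\alpha_{\cdc_2,\cdc_1}$ from \cref{subsec:setup}, arithmetic of a.s.-convergent sequences gives
\begin{align*}
    (\sigma^t)^2 \asto (\sigma^{a\infty})^2\,\alpha\,\bigl(\EV\fvar^l(Z)^2 - \omega^\infty{}^\trsp(\Sigma^\infty)^+\omega^\infty\bigr).
\end{align*}

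Finally I would identify this limit with the asserted formula. By branch 4 of the extension of $K^\cdc$ to $\tilde\pi$ we have $K^\cdc(g^i,g^j)=(\sigma^{a\infty})^2\alpha\,\Sigma^\infty_{ij}$, $K^\cdc(g^i,\gvar^\ell)=(\sigma^{a\infty})^2\alpha\,\omega^\infty_i$, and $K^\cdc(\gvar^\ell,\gvar^\ell)=(\sigma^{a\infty})^2\alpha\,\EV\fvar^l(Z)^2$; substituting and using $(cM)^+=c^{-1}M^+$ for $c\ne0$, the scalar prefactors cancel and the right-hand side becomes exactly $K^\cdc(\gvar^\ell,\gvar^\ell)-K^\cdc(\gvar^\ell,G)K^\cdc(G,G)^+K^\cdc(G,\gvar^\ell)=(\sigma^\infty)^2$, as claimed. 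I do not expect any genuinely hard step here; the only point requiring care is the pseudoinverse continuity, which is not automatic but is supplied precisely by the almost sure rank convergence assumption, and the bookkeeping of the $\alpha$- and $\sigma^{a\infty}$-factors between $(\sigma^t)^2$, the limits of the Gram matrices, and the $K^\cdc$ recursion. This argument is the exact analogue of the corresponding ``$(\sigma^t)^2\asto(\sigma^\infty)^2$'' claim in the proof of \cref{thm:notransposeLimit}, the only change being that the off-diagonal cross terms with $G'$ do not enter the variance.
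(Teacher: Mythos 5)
Your proposal is correct and follows essentially the same route as the paper's own proof: rewrite $(\sigma^t)^2/[(\sigma^{at})^2\nvar^{\cdc_2 t}/\nvar^{\cdc_1 t}]$ as $\|\hvar^{lt}\|^2/\nvar^{\cdc_2 t}-\omega^t{}^\trsp\Sigma^t{}^+\omega^t$, pass each term to its limit via the induction hypothesis, use the almost sure rank convergence assumption to justify $(\Sigma^t)^+\asto(\Sigma^\infty)^+$, and cancel the $(\sigma^{a\infty})^2\alpha$ prefactors against the branch-4 definition of $K^\cdc$. Your bookkeeping of the scale factors is if anything slightly more explicit than the paper's.
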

\begin{claimproof}
We have
\begin{align*}
    \f{(\sigma^t)^2}{(\sigma^{at})^2\f{\nvar^{\cdc_2 t}}{\nvar^{\cdc_1 t}}}
        &=
            \hvar^{lt}{}^\trsp \Pi_{H^t}^\perp \hvar^{lt}/\nvar^{\cdc_2 t}
            \\
        &=
            \|\hvar^{lt}\|^2/\nvar^{\cdc_2 t}
            -
            \hvar^{lt}{}^\trsp \Pi_{H^t} \hvar^{lt}/\nvar^{\cdc_2 t}
            \\
        &=
            \|\hvar^{lt}\|^2/\nvar^{\cdc_2 t}
            -
            (\hvar^{lt}{}^\trsp 
                H^t/\nvar^{\cdc_2 t})
                (H^t{}^\trsp H^t/\nvar^{\cdc_2 t})^+ 
                (H^t{}^\trsp \hvar^{lt}/\nvar^{\cdc_2 t})
            \\
        &=
            \|\hvar^{lt}\|^2/\nvar^{\cdc_2 t}
            -
            \omega^t{}^\trsp \Sigma^t{}^+ \omega^t
            .
\end{align*}
Now $
\|\hvar^{lt}\|^2/\nvar^{\cdc_2 t}
    =
        \f 1 {\nvar^{\cdc_2 t}} \sum_{i=1}^{\nvar^{\cdc_2 t}} (\hvar^{lt}_i)^2
    \asto
        \EV[ \fvar^l(Z)^2: Z \sim \Gaus(\mu^\cdc, K^\cdc)]
    =
        (\sigma^{k\infty})^{-2} \alpha^{-1} K^\cdc(\gvar^\ell, \gvar^\ell)
$ by induction hypothesis.
On the other hand, again by induction hypothesis and the rank assumption, the second term converges almost surely to $(\sigma^{k\infty})^{-2} \alpha^{-1} K^\cdc(\gvar^\ell, G) K^\cdc(G, G)^+ K^\cdc(G, \gvar^\ell).$
Combined with the simple fact that $(\sigma^{at})^2\f{\nvar^{\cdc_2 t}}{\nvar^{\cdc_1 t}} \to (\sigma^{\infty t})^2 \alpha$, we get the desired result.

\end{claimproof}

Let $v^t \defeq \Sigma^t{}^+ \omega^t$, for $t \in [1, \infty]$, so that, by our rank condition, $v^t \asto v^\infty = \Sigma^\infty{}^+ \omega^\infty,$ which we can check is equal to $K^{\cdc}(G, G)^+ K^{\cdc}(G, \gvar^\ell).$

\begin{claim}\label{claim:epsilon}
For some vectors $\varepsilon^t \in \R^{r}, \varepsilon'{}^t \in \R^{s}$ that go to 0 almost surely with $t$,
$\mu^t = E^t \hvar^{lt} = G^t (v^\infty + \varepsilon^t) + H'{}^t \varepsilon'{}^t.$
\end{claim}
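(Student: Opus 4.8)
The plan is to massage the explicit formula for $\mu^t = E^t\hvar^{lt}$ recorded in the setup directly into the claimed form, reading off $\varepsilon^t$ and $\varepsilon'{}^t$, and then invoke the almost sure limits already established for $\Sigma^t,\Sigma'{}^t,\Upsilon^t,\omega^t,\beta^t$ together with the rank convergence assumption that makes the pseudoinverse continuous at the limit.

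First I would recall $v^t\defeq\Sigma^t{}^+\omega^t$ and substitute it into
$$\mu^t = G^t\Sigma^t{}^+\omega^t + \frac{\nvar^{\cdc_2 t}}{\nvar^{\cdc_1 t}}H'{}^t\Sigma'{}^t{}^+\beta^t - \frac{\nvar^{\cdc_2 t}}{\nvar^{\cdc_1 t}}H'{}^t\Sigma'{}^t{}^+\Upsilon^t\Sigma^t{}^+\omega^t,$$
obtaining the identity $\mu^t = G^t v^t + \frac{\nvar^{\cdc_2 t}}{\nvar^{\cdc_1 t}}H'{}^t\Sigma'{}^t{}^+(\beta^t - \Upsilon^t v^t)$. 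Then I would simply set $\varepsilon^t\defeq v^t - v^\infty$ and $\varepsilon'{}^t\defeq \frac{\nvar^{\cdc_2 t}}{\nvar^{\cdc_1 t}}\Sigma'{}^t{}^+(\beta^t - \Upsilon^t v^t)$, so that $\mu^t = G^t(v^\infty + \varepsilon^t) + H'{}^t\varepsilon'{}^t$ holds by construction, and it only remains to check that these two (fixed-dimensional, of sizes $r$ and $s$) vectors tend to $0$ almost surely.

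For $\varepsilon^t$ this is immediate: it was already established (from $\Sigma^t{}^+\asto\Sigma^\infty{}^+$ via rank convergence and $\omega^t\asto\omega^\infty$) that $v^t\asto v^\infty$, hence $\varepsilon^t\asto 0$. For $\varepsilon'{}^t$ I would use the pieces already in hand: $\frac{\nvar^{\cdc_2 t}}{\nvar^{\cdc_1 t}}\to\alpha$ by the setup hypothesis, $\Sigma'{}^t{}^+\asto\Sigma'{}^\infty{}^+$ by rank convergence, $\beta^t\asto 0$ and $\Upsilon^t\asto 0$ (these vanish because $H^t$ and $\hvar^{lt}$ are odd in $v^1,\dots,v^\Lnabla$, which are sampled independently of $G'{}^t$), and $v^t\asto v^\infty$ finite; since each factor converges almost surely, the product converges almost surely to $\alpha\,\Sigma'{}^\infty{}^+\bigl(0 - 0\cdot v^\infty\bigr) = 0$, so $\varepsilon'{}^t\asto 0$.

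I do not expect a genuine obstacle here — the entire content of the claim sits in the already-proved almost sure convergences and the rank hypothesis guaranteeing $\Sigma^t{}^+\to\Sigma^\infty{}^+$, $\Sigma'{}^t{}^+\to\Sigma'{}^\infty{}^+$. The only point deserving a word of care is that $\Upsilon^t v^t$ is a product of two $t$-dependent random quantities, so I would note explicitly that almost sure convergence is preserved under products (intersect the two full-measure events on which the factors converge), which yields $\Upsilon^t v^t\asto 0$ without invoking any uniform bound.
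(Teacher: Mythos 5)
Your proposal is correct and follows exactly the route the paper intends: the paper's own proof of this claim is a one-line appeal to the already-established limits $\Sigma^t{}^+ \asto \Sigma^\infty{}^+$, $\omega^t \asto \omega^\infty$, $\Upsilon^t \asto 0$, $\beta^t \asto 0$ (together with $v^t \asto v^\infty$), and your explicit choices $\varepsilon^t = v^t - v^\infty$ and $\varepsilon'{}^t = \frac{\nvar^{\cdc_2 t}}{\nvar^{\cdc_1 t}}\Sigma'{}^t{}^+(\beta^t - \Upsilon^t v^t)$ are precisely the decomposition that makes that "immediate" step explicit. Nothing is missing.
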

\begin{claimproof}
Follows immediately from the fact derived above that $\Sigma^t{}^+ \asto \Sigma^\infty{}^+, \omega^t \asto \omega^\infty, \Upsilon^t \asto 0, \beta^t \asto 0$.
\end{claimproof}

\newcommand{\probA}{\mathsf{A}}
\newcommand{\probB}{\mathsf{B}}
\newcommand{\probC}{\mathsf{C}}
\noindent\textbf{Convergence almost surely.}
Let $\phi$ be a function with $|\phi(x)| \le C ( 1 + \|x\|^{2p}), p \in \N$; $\phi$ will be our test function.
With $ w\sim \Gaus(0, 1), Z \sim \Gaus(\mu^\cdc, K^\cdc),$
\begin{align*}
    &\phantomeq
        \left| \f 1 {\nvar^{\cdc t}} \sum_{i=1}^{\nvar^{\cdc t}}
            \phi(\gvar^{\ell t}_i, \gvar_i^{\cdclt \ell t})
            -
            \EV_Z \phi(Z) ]
            \right|
        \le \probA + \probB + \probC
            \\
    \text{with}
        &
            \\
    \probA
        &\defeq
            \left| \f 1 {\nvar^{\cdc t}} \sum_{i=1}^{\nvar^{\cdc t}}
                \EV_{w} 
                    \phi\lp
                        \sum_{j=1}^r v^\infty_j g^{jt}_i + \sigma^\infty w, \gvar_i^{\cdclt \ell t}\rp
                -
                \EV_Z \phi(Z)
                \right|
            \\
    \probB
        &\defeq
            \left| 
                \f 1 {\nvar^{\cdc t}} \sum_{i=1}^{\nvar^{\cdc t}}
                    \EV_{w} \phi\left(\mu^t_i + \sigma^t \sqrt{(\Pi^\perp_{H'{}^t})_{ii}} w,
                        \gvar_i^{\cdclt \ell t}\right)
                    -
                    \EV_{w}
                        \phi\lp
                            \sum_{j=1}^r v^\infty_j g^{jt}_i + \sigma^\infty w, \gvar_i^{\cdclt \ell t}\rp
                \right|
            \\
    \probC
        &\defeq
                \left|
                    \f 1 {\nvar^{\cdc t}} \sum_{i=1}^{\nvar^{\cdc t}}
                    \phi\left(\gvar^{\ell t}_i, \gvar_i^{\cdclt \ell t}\right)
                    - 
                    \EV_{w}
                        \phi\left(\mu^t_i + \sigma^t \sqrt{(\Pi^\perp_{H'{}^t})_{ii}} w,
                            \gvar_i^{\cdclt \ell t}\right)
                \right|
\end{align*}
We shall show that each of $\probA, \probB, \probC$ goes to 0 almost surely as $t \to \infty$.

\begin{claim}\label{claim:Aasto0}
$\probA \asto 0$.
\end{claim}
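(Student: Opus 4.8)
The plan is to recognize the $i$th summand of the first average in $\probA$ as a polynomially bounded function of coordinates produced before line $\ell$, then invoke the induction hypothesis of \cref{thm:gradIndep}, and finally reconcile the resulting limit with $\EV_Z\phi(Z)$ by Gaussian conditioning. Since each $g^j$, $j\in[r]$, is a G-var of $\cdc$ occurring strictly before line $\ell$, its $i$th coordinate $g^{jt}_i$ is one of the entries of $\gvar_i^{\cdclt \ell t}$; hence, writing $\Psi(u)\defeq\EV_{w\sim\Gaus(0,1)}\phi\lp\sum_{j=1}^r v^\infty_j u_{\lno(g^j)}+\sigma^\infty w,\,u\rp$ for $u\in\R^{\cdclt\ell}$, the summand is exactly $\Psi(\gvar_i^{\cdclt \ell t})$. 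From $|\phi(x)|\le C(1+\|x\|^{2p})$ and finiteness of the Gaussian moments of $w$ one gets $|\Psi(u)|\le C'(1+\|u\|^{2p})$, so $\Psi$ is polynomially bounded of order $2p$. The induction hypothesis at line $\ell$ asserts $\f1{\nvar^{\cdc t}}\sum_{i}\psi(\gvar_i^{\cdclt \ell t})\asto\EV_{Z\sim\Gaus(\mu^\cdc,K^\cdc)}\psi(Z^{\cdclt\ell})$ for every polynomially bounded $\psi$; applying it to $\psi=\Psi$ gives $\f1{\nvar^{\cdc t}}\sum_i\Psi(\gvar_i^{\cdclt \ell t})\asto\EV_Z\,\EV_w\,\phi\lp\sum_j v^\infty_j Z^{\lno(g^j)}+\sigma^\infty w,\,Z^{\cdclt\ell}\rp$.

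It then suffices to show this limit equals $\EV_Z\phi(Z)$, i.e.\ that conditionally on $Z^{\cdclt\ell}$ the variable $\sum_j v^\infty_j Z^{\lno(g^j)}+\sigma^\infty w$ is distributed as $Z^\ell$. By \cref{prop:GaussianCondition}, $Z^\ell\mid Z^{\cdclt\ell}$ is Gaussian; using $\mu^\cdc(\gvar^\ell)=0$ its mean is $-K^\cdc(\gvar^\ell,\cdclt\ell)(K^\cdc|_{\cdclt\ell})^+(Z^{\cdclt\ell}-\mu^\cdc|_{\cdclt\ell})$ and its variance is $K^\cdc(\gvar^\ell,\gvar^\ell)-K^\cdc(\gvar^\ell,\cdclt\ell)(K^\cdc|_{\cdclt\ell})^+K^\cdc(\cdclt\ell,\gvar^\ell)$. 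The step that makes this work is a structural fact about the kernel: writing $G=\{g^j\}_{j=1}^r$ for the G-vars directly produced by $\Avar^{L+a}$, one has, for every earlier G-var $\hat g\notin G$, the identity $K^\cdc(\gvar^\ell,\hat g)=K^\cdc(\gvar^\ell,G)K^\cdc(G,G)^+K^\cdc(G,\hat g)$, proved by induction along the recursion \cref{eqn:KRecur} (extended to $\tilde\pi$), using that $\gvar^\ell=\Avar^{L+a}\hvar^l$ can have nonzero $K^\cdc$-correlation only with vars built from $G$ through \ref{linetype:lincomb} lines. This implies $Z^\ell$ is conditionally independent of the remaining coordinates of $Z^{\cdclt\ell}$ given $Z^G$, so the mean and variance above reduce to those of $Z^\ell$ given $Z^G$ alone, which by \cref{prop:GaussianCondition} and $\mu^\cdc(g^j)=0$ equal $K^\cdc(\gvar^\ell,G)K^\cdc(G,G)^+Z^G=\sum_j v^\infty_j Z^{\lno(g^j)}$ and $(\sigma^\infty)^2$, invoking $v^\infty=K^\cdc(G,G)^+K^\cdc(G,\gvar^\ell)$ and the earlier claim computing $(\sigma^\infty)^2$. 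This yields the identity, hence $\probA\asto0$.

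I expect the main obstacle to be precisely that structural kernel identity: checking that the abstractly defined $K^\cdc$, restricted to the G-vars preceding line $\ell$, ``factors through'' the $G$-block, so that conditioning on all of $Z^{\cdclt\ell}$ agrees with conditioning on $Z^G$ and reproduces exactly the $(v^\infty,\sigma^\infty)$-decomposition furnished by the conditioning trick. This is finite linear-algebra bookkeeping through the \ref{linetype:lincomb} recursions together with the fact that $v^1,\dots,v^{\Lnabla}$ enter only oddly and are sampled independently of the $\pi$-vars (which is what forces $K^\cdc(\gvar^\ell,\cdot)$ to vanish against $\pi$-vars); the remaining steps — polynomial boundedness of $\Psi$ and the application of the induction hypothesis — are routine.
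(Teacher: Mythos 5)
Your proposal is correct and follows essentially the same route as the paper: identify $\EV_w\phi\lp\sum_j v^\infty_j g^{jt}_i+\sigma^\infty w,\gvar_i^{\cdclt\ell t}\rp$ as a polynomially bounded function of $\gvar_i^{\cdclt\ell t}$, invoke the induction hypothesis, and then verify that $\sum_j v^\infty_j \zeta^{g^j}+\sigma^\infty w\disteq_{\zeta^{\cdclt\ell}}\zeta^{\gvar^\ell}$ via Gaussian conditioning. The paper asserts that last distributional identity in one line, whereas you spell out the supporting kernel factorization $K^\cdc(\gvar^\ell,\hat g)=K^\cdc(\gvar^\ell,G)K^\cdc(G,G)^+K^\cdc(G,\hat g)$ (which does hold, by the ``else'' branch of \cref{eqn:KRecur} and linearity over \ref{linetype:lincomb} lines); this is a valid and indeed more complete rendering of the same argument, modulo a harmless sign slip inherited from the statement of \cref{prop:GaussianCondition}.
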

\begin{claimproof}
Note that if $\R^{\cdc} \ni \zeta \sim \Gaus(\mu^\cdc, K^\cdc),$
\begin{align*}
    &\phantomeq
        \sum_{j=1}^r v^\infty_j \zeta^{g^j} + \sigma^\infty w 
        \\
    &=
        (\zeta^{G})^\trsp K^\cdc(G, G)^+ K^\cdc(G, \gvar^\ell)
        + (K^\cdc(\gvar^\ell, \gvar^\ell) - K^\cdc(\gvar^\ell, G) K^\cdc(G, G)^+ K^\cdc(G, \gvar^\ell)) w 
        \\
    &\disteq_{\zeta^{\cdclt \ell}}
        \zeta^{\gvar^\ell}.
\end{align*}
Since $\EV_w \phi\lp \sum_{j=1}^r v^\infty_j g^{jt}_i + \sigma^\infty w, \gvar_i^{\cdclt \ell t}\rp$ is purely a polynomially-bounded function of $\gvar_i^{\cdclt \ell t}$,
this claim is given by the induction hypothesis.
\end{claimproof}

\begin{claim}\label{claim:Casto0}
$\probC \asto 0$.
\end{claim}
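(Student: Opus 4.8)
The plan is to read $\probC$ as a genuine concentration-of-measure statement for a coordinatewise function of a \emph{lightly correlated} Gaussian vector and to control it with \cref{thm:controlHighMoments}. Set $Q^t \defeq \f 1 {\nvar^{\cdc t}} \sum_{i=1}^{\nvar^{\cdc t}} \phi(\gvar^{\ell t}_i, \gvar^{\cdclt \ell t}_i)$. Since each $\gvar^{\cdclt \ell t}_i$ is $\Aa^t$-measurable and, conditioned on $\Aa^t$, $\gvar^{\ell t} \disteq \mu^t + \sigma^t \Pi^\perp_{H'{}^t} y$ with $y \sim \Gaus(0, I_{\nvar^{\cdc t}})$, the coordinate $\gvar^{\ell t}_i$ has conditional marginal law $\Gaus(\mu^t_i, (\sigma^t)^2 (\Pi^\perp_{H'{}^t})_{ii})$. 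Hence the inner expectation in the definition of $\probC$ is exactly $\EV[\phi(\gvar^{\ell t}_i, \gvar^{\cdclt \ell t}_i) \mid \Aa^t]$, so $\probC = |Q^t - \EV[Q^t \mid \Aa^t]|$, and it suffices to bound a high centered conditional moment of $Q^t$. Conditionally on $\Aa^t$ I would write $Q^t = \f 1 {\nvar^{\cdc t}} \sum_i \psi^t_i(z_i)$ with $z \defeq \Pi^\perp_{H'{}^t} y \sim \Gaus(0, \Pi^\perp_{H'{}^t})$ (a projection covariance) and $\psi^t_i(x) \defeq \phi(\mu^t_i + \sigma^t x, \gvar^{\cdclt \ell t}_i)$, a fixed polynomially bounded one-variable function.

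Next I would invoke \cref{thm:controlHighMoments} for the orthogonal projection $\Pi^\perp_{H'{}^t}$, whose corank $\nvar^{\cdc t} - \rank \Pi^\perp_{H'{}^t} = \rank H'{}^t \le s$ is bounded by the fixed constant $s$ --- precisely the ``rank $n - O(1)$'' regime the theorem requires. Choosing, say, $p = 6$ and $L = 4$ so that $1.5 - 1/L = 1.25 > 1$, the theorem gives, on the almost sure event of interest and for all large $t$,
\[
    \EV\big[(Q^t - \EV[Q^t \mid \Aa^t])^{12} \;\big|\; \Aa^t\big]
        \le
            O\!\left( (\nvar^{\cdc t})^{-1.25}\; \sqrt[4]{\tfrac{1}{\nvar^{\cdc t}} \textstyle\sum_{i=1}^{\nvar^{\cdc t}} \EV_{x}\big[(\psi^t_i(x) - \EV \psi^t_i(x))^{48} : x \sim \Gaus(0, (\Pi^\perp_{H'{}^t})_{ii})\big]} \right).
\]
To finish I must see that the average under the fourth root is almost surely bounded uniformly in $t$. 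Using $|\phi(\cdot)| \le C(1 + \|\cdot\|^{2p_0})$, the bound $(\Pi^\perp_{H'{}^t})_{ii} \le 1$ from \cref{lemma:projectionDiagonal}, and Gaussian moment estimates, each summand is at most a fixed polynomial in $|\mu^t_i|$, $\sigma^t$, and $\|\gvar^{\cdclt \ell t}_i\|$; substituting $\mu^t_i = \sum_j (v^\infty_j + \varepsilon^t_j) g^{jt}_i + \sum_j \varepsilon'{}^t_j\, h'{}^{jt}_i$ from Claim~\ref{claim:epsilon} and using $\varepsilon^t, \varepsilon'{}^t \asto 0$ and $\sigma^t \asto \sigma^\infty$ (so that for large $t$ these coefficients lie in a fixed compact set), the average over $i$ is dominated by $\f 1 {\nvar^{\cdc t}} \sum_i \Psi(\gvar^{\cdclt \ell t}_i)$ for some fixed polynomially bounded $\Psi$ (absorbing the H-vars $h'{}^{jt}$ into their defining functions of earlier G-vars). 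By the induction hypothesis this average converges almost surely, hence is almost surely bounded for large $t$.

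Thus, almost surely for all large $t$, $\EV[\probC^{12} \mid \Aa^t] \le \bar C\, (\nvar^{\cdc t})^{-1.25}$ for an almost surely finite $\bar C$. The remaining step --- which I expect to be the main obstacle --- is to upgrade this conditional bound, whose prefactor is a priori random, to an honest statement about $\probC$ itself. I would do this by also bounding the random prefactor in \emph{unconditional} expectation via crude a priori estimates (for instance $(\sigma^t)^2 \le \mathrm{const}\cdot \f 1 {\nvar^{\cdc t}} \|\hvar^{lt}\|^2$, and the entries of the pseudoinverses entering $\mu^t$ are uniformly integrable by a Gaussian-moment computation), obtaining $\EV[(Q^t - \EV[Q^t \mid \Aa^t])^{12}] \le c\,(\nvar^{\cdc t})^{-1.25}$ with a \emph{deterministic} constant $c$; since $Q^t - \EV[Q^t \mid \Aa^t]$ has zero mean, \cref{lemma:momentBoundASConvergence} then gives $Q^t - \EV[Q^t \mid \Aa^t] \asto 0$, i.e. $\probC \asto 0$. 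This is also where polynomial boundedness of the $\fvar^l$, rather than mere $\alpha$-control, is essential: the composites $\phi \circ (\mu^t_i + \sigma^t\,\cdot\,)$ must stay integrable against Gaussians so that all the moments above are finite.
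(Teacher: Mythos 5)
Your proof is essentially the paper's: both read $\mathsf{C}$ as $|Q^t - \mathbb{E}[Q^t \mid \mathcal{A}^t]|$, apply \cref{thm:controlHighMoments} to the projection covariance $\Pi^\perp_{H'{}^t}$ of corank at most $s=O(1)$ with exponents chosen so that the rate is $(\nvar^{\cdc t})^{-1.25}$, and verify that the averaged high moments under the $L$-th root are almost surely uniformly bounded in $t$ by exactly the chain you describe (polynomial boundedness of $\phi$ and of the $\fvar^l$, \cref{claim:epsilon}, $\sigma^t \asto \sigma^\infty$, the induction hypothesis), before invoking \cref{lemma:momentBoundASConvergence}.

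The one genuine divergence is your final step, and it is the weak point of the writeup. To feed \cref{lemma:momentBoundASConvergence} you propose upgrading the almost-sure conditional bound $\mathbb{E}[\mathsf{C}^{2\rho}\mid\mathcal{A}^t]\le \bar C\,(\nvar^{\cdc t})^{-1.25}$ (with $\bar C$ random but a.s.\ finite) to an unconditional bound with a deterministic constant by controlling the prefactor in expectation. That would require uniform integrability of the quantities built from $(\Sigma^t)^+$ and $(\Sigma'{}^t)^+$ inside $\mu^t$, which the almost sure rank convergence assumption does not supply: it gives almost sure convergence of the pseudoinverses, not moment bounds, and $\mathbb{E}\|(\Sigma^t)^+\|$ can a priori be infinite when the smallest nonzero eigenvalue of $\Sigma^t$ has mass near zero, so the claimed ``Gaussian-moment computation'' does not obviously exist. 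The standard repair --- and the one consistent with how the paper proceeds --- is to stay conditional: combine the conditional Markov inequality with the conditional (L\'evy) Borel--Cantelli lemma, or equivalently restrict to the almost-sure events $\{\bar C\le M\}$ and let $M\to\infty$; summability of $(\nvar^{\cdc t})^{-1.25}$ over $t$ (using that $\nvar^{\cdc t}$ is strictly increasing) then yields $\mathsf{C}\asto 0$ with no deterministic moment bound needed.
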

\begin{claimproof}
Fix the values of $\gvar^{\cdclt \ell t}$.
For each $i \in [\nvar^{\cdc t}]$, let $\phi_i^t(x) \defeq \phi(\mu^t_i + \sigma^t x, \gvar^{\cdclt \ell t}_i)$, and $\tilde \phi_i^t(x) \defeq \phi_i^t(x) - \EV_{x' \sim \Gaus(0, (\Pi_{H'{}^t}^\perp)_{ii})} \phi_i^t(x')$.
By \cref{thm:controlHighMoments} applied to $\Pi_{H'{}^t}^\perp$ and $\{\phi_i^t\}_{i=1}^{\nvar^{\cdc t}}$, we get, for any $\rho \ge 6$ and any $q > 1$,
\begin{align*}
    \EV_{z \sim \Gaus(0, \Pi_{H'{}^t}^\perp)}
    \lp \f 1 {\nvar^{\cdc t}} \sum_{i=1}^{\nvar^{\cdc t}} \tilde \phi_i^t(z_i) \rp^{2\rho}
    &\le
        O\lp
            (\nvar^{\cdc t})^{-1.5 + 1/q}
            \sqrt[q]{
                \f 1 {\nvar^{\cdc t}}
                \sum_{i=1}^{\nvar^{\cdc t}}
                    \EV_{z_i\sim \Gaus(0, (\Pi_{H'{}^t}^\perp)_{ii})}\left[\tilde \phi_i^t(z_i)^{2\rho q}\right]
            }
        \rp
\end{align*}
where the constant hidden in $O(-)$ is independent of $q$, $t$, the functions $\phi_i^t$, and $\Pi_{H'{}^t}$.
We first show that the sum $\f 1 {\nvar^{\cdc t}} \sum_{i=1}^{\nvar^{\cdc t}}
                    \EV_{z_i}\left[\tilde \phi_i^t(z_i)^{2\rho q}\right]$
    is uniformly bounded almost surely in $t$ over the probability of $\{\Aa^t\}_{t \ge 1}$, for any $q > 1$.
Indeed, with $z \sim \Gaus(0, \Pi_{H'{}^t}^\perp),$
\begin{align*}
        \f 1 {\nvar^{\cdc t}} \sum_{i=1}^{\nvar^{\cdc t}}
                    \EV_{z_i}\left[\tilde \phi_i^t(z_i)^{2\rho q}\right]
    &
    \le
        \f 1 {\nvar^{\cdc t}} 2^{2\rho q-1} \sum_{i=1}^{\nvar^{\cdc t}}
                    \EV_{z_i}\left[\phi_i^t(z_i)^{2\rho q} + \lp \EV_{z'_i} \phi_i^t(z'_i) \rp^{2\rho q} \right]
    \le
        \f 1 {\nvar^{\cdc t}} 2^{2\rho q} \sum_{i=1}^{\nvar^{\cdc t}}
                    \EV_{z_i}\left[\phi_i^t(z_i)^{2\rho q} \right]
        \\
    &=
        \f 1 {\nvar^{\cdc t}} 2^{2\rho q} \sum_{i=1}^{\nvar^{\cdc t}}
                    \EV_{z_i}\left[\phi(\mu^t_i + \sigma^t z_i, \gvar^{\cdclt \ell t}_i)^{2\rho q} \right]
        \\
    &\le
        \f 1 {\nvar^{\cdc t}} C' \sum_{i=1}^{\nvar^{\cdc t}}
                    \EV_{z_i}\left[
                        |\mu^t_i|^{4 \rho p q} + |\sigma^t z_i|^{4\rho p q} + \|\gvar^{\cdclt \ell t}_i\|^{4\rho p q}
                        \right]
        \\
    &\overset{\substack{t\to\infty\\ \mathrm{a.s.}}}\le
        \f 1 {\nvar^{\cdc t}} C'' \sum_{i=1}^{\nvar^{\cdc t}}
                    \left[
                        |\mu^t_i|^{4 \rho p q} + (2|\sigma^\infty|+1)^{4\rho p q} \EV_{z_i}|z_i|^{4\rho p q} + \|\gvar^{\cdclt \ell t}_i\|^{4\rho p q}
                        \right]
\end{align*}
for some constants $C', C'' > 0$,
where this last inequality holds for large enough $t$, almost surely.
By induction hypothesis and the fact that $(\Pi_{H'{}^t}^\perp)_{ii} \in [0, 1]$ for all $i$,
\begin{align*}
    \f 1 {\nvar^{\cdc t}} \sum_{i=1}^{\nvar^{\cdc t}}
        \left[
            (2|\sigma^\infty|+1)^{4\rho p q} \EV_{z_i}|z_i|^{4\rho p q} + \|\gvar^{\cdclt \ell t}_i\|^{4\rho p q}
            \right]
\end{align*}
is uniformly bounded in $t$, almost surely.
Thus it remains to bound
\begin{align*}
    \f 1 {\nvar^{\cdc t}} \sum_{i=1}^{\nvar^{\cdc t}} |\mu^t_i|^{4 \rho p q}
        &=
            \f 1 {\nvar^{\cdc t}} \sum_{i=1}^{\nvar^{\cdc t}}
                \left(\sum_{j=1}^r g^{jt}_i (v^\infty_j + \varepsilon^t_j) + h'{}^{jt}_i \varepsilon'_j{}^t\right)^{4 \rho p q}
            \\
        &\pushright{\text{where $\varepsilon^t, \varepsilon'{}^t \asto 0$, by \cref{claim:epsilon}}}
            \\
        &\le
            \f 1 {\nvar^{\cdc t}} C''' \sum_{i=1}^{\nvar^{\cdc t}}
                \left(\sum_{j=1}^r g^{jt}_i v^\infty_j\right)^{4 \rho p q}
                + \lp  \sum_{j=1}^r g^{jt}_i \varepsilon^t_j + h'{}^{jt}_i \varepsilon'_j{}^t\rp^{4\rho p q}
            \\
        &\overset{\substack{t\to\infty\\ \mathrm{a.s.}}}\le
            \f 1 {\nvar^{\cdc t}} C''' \sum_{i=1}^{\nvar^{\cdc t}}
                \left(\sum_{j=1}^r g^{jt}_i v^\infty_j\right)^{4 \rho p q}
                + \lp  \sum_{j=1}^r |g^{jt}_i| + |h'{}^{jt}_i|\rp^{4\rho p q}
\end{align*}
for some constant $C'''$,
where the last inequality holds for large enough $t$, almost surely.
Because each $h'{}^{jt}_i$ is a polynomially bounded function of $\gvar^{\cdclt \ell t}_i$, each summand of the RHS is as well \footnote{This is the only place where we need the assumption that all $\fvar^l$ are polynomially bounded. Otherwise, their composition might not ne integrable against the Gaussian measure.}.
So by induction hypothesis, this converges to a finite value, and hence is uniformly bounded in $t$, almost surely, as desired.

Thus, almost surely, $\EV_{z \sim \Gaus(0, \Pi_{H'{}^t}^\perp)}
    \lp \f 1 {\nvar^{\cdc t}} \sum_{i=1}^{\nvar^{\cdc t}} \tilde \phi_i^t(z_i) \rp^{2\rho } \le c (\nvar^{\cdc t})^{-1.25}$ for some $c$, by choosing $q$ large enough.
By \cref{lemma:momentBoundASConvergence} and the fact that $\nvar^{\cdc t}$ strictly increases with $t$, we have
\begin{align*}
    \f 1 {\nvar^{\cdc t}} \sum_{i=1}^{\nvar^{\cdc t}} \tilde \phi_i^t(z_i) =
    \f 1 {\nvar^{\cdc t}} \sum_{i=1}^{\nvar^{\cdc t}}
                    \psi\left(\gvar^{\ell t}_i, \gvar_i^{\cdclt \ell t}\right)
                    - 
                    \EV_{w}
                        \psi\left(\mu^t_i + \sigma^t \sqrt{(\Pi^\perp_{H'{}^t})_{ii}} w,
                            \gvar_i^{\cdclt \ell t}\right)
    &\asto 0,
    \text{ where }w \sim \Gaus(0, 1).
\end{align*}

\end{claimproof}

\begin{claim}
$\probB \asto 0$.
\end{claim}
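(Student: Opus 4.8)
The plan is to compare the two Gaussian–smoothed empirical averages inside $\probB$ by a first–order estimate, using crucially that the inner expectation over $w$ makes the summands smooth in their mean and (when it is positive) in their scale, even though $\phi$ itself need not be. Write $\Psi(m,s;x)\defeq\EV_{w\sim\Gaus(0,1)}\phi(m+sw,x)$ and $\hat m^t_i\defeq\sum_{j=1}^r v^\infty_j g^{jt}_i$, so that the $i$‑th term of $\probB$ is $\Psi(\mu^t_i,\sigma^t\sqrt{(\Pi^\perp_{H'{}^t})_{ii}};\gvar_i^{\cdclt\ell t})-\Psi(\hat m^t_i,\sigma^\infty;\gvar_i^{\cdclt\ell t})$. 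First I would record the regularity of $\Psi$: differentiating under the integral (cf.\ \cref{lemma:gaussianDer}), on $\{s\ge c\}$ for any fixed $c>0$ the map $(m,s)\mapsto\Psi(m,s;x)$ is $C^1$ with $|\pd_m\Psi|,|\pd_s\Psi|$ bounded by a polynomial in $(|m|,s,\|x\|)$ whose coefficients depend only on $\phi$ and $c$, while unconditionally $|\Psi(m,s;x)|$ is polynomially bounded in $(|m|,s,\|x\|)$ since $|\phi(u,x)|\le C(1+\|(u,x)\|^{2p})$. I would then invoke two facts already in hand: by \cref{claim:epsilon}, $\mu^t_i=\sum_j g^{jt}_i(v^\infty_j+\varepsilon^t_j)+\sum_k h'{}^{kt}_i\varepsilon'_k{}^t$ with $\eta_t\defeq\|\varepsilon^t\|_\infty+\|\varepsilon'{}^t\|_\infty\asto0$, so that both $|\mu^t_i|$ and $|\hat m^t_i|$ are dominated, for large $t$, by a fixed multiple of $R_i\defeq\sum_j|g^{jt}_i|+\sum_k|h'{}^{kt}_i|$, a polynomially bounded function of $\gvar_i^{\cdclt\ell t}$ (the $g^{jt}$ are G‑vars in $\cdc$ preceding line $\ell$, and each $h'{}^{kt}_i=\fvar^{h'{}^k}(\gvar_i^{\cdclt\ell t})$); and $\sigma^t\to\sigma^\infty$ by the claim proved above.

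Assume $\sigma^\infty>0$. Since $\Pi_{H'{}^t}=I-\Pi^\perp_{H'{}^t}$ is an orthogonal projection of rank $s=O(1)$ with diagonal entries in $[0,1]$ (\cref{lemma:projectionDiagonal}) summing to $s$, the ``bad'' set $\mathcal B_t\defeq\{i:(\Pi^\perp_{H'{}^t})_{ii}<1/2\}$ has $|\mathcal B_t|\le2s$. On $\mathcal B_t$ I bound the $i$‑th summand crudely by $|\Psi(\mu^t_i,\cdot;\cdot)|+|\Psi(\hat m^t_i,\cdot;\cdot)|\le P_i$, a polynomially bounded function of $\gvar_i^{\cdclt\ell t}$ (using the domination of $|\mu^t_i|,|\hat m^t_i|$ by $R_i$ and the boundedness of $\sigma^t$), so Cauchy–Schwarz gives $\f1{\nvar^{\cdc t}}\sum_{i\in\mathcal B_t}P_i\le\sqrt{|\mathcal B_t|/\nvar^{\cdc t}}\,\sqrt{\f1{\nvar^{\cdc t}}\sum_iP_i^2}\asto0$, since $|\mathcal B_t|/\nvar^{\cdc t}\to0$ and the second factor converges a.s.\ by the induction hypothesis. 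Off $\mathcal B_t$, $(\Pi^\perp_{H'{}^t})_{ii}\ge1/2$ forces $\sigma^t\sqrt{(\Pi^\perp_{H'{}^t})_{ii}}\ge\sigma^\infty/(2\sqrt2)>0$ for all large $t$, so $\Psi$ is $C^1$ along the segment from $(\hat m^t_i,\sigma^\infty)$ to $(\mu^t_i,\sigma^t\sqrt{(\Pi^\perp_{H'{}^t})_{ii}})$ with the stated derivative bounds, hence the term is at most $\bigl(|\mu^t_i-\hat m^t_i|+|\sigma^t\sqrt{(\Pi^\perp_{H'{}^t})_{ii}}-\sigma^\infty|\bigr)P_i$. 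Now $|\mu^t_i-\hat m^t_i|\le\eta_t R_i$, and, using $\sqrt x\le1$ and $1-\sqrt x\le1-x$ on $[0,1]$,
\[
\left|\sigma^t\sqrt{(\Pi^\perp_{H'{}^t})_{ii}}-\sigma^\infty\right|\le|\sigma^t-\sigma^\infty|+\sigma^\infty\bigl(1-(\Pi^\perp_{H'{}^t})_{ii}\bigr)=|\sigma^t-\sigma^\infty|+\sigma^\infty(\Pi_{H'{}^t})_{ii}.
\]
Averaging, the pieces carrying prefactor $\eta_t$ or $|\sigma^t-\sigma^\infty|$ give $(\eta_t+|\sigma^t-\sigma^\infty|)$ times $\f1{\nvar^{\cdc t}}\sum_iR_iP_i$ (resp.\ $\f1{\nvar^{\cdc t}}\sum_iP_i$), which $\asto0$ as the averages converge a.s.\ by induction and the prefactors vanish; the remaining piece is $\sigma^\infty\f1{\nvar^{\cdc t}}\sum_i(\Pi_{H'{}^t})_{ii}P_i\le\sigma^\infty\sqrt{\f1{\nvar^{\cdc t}}\sum_i(\Pi_{H'{}^t})_{ii}^2}\,\sqrt{\f1{\nvar^{\cdc t}}\sum_iP_i^2}$, where $\sum_i(\Pi_{H'{}^t})_{ii}^2\le\sum_i(\Pi_{H'{}^t})_{ii}=s$, so the first factor is $O(\sqrt{s/\nvar^{\cdc t}})\to0$ and the second converges a.s. Thus $\probB\asto0$ when $\sigma^\infty>0$.

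The main obstacle is the degenerate case $\sigma^\infty=0$, in which the inner $\EV_w$ no longer smooths $\phi$ and the mean–value step collapses. Here almost sure rank convergence forces $\hvar^{lt}\in\mathrm{span}\,H^t$, hence $\Pi_{H^t}^\perp\hvar^{lt}=0$ and $\sigma^t=0$, for all large $t$; then $\gvar^{\ell t}_i=\mu^t_i=\sum_j g^{jt}_i(v^\infty_j+\varepsilon^t_j)+\sum_k h'{}^{kt}_i\varepsilon'_k{}^t$ exactly, and $\probB=\bigl|\f1{\nvar^{\cdc t}}\sum_i[\phi(\mu^t_i,\gvar_i^{\cdclt\ell t})-\phi(\hat m^t_i,\gvar_i^{\cdclt\ell t})]\bigr|$ has the shape of the estimate that dispatches \ref{linetype:lincomb} lines, only with the coefficients perturbed by $\varepsilon^t,\varepsilon'{}^t\asto0$; this $t$‑dependent perturbation of a fixed linear combination is absorbed by feeding the perturbed argument through the induction hypothesis exactly as above. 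A secondary point needing care is the genuine uniformity in $i$ and $t$ of the polynomial derivative bounds on the good set — this is precisely why the uniform lower bound $\sigma^t\sqrt{(\Pi^\perp_{H'{}^t})_{ii}}\ge\sigma^\infty/(2\sqrt2)$ and the uniform domination of $|\mu^t_i|,|\hat m^t_i|$ by $R_i$ matter, so that the dominating $P_i$ is a single fixed polynomially bounded function of $\gvar_i^{\cdclt\ell t}$ to which the induction hypothesis can be applied.
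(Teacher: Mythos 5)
Your argument is essentially the paper's: the same Gaussian-smoothed function, the same split of indices at the threshold $(\Pi_{H'{}^t}^{\perp})_{ii}<1/2$ with the $O(1)$-sized bad set handled by a crude polynomial bound and the good set by a mean-value estimate whose increments are controlled via \cref{claim:epsilon} and via $\sum_i\bigl(1-(\Pi_{H'{}^t}^{\perp})_{ii}\bigr)=\rank H'{}^t=O(1)$; your Cauchy--Schwarz on the bad set and your parametrization by the standard deviation rather than the variance are cosmetic variations on the paper's power-mean and $\sigma^2$-derivative versions of the same steps. The one place you go beyond the paper is the degenerate case $\sigma^\infty=0$ (which the paper handles only implicitly, its factor $1+\min(\sigma^\infty,\sigma^t/\sqrt2)^{-2}$ being bounded only when $\sigma^\infty>0$); there your closing step of absorbing the $t$-dependent perturbation $\varrho_i$ ``through the induction hypothesis'' is not actually justified, since $\phi$ is merely measurable and polynomially bounded, so a vanishing but nonzero shift of its first argument need not produce a vanishing change in the empirical average.
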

\begin{claimproof}
We apply a similar argument as the one in the proof of \cref{thm:notransposeLimit} that leverages the smoothness of Gaussian average over $\tilde A$.
The major difference here is that we have to deal with the varying variances $(\Pi_{H'{}^t}^\perp)_{ii}$ for each $t$, but this can be done by using the fact that $\rank H'{}^t = O(1)$.

Define $\Phi(x^\ell, x^{\cdclt \ell}; \sigma) \defeq \EV[\phi(x^\ell + \sigma w, x^{\cdclt \ell}): w \sim \Gaus(0, 1)].$
Then by \cref{lemma:gaussianDer}, 
$\Phi$ is differentiable in $x^\ell$ and $\sigma^2$ with $\pd_{x^\ell} \Phi(x^\ell, x^{\cdclt \ell}; \sigma)
= \sigma^{-1}\EV[w\phi(x^\ell + \sigma w, x^{\cdclt \ell}): w \sim \Gaus(0, 1)]$ and $\pd_{\sigma^2} \Phi(x^\ell, x^{\cdclt \ell}; \sigma) = \f 1 2 \sigma^{-2} \EV_{w \sim \Gaus(0,1 )} \phi(x^\ell + \sigma w, x^{\cdclt \ell})(w^2 - 1).$
Clearly,
\begin{align*}
    |\pd_{x^\ell} \Phi(x^\ell, x^{\cdclt \ell}; \sigma)|
        &\le
            \inv \sigma
            C\EV[|w|(1 + |x^\ell| + |\sigma w| + \|x^{\cdclt \ell}\|)^{2p}: w \sim \Gaus(0, 1)]
            \\
        &\le
            \inv \sigma 4^{2p-1}
            C\EV[|w|(1 + |x^\ell|^{2p} + |\sigma w|^{2p} + \|x^{\cdclt \ell}\|^{2p}): w \sim \Gaus(0, 1)]
            \\
        &\le
            \inv \sigma
            C'
            (1 + |x^\ell|^{2p} + \|x^{\cdclt \ell}\|^{2p} + \sigma^{2p})
            \\
    |\pd_{\sigma^2} \Phi(x^\ell, x^{\cdclt \ell}; \sigma)|
        &\le
            \f 1 2 \sigma^{-2} \EV_{w \sim \Gaus(0,1 )}
                |\phi(x^\ell + \sigma w, x^{\cdclt \ell})| |w^2 - 1|
            \\
        &\le
            D \sigma^{-2} \EV_{w \sim \Gaus(0,1 )}
                (1 + |x^\ell|^{2p} + |\sigma w|^{2p} + \|x^{\cdclt \ell}\|^{2p}) |w^2 - 1|
            \\
        &\le
            D' \sigma^{-2} 
            (1 + |x^\ell|^{2p} + \|x^{\cdclt \ell}\|^{2p} + \sigma^{2p})
            \\
    \implies
    \|\nabla_{x^\ell, \sigma^2}\Phi(x^\ell, x^{\cdclt \ell}; \sigma)\|
        &=
            \sqrt{|\pd_{x^\ell} \Phi(x^\ell, x^{\cdclt \ell}; \sigma)|^2
                + |\pd_{\sigma^2} \Phi(x^\ell, x^{\cdclt \ell}; \sigma)|^2}
            \\
        &\le
            D'' (1 + \sigma^{-2})
            (1 + |x^\ell|^{2p} + \|x^{\cdclt \ell}\|^{2p} + \sigma^{2p})
\end{align*}
for some constant $C', D, D', D''$ depending only on $p$ and $C$.
Thus
\begin{align*}
    &\phantomeq
        |\Phi(x^\ell, x^{\cdclt \ell}; \sigma) - \Phi(x^\ell + \vartheta, x^{\cdclt \ell}; \sqrt{\sigma^2 + \varsigma^2})|
            \\
        &\le
            (|\vartheta| + \varsigma^2)\int_0^{1} \dd t\ \|\nabla_{x^\ell, \sigma^2}\Phi(x^\ell+\vartheta t, x^{\cdclt \ell}; \sqrt{\sigma^2 + \varsigma^2 t})\|
            \\
        &\le
            D''(|\vartheta| + \varsigma^2)
            \int_0^1 \dd t\
                \lp 1 + \f{1}{\sigma^2 + \varsigma^2 t}\rp
                (1 + |x^\ell + \vartheta t|^{2p} + \|x^{\cdclt \ell}\|^{2p} + (\sigma^2 + \varsigma^2 t)^{p})
            \\
        &\le
            \tilde C (|\vartheta| + \varsigma^2)
            \int_0^1 \dd t\
                \lp 1 + \f{1}{\sigma^2}\rp
                (1 + |x^\ell|^{2p} + \vartheta^{2p} t^{2p} + \|x^{\cdclt \ell}\|^{2p} + \sigma^{2p} + \varsigma^{2p} t^{p})
            \\
        &\le
            \tilde C' (|\vartheta| + \varsigma^2)
                \lp 1 + \f{1}{\sigma^2}\rp
                (1 + |x^\ell|^{2p} + \vartheta^{2p} + \|x^{\cdclt \ell}\|^{2p} + \sigma^{2p} + \varsigma^{2p})
\end{align*}
for some constants $\tilde C, \tilde C'$ depending only on $p$ and $C$.
Partition $[\nvar^{\cdc t}] = U \sqcup V$ where $U \defeq \{i: (\Pi_{H'{}^t}^\perp)_{ii} < 1/2\}$ and $V$ is its complement.
Note that $|U| \le 2\rank H'{}^t \le 2s$.
So
\begin{align*}
    &\phantomeq
    \left|
        \f 1 {\nvar^{\cdc t}} \sum_{i=1}^{\nvar^{\cdc t}}
        \EV_{w} \phi\left(\mu^t_i + \sigma^t \sqrt{(\Pi^\perp_{H'{}^t})_{ii}} w,
                        \gvar_i^{\cdclt \ell t}\right)
        -
        \EV_{w}
            \phi\lp
                G^t_{i:}\Sigma^\infty{}^+ \omega^\infty + \sigma^\infty w, \gvar_i^{\cdclt \ell t}\rp
    \right|
        \\
    &=
         \f 1 {\nvar^{\cdc t}} \sum_{i=1}^{\nvar^{\cdc t}}
         \left|
            \Phi\left(\mu^t_i,
                        \gvar_i^{\cdclt \ell t};
                    \sigma^t \sqrt{(\Pi^\perp_{H'{}^t})_{ii}} \right)
            -
            \Phi\lp
                G^t_{i:}\Sigma^\infty{}^+ \omega^\infty, \gvar_i^{\cdclt \ell t}; \sigma^\infty\rp
        \right|
        \\
    &\le
        \f 1 {\nvar^{\cdc t}} 
        \bigg( \sum_{i\in U}
            \left|
            \Phi\left(\mu^t_i,
                        \gvar_i^{\cdclt \ell t};
                    \sigma^t \sqrt{(\Pi^\perp_{H'{}^t})_{ii}} \right)
            \right|
            +
            \left|
            \Phi\lp
                G^t_{i:}\Sigma^\infty{}^+ \omega^\infty, \gvar_i^{\cdclt \ell t}; \sigma^\infty\rp
            \right|
            \numberthis\label{eqn:Usum}
            \\
    &\qquad
        +
            \sum_{i \in V}
            \left|
            \Phi\left(\mu^t_i,
                        \gvar_i^{\cdclt \ell t};
                    \sigma^t \sqrt{(\Pi^\perp_{H'{}^t})_{ii}} \right)
            -
            \Phi\lp
                G^t_{i:}\Sigma^\infty{}^+ \omega^\infty, \gvar_i^{\cdclt \ell t}; \sigma^\infty\rp
            \right|
            \bigg)
            .
            \numberthis\label{eqn:Vsum}
\end{align*}
The first sum \eqref{eqn:Usum} converges almost surely to 0:
\begin{align*}
    &\phantomeq
        \f 1 {\nvar^{\cdc t}} 
        \sum_{i\in U}
            \left|
            \Phi\left(\mu^t_i,
                        \gvar_i^{\cdclt \ell t};
                    \sigma^t \sqrt{(\Pi^\perp_{H'{}^t})_{ii}} \right)
            \right|
            +
            \left|
            \Phi\lp
                G^t_{i:}\Sigma^\infty{}^+ \omega^\infty, \gvar_i^{\cdclt \ell t}; \sigma^\infty\rp
            \right|
            \\
    &\le
        \f {2s} {\nvar^{\cdc t}}
        \max_{i \in [\nvar^{\cdc t}]} 
            \left|
            \Phi\left(\mu^t_i,
                        \gvar_i^{\cdclt \ell t};
                    \sigma^t \sqrt{(\Pi^\perp_{H'{}^t})_{ii}} \right)
            \right|
            +
            \left|
            \Phi\lp
                G^t_{i:}\Sigma^\infty{}^+ \omega^\infty, \gvar_i^{\cdclt \ell t}; \sigma^\infty\rp
            \right|
            \\
    &\le
        \f {2s} {(\nvar^{\cdc t})^{1-1/N}}
        \sqrt[N]{
            \f 1 {\nvar^{\cdc t}} 
            \sum_{i \in [\nvar^{\cdc t}]} 
            \left|
            \Phi\left(\mu^t_i,
                        \gvar_i^{\cdclt \ell t};
                    \sigma^t \sqrt{(\Pi^\perp_{H'{}^t})_{ii}} \right)
            \right|^N}
            \\
    &\qquad
        +
        \f {2s} {(\nvar^{\cdc t})^{1-1/N}}
        \sqrt[N]{
            \f 1 {\nvar^{\cdc t}}
            \sum_{i \in [\nvar^{\cdc t}]} 
            \left|
            \Phi\lp
                G^t_{i:}\Sigma^\infty{}^+ \omega^\infty, \gvar_i^{\cdclt \ell t}; \sigma^\infty\rp
            \right|^N
            }
        \numberthis\label{eqn:Ubound}
\end{align*}
for any $N > 0$.
Now $\f 1 {\nvar^{\cdc t}}
            \sum_{i \in [\nvar^{\cdc t}]} 
            \left|
            \Phi\lp
                G^t_{i:}\Sigma^\infty{}^+ \omega^\infty, \gvar_i^{\cdclt \ell t}; \sigma^\infty\rp
            \right|^N $ converges a.s. to a finite value, by \cref{claim:Aasto0}, so for large $N$,  the second term in \cref{eqn:Ubound} converges to 0.
Similarly, $\f 1 {\nvar^{\cdc t}} 
            \sum_{i \in [\nvar^{\cdc t}]} 
            \left|
            \Phi\left(\mu^t_i,
                        \gvar_i^{\cdclt \ell t};
                    \sigma^t \sqrt{(\Pi^\perp_{H'{}^t})_{ii}} \right)
            \right|^N$ converges a.s. to a finite value, as in the proof of \cref{claim:Casto0}, so for large $N$, the first term of \cref{eqn:Ubound} and thus \cref{eqn:Ubound} itself go to 0 almost surely.

We proceed with the second sum \eqref{eqn:Vsum}:
\begin{align*}
    &\phantomeq
        \f 1 {\nvar^{\cdc t}} \sum_{i\in V}
            \left|
            \Phi\left(\mu^t_i,
                        \gvar_i^{\cdclt \ell t};
                    \sigma^t \sqrt{(\Pi^\perp_{H'{}^t})_{ii}} \right)
            -
            \Phi\lp
                G^t_{i:}\Sigma^\infty{}^+ \omega^\infty, \gvar_i^{\cdclt \ell t}; \sigma^\infty\rp
            \right|
            \\
    &\le
        \tilde C'
        \lp 1 + \f{1}{\min(\sigma^\infty, \sigma^t/\sqrt 2)^2}\rp
        \f 1 {\nvar^{\cdc t}} \sum_{i \in V}
                (\varrho_i + |(\sigma^\infty)^2 - (\sigma^t)^2(\Pi^\perp_{H'{}^t})_{ii}|)
                (1 + |\mu^t_i|^{2p} 
                    + \varrho_i^{2p}
                    + \|\gvar^{\cdclt \ell}_i\|^{2p} + \max(\sigma^\infty, \sigma^t)^{2p})
\end{align*}
where $\varrho_i \defeq G^t_{i:} \varepsilon^t + H'{}^t_{i:} \varepsilon'{}^t$ with 
$\varepsilon^t, \varepsilon'{}^t = o(1)$ a.s. coming from \cref{claim:epsilon}.
Write $Y_i \defeq (1 + |\mu^t_i|^{2p} 
                    + \varrho_i^{2p}
                    + \|\gvar^{\cdclt \ell}_i\|^{2p} + \max(\sigma^\infty, \sigma^t)^{2p})$.
Since $\lp 1 + \f{1}{\min(\sigma^\infty, \sigma^t/\sqrt 2)^2}\rp$ is obviously uniformly bounded in $t$,
via Cauchy-Schwarz, the sum above is bounded by a constant multiple of
\begin{align*}
    \sqrt{
        \f 1 {\nvar^{\cdc t}} \sum_{i \in V}
                (\varrho_i + |(\sigma^\infty)^2 - (\sigma^t)^2(\Pi^\perp_{H'{}^t})_{ii}|)^2
    }
    \sqrt{
        \f 1 {\nvar^{\cdc t}} \sum_{i \in V}
                Y_i^2
    }.
\end{align*}
Using similar techniques as before, by applying induction hypothesis, $\sqrt{
        \f 1 {\nvar^{\cdc t}} \sum_{i \in V}
                Y_i^2
    }$ can be shown to be uniformly bounded in $t$, almost surely.
All it remains to show is that the first term in the product above converges to 0 a.s..
Now
\begin{align*}
    &\phantomeq
    \sqrt{
        \f 1 {\nvar^{\cdc t}} \sum_{i \in V}
                (\varrho_i + |(\sigma^\infty)^2 - (\sigma^t)^2(\Pi^\perp_{H'{}^t})_{ii}|)^2
    }
        \\
    &\le
        \sqrt{
            \f 1 {\nvar^{\cdc t}} \sum_{i \in V}
                    \varrho_i^2
        }
        +
        \sqrt{
            \f 1 {\nvar^{\cdc t}} \sum_{i \in V}
                    ((\sigma^\infty)^2 - (\sigma^t)^2)^2
        }
        +
        \sqrt{
            \f 1 {\nvar^{\cdc t}} \sum_{i \in V}
                    (\sigma^t)^4(1 - (\Pi^\perp_{H'{}^t})_{ii})^2
        }
        \\
    &\le
        \sqrt{
            \f 1 {\nvar^{\cdc t}} \sum_{i \in V}
                    \varrho_i^2
        }
        +
        |(\sigma^\infty)^2 - (\sigma^t)^2|
        +
        (\sigma^t)^2
        \sqrt{
            \f 1 {\nvar^{\cdc t}} \sum_{i \in V}
                    1 - (\Pi^\perp_{H'{}^t})_{ii}
        }
        \\
    &
        \pushright{\text{since $1 - (\Pi^\perp_{H'{}^t})_{ii} \in [0, 1/2]$}}
        \\
    &\le
        \sqrt{
            \f 1 {\nvar^{\cdc t}} \sum_{i \in V}
                    (G^t_{i:} \varepsilon^t)^2
        }
        +
        \sqrt{
            \f 1 {\nvar^{\cdc t}} \sum_{i \in V}
                    (H'{}^t_{i:} \varepsilon'{}^t)^2
        }
        +
        |(\sigma^\infty)^2 - (\sigma^t)^2|
        +
        (\sigma^t)^2
        \sqrt{
            \f {2 \rank H'{}^t} {\nvar^{\cdc t}}
        }
        \\
    &\le
        \|\varepsilon^t\|\sqrt{
            \f 1 {\nvar^{\cdc t}} \sum_{i \in V}
                    \|G^t_{i:}\|^2
        }
        +
        \|\varepsilon'{}^t\|
        \sqrt{
            \f 1 {\nvar^{\cdc t}} \sum_{i \in V}
                    \|H'{}^t_{i:}\|^2
        }
        +
        |(\sigma^\infty)^2 - (\sigma^t)^2|
        +
        (\sigma^t)^2
        \sqrt{
            \f {2 \rank H'{}^t} {\nvar^{\cdc t}}
        }
        .
\end{align*}
By induction hypothesis, $\sqrt{
            \f 1 {\nvar^{\cdc t}} \sum_{i \in V}
                    \|G^t_{i:}\|^2
        }$ converges and so is also uniformly bounded in $t$, almost surely.
Then, because $\|\varepsilon^t\| \asto 0$, the first term converges to 0 a.s..
Likewise for the second term.
The final two terms also obviously converge to 0 almost surely with $t$.
This completes the proof of our claim.

\end{claimproof}

\end{proof}

We are finally ready to prove \cref{thm:generalTensorP}, but a few lemmas first would help our effort significantly.

\begin{lemma}\label{lemma:steinMultivar}
Let $X = (X_1, \ldots, X_n)$ be a multivariate Gaussian with 0 mean and nondegenerate covariance.
For any $L_2(X)$-integrable function $f$,
\begin{align*}
    \EV X_i f(X) 
        &=
            \sum_{j=1}^n \f{\Cov(X_i, X_j)}{\Var(X_j | X_{\setminus j})} \EV (X_j - \EV[X_j | X_{\setminus j}]) f(X)\\
        &=
            \sum_{j=1}^n
            \f{\Cov(X_i, X_j) \EV (X_j - \Cov(X_j, X_{\setminus j}) \Cov(X_{\setminus j}, X_\setminus j)^{+} X_{\setminus j}) f(X)}
            {\Var(X_j) - \Cov(X_j, X_{\setminus j})\Cov(X_{\setminus j}, X_{\setminus J})^+ \Cov(X_{\setminus j}, X_j)}
            \\
\end{align*}

\end{lemma}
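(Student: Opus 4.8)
The plan is to first prove the identity for smooth, rapidly decaying test functions by Gaussian integration by parts, rewrite the resulting derivative-form in a derivative-free way using one-dimensional Stein on the conditional law of each coordinate, and finally extend to arbitrary $f \in L_2(X)$ by a density argument. Write $\Sigma = \Cov(X)$, which is nondegenerate by hypothesis, let $\Theta = \Sigma^{-1}$ be its precision matrix, and let $p(x) \propto e^{-\frac12 x^\trsp \Theta x}$ be the density of $X$. For $f$ Schwartz (or $C^1$ with compact support), $\partial_{x_k} p(x) = -(\Theta x)_k\, p(x)$, so integrating by parts gives $\EV (\Theta X)_k f(X) = \EV \partial_{x_k} f(X)$ for every $k$. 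Multiplying by $\Sigma_{ik}$ and summing over $k$ (using $\Sigma \Theta = I$) yields $\EV X_i f(X) = \sum_{k} \Cov(X_i, X_k)\, \EV \partial_{x_k} f(X)$, the multivariate Gaussian integration-by-parts identity.

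Next I would eliminate the derivatives. Fix $k$ and condition on $X_{\setminus k}$. By \cref{prop:GaussianCondition}, $X_k \mid X_{\setminus k} \sim \Gaus(m_k, v_k)$ with $m_k = \Cov(X_k, X_{\setminus k}) \Cov(X_{\setminus k}, X_{\setminus k})^+ X_{\setminus k}$ and $v_k = \Var(X_k) - \Cov(X_k, X_{\setminus k}) \Cov(X_{\setminus k}, X_{\setminus k})^+ \Cov(X_{\setminus k}, X_k)$; here $v_k > 0$ precisely because $\Sigma$ is nondegenerate (and the pseudoinverse coincides with the inverse since a principal submatrix of a positive definite matrix is positive definite). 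Since $X_k - m_k$ is, conditionally on $X_{\setminus k}$, a zero-mean Gaussian with the deterministic variance $v_k$, I can apply the univariate Stein lemma \cref{lemma:stein} conditionally, to the recentered variable $X_k - m_k$ and to the one-variable function $x \mapsto f(X_{\setminus k}, m_k + x)$, obtaining $\EV\big[(X_k - m_k) f(X) \,\big|\, X_{\setminus k}\big] = v_k\, \EV\big[\partial_{x_k} f(X) \,\big|\, X_{\setminus k}\big]$. Taking total expectation gives $\EV \partial_{x_k} f(X) = v_k^{-1}\, \EV (X_k - m_k) f(X)$, i.e. $\EV \partial_{x_k} f(X) = \Var(X_k \mid X_{\setminus k})^{-1}\, \EV (X_k - \EV[X_k \mid X_{\setminus k}]) f(X)$. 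Substituting into the identity from the first paragraph produces both displayed formulas (the second by expanding $m_k$ and $v_k$ via \cref{prop:GaussianCondition}).

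Finally, for the density step: the right-hand side $f \mapsto \sum_j \tfrac{\Cov(X_i, X_j)}{\Var(X_j \mid X_{\setminus j})} \EV (X_j - \EV[X_j \mid X_{\setminus j}]) f(X)$ is a bounded linear functional of $f$ on $L_2(X)$ because each $X_j - \EV[X_j \mid X_{\setminus j}]$ lies in $L_2(X)$; likewise $f \mapsto \EV X_i f(X)$ is bounded since $X_i \in L_2(X)$. Smooth rapidly decaying functions are dense in $L_2(X)$, so the identity, established on this dense set, extends to all $L_2(X)$-integrable $f$.

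The main obstacle is not any single computation but the care required in the last step: the ``canonical'' derivative form $\EV X_i f(X) = \sum_j \Cov(X_i, X_j) \EV \partial_{x_j} f(X)$ does \emph{not} survive the passage to non-differentiable $L_2$ functions, so the whole point is that, after the rewriting of paragraph two, the right-hand side is genuinely derivative-free and hence $L_2$-continuous in $f$; one must also be explicit that nondegeneracy of $\Sigma$ is what guarantees $\Var(X_j \mid X_{\setminus j}) > 0$ so that the stated formula is even well defined.
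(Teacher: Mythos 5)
Your proof is correct and takes essentially the same route as the paper's: multivariate Gaussian integration by parts to obtain $\EV X_i f(X) = \sum_j \Cov(X_i,X_j)\EV \partial_{x_j} f(X)$, then a conditional application of the univariate Stein lemma to convert each $\EV \partial_{x_j} f(X)$ into the derivative-free form, then density. You are in fact slightly more careful than the paper on the approximation step — the paper invokes the density argument at the outset, whereas, as you correctly observe, only the final derivative-free identity is $L_2$-continuous in $f$, so the passage to general $L_2$ functions properly belongs at the end.
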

\begin{proof}
By a density argument, it suffices to consider only the case when $f$ is $C^\infty$.
Then by Stein's lemma,
\begin{align*}
    \EV X_i f(X)
        &=
            \sum_{j=1}^n \Cov(X_i, X_j) \EV \pd_j f(X)
            .
\end{align*}
By Stein's lemma again,
\begin{align*}
    \EV \pd_j f(X)
        &=
            \EV_{X_{\setminus j}} \EV_{X_j|X_{\setminus j}} \pd_j f(X)
            \\
        &=
            \EV_{X_{\setminus j}} \EV_{X_j|X_{\setminus j}} 
                \f{(X_j - \Cov(X_j, X_{\setminus j}) \Cov(X_{\setminus j}, X_\setminus j)^{+} X_{\setminus j}) f(X)}
                {\Var(X_j) - \Cov(X_j, X_{\setminus j})\Cov(X_{\setminus j}, X_{\setminus J})^+ \Cov(X_{\setminus j}, X_j)}
                \\
        &=
            \EV_{X}
            \f{(X_j - \Cov(X_j, X_{\setminus j}) \Cov(X_{\setminus j}, X_\setminus j)^{+} X_{\setminus j}) f(X)}
                {\Var(X_j) - \Cov(X_j, X_{\setminus j})\Cov(X_{\setminus j}, X_{\setminus J})^+ \Cov(X_{\setminus j}, X_j)}.
\end{align*}
\end{proof}

\begin{defn}\label{defn:varphih} 
Let $\gvar^l \defeq A \hvar^m$ be a line in $\pi$ and let $g^i = A' h^i, i = 1, \ldots, s$ be all previous \ref{linetype:GLinear} lines that involve $A'.$
Here $\Avar'$ is the A-var such that $A' := A^\trsp$ if $A$ is an input A-var, or $A := A'{}^\trsp$ if $A$ is a transposed var.
Then, by the construction of $\check \pi$, $\varphi(\gvar^l) \defeq \varphig(\gvar^l) + \sum_{j=1}^s \avar_j \varphi(h^j)$ for some coefficients $\avar$ defined in \cref{defn:detransposition}.
Define $\fvar^{\varphih(\gvar^l)} = \sum_{j=1}^s \avar_j \fvar^{\varphi(h^j)}$,
so $\fvar^{\varphi(\gvar^l)} = \fvar^{\varphig(\gvar^l)} + \fvar^{\varphih(\gvar^l)}.$
\end{defn}

Let $\gvar^l := A \hvar^m$ be a \ref{linetype:GLinear} line in $\pi$ and set $\check \cdc = \cdc(\varphi(\gvar^l)).$
Consider the Hilbert space $L_2(Z)$ of $L_2$ functions of $Z \sim \Gaus(\mu^{\check \cdc}, K^{\check \cdc})$ (equivalently, square-integrable random variables in the $\sigma$-algebra generated by $Z$).
Write $\la f, g \ra = \EV f(Z) g(Z)$ for its inner product.
\begin{lemma}\label{lemma:Hpart}
Fix a line number $\lambda \ge l$.
Let $g^i = A' h^i, i = 1, \ldots, s$ be all \ref{linetype:GLinear} lines strictly before line $\lambda$ that involve $\Avar^k{}^\trsp.$
Define $C \in \R^{s \times s}, v \in \R^s$ by
\begin{align*}
    C_{ij}
        &=
            \la \fvar^{\varphi(h^i)}, \fvar^{\varphi(h^j)} \ra
            \\
    v_i
        &=
            \la \fvar^{\varphig(g^i)}, \fvar^{\varphi(\hvar^{m})}\ra.
\end{align*}
Then for all $i \in [s]$,
\begin{align*}
    v_i &= 
        \inv \alpha \la \fvar^{\varphi(h^i)}, \fvar^{\varphih(\gvar^l)}\ra
        \\
    \fvar^{\varphih(\gvar^l)} 
    &= \alpha (\fvar^{\varphi(h^i)})_{i=1}^s C^+ v
    = \alpha \sum_{i, j \in [s]}
            (C^+)_{ij}
            v_j
            \fvar^{\varphi(h^i)}
\end{align*}
where $\alpha = \lim_{t \to \infty} \nvar_2(A^t)/\nvar_1(A^t) = \alpha_{\cdc_2(A), \cdc_1(A)}$.
\end{lemma}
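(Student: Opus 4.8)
The plan is to reduce the statement to a single application of Stein's lemma, after which everything is Hilbert-space linear algebra with the Gram matrix $C$. Let $A'$ be the transpose-companion of $A$ as in \cref{defn:varphih}, so that in $\check\pi$ the matrices $\varphi(A)$ and $\varphi(A')$ are two \emph{independent} input A-vars (step~3 of \cref{defn:detransposition}); put $r = \lno(\varphi(A'))$, $\sigma \defeq \check\sigma^{r\infty}$; and let $Z$ be the ambient Gaussian whose mean and kernel are propagated by \cref{eqn:muRecur,eqn:KRecur}, under which all expanded functions below are square-integrable (all $\fvar^{\cdot}$ being polynomially bounded). Let $I_0 \subseteq [s]$ index the lines $g^i$ that precede $\gvar^l$ in $\pi$; these are precisely the lines used to build $\avar$ in step~\ref{detrx:GLinear} of \cref{defn:detransposition}, so that, with \cref{defn:varphih}, $\fvar^{\varphih(\gvar^l)} = \sum_{i\in I_0}\avar_i\fvar^{\varphi(h^i)}$ where $\avar = \alpha\, C_{I_0I_0}^+\, v_{I_0}$ ($C_{I_0I_0}, v_{I_0}$ being the restrictions of $C,v$ to $I_0$, which are exactly the data of step~\ref{detrx:GLinear}).

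The single analytic input is the claim $v = \sigma^2 C d$, where $d\in\R^s$ is supported on $I_0$ with $d_j \defeq \EV\,\pd_{Z^{\varphig(g^j)}}\fvar^{\varphi(\hvar^m)}(Z)$. To see it, first record second moments in $Z$: the G-vars $\varphig(g^i) = \varphi(A')\varphi(h^i)$ are all produced by the \emph{same} matrix $\varphi(A')$, so the MatMul branch of \cref{eqn:KRecur} gives $\EV Z^{\varphig(g^i)}Z^{\varphig(g^j)} = \sigma^2\,\EV\fvar^{\varphi(h^i)}(Z)\fvar^{\varphi(h^j)}(Z) = \sigma^2 C_{ij}$, while the ``else'' branch of \cref{eqn:KRecur}, together with the independence of $\varphi(A')$ from $\varphi(A)$ and from all input G-vars, shows each $Z^{\varphig(g^i)}$ is uncorrelated with every coordinate of $Z$ on which $\fvar^{\varphi(\hvar^m)}$ depends except those in $\{Z^{\varphig(g^j)}\}_{j\in I_0}$ (indeed $\fvar^{\varphi(\hvar^m)}$ is measurable with respect to the G-vars of $\check\pi$ strictly preceding $\varphig(\gvar^l)$, whose only $\varphi(A')$-MatMuls are the $\varphig(g^j)$, $j\in I_0$; here $\fvar^{\varphig(g^i)} = \id$ since $\varphig(g^i)$ is a G-var). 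Writing $\fvar^{\varphi(\hvar^m)}(Z) = F\!\left((Z^{\varphig(g^j)})_{j\in I_0}, W\right)$ with $W$ independent of the Gaussian vector $(Z^{\varphig(g^i)})_{i\in[s]}\sim\Gaus(0,\sigma^2 C)$, and applying multivariate Stein (\cref{lemma:steinMultivar}, a conditional form of \cref{lemma:stein}) to that vector for fixed $W$ --- noting $F$ depends on it only through the subvector indexed by $I_0$, so all other partials vanish --- yields $v_i = \EV Z^{\varphig(g^i)}F = \sigma^2\sum_{j\in I_0}C_{ij}d_j$, i.e. $v = \sigma^2 Cd$. When $\fvar^{\varphi(\hvar^m)}$ is not differentiable, or $\sigma^2 C$ is singular, one first replaces $\{\varphig(g^j)\}_{j\in I_0}$ by a maximal linearly independent (hence nondegenerate-covariance) subset $J$ and reads the partials as distributional derivatives against the resulting Gaussian density, exactly the device of the footnote to \cref{thm:generalTensorP}; the identity $C_{ij} = \sum_{k\in J}e_{jk}C_{ik}$ obtained by expanding $Z^{\varphig(g^j)}$ in that basis keeps the reduction consistent, and the conclusion $v = \sigma^2 Cd$ is unchanged. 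I expect this singular/nonsmooth bookkeeping to be the only delicate point.

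Granting the claim, the rest is immediate. Since $d$ is supported on $I_0$, $v_{I_0} = \sigma^2 C_{I_0I_0}d_{I_0}$, hence $\avar = \alpha\sigma^2 C_{I_0I_0}^+C_{I_0I_0}d_{I_0}$; the map $x\mapsto\sum_{i\in I_0}x_i\fvar^{\varphi(h^i)}$ has kernel exactly $\ker C_{I_0I_0}$ (its image has squared $L_2$-norm $x^\trsp C_{I_0I_0}x$) while $C_{I_0I_0}^+C_{I_0I_0}$ is the orthogonal projection onto $(\ker C_{I_0I_0})^\perp$, so $\fvar^{\varphih(\gvar^l)} = \sum_{i\in I_0}\avar_i\fvar^{\varphi(h^i)} = \alpha\sigma^2\sum_{i\in I_0}d_i\fvar^{\varphi(h^i)} = \alpha\sigma^2\sum_{i\in[s]}d_i\fvar^{\varphi(h^i)}$. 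Running the same kernel argument for the full matrix $C$ rewrites this as $\alpha\sigma^2\sum_i(C^+Cd)_i\fvar^{\varphi(h^i)} = \alpha\sum_i(C^+v)_i\fvar^{\varphi(h^i)} = \alpha\sum_{i,j}(C^+)_{ij}v_j\fvar^{\varphi(h^i)} = \alpha(\fvar^{\varphi(h^i)})_{i=1}^s C^+v$, the second asserted identity (in particular independent of the fixed $\lambda\ge l$). Pairing with $\fvar^{\varphi(h^i)}$ then gives $\la\fvar^{\varphi(h^i)},\fvar^{\varphih(\gvar^l)}\ra = \alpha\sum_k(C^+v)_k C_{ik} = \alpha(CC^+v)_i = \alpha v_i$, using $v = \sigma^2 Cd$ lies in the column space of $C$; dividing by $\alpha$ yields the first identity, $v_i = \inv\alpha\la\fvar^{\varphi(h^i)},\fvar^{\varphih(\gvar^l)}\ra$.
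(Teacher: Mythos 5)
Your proposal is correct, and it rests on the same two ingredients as the paper's proof — Gaussian integration by parts against the coordinates $Z^{\varphig(g^i)}$, followed by Gram-matrix/pseudoinverse algebra — but it organizes the case analysis differently. The paper splits $[s]$ into the indices $\mathcal{I}'$ with $\lno(g^i)<l$ (your $I_0$) and the rest: on $\mathcal{I}'$ it applies the conditional-variance form of Stein (\cref{lemma:steinMultivar}) to a maximal linearly independent subset, obtaining $v_{\mathcal{I}'}=\sigma^2 C_{\mathcal{I}'\mathcal{I}'}d$ for some coefficient vector depending only on $\fvar^{\varphi(\hvar^m)}$, while for $i\notin\mathcal{I}'$ it runs a separate conditional-independence/linear-regression argument giving $v_i=C_{i\mathcal{I}'}C_{\mathcal{I}'\mathcal{I}'}^{+}v_{\mathcal{I}'}$, and only then reassembles. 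You instead observe that $\fvar^{\varphi(\hvar^m)}$ has vanishing partials in every coordinate $Z^{\varphig(g^j)}$ with $j\notin I_0$, so a single application of multivariate Stein to the full vector $(Z^{\varphig(g^j)})_{j\in[s]}$ yields $v=\sigma^2 Cd$ with $d$ supported on $I_0$, covering all $i$ at once; this agrees with the paper's two-case answer precisely because $C_{iI_0}$ annihilates $\ker C_{I_0I_0}$, which is the Gram-kernel fact your argument already exploits. The unified identity $v=\sigma^2 Cd$ is a genuine simplification (it also makes the derivative formula for $\avar$ in the second half of \cref{thm:generalTensorP} immediate), at the cost of leaning harder on the distributional-derivative/maximal-linearly-independent-subset patch in the nonsmooth or degenerate case — the same device the paper reserves for its second claim, so this is an acceptable trade. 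The remaining linear algebra (the map $x\mapsto\sum_i x_i\fvar^{\varphi(h^i)}$ has kernel $\ker C$, $C^{+}C$ and $CC^{+}$ are orthogonal projections, and $v$ lies in the column space of $C$) is all sound.
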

\begin{proof}
\newcommand{\II}{{\mathcal{I}}}
\newcommand{\JJ}{{\mathcal{J}}}

Let $\II' \defeq \{i: \lno(\varphig(g^i)) < l\}$.

\paragraph{We show $v_i = \inv \alpha \la \fvar^{\varphi(h^i)}, \fvar^{\varphih(\gvar^l)}\ra$ for all $i \in \II'$.}
Note first of all that $\avar$ in \cref{defn:varphih} is defined by \cref{defn:detransposition} as $\alpha C_{\II'}^+ v_{\II'}$, where ,  $C_{\II'} = (C_{ij})_{i,j\in\II'}$, and $v_{\II'} = (v_i)_{i \in \II'}$.

Suppose $Z^\II \defeq \{Z^{\varphig(g^i)}\}_{i \in \II}$ is a maximal linearly independent set in $Z^{\II'} \defeq \{Z^{\varphig(g^i)}\}_{i \in \II'}$.
Note that $Z^{\II}$ (as well as $Z^{\II'}$) is also independent of all $Z^{\check g}$ where $\check g$ is produced by \ref{linetype:GLinear} involving a matrix that is not $\varphi(A')$ or where $\check g$ is an input var (by the construction of $K^{\check \cdc}$).
Let $Z'$ the collection of all such $Z^{\check g}$.
Then $\EV_{Z^\JJ} \fvar^{\varphi(\hvar^m)}(Z)$ is purely a function of $Z^\II$, by expressing other elements of $Z^{\II'}$ as linear combinations of $Z^{\II}$.
Thus, by \cref{lemma:steinMultivar} applied to $Z^\II$ and $\EV_{Z'} \fvar^{\varphi(\hvar^m)}$, there exist coefficients $\{a_j\}_{j \in \II}$ such that, for each $i \in \II$,
\begin{align*}
    v_i
        &=
            \EV_{Z^{\II}} Z^{\varphig(g^i)} \EV_{Z'| Z^{\II}} \fvar^{\varphi(\hvar^m)}(Z)
            =
            \EV_{Z^{\II}} Z^{\varphig(g^i)} \EV_{Z'} \fvar^{\varphi(\hvar^m)}(Z)
            \\
        &=
            \sum_{j \in \II} a_j K^{\check \cdc}(\varphig(g^i), \varphig(g^j))
            \\
        &=
            \la Z^{\varphig(g^i)}, \sum_{j \in \II} a_j Z^{\varphig(g^j)} \ra
            \\
        &=
            \f{(\sigma^{k\infty})^2} \alpha 
            \la \fvar^{\varphi(h^i)}, \sum_{j \in \II} a_j \fvar^{\varphi(h^j)} \ra
            \\
        & \pushright{\text{by construction of $K^{\cdc}$}}.
\end{align*}
This equality is extended to all $i \in \II'$ via linear combination.
Thus,
\begin{align*}
    (v_i)_{i \in \II'}
        &=
            \f{(\sigma^{k\infty})^2} \alpha 
            \la (\fvar^{\varphi(h^i)})_{i \in \II'}, \sum_{j \in \II} a_j \fvar^{\varphi(h^j)} \ra
            \\
    \fvar^{\varphih(\gvar^l)}
        &=
            \alpha \sum_{i,j \in \II'} \fvar^{\varphi(h^i)} (C_{\II'}^+)_{ij} v_j
            \\
        &=
            (\sigma^{k\infty})^2 
            \sum_{i,j \in \II'} \fvar^{\varphi(h^i)} (C_{\II'}^+)_{ij}
                \la \fvar^{\varphi(h^j)}, \sum_{j \in \II} a_j \fvar^{\varphi(h^j)} \ra
            \\
        &=
            (\sigma^{k\infty})^2
            \Pi_{\II'}
            \sum_{j \in \II} a_j \fvar^{\varphi(h^j)}
            \\
        &=
            (\sigma^{k\infty})^2\sum_{j \in \II} a_j \fvar^{\varphi(h^j)}
\end{align*}
where $\Pi_{\II'}$ is the projection operator on $L_2(Z)$ that projects to the linear span of $\{\fvar^{\varphi(h^i)}\}_{i \in \II'}$ (see \cref{defn:pseuodoinverse} and the basic facts underneath).

So all along, $v_i = \f 1 \alpha \la \fvar^{\varphi(h^i)}, \varphih(\gvar^l) \ra$ for all $i \in \II'$.

\paragraph{We show $v_i = \inv\alpha \la \fvar^{\varphi(h^i)}, \fvar^{\varphih(\gvar^l)}\ra$ for all $i \not\in \II'$.}

Suppose $g^i = A' h^i$ has line number greater than $l$.
Then, we have that, 
conditioned on $Z^{\II'}$, $\fvar^{\varphi(\hvar^m)}$ and $\fvar^{\varphig(g^i)}$ are independent.
Indeed, with the conditioning, the randomness in the former only comes from $\{Z^{\varphig(g^j)}\}_{j \not\in \II'}$ and the randomness in the latter only comes from $Z'$, and the two are independent.
Thus, 
\begin{align*}
    v_i 
        &=
            \EV_{Z^{\II'}}
                \lp \EV_{Z|Z^{\II'}} \fvar^{\varphig(g^i)}(Z) \rp
                \lp \EV_{Z|Z^{\II'}} \fvar^{\varphi(\hvar^m)}(Z) \rp
            \\
        &=
            \EV_{Z^{\II'}}
                \lp K^{\check \cdc}_{i \II'} (K^{\check \cdc}_{\II' \II'})^+ Z^{\II'} \rp
                \lp \EV_{Z|Z^{\II'}} \fvar^{\varphi(\hvar^m)}(Z) \rp
                \\
        &=
            K^{\check \cdc}_{i \II'} (K^{\check \cdc}_{\II' \II'})^+
            (\la Z^{\varphig(g^i)}, \fvar^{\varphi(\hvar^m)}\ra)_{i \in \II'}
\end{align*}
where $K^{\check \cdc}$ is the row vector $(K^{\check \cdc}(\varphig(g^i), \varphi(g^j)))_{j \in \II'}$ and $K^{\check \cdc}_{\II' \II'}$ is the submatrix $(K^{\check \cdc}(\varphig(g^i), \varphi(g^j)))_{i,j \in \II'}$.
Again by the construction of $K^{\check\cdc}$, this simplifies to
\begin{align*}
    v_i
        &=
            \sum_{j,k \in \II'} \la \fvar^{\varphi(h^i)}, \fvar^{\varphi(h^j)} \ra
                (C^+)_{jk} v_k
            \\
    v_i
        &=
            \la \fvar^{\varphi(h^i)}, \inv\alpha \fvar^{\varphih(\gvar^l)} \ra
\end{align*}

Therefore,
\begin{align*}
    &\phantomeq
        \sum_{i, j \in [s]}
            \fvar^{\varphi(h^i)} 
            (C^+)_{ij}
            v_j
            \\
    &=
        \inv\alpha\sum_{i, j \in [s]}
            \fvar^{\varphi(h^i)} 
            (C^+)_{ij}
            \la \fvar^{\varphi(h^j)}, \fvar^{\varphih(\gvar^l)} \ra
            \\
    &=
        \inv \alpha \Pi \fvar^{\varphih(\gvar^l)}
        \\
    &=
        \inv \alpha \fvar^{\varphih(\gvar^l)}.
\end{align*}
where $\Pi$ is the projection operator to the span of $\{\fvar^{\varphi(h^j)}\}_{j \in [s]}$, and the last equality follows because $\fvar^{\varphih(\gvar^l)}$ is already in this span.

\end{proof}

\generalTensorP*

\begin{proof}

We proceed by induction on line number of $\pi$.
All line types are trivial except \ref{linetype:GLinear}.
So suppose in $\pi$, line $l$ is $\gvar^l := A \hvar^m$, and 
the induction hypothesis holds for $l$ and $\cdc = \cdc(\gvar^l)$.
Set $\cdc_1 = \cdc$ and $\cdc_2 = \cdc(\hvar^m)$.
Let $A'$ be the A-var such that $A' := A^\trsp$ if $A$ is an input A-var, and $A := A'{}^\trsp$ otherwise.

Let $g^i := A h^i, i = 1, \ldots, r,$ be all \ref{linetype:GLinear} lines involving $A$ that appear before line $l$, and let $g'{}^i := A' h'{}^i, i = 1, \ldots, s,$ be all \ref{linetype:GLinear} lines involving $A'$ that appear before line $l$.
Note that $\cdc(g^i) = \cdc(h'{}^i) = \cdc_1$ and $\cdc(h^i) = \cdc(g'{}^i) = \cdc_2$.
Set $H^t = [h^{1t}|\cdots|h^{rt}] \in \R^{\nvar^{\cdc_2 t} \times r}$, and likewise for 
$G^t \in \R^{\nvar^{\cdc_1 t} \times r},
H'{}^t \in \R^{\nvar^{\cdc_1 t} \times s},
G'{}^t \in \R^{\nvar^{\cdc_2 t} \times r}.$
Let $\Aa$ be the $\sigma$-algebra generated by all vector vars before $\gvar^l$.

As in the proof of \cref{thm:gradIndep},
$$g^{l t} \disteq_{\Aa^t} \mu^t + \Pi^\perp_{H'{}^t} \tilde A^t \Pi^\perp_{H^t} \hvar^{mt}$$
where $\tilde A^t$ is random matrix sampled iid as $A$, and
$
\mu^t = 
    G^t \Sigma^{t}{}^+ \omega^t 
    + \f{\nvar^{\cdc_2 t}}{\nvar^{\cdc_1 t}} H'{}^t \Sigma'{}^t{}^+ \beta^t 
    - \f{\nvar^{\cdc_2 t}}{\nvar^{\cdc_1 t}} H'{}^t \Sigma'{}^t{}^+ \Upsilon^t \Sigma^t{}^+ \omega^t$ and
\begin{align*}
    \Sigma^t
        &\defeq H^t {}^\trsp H^t / \nvar^{\cdc_2 t}
            \in \R^{r \times r}
            &
    \Sigma'{}^t
        &\defeq H'{}^t{}^\trsp H'{}^t / \nvar^{\cdc_1 t}
            \in \R^{s \times s}
            &
    \Upsilon^t
        &\defeq
            G'{}^t{}^{\trsp} H^t / \nvar^{\cdc_2 t}
            \in \R^{s \times r}
            \\
    \omega^t
        &\defeq
            H^t{}^\trsp \hvar^m / \nvar^{\cdc_2 t}
            \in \R^{r}
            &
    \beta^t
        &\defeq
            G'{}^t{}^\trsp \hvar^m/ \nvar^{\cdc_2 t}
            \in \R^{s}.
\end{align*}

By induction hypothesis, $\Sigma^t, \Sigma'{}^t, \Upsilon^t, \omega^t, \beta^t$ all converge almost surely to corresponding limit values:
Let $\alpha = \alpha_{\cdc_2, \cdc_1} = \lim_{t \to \infty} \f{\nvar^{\cdc_2 t}}{\nvar^{\cdc_1 t}}$.
With $Z \sim \Gaus(\mu^{\check \cdc}, K^{\check \cdc})$,
\begin{align*}
    \Sigma^t_{ij} 
        &\asto
            \Sigma^\infty_{ij} \defeq
            \EV \fvar^{\varphi(h^i)}(Z)\fvar^{\varphi(h^j)}(Z)
            \\
    \Sigma'{}^t_{ij}
        &\asto
            \Sigma'{}^\infty_{ij} \defeq
            \EV \fvar^{\varphi(h'{}^i)}(Z)\fvar^{\varphi(h'{}^j)}(Z)
            \\
    \omega^t_i
        &\asto
            \omega^\infty_i \defeq
            \EV \fvar^{\varphi(h^i)}(Z) \fvar^{\varphi(\hvar^m)}(Z)
            \\
    \Upsilon^t_{ij}
        &\asto
            \Upsilon^\infty_i \defeq
            \EV \fvar^{\varphi(g'{}^i)}(Z) \fvar^{\varphi(h^j)}(Z)
            \\
    \beta^t_i
        &\asto
            \beta^\infty_i \defeq
            \EV \fvar^{\varphi(g'{}^i)}(Z) \fvar^{\varphi(\hvar^m)}(Z).
\end{align*}

As in the proof of \cref{thm:gradIndep}, we can apply \cref{thm:controlHighMoments} and Gaussian average smoothness to integrate out $\tilde A^t$ and obtain the following claim
\begin{claim}
With $w \sim \Gaus(0, 1),$
$$\f 1 {\nvar^{\cdc t}} \sum_{i=1}^{\nvar^{\cdc t}} \phi(\gvar^{lt}_i, \hvar^{\cdclt l t}_i) - \EV_w \phi(G^t_{i:} \avar + H'{}^t_{i:} \avar' + \sigma^\infty w, \hvar^{\cdclt l t}_i) \asto 0$$
where
\begin{align*}
\avar &\defeq \Sigma^\infty{}^+ \omega^\infty   \\
\avar' &\defeq \alpha \Sigma'{}^\infty {}^+ (\beta^\infty - \Upsilon^\infty \Sigma^\infty{}^+ \omega^\infty)\\
\sigma^\infty &\defeq K^{\check \cdc}(\varphig(\gvar^l), \varphig(\gvar^l)) - K^{\check \cdc}(\varphig(\gvar^l), \varphig(G)) K^{\check \cdc}|_{\varphig(G)}^+ K^{\check \cdc}(\varphig(G), \varphig(\gvar^l))\\
\end{align*}
with $\check \cdc = \cdc(\varphig(\gvar^l))$.
\end{claim}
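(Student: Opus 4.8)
The plan is to prove the Claim --- and with it complete the induction establishing \cref{thm:generalTensorP} --- by mirroring the argument for \cref{thm:gradIndep}, the only new feature being that the cross terms $\Upsilon^t$ and $\beta^t$ need no longer vanish (there is no oddness hypothesis here), so they survive into the coefficient vector $\avar'$. The outer induction is on the line number of $\pi$: the \ref{linetype:A} and \ref{linetype:trsp} cases are vacuous (no new vector var), a \ref{linetype:G} input is matched in $\check\pi$ by a fresh input of the same law and handled by the base case of \cref{thm:notransposeLimit}, and a \ref{linetype:lincomb} or \ref{linetype:nonlin} line only appends to $\bar\cdc$ a coordinate that is a fixed (linear, resp.\ coordinatewise) function of coordinates already present, so the induction hypothesis applied to a suitably modified test function closes it. Thus only $\gvar^l := A\hvar^m$ requires work.

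First I would apply the conditioning trick \cref{lemma:condTrick} to $A$ (and its transpose partner $A'$), conditioning on the $\sigma$-algebra $\Aa^t$ generated by all vector vars preceding $\gvar^l$, obtaining $\gvar^{lt} \disteq_{\Aa^t} \mu^t + \Pi^\perp_{H'{}^t}\tilde A^t \Pi^\perp_{H^t}\hvar^{mt}$ with $\mu^t$ written out through the Gram-type quantities $\Sigma^t, \Sigma'{}^t, \Upsilon^t, \omega^t, \beta^t$, exactly as in the proof of \cref{thm:gradIndep}. The induction hypothesis, applied to the earlier lines of $\pi$ (whose empirical statistics are governed by $Z \sim \Gaus(\mu^{\check\cdc}, K^{\check\cdc})$), together with the almost-sure rank-convergence assumption for $\check\pi$, gives $\Sigma^t \asto \Sigma^\infty$ and likewise for the others; hence $\mu^t_i = G^t_{i:}\avar + H'{}^t_{i:}\avar' + \varrho_i$ with $\varrho_i = G^t_{i:}\varepsilon^t + H'{}^t_{i:}\varepsilon'{}^t$, $\varepsilon^t, \varepsilon'{}^t \asto 0$, while the conditional per-coordinate variance satisfies $(\sigma^t)^2 \asto (\sigma^\infty)^2$. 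Conditioned on $\Aa^t$, $\gvar^{lt}$ is a Gaussian vector with mean $\mu^t$ and covariance $(\sigma^t)^2 \Pi^\perp_{H'{}^t}$, and $\Pi^\perp_{H'{}^t}$ is an orthogonal projection of corank $\le s = O(1)$.

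The core step is to integrate out the fresh matrix $\tilde A^t$. Centering the per-coordinate functions $x \mapsto \phi(\mu^t_i + \sigma^t x, \hvar^{\cdclt l t}_i)$, I would apply \cref{thm:controlHighMoments} (valid precisely because the corank of $\Pi^\perp_{H'{}^t}$ stays bounded) to bound a high even moment of the centered empirical average, then \cref{lemma:momentBoundASConvergence} to deduce that $\f 1 {\nvar^{\cdc t}} \sum_i \phi(\gvar^{lt}_i, \hvar^{\cdclt l t}_i)$ is a.s.\ asymptotic to $\f 1 {\nvar^{\cdc t}} \sum_i \EV_{z_i \sim \Gaus(0, (\Pi^\perp_{H'{}^t})_{ii})} \phi(\mu^t_i + \sigma^t z_i, \hvar^{\cdclt l t}_i)$; checking that the moment expression is a.s.\ uniformly bounded in $t$ is where polynomial boundedness of every $\fvar^l$ is essential, since it makes the relevant envelopes polynomially bounded in $\hvar^{\cdclt l t}_i$ and lets the induction hypothesis apply. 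Next, as in the longest part of the proof of \cref{thm:gradIndep}, I would replace $\sigma^t \sqrt{(\Pi^\perp_{H'{}^t})_{ii}}$ by $\sigma^\infty$ and $\mu^t_i$ by $G^t_{i:}\avar + H'{}^t_{i:}\avar'$: split $[\nvar^{\cdc t}]$ into $U = \{i: (\Pi^\perp_{H'{}^t})_{ii} < 1/2\}$, which has size $\le 2s$ and whose contribution dies by a power-mean bound, and its complement $V$, on which the differentiability of $\Phi(x^\ell, x^{\cdclt\ell}; \sigma) \defeq \EV_w \phi(x^\ell + \sigma w, x^{\cdclt\ell})$ in $x^\ell$ and in $\sigma^2$ (\cref{lemma:gaussianDer}), combined with the polynomial growth bounds and Cauchy--Schwarz, reduces the error to $\|\varepsilon^t\|$, $\|\varepsilon'{}^t\|$, $|(\sigma^\infty)^2 - (\sigma^t)^2|$, and $\operatorname{rank} H'{}^t / \nvar^{\cdc t}$, all $\to 0$. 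This proves the Claim.

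To finish \cref{thm:generalTensorP} from the Claim, the target becomes $\f 1 {\nvar^{\cdc t}} \sum_i \EV_w \phi(G^t_{i:}\avar + H'{}^t_{i:}\avar' + \sigma^\infty w, \hvar^{\cdclt l t}_i) \asto \EV \phi((\fvar^{\varphi(h)}(Z))_{h\in\bar\cdc})$; since the inner expectation is a polynomially bounded function of $\hvar^{\cdclt l t}_i$, the induction hypothesis identifies the limit as $\EV_{Z,w}$ of $\phi$ evaluated at a fixed linear functional of $Z^{\cdclt l}$ plus $\sigma^\infty w$, and the conditioning identity \cref{prop:GaussianCondition} together with \cref{lemma:Hpart} --- which is exactly where the step-6 coefficients of \cref{defn:detransposition} are pinned down --- shows this equals $\fvar^{\varphi(\gvar^l)}(Z)$ conditionally on $Z^{\cdclt l}$, closing the induction. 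For the differentiable refinement, I would use multivariate Stein (\cref{lemma:steinMultivar}, or \cref{lemma:stein} when $s=1$) to verify that the $\avar$ of \cref{defn:detransposition} equals $\alpha \sigma^2 (\EV \pd_{Z^{\varphig(g^i)}} \fvar^{\varphi(\hvar^m)}(Z))_{i\in[s]}$, interpreting the derivatives distributionally on a maximal-rank subset of $\{\varphig(g^i)\}$ when the covariance is singular. I expect the third step --- simultaneously controlling the varying per-coordinate variances $(\Pi^\perp_{H'{}^t})_{ii}$ and the vanishing error vectors $\varepsilon^t, \varepsilon'{}^t$ while keeping every moment almost surely uniformly bounded in $t$ --- to be the main obstacle, just as the analogous estimate is the technical bulk of the proof of \cref{thm:gradIndep}.
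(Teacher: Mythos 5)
Your proposal is correct and follows essentially the same route as the paper: the paper proves this claim by conditioning via \cref{lemma:condTrick}, using the induction hypothesis plus rank convergence to get $\Sigma^t,\Sigma'{}^t,\Upsilon^t,\omega^t,\beta^t\asto$ their limits (with $\Upsilon^\infty,\beta^\infty$ now nonzero, exactly as you note), and then invoking \cref{thm:controlHighMoments} together with the Gaussian-smoothness/$U$--$V$ splitting argument from the proof of \cref{thm:gradIndep} to integrate out $\tilde A^t$ and replace $\mu^t_i$ and $\sigma^t\sqrt{(\Pi^\perp_{H'{}^t})_{ii}}$ by their limits. You have correctly identified both the only substantive change from \cref{thm:gradIndep} (the surviving cross terms feeding into $\avar'$) and the technical crux (uniform a.s.\ moment control while the per-coordinate variances and error vectors vary with $t$).
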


Combining this with the induction hypothesis and \cref{thm:notransposeLimit}, we have
\begin{claim}
With $w \sim \Gaus(0, 1)$ and $Z \sim \Gaus(\mu^{\check \cdc}, K^{\check \cdc})$,
$$
    \f 1 {\nvar^{\cdc t}} \sum_{i=1}^{\nvar^{\cdc t}}
        \phi(\gvar^{lt}_i, \hvar^{\cdclt l t}_i)
        - \EV_{w, Z} \phi\lp \sum_{j=1}^r \avar_j \fvar^{\varphi(g^j)}(Z) 
                    + \sum_{j'=1}^s \avar'_{j'} \fvar^{\varphi(h'{}^{j'})}(Z)
                    + \sigma^\infty w,
                    \{\fvar^{\varphi(h)}(Z)\}_{h \in (\bar \cdc)_{< l}}
                    \rp
    \asto 0
$$
\end{claim}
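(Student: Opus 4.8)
The plan is to complete the induction on the line number of $\pi$ started in the excerpt. The input-line, transpose-line, \ref{linetype:lincomb}-line and \ref{linetype:nonlin}-line cases are immediate: for an input $\gvar$-var the base-case argument of \cref{thm:notransposeLimit} (the strong law for triangular arrays, \cref{thm:SLLN}) applies verbatim, since the detransposition copies the line together with its sampling data and $\fvar^{\varphi(\gvar^l)} = \id$ (a polynomially bounded function is $\alpha$-controlled with $\alpha = 1 < 2$); for \ref{linetype:lincomb} and \ref{linetype:nonlin} lines the detransposition produces the same linear combination, resp.\ the same coordinatewise function, applied to $\varphi$ of the arguments, so $\fvar^{\varphi(\cdot)}$ is exactly the expanded definition and the conclusion follows from the induction hypothesis together with the closure of polynomially bounded functions under composition. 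Hence only the \ref{linetype:GLinear} case $\gvar^l := A\hvar^m$ remains, and there it suffices to flesh out the two Claims stated at the end of the excerpt and then to identify their common limit with $\EV\phi((\fvar^{\varphi(h)}(Z'))_{h\in\bar\cdc})$, where $Z'$ is the Gaussian attached to the detransposition \emph{after} line $l$ is processed.

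For the first Claim I would re-run the proof of \cref{thm:gradIndep} almost verbatim. Conditioning on $\Aa^t$ via \cref{lemma:condTrick} gives $\gvar^{lt} \disteq_{\Aa^t} \mu^t + \Pi^\perp_{H'{}^t}\tilde A^t\Pi^\perp_{H^t}\hvar^{mt}$; the induction hypothesis applied to the finitely many previously defined vars identifies the limits $\Sigma^\infty,\Sigma'{}^\infty,\Upsilon^\infty,\omega^\infty,\beta^\infty$ (hence $\avar^t\to\avar$, $\avar'{}^t\to\avar'$, $(\sigma^t)^2\to(\sigma^\infty)^2$), using almost sure rank convergence of $\check\pi$ to pass the Moore--Penrose pseudoinverses to the limit. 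One then integrates out the fresh Gaussian matrix $\tilde A^t$ exactly as there, through the three-term split $\mathsf A + \mathsf B + \mathsf C$: $\mathsf C$ (the fluctuation of the conditional law) is controlled by \cref{thm:controlHighMoments} applied to $\Pi^\perp_{H'{}^t}$, whose corank is $O(1)$, together with \cref{lemma:momentBoundASConvergence}; $\mathsf B$ (replacing conditional mean and variance by their limits) is controlled by the smoothness of Gaussian averaging, \cref{lemma:gaussianDer}, with the $U\sqcup V$ split absorbing the few coordinates on which $\Pi^\perp_{H'{}^t}$ is degenerate. The only change from \cref{thm:gradIndep} is that $\Upsilon^\infty,\beta^\infty$ need no longer vanish, which merely alters the formula for $\avar'$ and needs no new idea. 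The second Claim then follows by applying the induction hypothesis (and, where the detransposition introduces a fresh Gaussian $\gvar$-var, the no-transpose analysis of \cref{thm:notransposeLimit}) to the polynomially bounded test function $\hvar^{\cdclt l t}_i \mapsto \EV_w\phi(G^t_{i:}\avar + H'{}^t_{i:}\avar' + \sigma^\infty w,\ \cdot)$.

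It then remains to recognize the limit $\EV_{w,Z}\phi\big(M,(\fvar^{\varphi(h)}(Z))_{h\in(\bar\cdc)_{<l}}\big)$, with $M := \sum_j\avar_j\fvar^{\varphi(g^j)}(Z) + \sum_{j'}\avar'_{j'}\fvar^{\varphi(h'{}^{j'})}(Z) + \sigma^\infty w$, as $\EV\phi((\fvar^{\varphi(h)}(Z'))_{h\in\bar\cdc})$, where $Z'\sim\Gaus(\mu^{\check\cdc'},K^{\check\cdc'})$ is the Gaussian of the detransposition after adding the lines $\varphig(\gvar^l):=\varphi(A)\varphi(\hvar^m)$ and $\varphi(\gvar^l):=\varphig(\gvar^l)+\sum_{j'}\avar^{\mathrm{det}}_{j'}\varphi(h'{}^{j'})$ of step~\ref{detrx:GLinear} of \cref{defn:detransposition} — writing $\avar^{\mathrm{det}}$ for the coefficient vector called $\avar$ there, to avoid clashing with the $\avar,\avar'$ of the Claims. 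Thus $Z'$ extends $Z$ by the single coordinate $\zeta := (Z')^{\varphig(\gvar^l)}$ and $\bar\cdc = (\bar\cdc)_{<l}\cup\{\gvar^l\}$, and since the old coordinates agree it is enough to show $\fvar^{\varphi(\gvar^l)}(Z')$ equals $M$ in law conditionally on them. By \cref{prop:GaussianCondition} and the structure of $K^{\check\cdc}$ (a \ref{linetype:GLinear} coordinate correlates only with those built from the same matrix), $\zeta$ conditioned on the old coordinates is Gaussian with mean $\sum_j\avar_j (Z')^{\varphig(g^j)}$ — the proportionality constants in $K^{\check\cdc}$ cancel, so the coefficients are exactly $(\Sigma^\infty)^+\omega^\infty = \avar$ — and variance the Schur complement $\sigma^\infty$. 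Expanding $\fvar^{\varphi(g^j)} = \fvar^{\varphig(g^j)} + \fvar^{\varphih(g^j)}$ (\cref{defn:varphih}) and writing each $\fvar^{\varphih(g^j)}$, and $\fvar^{\varphih(\gvar^l)}$ itself, as linear combinations of the $\fvar^{\varphi(h'{}^{j'})}$, matching the coefficients of $(Z')^{\varphig(g^j)}$ and of each $\fvar^{\varphi(h'{}^{j'})}$ on the two sides reduces the whole identification to a single linear identity among $\avar^{\mathrm{det}}$, $\avar'$ and the coefficients of the $\fvar^{\varphih(g^j)}$; this identity is precisely what \cref{lemma:Hpart} supplies (through the $L^2(Z)$-projection identities in its proof and \cref{lemma:steinMultivar}). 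I expect this coefficient bookkeeping to be the main obstacle.

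Finally, for the differentiable addendum I would apply \cref{lemma:stein}/\cref{lemma:steinMultivar} to rewrite, for the line $\gvar^l := A\hvar^m$ with $r$ the line of $\varphi(A')$ in $\check\pi$ and $\sigma = \sigma^{r\infty}$, $v_i = \EV (Z)^{\varphig(g^i)}\fvar^{\varphi(\hvar^m)}(Z) = \sigma^2\sum_j C_{ij}\EV\pd_{Z^{\varphig(g^j)}}\fvar^{\varphi(\hvar^m)}(Z)$ (cross terms with coordinates not built from $\varphi(A')$ vanish by independence), so that $\alpha C^+ v = \alpha\sigma^2 C^+C\,d$ with $d_j = \EV\pd_{Z^{\varphig(g^j)}}\fvar^{\varphi(\hvar^m)}(Z)$. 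Since $C^+C - I$ annihilates every $L^2(Z)$-null combination of $\{\fvar^{\varphi(h^j)}\}$, replacing $\avar^{\mathrm{det}}$ by $\alpha\sigma^2 d$ leaves $\fvar^{\varphi(\gvar^l)}$ unchanged almost surely; when $\{K^{\check\cdc}(\varphig(g^i),\varphig(g^j))\}$ is singular one restricts to a maximal-rank subset of the $\varphig(g^i)$ and reads the derivatives as tempered distributions tested against the Gaussian density, as in the footnote, and the already-established almost sure convergence is unaffected.
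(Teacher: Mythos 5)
Your proposal is correct and, on the statement at issue, takes essentially the same route as the paper: the paper also first obtains the preceding claim by re-running the conditioning-trick argument of \cref{thm:gradIndep} (the $\mathsf{A}+\mathsf{B}+\mathsf{C}$ split, \cref{thm:controlHighMoments}, and Gaussian-average smoothness), and then deduces the stated claim by applying the induction hypothesis together with \cref{thm:notransposeLimit} to the polynomially bounded test function $\hvar^{\cdclt l t}_i \mapsto \EV_w \phi\lp G^t_{i:}\avar + H'{}^t_{i:}\avar' + \sigma^\infty w, \hvar^{\cdclt l t}_i\rp$. The further material in your write-up (identifying the limit with $\fvar^{\varphi(\gvar^l)}(Z)$ via the coefficient bookkeeping of \cref{lemma:Hpart}, and the differentiable addendum) goes beyond this particular claim but likewise mirrors the paper's subsequent steps.
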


In the following, we consider the inner product space of $L^2$-integrable functions of $Z$ (equivalently, square-integrable random variables in the $\sigma$-algebra generated by $Z$), with inner product $\la f, g \ra \defeq \EV f(Z) g(Z)$ with $Z \sim \Gaus(\mu^{\check \cdc}, K^{\check \cdc})$.
We abuse notation and let any vector-var in $\check \pi$ (e.g. $\check \hvar^m$) denote the corresponding function (e.g. $\fvar^{\check \hvar^m}$).
Note that we can rewrite
\begin{align*}
    &\phantomeq
        \beta^\infty - \Upsilon^\infty \Sigma^\infty{}^+ \omega^\infty
        \\
    &=
        \left(\la \varphi(g'{}^i), \varphi(\hvar^m) \ra
        - \sum_{j,k}\la \varphi(g'{}^i), \varphi(h^j) \ra (\Sigma^\infty{}^+)_{jk} \la \varphi(h^k), \varphi(\hvar^m) \ra    
        \right)_{i \in s} 
        \\
    &=
        (\la \Pi_{\varphi(H)}^\perp \varphi(g'{}^i), \varphi(\hvar^m)\ra)_{i \in s}
    =
        (\la \varphi(g'{}^i), \Pi_{\varphi(H)}^\perp \varphi(\hvar^m)\ra)_{i \in s}
\end{align*}
where $\Pi_{\varphi(H)}$ is the projection operator to the span of $\varphi(H) \defeq \{\varphi(h^i)\}_{i=1}^r$ and $\Pi_{\varphi(H)}^\perp$ is its orthogonal complement.

\begin{claim}
\begin{align*}
    \sum_{j=1}^r \avar_j \varphih(g^j)
        &=
            \alpha \sum_{i,j=1}^s \varphi(h'{}^i)(\Sigma'{}^\infty{}^+)_{ij} \la \Pi_{\varphi(H)} {\varphig(g'{}^j)}, {\varphi(\hvar^m)}\ra.
\end{align*}

\end{claim}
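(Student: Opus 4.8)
The plan is to deduce the claim from \cref{lemma:Hpart}, applied separately to each earlier \ref{linetype:GLinear} line $g^j := A h^j$ ($j = 1, \dots, r$) rather than to the line $\gvar^l := A \hvar^m$ itself. Concretely, I would invoke \cref{lemma:Hpart} with ``current line'' $g^j := A h^j$ (so in the lemma's notation $\hvar^m \mapsto h^j$, while $A$ and $A'$ are unchanged) and with the free line-number parameter $\lambda$ set to $l = \lno(\gvar^l)$. Because $g^j$ occurs before line $l$, we have $\lambda \ge \lno(g^j)$, so the hypothesis is met, and the \ref{linetype:GLinear} lines strictly before $\lambda$ involving $A'$ are exactly $g'{}^1, \dots, g'{}^s$. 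The lemma then gives
\[
    \fvar^{\varphih(g^j)} = \alpha \sum_{i,k=1}^s (\Sigma'{}^\infty{}^+)_{ik}\, v^{(j)}_k\, \fvar^{\varphi(h'{}^i)},
    \qquad
    v^{(j)}_k = \la \fvar^{\varphig(g'{}^k)}, \fvar^{\varphi(h^j)} \ra ,
\]
where the matrix $C$ of \cref{lemma:Hpart} is here $C_{ik} = \la \fvar^{\varphi(h'{}^i)}, \fvar^{\varphi(h'{}^k)}\ra = \Sigma'{}^\infty_{ik}$ (independent of $j$), and the lemma's $\alpha = \lim_t \nvar_2(A^t)/\nvar_1(A^t)$ coincides with $\alpha_{\cdc_2,\cdc_1}$ since $\cdc_i(A) = \cdc_i$. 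The crucial feature is that only $v^{(j)}_k$ depends on $j$.

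Next I would multiply by $\avar_j$, sum over $j$, and pull the sum inside the inner product:
\[
    \sum_{j=1}^r \avar_j \fvar^{\varphih(g^j)}
    = \alpha \sum_{i,k=1}^s \fvar^{\varphi(h'{}^i)} (\Sigma'{}^\infty{}^+)_{ik}
        \la \fvar^{\varphig(g'{}^k)},\ \textstyle\sum_{j=1}^r \avar_j \fvar^{\varphi(h^j)} \ra .
\]
It then remains to identify $\sum_{j} \avar_j \fvar^{\varphi(h^j)}$. Since $\avar = \Sigma^\infty{}^+ \omega^\infty$ with $\Sigma^\infty_{jk} = \la \fvar^{\varphi(h^j)}, \fvar^{\varphi(h^k)}\ra$ and $\omega^\infty_j = \la \fvar^{\varphi(h^j)}, \fvar^{\varphi(\hvar^m)}\ra$, the standard pseudoinverse formula for the orthogonal projection onto the span of $\{\fvar^{\varphi(h^j)}\}_j$ (see the facts under \cref{defn:pseuodoinverse}) shows $\sum_j \avar_j \fvar^{\varphi(h^j)} = \Pi_{\varphi(H)} \fvar^{\varphi(\hvar^m)}$. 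As $\Pi_{\varphi(H)}$ is an orthogonal, hence self-adjoint, projection, $\la \fvar^{\varphig(g'{}^k)}, \Pi_{\varphi(H)} \fvar^{\varphi(\hvar^m)}\ra = \la \Pi_{\varphi(H)} \fvar^{\varphig(g'{}^k)}, \fvar^{\varphi(\hvar^m)}\ra$; substituting and renaming $k \to j$ produces exactly the asserted identity.

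The only real subtlety — and it is already handled inside \cref{lemma:Hpart} — is that applying the lemma to a fixed $g^j$ under the uniform choice $\lambda = l$ (rather than $\lambda = \lno(g^j)$) forces us to include among the $g'{}^i$ those \ref{linetype:GLinear} lines involving $A'$ whose line numbers lie between $\lno(g^j)$ and $l$; for those ``future'' indices the coefficient is still $v^{(j)}_k$, which the proof of \cref{lemma:Hpart} establishes via the conditional independence of $\fvar^{\varphig(g'{}^k)}$ and $\fvar^{\varphi(h^j)}$ given the relevant coordinates of $Z$ (the case $k \notin \mathcal I'$ there). Consequently the main work is just lining up the lemma's hypotheses with each $g^j$ using a single $\lambda$ and observing that $C$ is then the $j$-independent matrix $\Sigma'{}^\infty$; the rest is the short computation above.
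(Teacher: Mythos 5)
Your proposal is correct and follows essentially the same route as the paper: apply \cref{lemma:Hpart} to each earlier line $g^j := A h^j$ (with $\lambda = l$, so that the same $g'{}^1,\dots,g'{}^s$ and the $j$-independent matrix $\Sigma'{}^\infty$ appear), then recognize $\sum_j \avar_j \fvar^{\varphi(h^j)} = \Pi_{\varphi(H)}\fvar^{\varphi(\hvar^m)}$ via the pseudoinverse-projection identity and move the self-adjoint projection across the inner product. The paper's version merely writes $\avar_j = \sum_q(\Sigma^\infty{}^+)_{jq}\la\varphi(h^q),\varphi(\hvar^m)\ra$ explicitly and regroups the double sum, which is the identical computation.
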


\begin{claimproof}

By \cref{lemma:Hpart},
\begin{align*}
    \sum_{j=1}^r \avar_j \varphih(g^j)
        &=
            \alpha \sum_{j,q=1}^r 
            (\Sigma^\infty{}^+)_{jq} \la \varphi(h^q), \varphi(\hvar^m) \ra
            \sum_{a, b \in [s]}
            \varphi(h'{}^a)
            (\Sigma'{}^\infty{}^+)_{ab}
            \la \varphig(g'{}^b), \varphi(h^j) \ra
            \\
        &=
            \alpha
            \sum_{a, b \in [s]}
            \sum_{j,q\in[r]}
            \varphi(h'{}^a)
            (\Sigma'{}^\infty{}^+)_{ab}
            \la \varphig(g'{}^b), \varphi(h^j) \ra
            (\Sigma^\infty{}^+)_{jq} \la \varphi(h^q), \varphi(\hvar^m) \ra
            \\
        &=
            \alpha
            \sum_{a, b \in [s]}
            \varphi(h'{}^a)
            (\Sigma'{}^\infty{}^+)_{ab}
            \la \varphig(g'{}^b), \Pi_{\varphi(H)} \varphi(\hvar^m) \ra
\end{align*}
as desired.
\end{claimproof}

\begin{claim}
$$
\sum_{j=1}^r \avar_j \varphih(g^j) + \sum_{j'=1}^s \avar'_{j'} \varphi(h'{}^{j'})
= \alpha \sum_{i,j=1}^s \varphi(h'{}^i)(\Sigma'{}^\infty{}^+)_{ij} 
    \la {\varphig(g'{}^j)}, {\varphi(\hvar^m)}\ra.
$$
\end{claim}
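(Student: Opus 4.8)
The plan is to add the formula for $\sum_{j=1}^r \avar_j \varphih(g^j)$ supplied by the previous claim to a parallel rewriting of $\sum_{j'=1}^s \avar'_{j'}\varphi(h'{}^{j'})$, and then collapse the two contributions by a single projection identity. First I would expand the second sum using $\avar' = \alpha \Sigma'{}^\infty{}^+(\beta^\infty - \Upsilon^\infty\Sigma^\infty{}^+\omega^\infty)$ together with the identity $\beta^\infty - \Upsilon^\infty\Sigma^\infty{}^+\omega^\infty = (\la \varphi(g'{}^i), \Pi_{\varphi(H)}^\perp\varphi(\hvar^m)\ra)_{i\in[s]}$ already recorded above; this gives $\sum_{j'=1}^s\avar'_{j'}\varphi(h'{}^{j'}) = \alpha\sum_{i,j=1}^s \varphi(h'{}^i)(\Sigma'{}^\infty{}^+)_{ij}\la\varphi(g'{}^j), \Pi_{\varphi(H)}^\perp\varphi(\hvar^m)\ra$. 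Adding the previous claim and factoring out the common $\alpha\sum_{i,j}\varphi(h'{}^i)(\Sigma'{}^\infty{}^+)_{ij}$, it suffices to prove, for each $j\in[s]$,
\[
\la \Pi_{\varphi(H)}\varphig(g'{}^j), \varphi(\hvar^m)\ra + \la \varphi(g'{}^j), \Pi_{\varphi(H)}^\perp\varphi(\hvar^m)\ra = \la \varphig(g'{}^j), \varphi(\hvar^m)\ra .
\]

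The key step is this identity, and it is purely Hilbert-space linear algebra in $L^2(Z)$. Since $\Pi_{\varphi(H)}$ and $\Pi_{\varphi(H)}^\perp = I - \Pi_{\varphi(H)}$ are self-adjoint orthogonal projections, moving $\Pi_{\varphi(H)}^\perp$ onto the first slot and subtracting the right-hand side shows that the difference of the two sides equals $\la \Pi_{\varphi(H)}^\perp(\varphi(g'{}^j) - \varphig(g'{}^j)), \varphi(\hvar^m)\ra = \la \Pi_{\varphi(H)}^\perp\varphih(g'{}^j), \varphi(\hvar^m)\ra$. So the claim reduces to $\varphih(g'{}^j) \in \operatorname{span}\varphi(H)$, which follows directly from the construction of $\check\pi$: by step 6 of \cref{defn:detransposition} and \cref{defn:varphih}, when the line $g'{}^j := A' h'{}^j$ is detransposed the correction term $\varphih(g'{}^j)$ is by definition a linear combination of the $\varphi(h)$ over the G- or H-vars $h$ feeding the earlier \ref{linetype:GLinear} lines $g := A h$ with $A = (A')^\trsp$, and every such $h$ belongs to $H = \{h^i\}_{i=1}^r$; hence $\varphih(g'{}^j)$ lies in $\operatorname{span}\varphi(H)$ and is annihilated by $\Pi_{\varphi(H)}^\perp$.

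The main obstacle is bookkeeping rather than analysis. One must check that the index set $H = \{h^i\}_{i=1}^r$ entering $\Sigma^\infty$, $\omega^\infty$, $\Upsilon^\infty$, and $\Pi_{\varphi(H)}$ really contains all the G/H-vars appearing in the detransposition correction of each $g'{}^j$ — which holds because those corrections only reference \ref{linetype:GLinear} lines strictly before line $l$ that multiply by $(A')^\trsp = A$ — and one must keep $\varphi(g'{}^j)$, the (typically H-)var of $\check\pi$, consistently distinct from its pure-Gaussian part $\varphig(g'{}^j)$, a distinction already built into the limits $\beta^\infty$, $\Upsilon^\infty$ and into the statement of the previous claim. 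Once these identifications are pinned down, the projection identity is immediate and the claim follows.
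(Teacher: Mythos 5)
Your proposal is correct and follows essentially the same route as the paper's proof: expand $\avar'$ via the projection form of $\beta^\infty - \Upsilon^\infty\Sigma^\infty{}^+\omega^\infty$, use the fact that $\varphi(g'{}^j)-\varphig(g'{}^j)=\varphih(g'{}^j)\in\lspan\varphi(H)$ is annihilated by $\Pi_{\varphi(H)}^\perp$, and then add the previous claim so that $\Pi_{\varphi(H)}+\Pi_{\varphi(H)}^\perp=I$ collapses the two terms. The only difference is cosmetic ordering — you add first and then invoke the span fact, whereas the paper substitutes $\varphig$ for $\varphi$ before adding.
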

\begin{claimproof}
As noted above,
\begin{align*}
    \avar'_{j'}
        &=
            \alpha \sum_{k \in [s]} (\Sigma'{}^\infty{}^+)_{j' k}
            (\beta^\infty - \Upsilon^\infty \Sigma^\infty{}^+ \omega^\infty)_k
            \\
        &=
            \alpha \sum_{k \in [s]} (\Sigma'{}^\infty{}^+)_{j' k} 
            \la \Pi_{\varphi(H)}^\perp \varphi(g'{}^i), \varphi(\hvar^m)\ra
            \\
        &=
            \alpha \sum_{k \in [s]} (\Sigma'{}^\infty{}^+)_{j' k} 
            \la \Pi_{\varphi(H)}^\perp \varphig(g'{}^i), \varphi(\hvar^m)\ra
            \\
        &\pushright{\text{because $\varphi(g'{}^i) - \varphig(g'{}^i) \in \lspan \varphi(H)$}}\\
    \sum_{j'=1}^s \avar'_{j'} \varphi(h'{}^{j'})
        &=
            \alpha \sum_{j', k \in [s]} \varphi(h'{}^{j'}) (\Sigma'{}^\infty{}^+)_{j' k} 
            \la \Pi_{\varphi(H)}^\perp \varphig(g'{}^i), \varphi(\hvar^m)\ra
            .
\end{align*}
By the previous claim, adding $\sum_{j=1}^r \avar_j \varphih(g^j)$ cancels $\Pi_{\varphi(H)}^\perp$ and gives the desired result.
\end{claimproof}

Therefore,
\begin{align*}
    &\phantomeq
        \sum_{j=1}^r \avar_j \varphi(g^j) 
                    + \sum_{j'=1}^s \avar'_{j'} \varphi(h'{}^{j'})
                    + \sigma^\infty w
        \\
    &=
        (\sigma^\infty w + \sum_{j=1}^r \avar_j \varphig(g^j) )
        + 
        \alpha \sum_{i,j=1}^s \varphi(h'{}^i)(\Sigma'{}^\infty{}^+)_{ij} 
        \la {\varphig(g'{}^j)}, {\varphi(\hvar^m)}\ra
        \\
    &\disteq_{\mathcal H_{<l}}
        \varphig(\gvar^l) + \varphih(\gvar^l)
        = \varphi(\gvar^l)
\end{align*}
where $\mathcal H_{<l}$ is the $\sigma$-algebra generated by $\{\fvar^{\varphi(h)}(Z)\}_{h \in (\bar \cdc)_{< l}},$
and $(\bar \cdc)_{<l}$ is the collection of all vector vars in $\bar \cdc$ with line number $<l$.
So we can complete our induction by stating
$$\f 1 {\nvar^{\cdc t}} \sum_{i=1}^{\nvar^{\cdc t}}
        \phi(\gvar^{lt}_i, \hvar^{\cdclt l t}_i)
        - \EV_{w, Z} \phi(\fvar^{\varphi(\gvar^l)}(Z), \{\fvar^{\varphi(h)}(Z)\}_{h \in (\bar \cdc)_{< l}})
    \asto 0.$$
    
\begin{center}
----------------
\end{center}

For the second claim, let $C_{ij} = \la \varphi(h'{}^i), \varphi(h'{}^j) \ra$ for all $i, j \in [s]$.
We can compute, as in the proof of \cref{lemma:Hpart,lemma:steinMultivar},
\begin{align*}
    \varphih(\gvar^l)
        &=
            \alpha \sum_{i,j \in [s]} \varphi(h'{}^i) (C^+)_{ij} \la \varphig(g'{}^j), \hvar^m \ra
            \\
        &=
            \alpha \sum_{i,j, k \in [s]} \varphi(h'{}^i) (C^+)_{ij} K^{\check \cdc}(\varphig(g'{}^j), \varphig(g'{}^k)) \EV \pd_{Z^{\varphig(g'{}^k)}} \fvar^{\hvar^m}(Z)
            \\
        &\pushright{\text{by Stein's Lemma \cref{lemma:stein}}}
            \\
        &=
            \alpha \sum_{i,j,k \in [s]} \varphi(h'{}^i) (C^+)_{ij} \sigma^2 
                \la \varphi(h'{}^j), \varphi(h'{}^k)\ra
                \EV \pd_{Z^{\varphig(g'{}^k)}} \fvar^{\hvar^m}(Z)
            \\
        &=
            \alpha  \sigma^2 \Pi_{\varphi(H)} \sum_{k \in [s]} 
                \varphi(h'{}^k)
                \EV \pd_{Z^{\varphig(g'{}^k)}} \fvar^{\hvar^m}(Z)
            \\
        &=
            \alpha  \sigma^2 \sum_{k \in [s]} 
                \varphi(h'{}^k)
                \EV \pd_{Z^{\varphig(g'{}^k)}} \fvar^{\hvar^m}(Z)
\end{align*}
where $\sigma = \sigma^{r\infty}$ and $r = \lno(\varphi(A'))$.
This computation goes through as long as $\fvar^{\hvar^m}$ is differentiable, which is implied by all $\fvar^l$ in $\pi$ being differentiable, or if the covariance $K^{\check \cdc}(\varphig(G'), \varphig(G'))$ is nondegenerate (which allows us to consider $\fvar^{\hvar^m}$, a polynomially bounded function, as a tempered distribution, whose derivatives are also tempered distributions, giving a valid interpretation to the expectation).

\newcommand{\II}{\mathcal{I}}
When $K^{\check \cdc}(\varphig(G'), \varphig(G'))$ is singular, let $\II$ be a minimal subset of $[s]$ such that $\{\varphi(h'{}^i)\}_{i \in \II}$ is linearly independent.
We can compute similarly,
\begin{align*}
    \varphih(\gvar^l)
        &=
            \alpha \sum_{i,j \in \II} \varphi(h'{}^i) (C_\II^+)_{ij} \la \varphig(g'{}^j), \hvar^m \ra
            \\
        &=
            \alpha \sum_{i,j, k \in \II} \varphi(h'{}^i) (C_\II^+)_{ij} K^{\check \cdc}(\varphig(g'{}^j), \varphig(g'{}^k)) \EV \pd_{Z^{\varphig(g'{}^k)}} \fvar_\II^{\hvar^m}(Z)
            \\
        &\pushright{\text{by Stein's Lemma \cref{lemma:stein}}}
            \\
        &=
            \alpha \sum_{i,j,k \in \II} \varphi(h'{}^i) (C_\II^+)_{ij} \sigma^2 
                \la \varphi(h'{}^j), \varphi(h'{}^k)\ra
                \EV \pd_{Z^{\varphig(g'{}^k)}} \fvar_\II^{\hvar^m}(Z)
            \\
        &=
            \alpha  \sigma^2 \Pi_{\varphi(H)} \sum_{k \in \II} 
                \varphi(h'{}^k)
                \EV \pd_{Z^{\varphig(g'{}^k)}} \fvar_\II^{\hvar^m}(Z)
            \\
        &=
            \alpha  \sigma^2 \sum_{k \in \II} 
                \varphi(h'{}^k)
                \EV \pd_{Z^{\varphig(g'{}^k)}} \fvar_\II^{\hvar^m}(Z)
\end{align*}
where $c_\II$ is the restriction of $C$ to $\II$, and $\fvar^{\hvar^m}_\II$ is the version of $\fvar^{\hvar^m}$ that expands $Z^{\varphig(g'{}^j)}, j \not\in \II$ to linear combinations of $\{Z^{\varphig(g'{}^i)}\}_{i \in \II}$.
Then this computation goes through always, since $\{Z^{\varphig(g'{}^i)}\}_{i \in \II}$ has a density.

\end{proof}

\end{document}